\newcommand{\CK}{\text{CK}}
\newcommand{\NTK}{\text{NTK}}
\newcommand{\citep}[1]{\cite{#1}}
\newcommand{\citet}[1]{\cite{#1}}
\renewcommand{\paragraph}{%
  \@startsection{paragraph}{4}%
  {\z@}{1.5ex \@plus 1ex \@minus .2ex}{-1em}%
  {\normalfont\normalsize\bfseries}%
}
\colorlet{linkequation}{blue}
\newcommand{\norm}[1]{\left\|#1\right\|}
\newcommand{\by}{\boldsymbol{y}}
\newcommand{\bw}{\boldsymbol{w}}
\def\ddefloop#1{\ifx\ddefloop#1\else\ddef{#1}\expandafter\ddefloop\fi}
\def\ddef#1{\expandafter\def\csname c#1\endcsname{\ensuremath{\mathcal{#1}}}}
\def\ddef#1{\expandafter\def\csname s#1\endcsname{\ensuremath{\mathsf{#1}}}}
\def\ddef#1{\expandafter\def\csname b#1\endcsname{\ensuremath{\mathbf{#1}}}}
\DeclareSymbolFont{rsfs}{U}{rsfs}{m}{n}
\DeclareSymbolFontAlphabet{\mathscrsfs}{rsfs}
\renewcommand{\P}{\mathbb{P}}
\renewcommand{\vec}{\text{vec}}
\newcommand{\R}{\mathbb{R}}
\newcommand{\E}{\mathbb{E}}
\newcommand{\N}{\mathbb{N}}
\newcommand{\eps}{\varepsilon}
\newcommand{\diag}{\text{diag}}
\def\bA{{\boldsymbol A}}
\def\bB{{\boldsymbol B}}
\def\bG{{\boldsymbol G}}
\def\bK{{\boldsymbol K}}
\def\bM{{\boldsymbol M}}
\def\bW{{\boldsymbol W}}
\def\bX{{\boldsymbol X}}
\def\bh{{\boldsymbol h}}
\def\bu{{\boldsymbol u}}
\def\bv{{\boldsymbol v}}
\def\bw{{\boldsymbol w}}
\def\bx{{\boldsymbol x}}
\def\by{{\boldsymbol y}}
\def\bbeta{{\boldsymbol \beta}}
\def\btheta{{\boldsymbol \theta}}
\def\cC{{\mathcal C}}
\def\cL{{\mathcal L}}
\def\cA{{\mathcal A}}
\def\Parg#1{\P\left({#1}\right)}
\newcommand{\Exp}[1]{\operatorname{exp}\left({#1}\right)}
\newcommand{\abs}[1]{\left|#1\right|}
\newcommand{\ignore}[1]{}
\newcommand\blfootnote[1]{%
  \begingroup
  \renewcommand\thefootnote{}\footnote{#1}%
  \addtocounter{footnote}{-1}%
  \endgroup
}
\newtheorem{theorem}{Theorem}[section]
\newtheorem{lemma}[theorem]{Lemma}
\newtheorem{corollary}[theorem]{Corollary}
\newtheorem{assumption}[theorem]{Assumption}
\newcommand\redout{\bgroup\markoverwith{\textcolor{red}{\rule[.5ex]{2pt}{0.4pt}}}\ULon}
\title{\textbf{Spectral Evolution and Invariance in Linear-width Neural Networks}
}
\date{}
\author[1]{Zhichao Wang} 
\author[2]{Andrew Engel} 
\author[3]{Anand Sarwate}
\author[1]{Ioana Dumitriu}
\author[2,4,5]{Tony Chiang}
\affil[1]{University of California, San Diego}
\affil[2]{Pacific Northwest National Laboratory}
\affil[3]{Rutgers, The State University of New Jersey}
\affil[4]{University of Washington}
\affil[5]{University of Texas at El Paso}
\begin{document}

\maketitle

\blfootnote{Emails: \texttt {\{zhw036,idumitriu\}@ucsd.edu}, \texttt {\{andrew.engel,tony.chiang\}@pnnl.gov}, \texttt{anand.sarwate@rutgers.edu}}

\begin{abstract}
 We investigate the spectral properties of linear-width feed-forward neural networks, where the sample size is asymptotically proportional to network width. Empirically, we show that the spectra of weight in this high dimensional regime are invariant when trained by gradient descent for small constant learning rates; we provide a theoretical justification for this observation and prove the invariance of the bulk spectra for both conjugate and neural tangent kernels. We demonstrate similar characteristics when training with stochastic gradient descent with small learning rates. When the learning rate is large, we exhibit the emergence of an outlier whose corresponding eigenvector is aligned with the training data structure. We also show that after adaptive gradient training, where a lower test error and feature learning emerge, both weight and kernel matrices exhibit heavy tail behavior. Simple examples are provided to explain when heavy tails can have better generalizations. We exhibit different spectral properties such as invariant bulk, spike, and heavy-tailed distribution from a two-layer neural network using different training strategies, and then correlate them to the feature learning. Analogous phenomena also appear when we train conventional neural networks with real-world data. We conclude that monitoring the evolution of the spectra during training is an essential step toward understanding the training dynamics and feature learning.
\end{abstract}

\section{Introduction}
\label{intro}
Deep learning theory has made insightful connections between the behavior of neural networks (NNs) and kernel machines through asymptotic analyses of the so-called kernel regime~\citep{neal1995bayesian,williams1996computing,lee2017deep,jacot2018neural,belkin2018understand,arora2019exact,yang2019wide}. When the neural network (NN) is \emph{infinitely wide}, the behavior of NN coincides with a kernel machine, and the training process, as well as the generalization performance of this ultra-wide NN, can be fully described. The performance of \emph{finite-width} NNs, however, does not correspond to this theory, as NNs optimized with gradient-based methods perform better than infinitely wide networks in many circumstances~\citep{lee2020finite,hu2020surprising,ghorbani2020neural,li2020learning,daniely2020learning,geiger2020disentangling,refinetti2021classifying,karp2021local}. This gap heavily relies on the task complexity, data distribution, architecture of the NN and the training strategy~\citep{fort2020deep}. We consider a more realistic setting, a \textit{linear-width regime} (LWR), when the sample size $n$, the input feature dimension $d$, and the width $h$ of the hidden layer approach infinity at comparable rates. Under the LWR, we aim to empirically study this theoretical gap in generalization and spectral properties by training various NNs with different optimization tools.

The ultra-wide NN ($h\gg n$, fixed $d$) stays close to the kernel machine induced by initial NN, throughout the gradient-based training processes~\citep{zou2018stochastic,du2018gradient,du2019gradient,bartlett2021deep}. There are two kernels commonly studied in theory: the conjugate kernel (CK) and the neural tangent kernel (NTK). CK (or the
equivalent Gaussian process kernel) is the Gram matrix of the last hidden layer, which represents training only the last layer of the network~\citep{lee2017deep,louart2018random,mei2019generalization}; by contrast, the NTK is the Gram matrix of the Jacobian of the NN for all trainable parameters, which governs the gradient flow of NN~\citep{jacot2018neural,du2018gradient,arora2019exact}. In most theoretical results, these kernels remain fixed throughout training, which leads to a kernel gradient descent with the initial kernel \citep{jacot2018neural,bietti2019inductive}, whereas in practice the spectra of the weight matrix, CK, and NTK of the NN change while learning the features from the training data~\citep{martin2019traditional,martin2018implicit,fan2020spectra,chen2020label,ortiz2021can}. In this paper, under the LWR, we experimentally and theoretically explore the following question: 
\begin{center}
{\it How do the spectra of weight and kernel matrices of the NN evolve during the training process?}
\end{center} 
This question is crucial to extend our understanding beyond the kernel regime and will help us analyze the generalization of the NN in instances when it performs better than the kernel machine. For this case, the spectral properties of the trained NN could be entirely different from the initial kernel \citep{long2021properties,baratin2021implicit,shan2021theory}. Also, various spectral properties of weight and kernel matrices can reveal different features learned by different training procedures \citep{wei2022more}. 
Understanding the dynamics of the spectral properties may aid in finding better approaches to training and tuning hyper-parameters for NNs. From a theoretical perspective, random matrix theory (RMT) can be further exploited to study and elucidate the NN training under the proportional limit in high dimensions~\citep{le1991eigenvalues,pennington2017geometry,louart2018random,pennington2018spectrum,hanin2020products,mei2019generalization}.  

Our main findings and contributions are as follows.
\begin{itemize}
    \item We find a simple scenario that exhibits different spectral properties for both weight and kernel matrices through different training procedures. With the kernel regime as a benchmark, we compare how NN generalizes with different spectral evolutions of weight and kernel matrices in NN. 
    \item The spectra of NNs trained with full batch gradient descent (GD) are globally \textit{invariant}, indicating that the NN is still close to a kernel machine; we 
    prove the global convergence of GD and the invariance of the limiting spectra for both weight and kernel matrices in this scenario.
    \item We observe a \textit{phase transition} of the alignment and the emergence of a spike outside the bulk of the spectrum when the learning rate \textit{exceeds} some threshold. The strong alignment of the spike with the teacher model when step sizes are large confirms that the NN is indeed learning germane features from data. This observation justifies the theoretical result of \cite{ba2022high} in an ideal two-stage training process.
    \item The evolution towards heavy-tailed spectra is also discovered by using adaptive methods. Our experiments rule out a \textit{causal} relationship between the occurrence of a heavy-tailed spectrum for the weight matrices and a good generalization. This complements the work of \cite{martin2021predicting,meng2021implicit,yang2022evaluating} where the authors had observed a strong \textit{correlation} between the two; while at the same time, we provide simple examples of when heavy-tailed spectra exhibit feature learning and better generalizations.
\end{itemize}

For more details on how our results fit into existing literature, please see Section~\ref{appendix:paper}.
\section{Additional Related Work}\label{appendix:paper}

\paragraph{Nonlinear Random Matrix Theory and Random Feature Regression.}
The limiting spectrum of CK with random input dataset has been investigated by~\cite{pennington2017nonlinear,benigni2019eigenvalue}; whereas~\cite{louart2018random,fan2020spectra} studied the spectrum of CK with more general input data. This spectrum is actually a deformed Mar\v{c}enko–Pastur distribution~\citep{fan2020spectra}, which becomes a deformed semicircular distribution~\citep{wang2021deformed} when $h \gg n$. The largest eigenvalue of the CK matrix has been studied in~\cite{benigni2022largest}, and the spectrum of the NTK was analyzed in~\cite{adlam2020neural,fan2020spectra}. As an application, random feature ridge regression was fully determined by the limiting spectra of CK or NTK: \cite{louart2018random,gerace2020generalisation,liao2020random,mei2019generalization,adlam2020neural}. All of these results belong to LWR.

\paragraph{Global Convergence of GD for Ultra Wide NNs.}
A recent line of work has shown the global convergences of the learning dynamics of gradient-based methods in a certain overparameterized regime, e.g.
\citep{du2018gradient,du2019gradient,oymak2019generalization,oymak2019overparameterized,nguyen2021proof,liu2022loss,polaczyk2022improved,chatterjee2022convergence}. We refer to Table 1 of \cite{polaczyk2022improved} as a summary of these recent results. Most of the theorems in the literature require $h\gg n$, which implies that the NTK is almost static during training, while \cite{oymak2019overparameterized,nguyen2021proof} can consider LWR under some specific assumptions. Recently, \cite{chatterjee2022convergence} established a new criterion for the convergence of GD which results in the global convergence of general NNs with finite width $h$ and $d\ge n$.

\paragraph{Beyond NTK Regime.} 
Under the proportional limit, the initial kernel regression can only learn a linear component of the target~\cite{ghorbani2019linearized}. Thus, it is reasonable to consider the cases beyond the NTK regime. To this end,~\cite{dyer2019asymptotics,huang2020dynamics} considered the dynamics of NTK throughout training while~\cite{allen2018learning,bai2019beyond} have shown a second-order approximation of NTK, outperforming the initial kernel. 
In addition, there are many theoretical works analyzing when a NN outperforms the initial kernels in some specific settings:~\cite{li2020learning} proved a two-layer ReLU NN that is shown to beat any kernel method;~\cite{karp2021local} verified a two-layer CNN with some simple dataset can outperform the initial NTK for image classifications;~\cite{ba2022high} showed a NN can escape the kernel regime by only taking one specific large gradient step;~\cite{damian2022neural} showed a specific gradient-based training can even learn polynomials with low-dimensional latent representation.

\paragraph{Evolution of NTK and Alignment in NNs.}
The feature learning can be characterized by the evolution of the kernel during training \citep{fort2020deep,ortiz2021can,long2021properties,atanasov2022neural,loo2022evolution}. Specifically, \cite{long2021properties} studied the hard-margin SVM for ``after kernels'' which are the CK and NTK matrices of trained NNs. One of the effective ways of depicting how the kernels evolve during training is to capture the evolution of kernel alignment \citep{baratin2021implicit,shan2021theory,atanasov2022neural,loo2022evolution}. Kernel alignments between kernels and training labels essentially reveal how the NN accelerates training \citep{shan2021theory}. Also, several papers showed that the top eigenfunctions of the kernel align with the target function learned by the NN~\citep{kopitkov2020neural,ortiz2020neural,ortiz2021can}. This becomes an efficient way of analyzing how NNs learn features through a particular gradient-based optimization. 

\paragraph{Large Learning Rate Regime.}
    As mentioned earlier, the large learning rate may contribute to feature learning. The benefits of large-learning-rate training have been studied from different aspects \citep{li2019towards,nakkiran2020learning,beugnot2022benefits,andriushchenko2022sgd}. Specifically, \cite{leclerc2020two} observed that training dynamics with large learning rates differ from the small learning rate regime, where the latter regime exhibits monotone and fast convergence of training loss but may not generalize well on test data. At the early phase of training, \cite{jastrzebski2020break} showed using lower learning rates may result in finding a region of the loss surface with worse conditioning of kernel and Hessian matrices. In \cite{long2021properties}, the after kernels of NNs trained with larger learning rates generalize better and stay more stable. \cite{lewkowycz2020large} raised a ``catapult mechanism'', where gradient descent dynamics converge to flatter minima for extremely large learning rates. There is a transition as a function of the learning
rate, from lazy training to the catapult regime. Section~\ref{sec:spikes} illustrates a similar transition in our situations.

\paragraph{Heavy-tailed Phenomenon.}
The heavy-tailed phenomenon has appeared in many places in deep learning theory;~\cite{martin2019traditional,martin2018implicit} observed that many state-of-the-art pre-trained models obtain heavy-tailed weight spectra. More precisely, these spectra have a ``5+1'' phase transition which relates to different degrees of regularization of the NN. With this heavy-tailed self-regularization theory,~\cite{martin2021predicting} further showed how to distinguish well-trained and poorly trained models by a power-law-based approximation.~\cite{meng2021implicit} classified trained weight spectra into three types: Mar\v{c}enko–Pastur law, bulk with (few) outliers, and heavy-tailed spectra. We extend this classification to both weight and kernel matrices in Figure~\ref{fig:spectra}. Additionally, similarly to the discussion in \ref{sec:heavy}, ~\cite{meng2021implicit} showed that the difficulty of the classification problem is related to the emergence of heavy-tailed spectra in weight matrices. This heavy-tailed phenomenon can be used to construct metrics for evaluating the generalization of NNs \cite{martin2021predicting,yang2022evaluating}, and early stopping of NNs to avoid over-fitting \cite{meng2021implicit}.

\section{Notation and Preliminaries}
Throughout this paper, $\|\cdot\|$ denotes the $\ell_2$ norm for vectors, $\ell_2\to\ell_2$ is the operator norm for matrices, while $\|\cdot\|_F$ is the Frobenius norm, and $\odot$ represents the Hadamard product between matrices.
$o_{d,\P}(\cdot)$ represents little-o in probability as $d\to\infty$.
\paragraph{Neural Tangent Kernel Parameterization.}
Consider a $L$-layer fully connected feedforward NN at initialization without bias term: for $1\le\ell\le L-1$,
\begin{equation}\label{eq:NN}
\bh_0 = \frac{\bx}{\sqrt{d}},~~\bh^{(\ell)}=\frac{1}{\sqrt{n_\ell}}\sigma(\bW_{\ell} \bh^{(\ell-1)}),
\end{equation}
and $f_{\btheta}(\bx)= \bv^\top\bh^{(L-1)},$ where the input vector is $\bx\in\R^d$, $\bW_{\ell}\in\R^{n_\ell\times n_{\ell-1}}$ is the weight matrix for the $\ell$-th layer, and $\bv:=[v_1,\ldots,v_h]^\top\in\R^{n_{L-1}}$ is the last-layer weight. Let $n_0=d$. Denote all trainable parameters by $\btheta:=[\vec(\bW_1),\ldots,\vec(\bW_{L-1}),\bv]^\top\in\R^p$ where each parameter's initial value is independently sampled from some distribution and $p$ is the total number of parameters. Let the training dataset be $(\bX, \by):= ([\bx_1,\ldots,\bx_n], \by)\in\R^{d\times n} \times R^{1\times n}$; the output of this NN with respect to this dataset is $f_\btheta(\bX)= [f_\btheta(\bx_1),\ldots,f_\btheta(\bx_n)]$. 
We call the above parameterization the \textit{NTK parameterization}. The loss function for training is a mean squared error (MSE)
\begin{equation}\label{eq:loss}
    \cL(\btheta):=\frac{1}{2n}\norm{\by-f_\btheta(\bX)}^2.
\end{equation}
We focus on the NTK parameterization and consider the kernel machine \eqref{eq:lazy} induced by the initial NTK of the NN. We aim to seek the cases when the NN outperforms this kernel during the training process. For this purpose, we adopt different optimizers of training this NN to obtain different testing performances and spectral properties of trained weights and empirical kernels.


\paragraph{Training Processes of NNs.}
NNs are usually trained by gradient-based methods such as full-batch gradient descent (GD), mini-batch stochastic gradient descent (SGD), Adaptive Gradients (AdaGrad), and Adam~\cite{kingma2014adam}. We can represent GD by
\begin{equation}\label{eq:GD}
    \btheta_{t+1}=\btheta_t-\eta \nabla_\btheta \cL(\btheta_t),
\end{equation}
where $\eta$ is the learning rate and $\nabla_\btheta \cL(\btheta_t)$ is the gradient of the training loss w.r.t. trainable parameters $\btheta$ at step $t\ge 0$. We will prove the global convergence of GD in some special (overparameterized) cases ensuring the convergence to a NN that interpolates the data. 
We will also show the hyper-parameters (e.g. learning rate $\eta$) affect the spectral properties of NNs during training.


\paragraph[]{Conjugate Kernel and Neural Tangent Kernel.\footnote{In this work, we only consider \textit{empirical} conjugate and neural tangent kernels of finite-width NNs.}}
When $L=2,$ let $n_1=h$ and $n_0=d$ be the widths of the output and input layer. The CK is defined as
\begin{equation}\label{eq:CK}
    \bK^\CK:=\bX_1^T \bX_1\in \R^{n \times n},
\end{equation}
where $\bX_1:=\frac{1}{\sqrt{h}}\sigma\Big(\bW\bX/\sqrt{d}\Big)$.
We can view the NN as a function of all training parameters $\btheta$ and input data $\bX$. The neural tangent kernel (NTK) is related to the gradient of this neural network function with respect to $\btheta$, which is the Gram matrix of the Jacobian of the neural network function with respect to $\btheta$, $\bK^\NTK:=(\nabla_\btheta f_\btheta(\bX))^\top (\nabla_\btheta f_\btheta(\bX))$. Specifically, the empirical NTK of two-layer NN can be explicitly written\footnote{Here we train both layers, so we have two parts in the NTK expression. If we only train the first-hidden layer, we can simply remove the second CK part. 
In the following, we further introduce more empirical results for practical NNs in Section~\ref{sec:real_world} and two-layer NNs with Gaussian dataset in Section~\ref{sec:cases}. For general formula of the empirical NTK, see \citep{huang2020dynamics,fan2020spectra}.} as
\begin{align} 
\bK^\NTK= \frac{1}{d}\bX^\top \bX\odot \frac{1}{h}\sigma'\left( \frac{1}{\sqrt{d}}\bW\bX\right)^\top \text{diag}(\bv)^2\sigma'\left(\frac{1}{\sqrt{d}}\bW \bX\right)+\bK^\CK .\label{eq:NTK}
\end{align}
In this paper, we are interested in comparing the spectral distributions for these three matrices (weight, CK, and NTK) at initialization and the end of training.  
\paragraph{Lazy Training.}
Lazy training~\citep{chizat2019lazy} can be viewed as a linear approximation of the NN, i.e. $f_{\btheta}(\bx)\approx f_{\btheta_0}(\bx)+(\btheta-\btheta_0)^\top \nabla_{\btheta}f_{\btheta_0}(\bx)$, defined by minimum-norm interpolation 
$$\hat\btheta:=\arg\min \left\{\|\btheta-\btheta_0\|:(\btheta-\btheta_0)^\top \nabla_{\btheta}f_{\btheta_0}(\bX)=\by-f_{\btheta_0}(\bX)\right\}.$$
Then, lazy training also represents a kernel machine
\begin{equation}\label{eq:lazy}
\small{ \hat f(\bx)=f_{\btheta_0}(\bx)+\left(\by-f_{\btheta_0}(\bX)\right)\bK(\bX,\bX)^{-1}\bK(\bX,\bx)}
\end{equation}
where $\hat f(\bx)$ is the unregularized regression prediction on test data $\bx\in\R^d$, the kernel $\bK(\bX,\bX)$ is the initial $\bK^\NTK$ on training data, and $\bK(\bX,\bx)=(\nabla_\btheta f_{\btheta_0}(\bX))^\top (\nabla_\btheta f_{\btheta_0}(\bx))$.  The asymptotic performance of $\hat f(\bx)$ has been analyzed by~\cite{adlam2020neural} under the LWR. We view this regime as a \textit{benchmark}:~\cite{chizat2019lazy,bartlett2021deep} prove that NN through gradient flow is close to lazy training if $h\gg n$;~\cite{hu2020surprising} shows NN can go beyond lazy training under a non-proportional regime.

\section{Case Study for Linear-width NNs}\label{sec:cases}
In this section, we investigate a two-layer NN with synthetic data. This setting is promising for future theoretical studies by virtue of RMT. 
We will showcase the evolution of its spectral properties over training. A two-layer NN in \eqref{eq:NN} is defined by
\begin{equation}\label{eq:2NN}
 f_\btheta(\bx):=\frac{1}{\sqrt{h}}\sum_{i=1}^h v_i\sigma(\bw_i^\top \bx/\sqrt{d}).
\end{equation}
At initialization, we assume that the first hidden-layer $\bW=[\bw_1,\ldots,\bw_h]^\top\in\R^{h\times d}$ is composed of independent standard normal random vectors. 
\begin{assumption}[Linear-width regime (LWR)]\label{ass:width}
Assume that $\frac{n}{d}\to \gamma_1$ and $\frac{h}{d}\to \gamma_2$ as $n\to\infty$ where the aspect ratios $\gamma_1,\gamma_2\in(0,\infty)$ are two fixed constants.
\end{assumption}
LWR stands as a pivotal setting grounded in high-dimensional statistics \cite{adlam2020neural,mei2019generalization}. It offers valuable insights especially when addressing real-world datasets.  This is in contrast to the infinite-width regime, in which we are already in the asymptotic limit for width at first. Hence, LWR is a better approximation of real-world datasets and practical neural networks compared with the infinite-width regime. 
\begin{assumption}[Activation function]\label{ass:activation}
Suppose that the activation function $\sigma(x)$ is nonlinear and $\lambda_\sigma$-Lipschitz with $|\sigma'(x)|,|\sigma''(x)|\le \lambda_\sigma$ for all $x\in\R$. Moreover, $\E[\sigma(z)]=0$ for $z\sim\cN(0,1)$.
\end{assumption}
Though the LWR is somewhat impractical, it is still more aligned with deployed models than the infinite-width regime ($h\gg n$, fixed $d$). As a kernel machine, the infinite-width NN has been studied extensively \citep{jacot2018neural,du2018gradient,du2019gradient,yang2019wide}.
This infinite-width limit is special, however, as NNs may generally evolve beyond the kernel regime and achieve superior performance \citep{fort2020deep,long2021properties,baratin2021implicit,shan2021theory}.

\begin{table*}[ht]
  \centering
  \begin{tabular}{llllll}
    \toprule
    & Optimization  & Learning rate $\eta$ & $R^2$ score & Test error & Spectra \\
    \midrule
    Case 1 & GD  & 5.0 &  0.63582 & 0.36381   &  Invariant Bulk\\
    Case 2 & SGD &  0.1 & 0.60605 & 0.36879 &   Invariant Bulk\\
    Case 3 & SGD & 22.0 & 0.76081 & 0.23791 &    Bulk+spike \\
    Case 4 & Adam & 0.092 &   \textbf{0.78829} &  \textbf{0.21071} & Heavy tail\\
    \midrule
    & Lazy regime &   & 0.68092 & 0.3185 & \\
    \bottomrule\\
  \end{tabular}
 \caption{ \small Four models with the same architecture ($n = 2000$, $h = 1500$, $d = 1000$, and $\sigma$ is normalized $\tanh$), but different choices of initial learning rates and optimizers listed in Table~\ref{table:examples}. The training label noise $\sigma_\varepsilon = 0.3$ and the teacher model is defined by \eqref{eq:teacher} with $\sigma^*$ a normalized $\textit{softplus}$ and $\tau=0.2$. We observe that simply choosing an optimizer and learning rate can affect the shapes of the final spectra and the performance of the NN, as measured by $R^2$ scores and test errors.}\label{table:examples}
\end{table*}

\begin{figure*}[ht]  
\centering
\begin{minipage}[t]{0.328\linewidth}
\centering
{\includegraphics[width=0.97\textwidth]{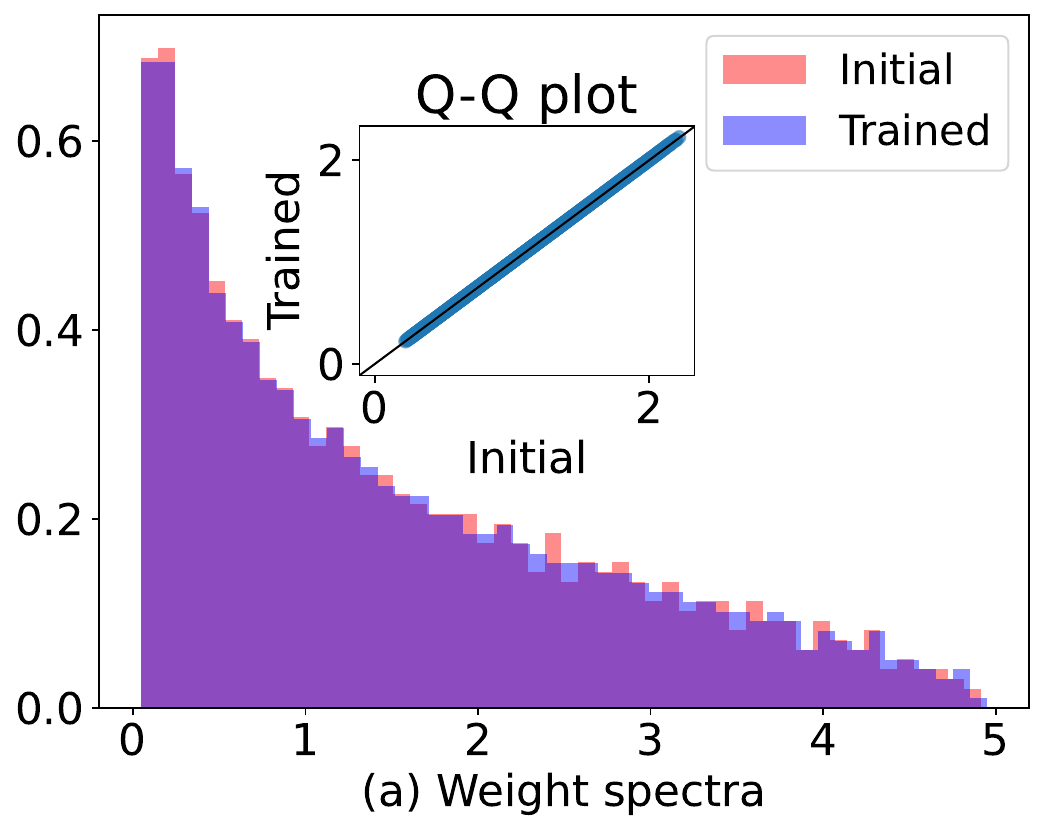}} 
\end{minipage}
\begin{minipage}[t]{0.328\linewidth}
\centering 
{\includegraphics[width=0.97\textwidth]{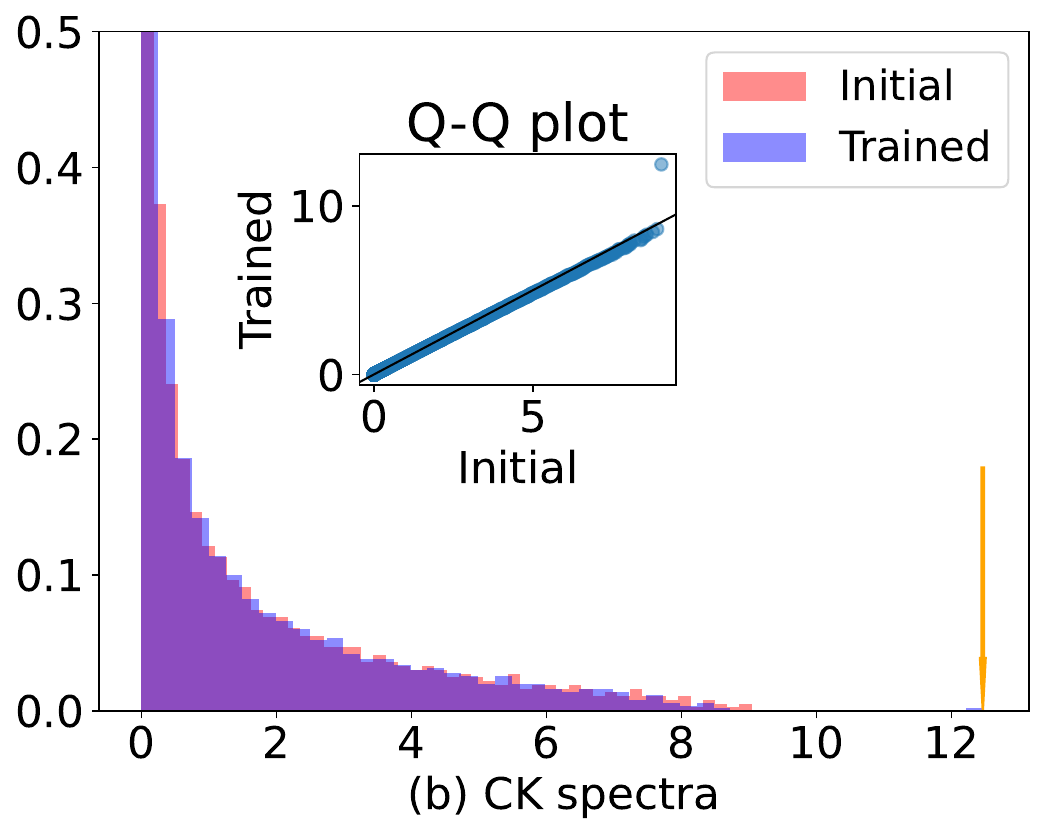}}
\end{minipage} 
\begin{minipage}[t]{0.328\linewidth}
\centering
{\includegraphics[width=0.99\textwidth]{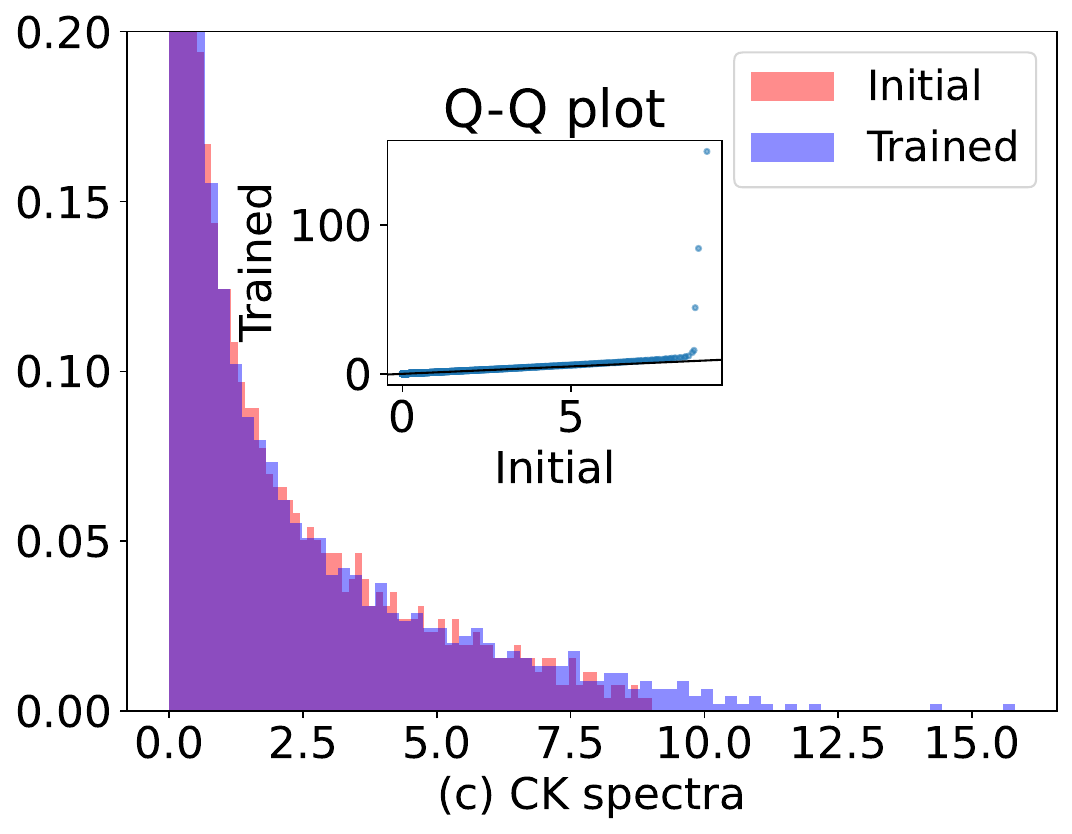}}  
\end{minipage}  
\caption{\small
 Different spectral behaviors in Table~\ref{table:examples}:
 (a) The initial and trained spectra of $\bW$ in Case 1. The spectrum is invariant based on the Q-Q subplot.
 (b) The initial and trained spectra of $\bK^{\CK}$ in Case 3. There is an outlier (orange arrow) in the spectrum after training. 
 (c) The initial and trained spectra of $\bK^{\CK}$ in Case 4. 
 We refer to Appendix~\ref{sec:synthetic} for other spectra of weight, CK, and NTK matrices in Case 1-4, where analogous phenomena hold for other matrices.} 
\label{fig:spectra}  
\end{figure*} 

\begin{assumption}[Synthetic dataset and teacher model]\label{ass:teacher}
Training data is $\bX:=[\bx_1,\ldots,\bx_n]\in\R^{d\times n}$, where $\bx_i\overset{i.i.d.}{\sim}\cN(\mathbf{0},\bI_d)$. The training labels $\by=[y_1,\ldots,y_n]$ are defined by 
$y_i=f^*(\bx_i)+\varepsilon_i,~~\text{ for }i\in [n],$
where $f^*:\R^d\to\R$ is the teacher model, and $\varepsilon_i$ is centered sub-Gaussian noise with variance $\sigma_\eps^2$.
\end{assumption}

One of the simplest nonlinear teacher models we can generate is the single-index model, namely $f^*(\bx)=\sigma^*(\bx^\top\bbeta)$ for a fixed vector $\bbeta$ with $\|\bbeta\|=1$ and nonlinear function $\sigma^*$; the hidden feature is simply $\bbeta\in\R^d$. In general, we can consider a multiple-index model
\begin{equation}\label{eq:multiple}
    f^*(\bx)=\frac{1}{k}\sum_{i=1}^k\sigma^*(\bx^\top\bbeta_i)
\end{equation}where $\bbeta_i$ are some orthogonal unit vectors.
We will specifically consider a mixture of single-index and quadratic models as our teacher model in this section:
\begin{equation}\label{eq:teacher}
    f^*(\bx)=\sigma^*(\bx^\top\bbeta)+\frac{\tau}{d}\|\bx\|^2,
\end{equation}for some nonlinear target $\sigma^*$, signal $\bbeta$ and constant $\tau$\footnote{Here, the norm term of $\bx$ in \eqref{eq:teacher} is designed to make the teacher model more complicated to be learned. All our empirical results still hold when $\tau = 0$.}. Following the above assumptions and constructions, we show different spectral properties (Figure~\ref{fig:spectra}) for this two-layer NN using different training procedures (Table~\ref{table:examples}). Figure~\ref{fig:spectra} exhibits three types of spectra after training: unchanged bulk distribution, bulk with one spike, and heavy tail in spectra. 
Putting things together, we can see close relationships between the spectra and the generalization of the NN. These different spectral properties actually reveal disparate features learned via different training strategies. 

The advantage of this toy model is that we can easily extract the spectral behaviors over training and then compare them with the kernel machine. We use lazy training defined in \eqref{eq:lazy} as our \textit{benchmark} to
assist us in determining whether a NN outperforms the associated kernel machine.
Table~\ref{table:examples} compares the test errors and $R^2$ scores for different optimization cases and the lazy training. By tuning the hyper-parameters, we can find specific situations where NN outperforms the lazy training (see also Figure~\ref{fig:large}(c) in Appendix~\ref{sec:synthetic}). 

From Figures~\ref{fig:spectra}(a), \ref{fig:GD} and \ref{fig:SGD} in Appendix~\ref{sec:synthetic}, one can observe the spectral distributions of the weight, CK and NTK matrices remain invariant and static during training in Cases 1$ \& $2, which indicates both cases still belong to the lazy regime. This spectral invariance impedes further feature learning during the training process. The emergence of the outlier in Figures~\ref{fig:spectra}(b) and \ref{fig:largeSGD} of Appendix~\ref{sec:synthetic}, however, shows the improvement over lazy training and potential feature learning via the training process, where the spectra possibly inherit the structures in teacher models (see Section~\ref{sec:spikes}). Comparing with Case 2, Case 3 of Table~\ref{table:examples} suggests the importance of the large learning rate regime for training NNs \citep{li2019towards,nakkiran2020learning,long2021properties,beugnot2022benefits,andriushchenko2022sgd}. As a remark, our spectral results of Case 3 are consistent with the observations in \cite{thamm2022random} through RMT hypothesis testing, where the majority of trained weight matrices remain random, and the learned feature may be contained in the largest singular value (outlier) and associated vector only. From Figures~\ref{fig:spectra}(c) and \ref{fig:adam_results} in Appendix~\ref{sec:synthetic}, Case 4 further exhibits more spikes and heavy tails in the trained spectra, which thoroughly goes beyond the realm of initial kernel machine. Notably, this phenomenon is not unique to Adam since heavy tails also occur with AdaGrad in Figure~\ref{fig:Adagd} in Appendix~\ref{sec:adagrad}. 
Although all of these cases have the same identical initialization, different methods of optimization eventually lead to various training trajectories and evolutions of the spectra of the weight and kernel matrices. To acquire feature learning, Cases 3$\&$4 cause weights to deviate far from initialization. In the following Section~\ref{sec:case_analysis}, we prove the invariance of the bulk distributions and provide more refined analyses of spikes and heavy tails in terms of feature learning.


\section{Different Spectral Behaviors in NNs}\label{sec:case_analysis}
We now further explore the spectral behaviors in different cases of Table~\ref{table:examples} by clarifying how the spectra evolve through different training processes and how this evolution may affect the NN. Following Figure~\ref{fig:spectra}, we study the training processes case-by-case: invariant bulk, spikes outside the bulk, and heavy-tailed distribution. Additional experiments are exhibited in Appendix~\ref{sec:additional}.

\subsection{Invariant Bulk Distributions}\label{sec:kernel_GD}
In Figure~\ref{fig:spectra}(a) (also Figures~\ref{fig:GD} and \ref{fig:SGD} in Appendix~\ref{sec:synthetic}), we observe the bulk distributions of weight and kernel matrices in Cases $1\&2$ remain globally unchanged (invariant) over the training process. under the LWR, this is also empirically verified by Figures~\ref{fig:large}(a)$\&$(b) in Appendix~\ref{sec:synthetic}. 
In this section, by investigating the global convergence of GD, we prove this invariant-bulk phenomenon under certain assumptions. 

For simplicity, we focus on analyzing the training process of the first-hidden layer with the second layer $\bv$ fixed. Denote $f_\btheta(\bX)$ by $f_{\bW}(\bX)$ in this case. At any time $t\in\N$, consider the gradient steps:
\begin{equation}\label{eq:GD_W}
    \bW_{t+1}=\bW_t-\eta \nabla_{\bW} \cL(\bW_t).
\end{equation}
Denote the CK and NTK at gradient step $t\in\N$ by $\bK_t^{\CK}:=\frac{1}{h}\sigma(\bW_t\bX)^\top\sigma(\bW_t\bX),$ and $\bK_t^\NTK:= \frac{1}{d}\bX^\top \bX\odot \frac{1}{h}\sigma'\left( \frac{1}{\sqrt{d}}\bW_t\bX\right)^\top \text{diag}(\bv_t)^2\sigma'\left(\frac{1}{\sqrt{d}}\bW_t \bX\right)$ respectively.
First, we present an elaborate description of the changes in the weight, CK, and NTK at the \textit{early phase} of the training (after any finite $t$ steps) as follows. 
\begin{lemma}[Early phase]
\label{lemm:CK}
Under Assumptions~\ref{ass:width}, \ref{ass:activation} and \ref{ass:teacher}, we further assume that  $\norm{\bv}_\infty\le 1$ and $f^*$ is a $\lambda_\sigma$-Lipschitz function. Given any fixed $t\in\N$ and learning rate $\eta=\Theta(1)$, after $t$ gradient steps, the changes $\frac{1}{\sqrt{d}}\norm{\bW_{t}-\bW_0}_F$,  $\norm{\bK_t^{\CK}-\bK_0^{\CK}}_F$, and $\norm{\bK_t^{\NTK}-\bK_0^{\NTK}}$ are all less than  $\frac{C}{n}$,  
with probability at least $1-4n\Exp{-c n}$, for some positive constants $c,C>0$ which only depend on step $t$ and parameters $\eta,\gamma_1,\gamma_2,\lambda_\sigma,\sigma_\varepsilon$.
\end{lemma}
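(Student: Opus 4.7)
My plan is to proceed by induction on $t$, maintaining the high-probability inductive hypothesis that $\|\br_t\|=O(\sqrt{n})$ and $\|\bW_t-\bW_0\|_F=O(t/\sqrt{d})$, and then translating the weight-level control into kernel-level control via Lipschitz estimates for $\sigma$ and $\sigma'$. At initialization I would collect, by Gaussian concentration and known nonlinear-random-matrix results~\citep{pennington2017nonlinear,louart2018random,fan2020spectra,adlam2020neural}, the simultaneous estimates $\|\bX\|,\|\bW_0\|=O(\sqrt{n})$, $\max_j\|\bx_j\|^2=O(d)$, $\|\by\|,\|f_{\bW_0}(\bX)\|=O(\sqrt{n})$ (hence $\|\br_0\|=O(\sqrt{n})$), together with the normalized-feature operator-norm bounds $\|\sigma(\bW_0\bX/\sqrt{d})/\sqrt{h}\|=O(1)$ and $\|(\sigma'(\bW_0\bX/\sqrt{d})-\mu_1 J)/\sqrt{h}\|=O(1)$, where $\mu_1:=\E[\sigma'(Z)]$ for $Z\sim\cN(0,1)$ and $J$ is the all-ones $h\times n$ matrix. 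A union bound shows that all these events hold jointly with probability at least $1-O(n)\exp(-cn)$.

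The key per-step identity is the closed form of the GD update,
\[
\bW_{t+1}-\bW_t \;=\; \frac{\eta}{n\sqrt{hd}}\,\diag(\bv)\,\sigma'\!\left(\bW_t\bX/\sqrt{d}\right)\diag(\br_t)\,\bX^\top.
\]
Bounding each factor using $|\sigma'|\le\lambda_\sigma$, $\|\bv\|_\infty\le 1$, the inductive $\|\br_t\|=O(\sqrt{n})$, and $\|\bX\|=O(\sqrt{n})$ yields $\|\bW_{t+1}-\bW_t\|_F\le C\eta\lambda_\sigma/\sqrt{d}$, so that after $t$ steps $\frac{1}{\sqrt{d}}\|\bW_t-\bW_0\|_F=O(t/d)=O(1/n)$ under LWR. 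The residual hypothesis is preserved because $\|\br_{t+1}-\br_t\|=\|f_{\bW_{t+1}}(\bX)-f_{\bW_t}(\bX)\|\le(\lambda_\sigma\|\bv\|\,\|\bX\|/\sqrt{hd})\|\bW_{t+1}-\bW_t\|_F=O(1/\sqrt{n})$, which is negligible compared to $\sqrt{n}$ at any fixed step.

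For the CK, set $\bX_t:=\sigma(\bW_t\bX/\sqrt{d})/\sqrt{h}$. Lipschitz continuity of $\sigma$ gives $\|\bX_t-\bX_0\|_F\le(\lambda_\sigma/\sqrt{hd})\|\bW_t-\bW_0\|_F\,\|\bX\|=O(1/n)$, and combining this with $\|\bX_0\|=O(1)$ via the identity $\bX_t^\top\bX_t-\bX_0^\top\bX_0=(\bX_t-\bX_0)^\top\bX_t+\bX_0^\top(\bX_t-\bX_0)$ produces $\|\bK_t^{\CK}-\bK_0^{\CK}\|_F=O(1/n)$.

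The hardest step is the operator-norm bound on the NTK, and I expect it to be the main technical obstacle. I would start from the factorization $\bK_t^{\NTK}=(\bX^\top\bX/d)\odot(\bA_t^\top\bA_t)$ with $\bA_t:=\diag(\bv)\sigma'(\bW_t\bX/\sqrt{d})/\sqrt{h}$, so that the Schur bound $\|(\bX^\top\bX/d)\odot M\|\le(\max_j\|\bx_j\|^2/d)\,\|M\|$ reduces the problem to controlling $\|\bA_t^\top\bA_t-\bA_0^\top\bA_0\|$; the naive sandwich bound fails, however, because $\|\bA_0\|$ can be as large as $O(\sqrt{n})$ through the rank-one contribution of $\mu_1 J$, which would only yield $O(1)$ rather than $O(1/n)$. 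To resolve this, I would decompose $\sigma'(x)=\mu_1+\psi(x)$ with $\psi$ centered and expand $\bA_t^\top\bA_t-\bA_0^\top\bA_0$ into four pieces: the pure $\mu_1^2$ rank-one piece cancels in the difference; the two mixed cross terms take the form $\bone_n\bu^\top$ (and its transpose) with $\|\bu\|_2=O(1/n)$ controllable via $\|\sigma'(\bW_t\bX/\sqrt{d})-\sigma'(\bW_0\bX/\sqrt{d})\|_F=O(1/\sqrt{n})$, and their Hadamard product with $\bX^\top\bX/d$ simplifies to $(\bX^\top\bX/d)\diag(\bu)$, which has operator norm at most $\|\bX^\top\bX/d\|\,\|\bu\|_\infty=O(1/n)$; the purely centered quadratic piece has operator norm $O(1/n)$ by another sandwich bound combined with the RMT input $\|\psi(\bW_0\bX/\sqrt{d})/\sqrt{h}\|=O(1)$. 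A final union bound over the finitely many initial concentration events yields the claimed high-probability statement.
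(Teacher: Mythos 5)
Your argument is correct and reaches the claimed bounds, but it takes a genuinely different route than the paper at the NTK operator-norm step. For the weight control, the paper's Lemma~\ref{lemm:gradient-norm-multi} follows \cite{ba2022high} and splits the gradient $\bG_t$ into a rank-one $\mu_1$-piece $\bA^t$ and a nonlinear remainder $\bB^t$; your direct Frobenius bound on the factored form $\diag(\bv)\sigma'(\cdot)\diag(\br_t)\bX^\top$ gives the same order $O(\eta\lambda_\sigma/\sqrt{d})$ per step without the split, which is simpler and entirely adequate for this Frobenius estimate (the decomposition is needed in \cite{ba2022high} for spike emergence, not here). The CK step is the same Lipschitz-plus-sandwich argument in both. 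The genuine divergence is at the NTK: the paper factors $\bK_t^\NTK=\cJ(\bW_t)\cJ(\bW_t)^\top$ in the Jacobian variable $\cJ(\bW_t)$, whose block form absorbs the Hadamard product with $\bX^\top\bX/d$, then uses the elementary bound $\|\cJ(\bW_0)\|\le\lambda_\sigma\|\bv\|_\infty\|\bX/\sqrt{d}\|=O(1)$ together with the Oymak--Soltanolkotabi Lipschitz estimate $\|\cJ(\bW_t)-\cJ(\bW_0)\|=O(1/n)$ and a sandwich. You instead peel off the Hadamard factor first via the Schur bound, correctly observe that a plain sandwich on $\bA_t^\top\bA_t-\bA_0^\top\bA_0$ is lossy because $\|\bA_0\|=\Theta(\sqrt{n})$ from the rank-one $\mu_1 J$ spike, and repair this by decomposing $\sigma'=\mu_1+\psi$: the $\mu_1^2$ block cancels identically, the mixed blocks reduce to $(\bX^\top\bX/d)\diag(\bu)$ with $\|\bu\|_\infty=O(1/n)$, and the centered quadratic block has $O(1/n)$ operator norm using the RMT input $\|\psi(\bW_0\bX/\sqrt{d})/\sqrt{h}\|=O(1)$. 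Both routes give the claimed $O(1/n)$; the paper's is more compact because the Gram factorization $\bK_t^\NTK=\cJ\cJ^\top$ suppresses the $\mu_1$ spike implicitly, whereas your route makes the suppression mechanism explicit, which costs a longer case analysis but also clarifies \emph{why} the naive Schur-plus-sandwich fails and precisely what the Hadamard product with $\bX^\top\bX/d$ is doing to kill the rank-one direction.
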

Lemma~\ref{lemm:CK} shows $\frac{1}{\sqrt{d}}\norm{\bW_{t}-\bW_0}$, $ \norm{\bK_t^{\CK}-\bK_0^{\CK}}$, and $\norm{\bK_t^{\NTK}-\bK_0^{\NTK}}$ are asymptotically vanishing for any fixed time $t$. Therefore, all the eigenvalues/eigenvectors are asymptotically unchanged at the early phase of the training (see Corollary~\ref{coro:invariant} in Appendix~\ref{sec:proofs}). Now we aim to analyze the spectra at the end of the training process \eqref{eq:GD_W}. In this case, although we are unable to show the invariance for each eigenvalue, we can verify the invariance of the limiting bulk distributions for $\bK^{\CK}_t$ and $\bK^{\CK}_t$ for all $t$.

By~\cite[Theorem 2.9]{wang2021deformed}, the smallest eigenvalue of $\bK_0^\NTK$ has an asymptotic lower bound:
\begin{equation}\label{lambdamin}
    \lambda_{\min}(\bK_0^\NTK)\geq \left( a_\sigma-\sum_{k=0}^2\eta_{k}^2\right)(1-o_{d,\P}(1)),
\end{equation}
where $a_\sigma:=\E[\sigma'(\xi)^2]$ and $\eta_{k}$ is the $k$-th Hermite coefficient of $\sigma'$. Hence, we can claim there exists some constant $\alpha>0$ only dependent on $\sigma$ such that $\lambda_{\min}(\bK_0^\NTK)\ge 4\alpha^2$ with high probability. Note that $\alpha$ is not vanishing since $\sigma$ is nonlinear. With this lower bound, we obtain the following global convergence for \eqref{eq:GD_W} and norm control of $\bW_t$ as $n/d\to\gamma_1$ and $h/d\to\gamma_2$.
\begin{theorem}[Global convergence]\label{thm:convergence}
Under the same assumptions of Lemma~\ref{lemm:CK}, we further assume $v_i$'s are independent and centered random variables in the second layer. For any $\eta<\min\{\frac{\alpha^2n}{2},\frac{n}{4\lambda_\sigma^2(1+\sqrt{\gamma_1})^2}\}$ and all $t\in\N$, there exists some $\gamma^*>0$ such that, when $\gamma_2\ge \gamma^*$, the gradient steps \eqref{eq:GD_W} will satisfy
\begin{align}
    &\ell(\bW_t)\le \left(1-\frac{\eta\alpha^2}{2n}\right)^t\ell(\bW_0),\label{eq:convergence}\\
    &\frac{1}{4}\alpha \norm{\bW_0-\bW_t}_F+\ell(\bW_t)\le\ell (\bW_0),\label{eq:path0}\\
    &\sum_{t=0}^\infty \norm{\bW_{t+1}-\bW_t}_F\le \frac{4\ell(\bW_0)}{\alpha},\label{eq:path1}
\end{align}
with high probability, as $n/d\to\gamma_1$ and $h/d\to\gamma_2$. Here, training loss $\ell(\bW):=\left\|\by-f_{\bW}(\bX)\right\|$.
\end{theorem}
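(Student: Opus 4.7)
The plan is to prove all three estimates simultaneously by induction on $t$, carrying three coupled invariants: the linear loss decay \eqref{eq:convergence}, the spectral stability $\lambda_{\min}(\bK_t^\NTK)\ge 2\alpha^2$ together with an $O(1)$ upper bound $\lambda_{\max}(\bK_t^\NTK)\le 2\lambda_\sigma^2(1+\sqrt{\gamma_1})^2$, and the cumulative path bound \eqref{eq:path0}. The base case $t=0$ follows from \eqref{lambdamin}, the Mar\v{c}enko--Pastur estimate $\|\bX\|^2/d\le (1+\sqrt{\gamma_1})^2+o_{d,\P}(1)$, and the boundedness of $\|\bv\|_\infty$; the summability \eqref{eq:path1} will drop out at the end by letting $t\to\infty$ in the telescoped path bound together with $\ell(\bW_t)\to 0$ from \eqref{eq:convergence}.

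For the one-step descent I would Taylor expand $f_\bW(\bX)$ in $\bW$ about $\bW_t$ to write $\by-f_{\bW_{t+1}}(\bX)=(I-\tfrac{\eta}{n}\bK_t^\NTK)(\by-f_{\bW_t}(\bX))-\br_t$, where $\|\br_t\|\le C\lambda_\sigma^2\|\bX\|^2/(hd)\cdot\|\bW_{t+1}-\bW_t\|_F^2$ by Assumption~\ref{ass:activation}. Under the inductive spectral bounds and the step-size condition $\eta<n/(4\lambda_\sigma^2(1+\sqrt{\gamma_1})^2)$, the symmetric operator $I-\eta\bK_t^\NTK/n$ contracts with rate at most $1-2\eta\alpha^2/n$. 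Using the PL-style identity $\|\nabla L(\bW_t)\|_F^2=n^{-2}(\by-f_{\bW_t}(\bX))^\top\bK_t^\NTK(\by-f_{\bW_t}(\bX))\ge 2\alpha^2\ell(\bW_t)^2/n^2$, the quadratic remainder is absorbed at the cost of reducing the contraction rate to $1-\eta\alpha^2/(2n)$, giving \eqref{eq:convergence}.

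For the path-length estimates I would compare the one-step displacement $\|\bW_{t+1}-\bW_t\|_F=\eta\|\nabla L(\bW_t)\|_F\le (\eta/n)\sqrt{\lambda_{\max}(\bK_t^\NTK)}\,\ell(\bW_t)$ with the one-step loss decrement $\ell(\bW_t)-\ell(\bW_{t+1})\ge (\eta\alpha^2/(2n))\ell(\bW_t)$ from the previous step. After calibrating the numerical prefactor against the upper bound on $\sqrt{\lambda_{\max}(\bK_t^\NTK)}$ (with the additional step-size bound $\eta<\alpha^2 n/2$ used to keep the constants in line), one obtains the per-step inequality $\tfrac{\alpha}{4}\|\bW_{t+1}-\bW_t\|_F\le \ell(\bW_t)-\ell(\bW_{t+1})$, whose telescoping across $s=0,\dots,t-1$ yields \eqref{eq:path0}, and whose infinite sum gives the summability \eqref{eq:path1}.

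The main obstacle is closing the induction on $\lambda_{\min}(\bK_t^\NTK)$: one needs $\|\bK_t^\NTK-\bK_0^\NTK\|$ bounded uniformly in $t$, not just at a finite step as in Lemma~\ref{lemm:CK}. Viewing the NTK as a smooth function of $\bW$ and differentiating the $\tfrac{1}{h}\sigma'(\bW\bX/\sqrt{d})^\top\diag(\bv)^2\sigma'(\bW\bX/\sqrt{d})$ factor, one bounds $\|\bK_t^\NTK-\bK_0^\NTK\|$ by a Lipschitz constant times $\|\bW_t-\bW_0\|_F$; the centering and independence of the $v_i$'s produce a $1/\sqrt{h}$ averaging in that Lipschitz constant, so that after inserting the inductive path bound \eqref{eq:path0} the NTK perturbation scales as $O(\ell(\bW_0)/(\alpha\sqrt{\gamma_2}))$. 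Choosing $\gamma^*$ so that this is strictly smaller than $2\alpha^2$ closes the induction. Collecting the concentration events from \eqref{lambdamin}, Lemma~\ref{lemm:CK}, and Mar\v{c}enko--Pastur control of $\bX^\top\bX/d$ delivers the claimed high-probability statements.
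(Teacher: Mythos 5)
Your plan is, at its core, the same argument the paper uses: lower-bound the smallest singular value of the Jacobian via $\lambda_{\min}(\bK_0^\NTK)\ge 4\alpha^2$, bound $\|\cJ(\bW)\|\le\beta:=2\lambda_\sigma(1+\sqrt{\gamma_1})$, exploit the $1/\sqrt{h}$-Lipschitz dependence of $\cJ$ on $\bW$ to show the Jacobian's smallest singular value stays $\ge\alpha$ along the trajectory (which is where $\gamma_2\ge\gamma^*$ enters), and then run the standard contraction/telescoping argument. The difference is that the paper verifies these hypotheses once and then \emph{cites} Theorem~6.10 of \cite{oymak2020toward} to deliver all three conclusions, whereas you re-derive that theorem by an explicit induction in $t$. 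That is a perfectly legitimate reconstruction, and your inductive invariants are exactly the ones an Oymak-style proof carries internally.

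Two places where your sketch is not quite tight. First, your route to \eqref{eq:path0} compares $\|\bW_{t+1}-\bW_t\|_F\le (\eta/n)\sqrt{\lambda_{\max}(\bK_t^\NTK)}\,\ell(\bW_t)\le (\eta\beta/n)\ell(\bW_t)$ with $\ell(\bW_t)-\ell(\bW_{t+1})\ge(\eta\alpha^2/2n)\ell(\bW_t)$, which telescopes to $\|\bW_0-\bW_t\|_F\le \tfrac{2\beta}{\alpha^2}(\ell(\bW_0)-\ell(\bW_t))$. Since $\sigma_{\min}\le\sigma_{\max}$ forces $\alpha\le\beta/2$, this constant $2\beta/\alpha^2$ is no better than $4/\alpha$; it is in general strictly worse, so ``calibrating the numerical prefactor ... with the additional step-size bound $\eta<\alpha^2 n/2$'' will not produce the $\alpha/4$ in the stated \eqref{eq:path0} from this comparison alone. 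The constant as stated really does rely on the specific form of the cited theorem, not on the naive telescoping. Second, the role you assign to the centering and independence of the $v_i$'s is slightly misplaced: the $1/\sqrt{h}$ factor in the Jacobian Lipschitz constant comes from the $\tfrac{1}{h}$ normalization of the NTK (cf.\ the proof of Lemma~\ref{lemm:CK}), not from averaging over $\bv$. Centering and independence of $v_i$ are used earlier and for a different purpose: to get the Hoeffding bound $\|f_{\bW_0}(\bX)\|\lesssim\sqrt{n}$, hence $\ell(\bW_0)\lesssim\sqrt{n}$, which in turn makes the perturbation radius $R=4\ell(\bW_0)/\alpha$ of order $\sqrt{n}$ so that $LR\sim\sqrt{n/h}=\sqrt{\gamma_1/\gamma_2}$ can be made small by taking $\gamma_2$ large. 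You invoke the concentration events but do not flag this specific $\ell(\bW_0)=O(\sqrt n)$ estimate, which is exactly where the extra assumption on $\bv$ (beyond Lemma~\ref{lemm:CK}) is needed and why it appears in the theorem statement.
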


We apply the techniques and results by~\cite{oymak2019generalization,oymak2019overparameterized} to obtain Theorem~\ref{thm:convergence}. Notice that, unlike Lemma~\ref{lemm:CK}, the largest learning rate we can choose is of order $\Theta(n)$. As a byproduct, the Frobenius norm in \eqref{eq:path0} implies the following corollary for the invariance of limiting \textit{bulk} distribution. 
\begin{corollary}\label{coro:change}
Under the same assumptions of Theorem~\ref{thm:convergence}, for all $t\in\N$, with high probability, there exists some constant $R>0$ such that the changes $\frac{1}{\sqrt{d}}\norm{\bW_{t}-\bW_0}_F$, $\norm{\bK_t^{\CK}-\bK_0^{\CK}}_F$, and $\norm{\bK_t^{\NTK}-\bK_0^{\NTK}}_F$ are all less than $R$ with high probability. This implies the limiting empirical spectra of $\frac{1}{h}\bW_t^\top\bW_t$, $\bK_t^{\CK}$ and $\bK_t^{\NTK}$ are the same as the limiting spectra of $\frac{1}{h}\bW_0^\top\bW_0$, $\bK_0^{\CK}$ and $\bK_0^{\NTK}$ respectively, almost surely as $n/d\to\gamma_1$ and $h/d\to\gamma_2$.
\end{corollary}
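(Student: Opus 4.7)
The plan is to derive Frobenius-norm bounds on the three changes from \eqref{eq:path0}, and then convert these bounds into spectral invariance via the Hoffman--Wielandt inequality. From \eqref{eq:path0} one immediately obtains $\|\bW_t-\bW_0\|_F \leq 4\ell(\bW_0)/\alpha$. The initial residual $\ell(\bW_0)^2 = \|\by-f_{\bW_0}(\bX)\|^2$ is a sum of $n$ centered random variables with $\Theta(1)$ variance (each $f_{\bW_0}(\bx_i)$ is bounded in $L^2$ under Assumptions~\ref{ass:activation} and \ref{ass:teacher}, each $\eps_i$ is sub-Gaussian, and $f^*$ is Lipschitz by the hypothesis of Lemma~\ref{lemm:CK}), so a standard concentration bound gives $\ell(\bW_0)=O(\sqrt n)$ with high probability. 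Dividing by $\sqrt d$ and using $n/d\to\gamma_1$ then yields $\tfrac{1}{\sqrt d}\|\bW_t-\bW_0\|_F \leq R$ for some $R$ independent of $t$ and $n$.

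For the CK, I would apply the telescoping identity $\bA^\top\bA-\bB^\top\bB=\bA^\top(\bA-\bB)+(\bA-\bB)^\top\bB$ with $\bA=\sigma(\bW_t\bX/\sqrt d)$ and $\bB=\sigma(\bW_0\bX/\sqrt d)$, which gives
\[
\|\bK_t^{\CK}-\bK_0^{\CK}\|_F \;\le\; \tfrac{1}{h}\bigl(\|\bA\|_{op}+\|\bB\|_{op}\bigr)\,\|\bA-\bB\|_F.
\]
Lipschitzness of $\sigma$ and the standard bound $\|\bX/\sqrt d\|_{op}=O(1)$ give $\|\bA-\bB\|_F\le \lambda_\sigma\|\bW_t-\bW_0\|_F\|\bX/\sqrt d\|_{op}=O(\sqrt n)$, while prior nonlinear RMT provides $\|\bB\|_{op}=O(\sqrt h)$. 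Bootstrapping $\|\bA\|_{op}\le \|\bB\|_{op}+\lambda_\sigma\|(\bW_t-\bW_0)\bX/\sqrt d\|_{op}=O(\sqrt h)$ then yields $\|\bK_t^{\CK}-\bK_0^{\CK}\|_F=O(\sqrt{n/h})=O(1)$. The NTK is handled analogously after isolating the Hadamard factor via $\|\bU\odot\bV\|_F\le \max_{i,j}|U_{ij}|\cdot\|\bV\|_F$: the data Gram $\bX^\top\bX/d$ has entries of order $O(1)$ with high probability, and the derivative factor is controlled by the same telescoping argument applied to $\sigma'$, whose Lipschitz constant is $\lambda_\sigma$ by Assumption~\ref{ass:activation}, while the diagonal factor $\mathrm{diag}(\bv_0)^2$ is uniformly bounded by $\|\bv_0\|_\infty^2\le 1$. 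The spectral invariance of $\tfrac1h\bW_t^\top\bW_t$ itself uses the same identity with $\bA=\bW_t$, $\bB=\bW_0$.

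The spectral statement then follows from the Hoffman--Wielandt inequality: for any two symmetric matrices $\bM_1,\bM_2$ of dimension $m$ one has $\sum_i (\lambda_i(\bM_1)-\lambda_i(\bM_2))^2 \le \|\bM_1-\bM_2\|_F^2$, so the Wasserstein-2 distance between the empirical spectral distributions satisfies $W_2(\hat\mu_{\bM_1},\hat\mu_{\bM_2})\le \|\bM_1-\bM_2\|_F/\sqrt m\le R/\sqrt m\to 0$. Applied to the three pairs $(\tfrac{1}{h}\bW_t^\top\bW_t,\tfrac{1}{h}\bW_0^\top\bW_0)$, $(\bK_t^{\CK},\bK_0^{\CK})$, and $(\bK_t^{\NTK},\bK_0^{\NTK})$, and combined with the known limits at initialization (the Marchenko--Pastur law for the weights and its nonlinear generalizations from \cite{pennington2017nonlinear,fan2020spectra,adlam2020neural}), this gives the desired invariance. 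The main obstacle I anticipate is the \emph{endpoint} operator-norm control on $\sigma(\bW_t\bX/\sqrt d)$ and $\sigma'(\bW_t\bX/\sqrt d)$, since the sharp RMT bounds in the literature are stated at initialization; the triangle-inequality bootstrap sketched above handles this by observing that the deviation terms are at most $O(\sqrt h)$ in operator norm and hence subdominant relative to the initial value.
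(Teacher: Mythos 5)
Your plan for the weight and CK changes and the final spectral step is the same as the paper's: bound $\|\bW_t-\bW_0\|_F$ by $4\ell(\bW_0)/\alpha$ via \eqref{eq:path0} together with a concentration bound $\ell(\bW_0)=O(\sqrt n)$, then telescope $\bA^\top\bA-\bB^\top\bB$ for the CK, and finally pass from $O(1)$ Frobenius control to invariance of the limiting ESD (the paper cites Bai--Silverstein Corollary A.41, which is exactly the Hoffman--Wielandt / Wasserstein argument you sketch). The minor exponent slip in the CK estimate ($O(n/h)$, not $O(\sqrt{n/h})$) is harmless.

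The NTK step, however, has a genuine gap. You propose $\|\bU\odot\bV\|_F\le\max_{ij}|U_{ij}|\,\|\bV\|_F$ with $\bU=\bX^\top\bX/d$ and $\bV$ the change in the $\sigma'$-Gram factor $\tfrac{1}{h}\sigma'(\bW\bX/\sqrt d)^\top\diag(\bv)^2\sigma'(\bW\bX/\sqrt d)$. For this to give $O(1)$ you need $\|\bV\|_F=O(1)$, and here your ``same telescoping argument'' fails: that telescoping costs a factor of $\|\diag(\bv)\sigma'(\bW_0\bX/\sqrt d)\|_{op}$, and Assumption~\ref{ass:activation} only centers $\sigma$, not $\sigma'$. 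If $\eta_0=\E[\sigma'(\xi)]\neq 0$ (the generic case), $\sigma'(\bW_0\bX/\sqrt d)$ carries a rank-one component of size $\Theta(\sqrt{hn})$, so the telescoping yields $\|\bV\|_F=\Theta\big(\tfrac{1}{h}\cdot\sqrt{hn}\cdot\sqrt n\big)=\Theta(\sqrt n)$, not $O(1)$. The Hadamard-max bound is lossy precisely because it discards the $O(1/\sqrt d)$ decay of the off-diagonal entries of $\bX^\top\bX/d$, which is what actually suppresses this spike. The paper avoids the issue by working with the Jacobian $\cJ(\bW)$ directly: $\bK^\NTK=\cJ\cJ^\top$, and both $\|\cJ(\bW)\|_{op}\le\lambda_\sigma\|\bv\|_\infty\|\bX/\sqrt d\|_{op}=O(1)$ (Oymak--Soltanolkotabi Lemma 6.6) and $\|\cJ(\bW_t)-\cJ(\bW_0)\|_F=O(1)$ hold using only $|\sigma'|\le\lambda_\sigma$, $|v_i|\le 1$, and $\|\bX/\sqrt d\|_{op}=O(1)$; no cancellation of the mean of $\sigma'$ is needed, because the $\bX^\top/\sqrt d$ factor is already baked into each block $\cJ(\bw_i)=\tfrac{v_i}{\sqrt h}\diag(\sigma'(\bX^\top\bw_i/\sqrt d))\bX^\top/\sqrt d$. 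You would need to replace your Hadamard step with this Jacobian decomposition (or, equivalently, split $\bX^\top\bX/d$ into diagonal and off-diagonal and use the $O(1/\sqrt d)$ off-diagonal decay explicitly) to close the argument.
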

Corollary~\ref{coro:change} is empirically validated by Figure~\ref{fig:GD1st} in Appendix~\ref{sec:proofs}. In addition, based on Figure~\ref{fig:SGD1st} in Appendix~\ref{sec:proofs}, one can further extend Corollary~\ref{coro:change} to the SGD training process. The total path is $O(\sqrt{h})$ in \eqref{eq:path0} and \eqref{eq:path1}, which is negligible compared with the Frobenius norm of initial weight matrix (which is of order $\Theta(h)$). Thus, gradient descent iterates \eqref{eq:GD_W} remain close to initialization and small perturbation of NTK ensures the smallest eigenvalue \eqref{lambdamin} of NTK is always lower bounded away from zero. Theorem~\ref{thm:convergence}, however, does not require that the NTK stays unchanged all the time. Moreover, Corollary~\ref{coro:change} only shows the invariance of the bulk distribution, while the emergence of outliers cannot be excluded from this result. Though we have global convergence in general, we may still move out of the kernel regime. Global convergence cannot explain when a NN in LWR outperforms the kernel regime (Figure~\ref{fig:large}(c)). Notice that~\cite[Theorem 5.4]{bartlett2021deep} is not directly applicable to show that a NN is still close to lazy training under the LWR. It requires deeper analysis to claim whether a NN still belongs to the kernel regime or already goes beyond in our case. As we will show in Section~\ref{sec:spikes}, this also relies on the magnitude of the learning rate for GD/SGD.

\subsection{Alignments for Spiked Models}\label{sec:spikes}
The outliers appear in the spectra of the trained weight, CK, and NTK matrices (Figure~\ref{fig:largeSGD} in Appendix~\ref{sec:synthetic}) when NNs are optimized with large learning rates. The outlier is especially clear for the NTK matrix (Figure~\ref{fig:largeSGD}(b)). Heuristically, this indicates that the NN is learning the feature from the teacher model $f^*$.
In Figure~\ref{fig:transition}(a), Figures~\ref{fig:spikes} and \ref{fig:Adam0_alig} in Appendix~\ref{sec:additional}, we empirically exhibit these phenomena for $\bW$, $\bK^{\CK}$ and $\bK^{\NTK}$ respectively through different training processes.

\begin{figure*}[ht]
\centering
\begin{minipage}[t]{0.258\linewidth}
\centering
{\includegraphics[width=0.99\textwidth]{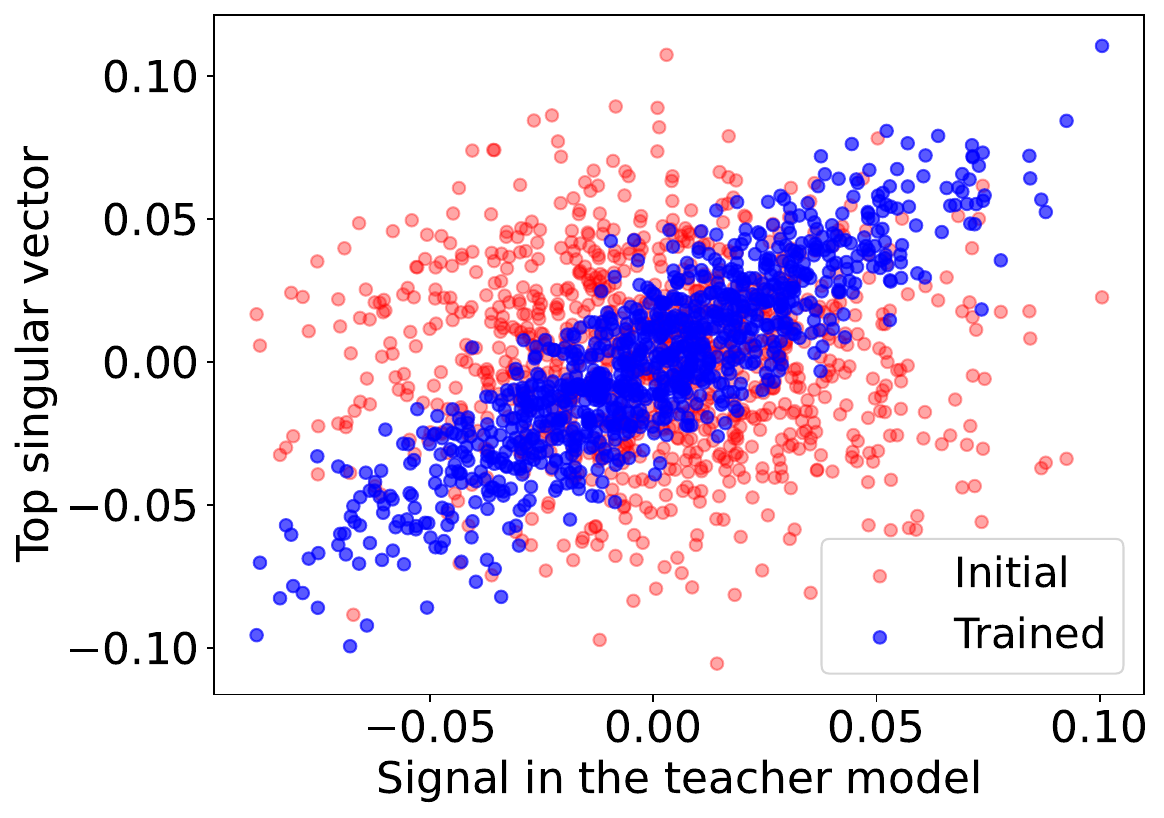}} \\
\small (a) $\bbeta$ vs. $\bu_1$ of weight. 
\end{minipage}
\begin{minipage}[t]{0.237\linewidth}
\centering 
{\includegraphics[width=0.99\textwidth]{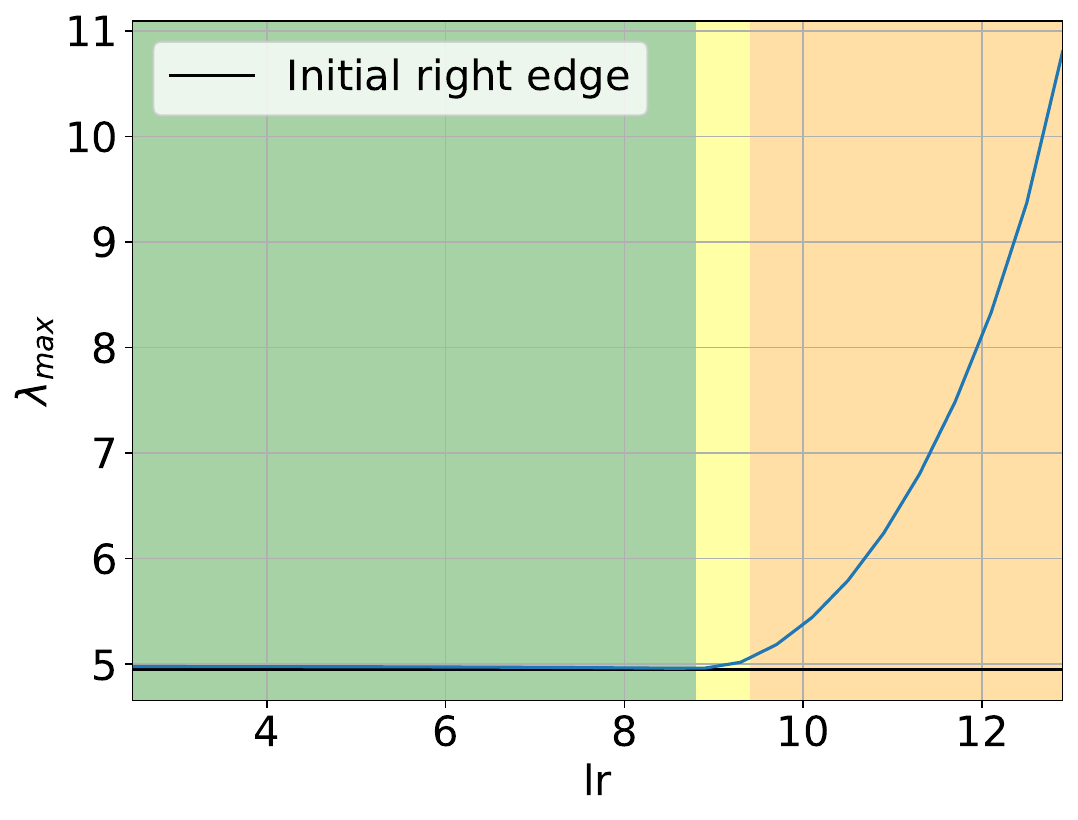}}\\
{\small (b) $\lambda_{\max}(\bW_t\bW_t^\top/d)$}
\end{minipage}
\begin{minipage}[t]{0.237\linewidth}
\centering 
{\includegraphics[width=0.99\textwidth]{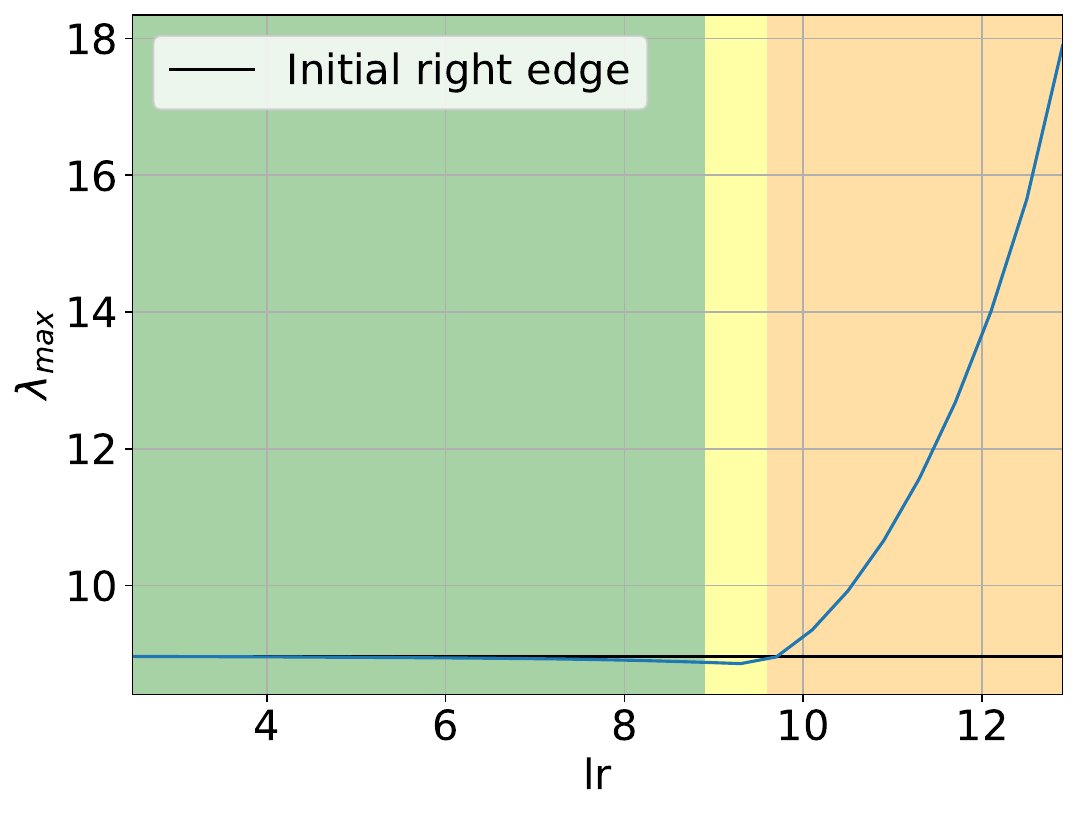}}\\
{\small (c) $\lambda_{\max}(\bK_t^{\CK})$}
\end{minipage} 
\begin{minipage}[t]{0.24\linewidth}
\centering 
{\includegraphics[width=0.99\textwidth]{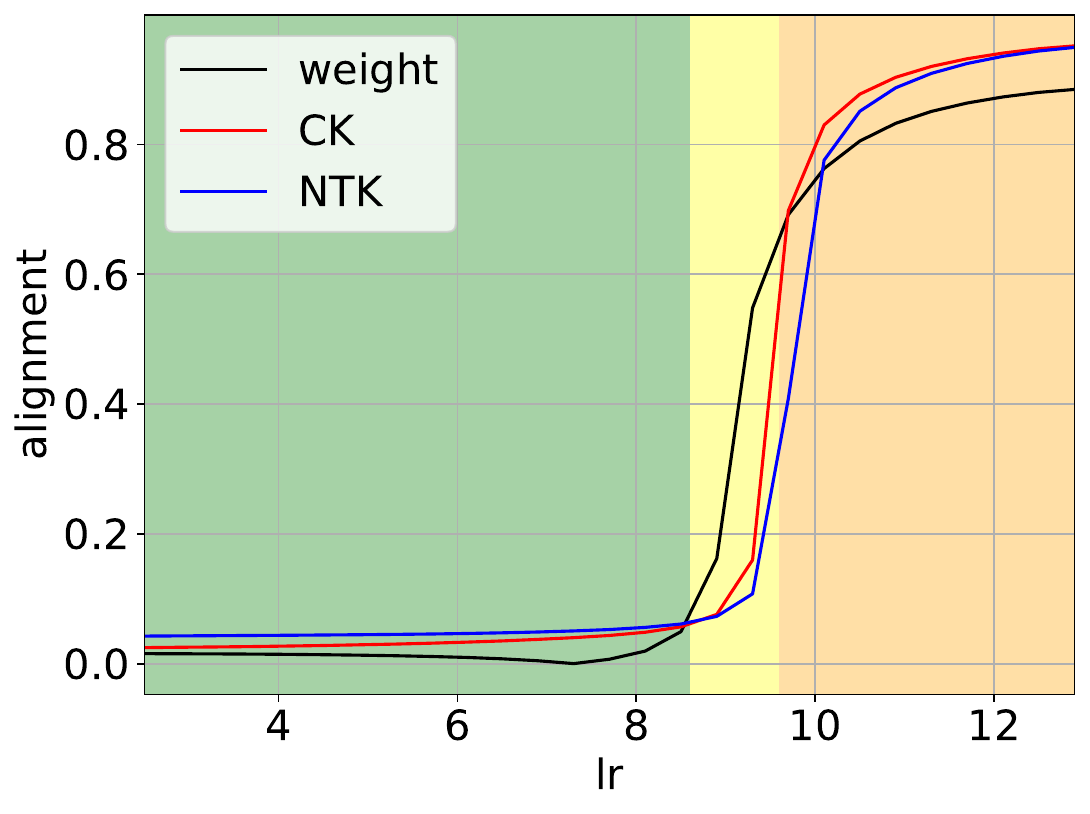}}\\
{\small (d) Alignments}
\end{minipage}
\centering
\caption{\small
(a) Alignment between teacher feature $\bbeta$ and first PC of the trained/initial weights in Case 3 of Table~\ref{table:examples}. (b)-(d) Transitions of 
$\lambda_{\max}(\bW_t\bW_t^\top/d)$, $\lambda_{\max}(\bK_t^{\CK})$ and alignments ($|\bbeta^\top\bu_1|/\norm{\bbeta}$ and $|\by^\top\bv_1|/\norm{\by}$ where $\bu_1$ and $\bv_1$ are the first singular vectors of $\bW_t$ and either $\bK_t^{\CK}$ or $\bK_t^{\NTK}$, respectively) when increasing the learning rate $\eta$ while training the NN with SGD. In the green region, the largest eigenvalues are attached to the bulk (black horizontal lines) and the alignments are weak; in the orange one, outliers become apparent and the alignments become stronger. For different  $\eta$, we train 
the same NN with the same dataset 
until the training loss is less than $10^{-5}$. Here, ``lr'' in the $x$-axis represents varying learning rates.}
\label{fig:transition} 
\end{figure*}
\begin{figure*}[ht]  
\centering
\begin{minipage}[t]{0.245\linewidth}
\centering
{\includegraphics[width=0.98\textwidth]{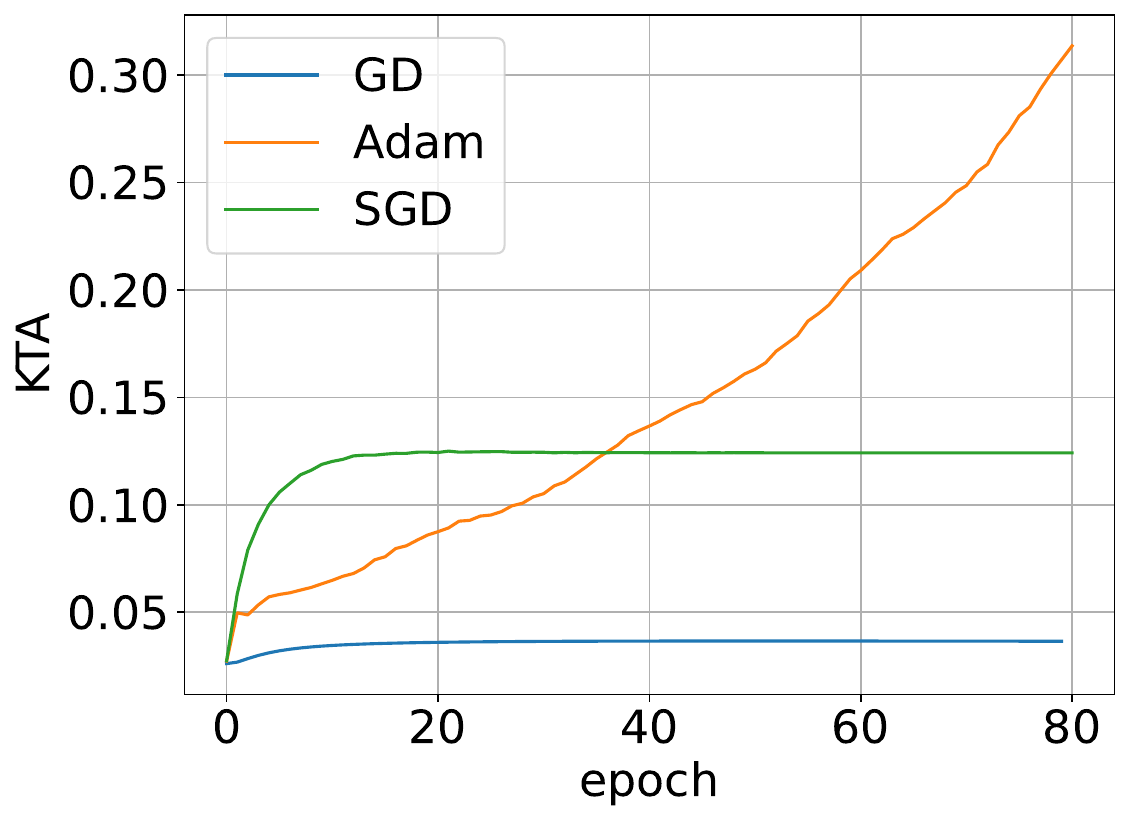}} \\ 
\small (a) Evolution of KTA.  
\end{minipage}  
\begin{minipage}[t]{0.245\linewidth}
\centering
{\includegraphics[width=0.99\textwidth]{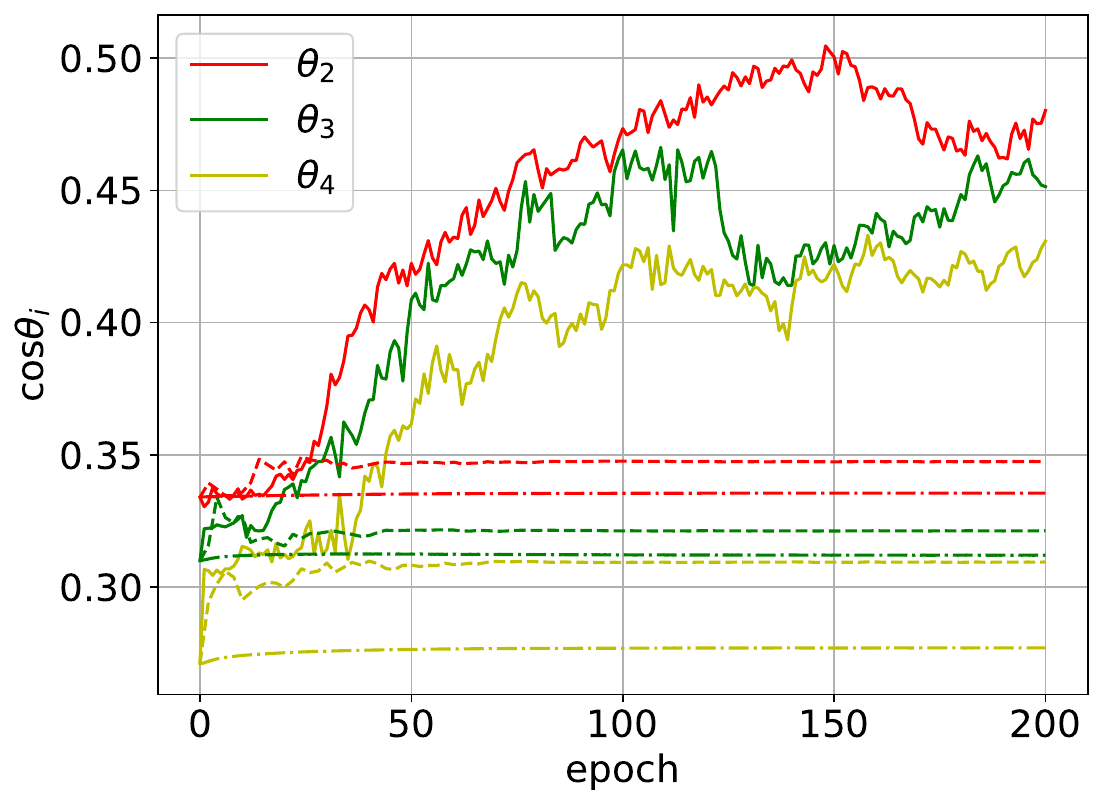}} \\
\small (b) Alignment evolution. 
\end{minipage}
\begin{minipage}[t]{0.245\linewidth}
\centering 
{\includegraphics[width=0.978\textwidth]{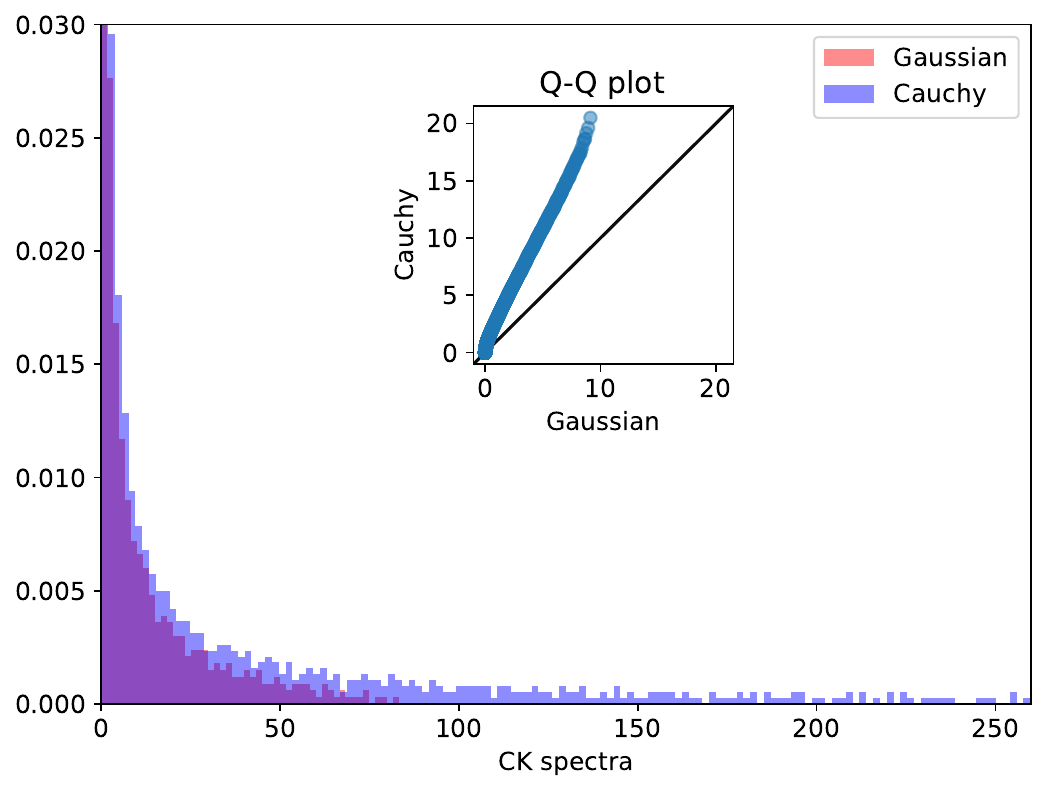}} \\ 
\small (c) Heavy-tailed initial CK. 
\end{minipage} 
\begin{minipage}[t]{0.245\linewidth}
\centering
{\includegraphics[width=0.965\textwidth]{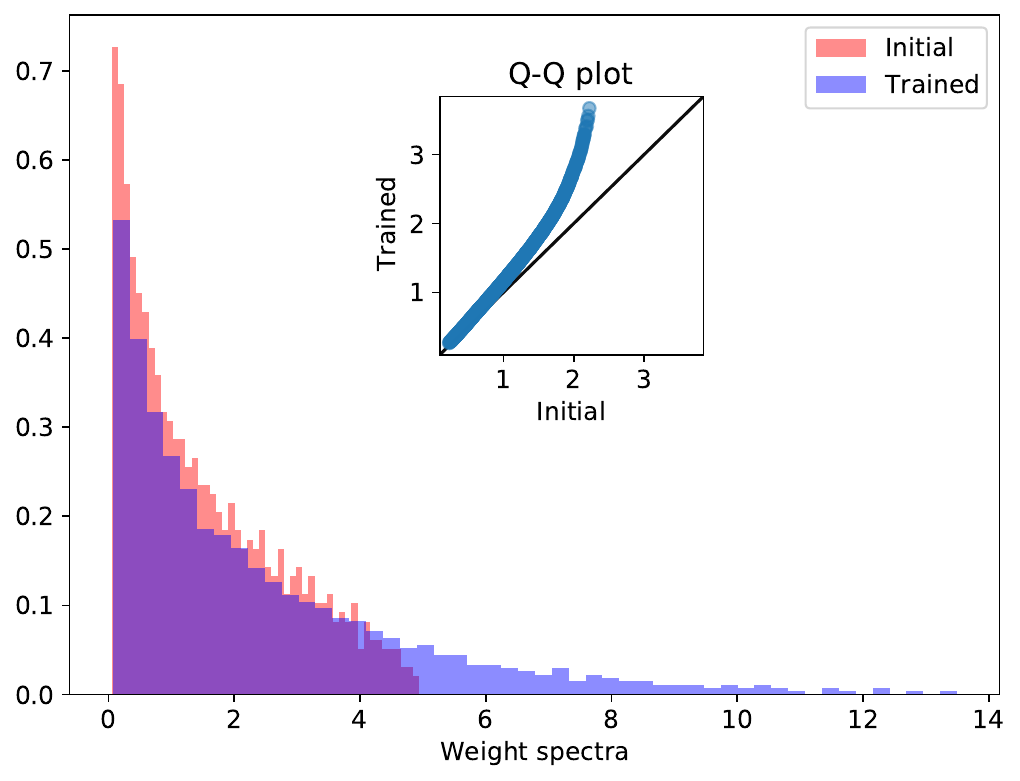}} \\ 
\small (d) Bad example.  
\end{minipage}
 \caption{\small (a) Evolution of KTA of CK defined by \eqref{eq:KTA} with respect to training labels for Cases 1, 3$\&$4 in Table~\ref{table:examples}. We normalize the epoch scales ($x$-axis) for better observations. Heavy-tailed phenomena:
 (b) Evolutions of PC angles $\theta_i$ between feature subspace $U$ of \eqref{eq:multiple} and top 100 eigenspace of $\bW_t^\top\bW_t$ during training with Adam (solid line), SGD (dashed line) and GD (dash-dot). For the first PC $\theta_1$, see Figure~\ref{fig:adam_results1} in Appendix~\ref{sec:multiple}.
 (c) The CK spectra at two initializations for $\bW$: standard Gaussian and Cauchy distributions. 
 (d) Weight spectra at initial and after SGD training. After training the weight reveals a heavy tail, but generalizes not as well as former examples (test loss $1.47504$; $R^2$ score $-0.48$). 
 }   
\label{fig:heavy}  
\end{figure*}


\paragraph{Spikes of Weight Matrices.} 
The differences between Cases $2\&3$ empirically validate the benefits of training with large learning rates \citep{li2019towards,nakkiran2020learning,long2021properties,beugnot2022benefits,andriushchenko2022sgd}. 
Inspired by~\cite{ba2022high}, we consider the alignment between the leading right singular vector $\bu_1$ of $\bW_t$ and the signal $\bbeta$ in the teacher model defined by \eqref{eq:teacher}. For Case 3, a notable alignment appearing in Figure~\ref{fig:transition}(a) after training suggests that $\bW_t$ is capturing the feature $\bbeta$ during training. Although this does not ensure NN will entirely beat the optimal kernel lower bound, this alignment reveals a non-negligible feature selection \citep{baratin2021implicit} via large-stepsize training. This dynamical alignment along the task-relevant direction may further interpret the generalization of the NN. We also observe similar phenomena for the adaptive optimization in Figure~\ref{fig:Adam0_alig} in Appendix~\ref{sec:diff_global_minima}.


\paragraph{Transitions of the Spike as a Function of Learning Rate.} From Case 2 to Case 3, we observe the emergence of outliers in the trained spectra when increasing the learning rate $\eta$. This indicates a transition of the emergence of the spike outside the bulk distribution. Figure~\ref{fig:transition}, analogously to the well-known BBP transition by Baik, Ben Arous, and P\'ech\'e in~\cite{baik2005phase} from the RMT community, shows there is a threshold (yellow region) for learning rate: the outliers only appear when $\eta$ exceeds this threshold. We fix the same NN and dataset for all trials of training. The flat black lines in Figures~\ref{fig:transition}(b) and (c) are the right edges of the limiting spectra at initialization. Figure~\ref{fig:transition}(d) records the angles between $\bbeta$ and the leading eigenvector of $\bW_t^\top\bW_t/d$, and $\by$ and the leading eigenvectors of $\bK_t^\CK$ and $\bK_t^\NTK$ after training for different $\eta$. Similarly with \cite{baratin2021implicit}, when $\eta$ is sufficiently large (orange region), we obtain significant alignments which suggest potential feature learning. These transitions of leading eigenvalue and eigenvector alignment have been proved for $\bW_t$ by \cite{ba2022high} for a different scenario\footnote{We apply NTK parameterization for our neural networks and train both layers until convergence, while \cite{ba2022high} considers the mean-field initialization and early stage of training dynamics of GD for the first layer.}.


\paragraph{Spikes of Kernel Matrices.}
The alignment of the kernel matrix with the training labels $\by$ is defined by~\citep{cristianini2001kernel} by Kernel Target Alignment (KTA) as follows: when kernel $\bK$ is either CK or NTK,
\begin{align}
    \mathrm{KTA} = \frac{\langle\bK,\by^\top\by\rangle}{\norm{\bK}_F\norm{\by}^2}.
    \label{eq:KTA}
\end{align}
Analogously to \cite{baratin2021implicit,atanasov2022neural,shan2021theory}, Figure~\ref{fig:heavy}(a) depicts the evolution of KTA of CK in several cases. Based on Figure~\ref{fig:transition}(d), when the spike appears outside the bulk (Case $3$), its corresponding (leading) eigenvector $\bv_1$ of kernel matrix naturally dominates the alignment with $\by$ (Figure~\ref{fig:spikes} in Appendix~\ref{sec:synthetic}), which is regarded as a kernel rotation during training in \cite{ortiz2021can}. Notice that this is not the common situation in Cases 1$\&$2 of Table~\ref{table:examples} (and cf. Figure~\ref{fig:GD_alignment} in Appendix~\ref{sec:synthetic}). On the other hand, KTA measures the alignment between $\by$ and the full eigenbasis of the kernel. These kernel alignments improve the speed of the convergence of training dynamics but may hurt or boost the generalization of the NNs \citep{ortiz2021can,shan2021theory,baratin2021implicit}. Figure~\ref{fig:heavy}(a) indicates that Case 4 with heavy-tailed spectra after training has a larger KTA than the other cases. In this case, the emergence of a heavy tail in the spectrum is closely related to a better generalization of the NN and more significant feature learning.  

\subsection{Phenomenon of Heavy-tailed Spectra}\label{sec:heavy}
Next, we analyze the heavy-tailed spectra of weight and kernel matrices in Figure~\ref{fig:spectra}(c). 
\cite{martin2019traditional,martin2018implicit} found a strong correlation between the heavy-tailed spectra of trained state-of-the-art models with better generalization (Figure~\ref{fig:bert_spectra} in Appendix~\ref{sec:real}). Heavy-tailed spectra can be viewed as an extreme of ``bulk+spikes'', where a fraction of the eigenvalues move out of the initial bulk. In RMT, heavy-tailed spectra generally appear when the entries of the matrix are highly correlated~\cite {martin2018implicit}. This could heuristically explain heavy-tailed phenomena in the spectra since the entries of well-trained $\bW_t$ should be strongly correlated. 
Unlike \cite{martin2019traditional,martin2018implicit}, we focus on the heavy-tailed phenomena for both weight and kernel matrices in a simpler model \eqref{eq:2NN} and provide a connection between feature learning and heavy-tailed spectra, which opens an important avenue for further theoretical analysis.

\begin{figure*}[t]  
\centering
\begin{minipage}[t]{0.328\linewidth}
\centering
{\includegraphics[width=0.99\textwidth]{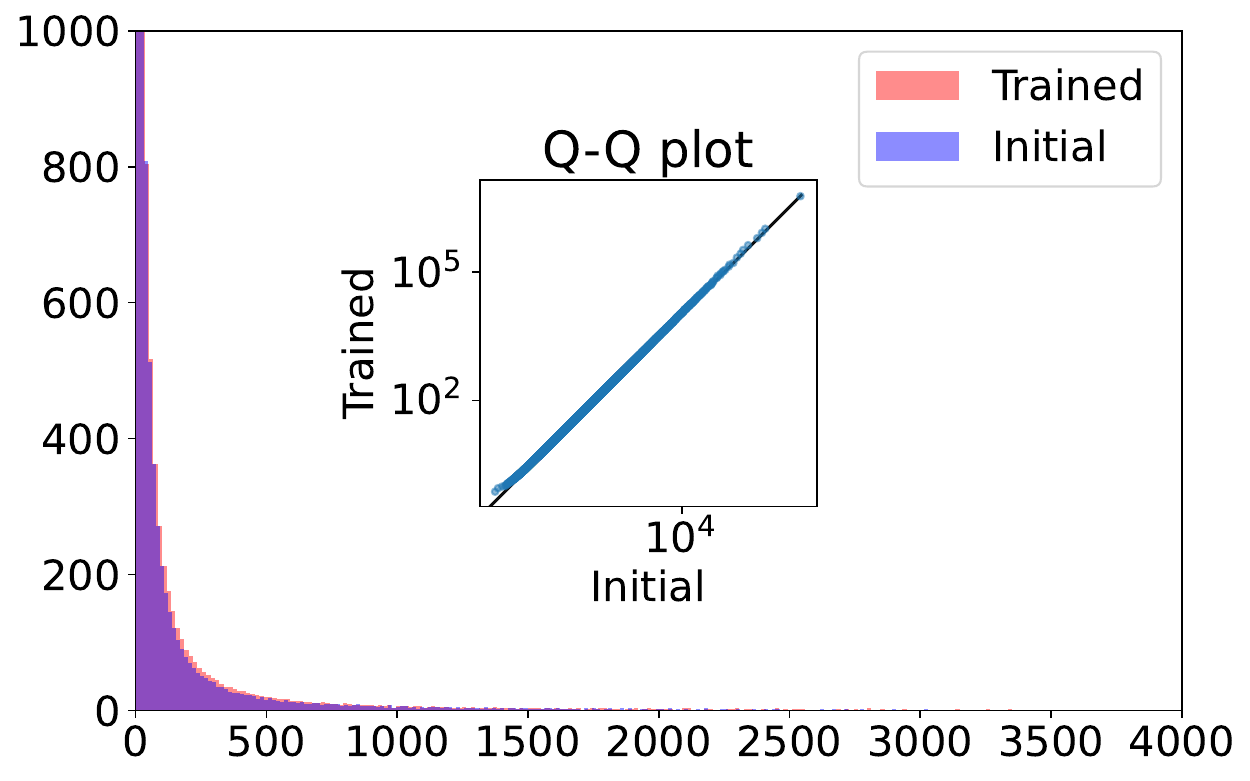}}\\
{\small (a) GD}
\end{minipage}
\begin{minipage}[t]{0.329\linewidth}
\centering 
{\includegraphics[width=0.98\textwidth]{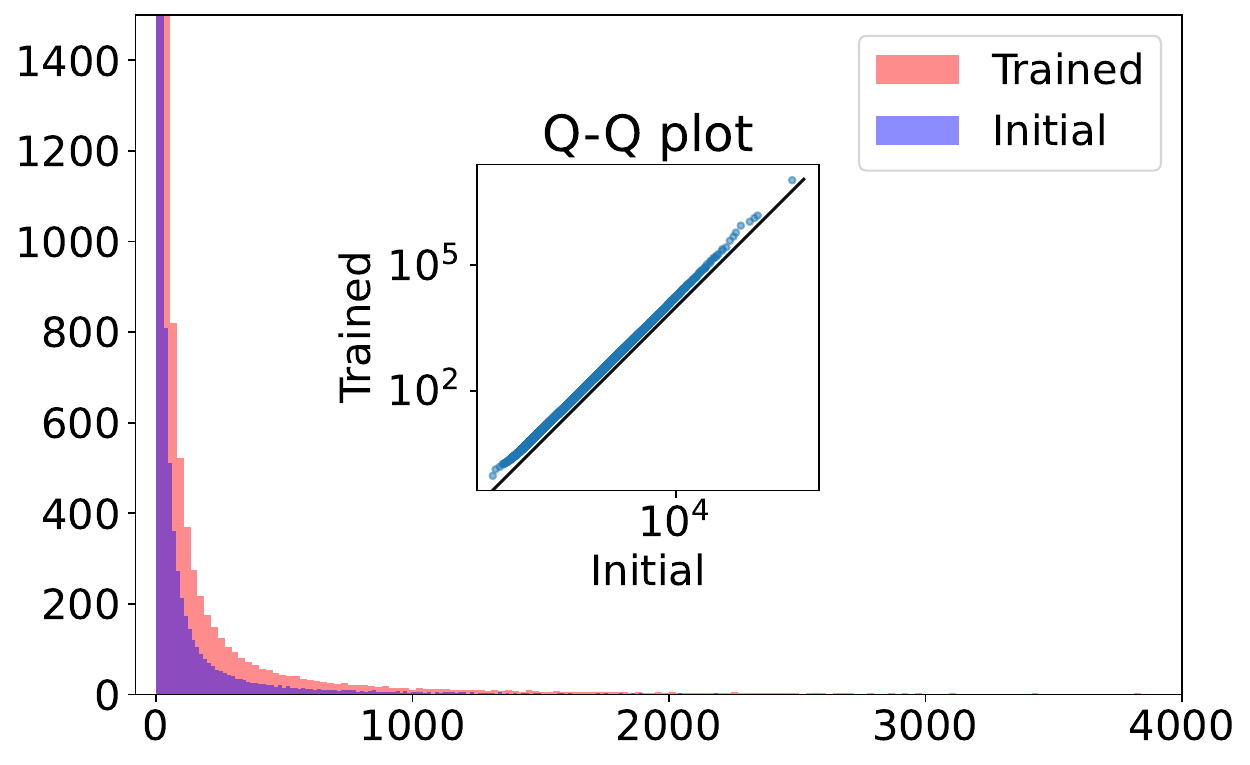}} \\
{\small (b) SGD}
\end{minipage} 
\begin{minipage}[t]{0.328\linewidth}
\centering
{\includegraphics[width=0.98\textwidth]{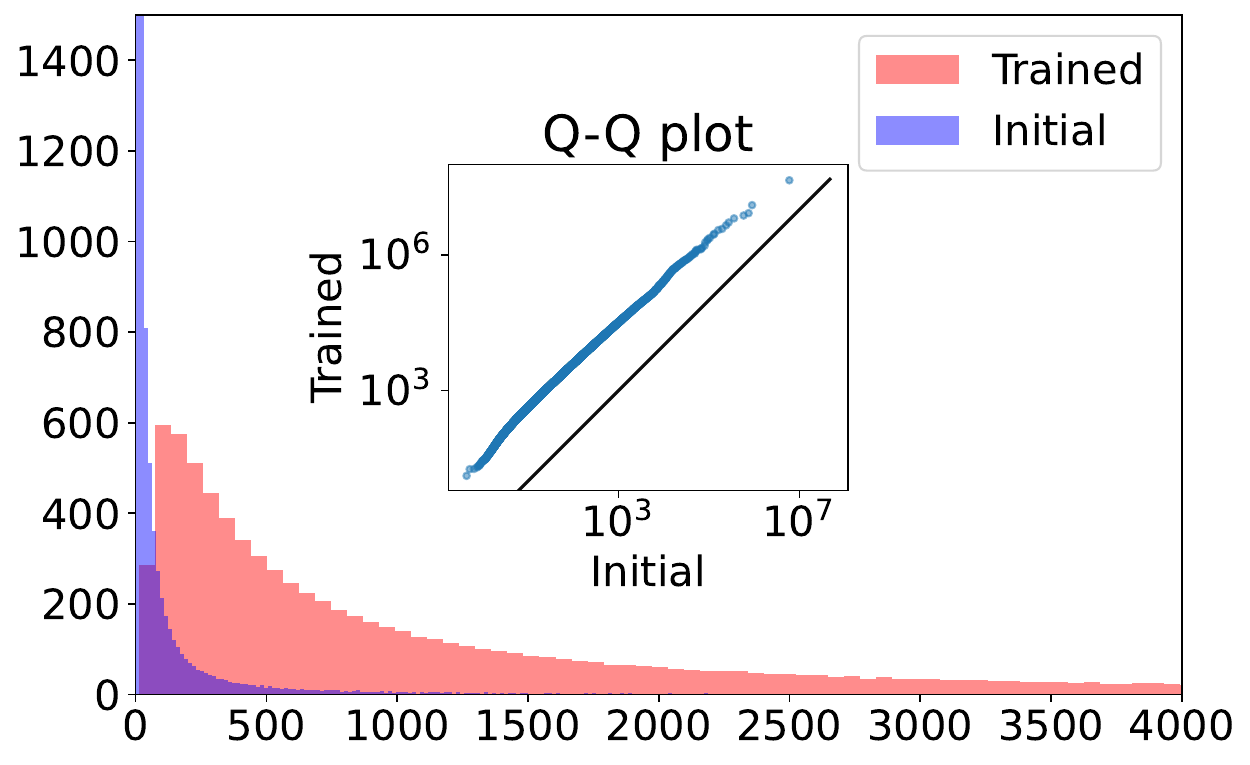}}   \\
{\small (c) Adam}
\end{minipage}
\caption{\small
Different NTK spectra for a small-CNN model on CIFAR-2. The subplots are Q-Q plots for the comparison between initial and trained spectra. Test accuracies: (a) 79$\%$, (b) 84$\%$, (c) 86.4$\%$.}   
\label{fig:vgg} 
\end{figure*}

\paragraph{Heavy Tails and Generalization.} 
We emphasize that heavy tails are not sufficient for good generalization, in general, \cite{martin2021predicting,meng2021implicit}. 
Figures~\ref{fig:heavy}(c)\&(d) exhibit NNs with heavy-tailed weights but in the absence of good performance at initialization. In fact, it is the alignments between the features learned from the heavy-tailed part and the features in the teacher model that finally determine the generalization error of NNs. 

More precisely, we provide an example of when heavy tails indicate better generalizations. Consider the multiple-index teacher model \eqref{eq:multiple} with $k=5$ feature directions $\bbeta_i$, and train NNs \eqref{eq:2NN} with GD, SGD, and Adam to get invariant bulk, bulk with one spike and heavy tails, respectively, after training. In Figure~\ref{fig:heavy}(b), we present the evolutions of the \textit{principle angles} $\theta_i$ between feature subspace $U=\text{span} \{\bbeta_i\}_{i=1}^k$ and top 100 eigenspace of $\bW_t^\top\bW_t$ during different training processes. This eigenspace with respect to the top 100 eigenvalues of $\bW_t^\top\bW_t$ corresponds to the heavy-tail part of the spectrum in $\bW_t^\top\bW_t$ when training NNs with Adam (solid lines in Figure~\ref{fig:heavy}(b)). Comparing with GD and SGD  training processes, we observe strong alignments between feature space $U$ and eigenspace w.r.t heavy tails in Adam case in Figure~\ref{fig:heavy}(b), which explains why Adam case (NNs with heavy-tailed spectra) generalizes better than the other two cases. For more examples, see Figures~\ref{fig:adam_results1} and \ref{fig:heavy_explain} in Appendix~\ref{sec:multiple}. This concludes that NNs with heavy-tailed spectra can generalize better only when the teacher features from data are aligned with the heavy-tailed part of spectra. If the feature dimension in the teacher model is high (i.e. the teacher model is more complicated and intrinsically high-dimensional), then we expect to get a heavy-tailed weight spectrum of well-trained NN where the heavy-tailed part learns all the features in the teacher modes. This example explains why we can use the heavy tails to discriminate well-trained and poorly-trained large models \cite{martin2021predicting,meng2021implicit,yang2022evaluating}.

\section{Discussions and Future Directions}\label{sec:real_world}
We empirically investigated how the spectra of $\bW$, $\bK^\CK$, and $\bK^\NTK$ evolve under the LWR for an idealized student-teacher setting. 
Our work implies that understanding the relationship between feature learning and training processes requires understanding the evolution of the spectra of both weight and kernel matrices. In particular, we show that different training processes affect the eigenstructure of weight and kernel matrices. Since evolution is sensitive to feature learning, we can link feature learning and different training dynamics by studying the spectra of these matrices.

While synthetic data is easier to analyze theoretically, we also investigate these spectral properties on real-world data 
and more complicated tasks in the following. 
In practice, people mainly focus on analyzing spectra of the weight matrices in fully connected layers; we choose to also focus on the spectral properties of general kernel matrices induced by the NNs, which contain abundant information 
\citep{chen2020label,long2021properties,atanasov2022neural,shan2021theory}. 

First, we show the spectra of $\bK^\NTK$ before and after training for binary classification on CIFAR-2 through small CNNs in Figure~\ref{fig:vgg}. Similarly with Case 1, Figure~\ref{fig:vgg}(a) (especially in the Q-Q subplot) manifests the invariant spectral distribution of NTK through GD training while SGD exhibits a heavier tail in NTK spectrum in Figure~\ref{fig:vgg}(b). This phenomenon is more evident when trained by Adam in Figure~\ref{fig:vgg}(c) with improved accuracy. Figure~\ref{fig:vgg} suggests that our observations on synthetic data in Section~\ref{sec:cases} can be extended to real-world data and on more practical architectures. We note that there is a lack of the emergence of spikes after training because spikes already exist in the initial NTK spectrum for this complicated neural architecture on real-world datasets. Figure~\ref{fig:vgg}(a) also indicates that the spectral invariance of NTK through training will impede the feature learning and the NN does not generalize well in this training process.

We also investigate the spectral properties on the pre-trained model, BERT from \cite{devlin2018bert}, with fine-tuning on Sentiment140 dataset of tweets\footnote{\hyperlink{https://www.kaggle.com/datasets/kazanova/sentiment140}{https://www.kaggle.com/datasets/kazanova/sentiment140}} from \cite{go2009twitter}. We fine-tune the BERT model for a binary classifier on Sentiment140 and capture the evolution of CK spectra, rather than the NTK due to the size of BERT, in Figure~\ref{fig:bert} (see also Figure~\ref{fig:bert_spectra} in Appendix~\ref{sec:real}). 

\begin{figure}[ht]  
\centering
\begin{minipage}[t]{0.328\linewidth}
\centering 
{\includegraphics[width=0.99\textwidth]{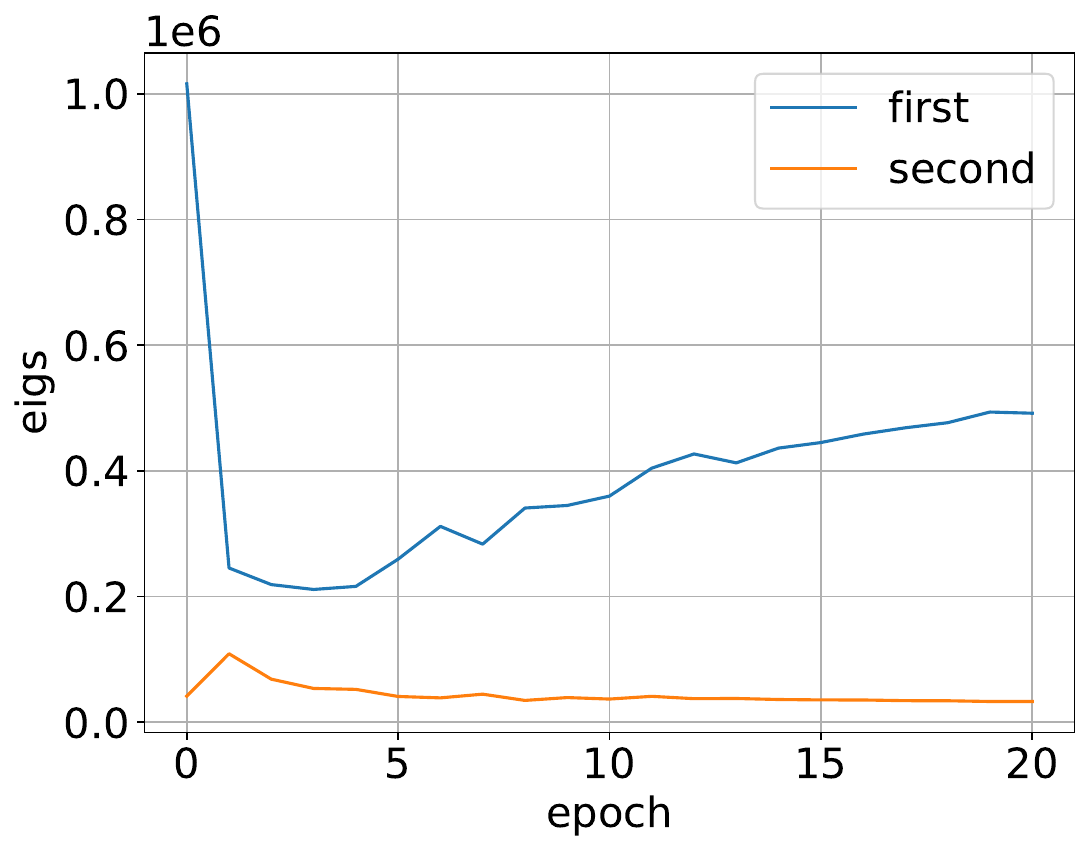}} \\
{\small (a) Eigenvalues}
\end{minipage} 
\begin{minipage}[t]{0.328\linewidth}
\centering
{\includegraphics[width=0.99\textwidth]{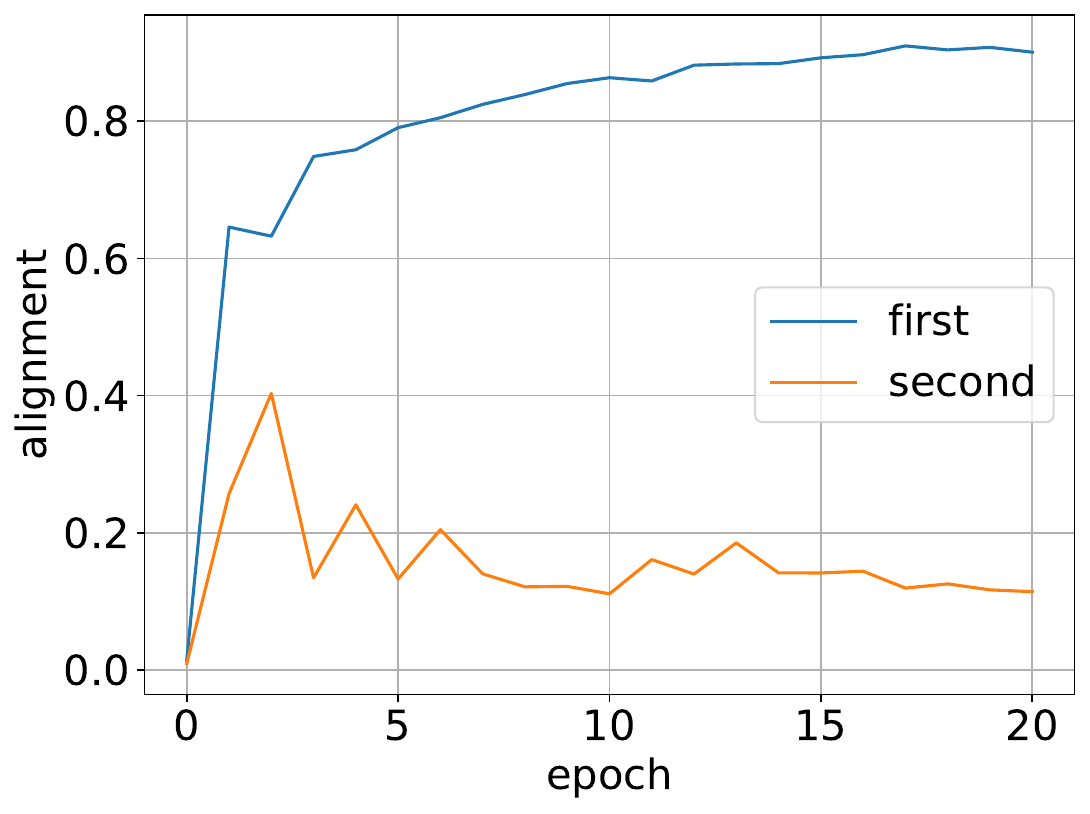}} \\
{\small (b) Alignments}  
\end{minipage} 
\caption{\small
We use SGD for fine-tuning the BERT model. The training accuracy is 95.90$\%$ and the test accuracy is $84\%$. (a) The evolution of first and second eigenvalues of empirical CK during fine-tuning. (b) The alignments of training labels with first and second eigenvectors of CK during fine-tuning. See Figure~\ref{fig:bert_spectra} for the spectra of CK at different epochs.}   
\label{fig:bert}  
\end{figure}

A heavy-tailed CK spectrum with several spikes already exists in this pre-trained model. Unlike Figure~\ref{fig:vgg} (and cases in Table~\ref{table:examples}) where the first spike of NTK becomes larger than at random initialization after training, in Figure~\ref{fig:bert}(a), the leading eigenvalue first decreases and then increases. Moreover, similarly to Figure~\ref{fig:transition}(d), our Figure~\ref{fig:bert}(b) shows that the alignment of the first eigenvector of the CK and training labels becomes more apparent through fine-tuning 
with the leading eigenvalue decrease. Heuristically, this process seems to unlearn the features in the pre-trained model and, remarkably, learn new features on the new dataset in only a few epochs of fine-tuning (see Figure~\ref{fig:bert_spectra}in Appendix~\ref{sec:real}). We believe that the evolutions of the kernel matrices and some spectral metrics are crucial for understanding feature learning through fine-tuning \citep{wei2022more}. A more comprehensive exploration of the evolutionary spectral properties of ``foundation models'' may help shed further light on these phenomena. 

\paragraph{Limitations.} Although LWR has garnered significant attention in recent years, e.g., \cite{schroder2023deterministic,li2021statistical,bosch2023precise,zavatone2022contrasting,cui2023bayes}, we recognize the limitations inherent in LWR.  Our LWR is more realistic compared with infinite-width neural networks and is one of the ways to approximate finite but very large neural networks with very large datasets, but there are more sophisticated regimes for NNs. We leave this for future theoretical work. The NTK parameterization is another limitation of this work. We expect to apply our spectral analysis for other parameterizations of NNs with more real-world datasets. See the discussion at the beginning of Appendix~\ref{sec:additional}.

\section*{Acknowledgement}
Z.W., A.E., I.D., and T.C. were partially supported by the Mathematics for Artificial Reasoning in Science (MARS) initiative via the Laboratory Directed Research and Development (LDRD) Program at Pacific Northwest National Laboratory (PNNL). A.S. and T.C. were also partially supported by the Statistical Inference Generates kNowledge for Artificial Learners (SIGNAL) program at PNNL. PNNL is a multi-program national laboratory operated for the U.S. Department of Energy (DOE) by Battelle Memorial Institute under Contract No. DE-AC05-76RL0-1830.  Z.W. would like to thank Denny Wu and Libin Zhu for their valuable suggestions and comments.

\bibliography{ref}

\begin{thebibliography}{10}

\bibitem{adlam2020neural}
Ben Adlam and Jeffrey Pennington.
\newblock The neural tangent kernel in high dimensions: Triple descent and a
  multi-scale theory of generalization.
\newblock In {\em International Conference on Machine Learning}, pages 74--84.
  PMLR, 2020.

\bibitem{allen2018learning}
Zeyuan Allen-Zhu, Yuanzhi Li, and Yingyu Liang.
\newblock Learning and generalization in overparameterized neural networks,
  going beyond two layers.
\newblock {\em Advances in neural information processing systems}, 32, 2019.

\bibitem{andriushchenko2022sgd}
Maksym Andriushchenko, Aditya~Vardhan Varre, Loucas Pillaud-Vivien, and Nicolas
  Flammarion.
\newblock Sgd with large step sizes learns sparse features.
\newblock In {\em International Conference on Machine Learning}, pages
  903--925. PMLR, 2023.

\bibitem{arora2019exact}
Sanjeev Arora, Simon~S Du, Wei Hu, Zhiyuan Li, Russ~R Salakhutdinov, and
  Ruosong Wang.
\newblock On exact computation with an infinitely wide neural net.
\newblock {\em Advances in Neural Information Processing Systems}, 32, 2019.

\bibitem{atanasov2022neural}
Alexander Atanasov, Blake Bordelon, and Cengiz Pehlevan.
\newblock Neural networks as kernel learners: The silent alignment effect.
\newblock In {\em International Conference on Learning Representations}, 2022.

\bibitem{ba2022high}
Jimmy Ba, Murat~A Erdogdu, Taiji Suzuki, Zhichao Wang, Denny Wu, and Greg Yang.
\newblock High-dimensional asymptotics of feature learning: How one gradient
  step improves the representation.
\newblock {\em Advances in Neural Information Processing Systems},
  35:37932--37946, 2022.

\bibitem{bai2019beyond}
Yu~Bai and Jason~D. Lee.
\newblock Beyond linearization: On quadratic and higher-order approximation of
  wide neural networks.
\newblock In {\em International Conference on Learning Representations}, 2020.

\bibitem{bai2010spectral}
Zhidong Bai and Jack~W Silverstein.
\newblock {\em Spectral analysis of large dimensional random matrices},
  volume~20.
\newblock Springer, 2010.

\bibitem{baik2005phase}
Jinho Baik, G{\'e}rard~Ben Arous, and Sandrine P{\'e}ch{\'e}.
\newblock Phase transition of the largest eigenvalue for nonnull complex sample
  covariance matrices.
\newblock {\em The Annals of Probability}, 33(5):1643--1697, 2005.

\bibitem{baratin2021implicit}
Aristide Baratin, Thomas George, C{\'e}sar Laurent, R~Devon Hjelm, Guillaume
  Lajoie, Pascal Vincent, and Simon Lacoste-Julien.
\newblock Implicit regularization via neural feature alignment.
\newblock In {\em International Conference on Artificial Intelligence and
  Statistics}, pages 2269--2277. PMLR, 2021.

\bibitem{bartlett2021deep}
Peter~L Bartlett, Andrea Montanari, and Alexander Rakhlin.
\newblock Deep learning: a statistical viewpoint.
\newblock {\em Acta numerica}, 30:87--201, 2021.

\bibitem{belkin2018understand}
Mikhail Belkin, Siyuan Ma, and Soumik Mandal.
\newblock To understand deep learning we need to understand kernel learning.
\newblock In {\em International Conference on Machine Learning}, pages
  541--549. PMLR, 2018.

\bibitem{benigni2019eigenvalue}
Lucas Benigni and Sandrine P{\'e}ch{\'e}.
\newblock Eigenvalue distribution of some nonlinear models of random matrices.
\newblock {\em Electronic Journal of Probability}, 26:1--37, 2021.

\bibitem{benigni2022largest}
Lucas Benigni and Sandrine P{\'e}ch{\'e}.
\newblock Largest eigenvalues of the conjugate kernel of single-layered neural
  networks.
\newblock {\em arXiv preprint arXiv:2201.04753}, 2022.

\bibitem{beugnot2022benefits}
Gaspard Beugnot, Julien Mairal, and Alessandro Rudi.
\newblock On the benefits of large learning rates for kernel methods.
\newblock In {\em Conference on Learning Theory}, pages 254--282. PMLR, 2022.

\bibitem{bietti2019inductive}
Alberto Bietti and Julien Mairal.
\newblock On the inductive bias of neural tangent kernels.
\newblock In {\em Advances in Neural Information Processing Systems}, pages
  12873--12884, 2019.

\bibitem{bosch2023precise}
David Bosch, Ashkan Panahi, and Babak Hassibi.
\newblock Precise asymptotic analysis of deep random feature models.
\newblock {\em arXiv preprint arXiv:2302.06210}, 2023.

\bibitem{chatterjee2022convergence}
Sourav Chatterjee.
\newblock Convergence of gradient descent for deep neural networks.
\newblock {\em arXiv preprint arXiv:2203.16462}, 2022.

\bibitem{chen2020label}
Shuxiao Chen, Hangfeng He, and Weijie Su.
\newblock Label-aware neural tangent kernel: Toward better generalization and
  local elasticity.
\newblock {\em Advances in Neural Information Processing Systems}, 33, 2020.

\bibitem{chizat2018global}
Lenaic Chizat and Francis Bach.
\newblock On the global convergence of gradient descent for over-parameterized
  models using optimal transport.
\newblock In {\em Advances in neural information processing systems}, pages
  3036--3046, 2018.

\bibitem{chizat2019lazy}
Lenaic Chizat, Edouard Oyallon, and Francis Bach.
\newblock On lazy training in differentiable programming.
\newblock {\em Advances in Neural Information Processing Systems}, 32, 2019.

\bibitem{cristianini2001kernel}
Nello Cristianini, John Shawe-Taylor, Andre Elisseeff, and Jaz Kandola.
\newblock On kernel-target alignment.
\newblock {\em Advances in neural information processing systems}, 14, 2001.

\bibitem{cui2023bayes}
Hugo Cui, Florent Krzakala, and Lenka Zdeborova.
\newblock {B}ayes-optimal learning of deep random networks of extensive-width.
\newblock In {\em Proceedings of the 40th International Conference on Machine
  Learning}, volume 202 of {\em Proceedings of Machine Learning Research},
  pages 6468--6521. PMLR, 23--29 Jul 2023.

\bibitem{damian2022neural}
Alexandru Damian, Jason Lee, and Mahdi Soltanolkotabi.
\newblock Neural networks can learn representations with gradient descent.
\newblock In {\em Conference on Learning Theory}, pages 5413--5452. PMLR, 2022.

\bibitem{daniely2020learning}
Amit Daniely and Eran Malach.
\newblock Learning parities with neural networks.
\newblock {\em Advances in Neural Information Processing Systems},
  33:20356--20365, 2020.

\bibitem{devlin2018bert}
Jacob Devlin, Ming-Wei Chang, Kenton Lee, and Kristina Toutanova.
\newblock Bert: Pre-training of deep bidirectional transformers for language
  understanding.
\newblock {\em arXiv preprint arXiv:1810.04805}, 2018.

\bibitem{du2019gradient}
Simon Du, Jason Lee, Haochuan Li, Liwei Wang, and Xiyu Zhai.
\newblock Gradient descent finds global minima of deep neural networks.
\newblock In {\em International conference on machine learning}, pages
  1675--1685. PMLR, 2019.

\bibitem{du2018gradient}
Simon~S Du, Xiyu Zhai, Barnabas Poczos, and Aarti Singh.
\newblock Gradient descent provably optimizes over-parameterized neural
  networks.
\newblock In {\em International Conference on Learning Representations}, 2018.

\bibitem{dyer2019asymptotics}
Ethan Dyer and Guy Gur-Ari.
\newblock Asymptotics of wide networks from feynman diagrams.
\newblock In {\em International Conference on Learning Representations}, 2020.

\bibitem{fan2020spectra}
Zhou Fan and Zhichao Wang.
\newblock Spectra of the conjugate kernel and neural tangent kernel for
  linear-width neural networks.
\newblock {\em Advances in neural information processing systems}, 33, 2020.

\bibitem{fort2020deep}
Stanislav Fort, Gintare~Karolina Dziugaite, Mansheej Paul, Sepideh Kharaghani,
  Daniel~M Roy, and Surya Ganguli.
\newblock Deep learning versus kernel learning: an empirical study of loss
  landscape geometry and the time evolution of the neural tangent kernel.
\newblock {\em Advances in Neural Information Processing Systems},
  33:5850--5861, 2020.

\bibitem{geiger2020disentangling}
Mario Geiger, Stefano Spigler, Arthur Jacot, and Matthieu Wyart.
\newblock Disentangling feature and lazy training in deep neural networks.
\newblock {\em Journal of Statistical Mechanics: Theory and Experiment},
  2020(11):113301, 2020.

\bibitem{gerace2020generalisation}
Federica Gerace, Bruno Loureiro, Florent Krzakala, Marc M{\'e}zard, and Lenka
  Zdeborov{\'a}.
\newblock Generalisation error in learning with random features and the hidden
  manifold model.
\newblock In {\em International Conference on Machine Learning}, pages
  3452--3462. PMLR, 2020.

\bibitem{ghorbani2020neural}
Behrooz Ghorbani, Song Mei, Theodor Misiakiewicz, and Andrea Montanari.
\newblock When do neural networks outperform kernel methods?
\newblock {\em Advances in Neural Information Processing Systems},
  33:14820--14830, 2020.

\bibitem{ghorbani2019linearized}
Behrooz Ghorbani, Song Mei, Theodor Misiakiewicz, and Andrea Montanari.
\newblock Linearized two-layers neural networks in high dimension.
\newblock {\em The Annals of Statistics}, 49(2):1029--1054, 2021.

\bibitem{go2009twitter}
Alec Go, Richa Bhayani, and Lei Huang.
\newblock Twitter sentiment classification using distant supervision.
\newblock {\em CS224N project report, Stanford}, 1(12):2009, 2009.

\bibitem{hanin2020products}
Boris Hanin and Mihai Nica.
\newblock Products of many large random matrices and gradients in deep neural
  networks.
\newblock {\em Communications in Mathematical Physics}, 376(1):287--322, 2020.

\bibitem{hu2020surprising}
Wei Hu, Lechao Xiao, Ben Adlam, and Jeffrey Pennington.
\newblock The surprising simplicity of the early-time learning dynamics of
  neural networks.
\newblock {\em Advances in Neural Information Processing Systems},
  33:17116--17128, 2020.

\bibitem{huang2020dynamics}
Jiaoyang Huang and Horng-Tzer Yau.
\newblock Dynamics of deep neural networks and neural tangent hierarchy.
\newblock In {\em International Conference on Machine Learning}, pages
  4542--4551. PMLR, 2020.

\bibitem{jacot2018neural}
Arthur Jacot, Franck Gabriel, and Cl{\'e}ment Hongler.
\newblock Neural tangent kernel: Convergence and generalization in neural
  networks.
\newblock {\em Advances in neural information processing systems}, 31, 2018.

\bibitem{jastrzebski2020break}
Stanislaw Jastrzebski, Maciej Szymczak, Stanislav Fort, Devansh Arpit, Jacek
  Tabor, Kyunghyun Cho, and Krzysztof Geras.
\newblock The break-even point on optimization trajectories of deep neural
  networks.
\newblock In {\em International Conference on Learning Representations}, 2020.

\bibitem{karp2021local}
Stefani Karp, Ezra Winston, Yuanzhi Li, and Aarti Singh.
\newblock Local signal adaptivity: Provable feature learning in neural networks
  beyond kernels.
\newblock {\em Advances in Neural Information Processing Systems}, 34, 2021.

\bibitem{kingma2014adam}
Diederik~P Kingma and Jimmy Ba.
\newblock Adam: A method for stochastic optimization.
\newblock {\em arXiv preprint arXiv:1412.6980}, 2014.

\bibitem{kopitkov2020neural}
Dmitry Kopitkov and Vadim Indelman.
\newblock Neural spectrum alignment: Empirical study.
\newblock In {\em International Conference on Artificial Neural Networks},
  pages 168--179. Springer, 2020.

\bibitem{le1991eigenvalues}
Yann Le~Cun, Ido Kanter, and Sara~A Solla.
\newblock Eigenvalues of covariance matrices: Application to neural-network
  learning.
\newblock {\em Physical Review Letters}, 66(18):2396, 1991.

\bibitem{leclerc2020two}
Guillaume Leclerc and Aleksander Madry.
\newblock The two regimes of deep network training.
\newblock {\em arXiv preprint arXiv:2002.10376}, 2020.

\bibitem{lee2020finite}
Jaehoon Lee, Samuel Schoenholz, Jeffrey Pennington, Ben Adlam, Lechao Xiao,
  Roman Novak, and Jascha Sohl-Dickstein.
\newblock Finite versus infinite neural networks: an empirical study.
\newblock {\em Advances in Neural Information Processing Systems},
  33:15156--15172, 2020.

\bibitem{lee2017deep}
Jaehoon Lee, Jascha Sohl-dickstein, Jeffrey Pennington, Roman Novak, Sam
  Schoenholz, and Yasaman Bahri.
\newblock Deep neural networks as gaussian processes.
\newblock In {\em International Conference on Learning Representations}, 2018.

\bibitem{lewkowycz2020large}
Aitor Lewkowycz, Yasaman Bahri, Ethan Dyer, Jascha Sohl-Dickstein, and Guy
  Gur-Ari.
\newblock The large learning rate phase of deep learning: the catapult
  mechanism.
\newblock {\em arXiv preprint arXiv:2003.02218}, 2020.

\bibitem{li2021statistical}
Qianyi Li and Haim Sompolinsky.
\newblock Statistical mechanics of deep linear neural networks: The
  backpropagating kernel renormalization.
\newblock {\em Physical Review X}, 11(3):031059, 2021.

\bibitem{li2020learning}
Yuanzhi Li, Tengyu Ma, and Hongyang~R Zhang.
\newblock Learning over-parametrized two-layer neural networks beyond ntk.
\newblock In {\em Conference on learning theory}, pages 2613--2682. PMLR, 2020.

\bibitem{li2019towards}
Yuanzhi Li, Colin Wei, and Tengyu Ma.
\newblock Towards explaining the regularization effect of initial large
  learning rate in training neural networks.
\newblock In {\em Advances in Neural Information Processing Systems}, pages
  11674--11685, 2019.

\bibitem{liao2020random}
Zhenyu Liao, Romain Couillet, and Michael~W Mahoney.
\newblock A random matrix analysis of random fourier features: beyond the
  gaussian kernel, a precise phase transition, and the corresponding double
  descent.
\newblock {\em Advances in Neural Information Processing Systems},
  33:13939--13950, 2020.

\bibitem{liu2022loss}
Chaoyue Liu, Libin Zhu, and Mikhail Belkin.
\newblock Loss landscapes and optimization in over-parameterized non-linear
  systems and neural networks.
\newblock {\em Applied and Computational Harmonic Analysis}, 2022.

\bibitem{long2021properties}
Philip~M Long.
\newblock Properties of the after kernel.
\newblock {\em arXiv preprint arXiv:2105.10585}, 2021.

\bibitem{loo2022evolution}
Noel Loo, Ramin Hasani, Alexander Amini, and Daniela Rus.
\newblock Evolution of neural tangent kernels under benign and adversarial
  training.
\newblock {\em Advances in Neural Information Processing Systems},
  35:11642--11657, 2022.

\bibitem{louart2018random}
Cosme Louart, Zhenyu Liao, and Romain Couillet.
\newblock A random matrix approach to neural networks.
\newblock {\em The Annals of Applied Probability}, 28(2):1190--1248, 2018.

\bibitem{martin2019traditional}
Michael Mahoney and Charles Martin.
\newblock Traditional and heavy tailed self regularization in neural network
  models.
\newblock In {\em International Conference on Machine Learning}, pages
  4284--4293. PMLR, 2019.

\bibitem{martin2018implicit}
Charles~H Martin and Michael~W Mahoney.
\newblock Implicit self-regularization in deep neural networks: Evidence from
  random matrix theory and implications for learning.
\newblock {\em Journal of Machine Learning Research}, 22(165):1--73, 2021.

\bibitem{martin2021predicting}
Charles~H Martin, Tongsu~Serena Peng, and Michael~W Mahoney.
\newblock Predicting trends in the quality of state-of-the-art neural networks
  without access to training or testing data.
\newblock {\em Nature Communications}, 12(1):1--13, 2021.

\bibitem{mei2019generalization}
Song Mei and Andrea Montanari.
\newblock The generalization error of random features regression: Precise
  asymptotics and the double descent curve.
\newblock {\em Communications on Pure and Applied Mathematics}, 75(4):667--766,
  2022.

\bibitem{mei2018mean}
Song Mei, Andrea Montanari, and Phan-Minh Nguyen.
\newblock A mean field view of the landscape of two-layer neural networks.
\newblock {\em Proceedings of the National Academy of Sciences},
  115(33):E7665--E7671, 2018.

\bibitem{meng2021implicit}
Xuran Meng and Jianfeng Yao.
\newblock Impact of classification difficulty on the weight matrices spectra in
  deep learning and application to early-stopping.
\newblock {\em Journal of Machine Learning Research}, 24(28):1--40, 2023.

\bibitem{nakkiran2020learning}
Preetum Nakkiran.
\newblock Learning rate annealing can provably help generalization, even for
  convex problems.
\newblock {\em OPT2020 Workshop}, 2020.

\bibitem{neal1995bayesian}
Radford~M Neal.
\newblock {\em Bayesian learning for neural networks}, volume 118.
\newblock Springer Science \& Business Media, 1995.

\bibitem{nguyen2021proof}
Quynh Nguyen.
\newblock On the proof of global convergence of gradient descent for deep relu
  networks with linear widths.
\newblock In {\em International Conference on Machine Learning}, pages
  8056--8062. PMLR, 2021.

\bibitem{ortiz2020neural}
Guillermo Ortiz-Jim{\'e}nez, Apostolos Modas, Seyed-Mohsen Moosavi, and Pascal
  Frossard.
\newblock Neural anisotropy directions.
\newblock {\em Advances in Neural Information Processing Systems},
  33:17896--17906, 2020.

\bibitem{ortiz2021can}
Guillermo Ortiz-Jim{\'e}nez, Seyed-Mohsen Moosavi-Dezfooli, and Pascal
  Frossard.
\newblock What can linearized neural networks actually say about
  generalization?
\newblock {\em Advances in Neural Information Processing Systems}, 34, 2021.

\bibitem{oymak2019generalization}
Samet Oymak, Zalan Fabian, Mingchen Li, and Mahdi Soltanolkotabi.
\newblock Generalization guarantees for neural networks via harnessing the
  low-rank structure of the jacobian.
\newblock {\em arXiv preprint arXiv:1906.05392}, 2019.

\bibitem{oymak2019overparameterized}
Samet Oymak and Mahdi Soltanolkotabi.
\newblock Overparameterized nonlinear learning: Gradient descent takes the
  shortest path?
\newblock In {\em International Conference on Machine Learning}, pages
  4951--4960. PMLR, 2019.

\bibitem{oymak2020toward}
Samet Oymak and Mahdi Soltanolkotabi.
\newblock Toward moderate overparameterization: Global convergence guarantees
  for training shallow neural networks.
\newblock {\em IEEE Journal on Selected Areas in Information Theory},
  1(1):84--105, 2020.

\bibitem{pennington2017geometry}
Jeffrey Pennington and Yasaman Bahri.
\newblock Geometry of neural network loss surfaces via random matrix theory.
\newblock In {\em International Conference on Machine Learning}, pages
  2798--2806. PMLR, 2017.

\bibitem{pennington2017nonlinear}
Jeffrey Pennington and Pratik Worah.
\newblock Nonlinear random matrix theory for deep learning.
\newblock {\em Advances in neural information processing systems}, 30, 2017.

\bibitem{pennington2018spectrum}
Jeffrey Pennington and Pratik Worah.
\newblock The spectrum of the fisher information matrix of a
  single-hidden-layer neural network.
\newblock {\em Advances in neural information processing systems}, 31, 2018.

\bibitem{polaczyk2022improved}
Bartlomiej Polaczyk and Jacek Cyranka.
\newblock Improved overparametrization bounds for global convergence of sgd for
  shallow neural networks.
\newblock {\em Transactions on Machine Learning Research}, 2022.

\bibitem{refinetti2021classifying}
Maria Refinetti, Sebastian Goldt, Florent Krzakala, and Lenka Zdeborov{\'a}.
\newblock Classifying high-dimensional gaussian mixtures: Where kernel methods
  fail and neural networks succeed.
\newblock In {\em International Conference on Machine Learning}, pages
  8936--8947. PMLR, 2021.

\bibitem{schroder2023deterministic}
Dominik Schr\"{o}der, Hugo Cui, Daniil Dmitriev, and Bruno Loureiro.
\newblock Deterministic equivalent and error universality of deep random
  features learning.
\newblock In {\em Proceedings of the 40th International Conference on Machine
  Learning}, volume 202 of {\em Proceedings of Machine Learning Research},
  pages 30285--30320. PMLR, 23--29 Jul 2023.

\bibitem{shan2021theory}
Haozhe Shan and Blake Bordelon.
\newblock A theory of neural tangent kernel alignment and its influence on
  training.
\newblock {\em arXiv preprint arXiv:2105.14301}, 2021.

\bibitem{thamm2022random}
Matthias Thamm, Max Staats, and Bernd Rosenow.
\newblock Random matrix analysis of deep neural network weight matrices.
\newblock {\em Physical Review E}, 106(5):054124, 2022.

\bibitem{vershynin2018high}
Roman Vershynin.
\newblock {\em High-dimensional probability: An introduction with applications
  in data science}, volume~47.
\newblock Cambridge university press, 2018.

\bibitem{wang2021deformed}
Zhichao Wang and Yizhe Zhu.
\newblock Deformed semicircle law and concentration of nonlinear random
  matrices for ultra-wide neural networks.
\newblock {\em arXiv preprint arXiv:2109.09304}, 2021.

\bibitem{wei2022more}
Alexander Wei, Wei Hu, and Jacob Steinhardt.
\newblock More than a toy: Random matrix models predict how real-world neural
  representations generalize.
\newblock In {\em Proceedings of the 39th International Conference on Machine
  Learning}, volume 162 of {\em Proceedings of Machine Learning Research},
  pages 23549--23588. PMLR, 17--23 Jul 2022.

\bibitem{williams1996computing}
Christopher Williams.
\newblock Computing with infinite networks.
\newblock {\em Advances in neural information processing systems}, 9, 1996.

\bibitem{yang2019wide}
Greg Yang.
\newblock Wide feedforward or recurrent neural networks of any architecture are
  gaussian processes.
\newblock {\em Advances in Neural Information Processing Systems}, 32, 2019.

\bibitem{yang2021tensor}
Greg Yang and Edward~J Hu.
\newblock Tensor programs iv: Feature learning in infinite-width neural
  networks.
\newblock In {\em International Conference on Machine Learning}, pages
  11727--11737. PMLR, 2021.

\bibitem{yang2022evaluating}
Yaoqing Yang, Ryan Theisen, Liam Hodgkinson, Joseph~E Gonzalez, Kannan
  Ramchandran, Charles~H Martin, and Michael~W Mahoney.
\newblock Evaluating natural language processing models with generalization
  metrics that do not need access to any training or testing data.
\newblock {\em arXiv preprint arXiv:2202.02842}, 2022.

\bibitem{zavatone2022contrasting}
Jacob~A Zavatone-Veth, William~L Tong, and Cengiz Pehlevan.
\newblock Contrasting random and learned features in deep bayesian linear
  regression.
\newblock {\em Physical Review E}, 105(6):064118, 2022.

\bibitem{zou2018stochastic}
Difan Zou, Yuan Cao, Dongruo Zhou, and Quanquan Gu.
\newblock Gradient descent optimizes over-parameterized deep relu networks.
\newblock {\em Machine learning}, 109:467--492, 2020.

\end{thebibliography}
\bibliographystyle{plain}

\appendix
\onecolumn

\section{Additional Empirical Results}\label{sec:additional}
There are different parameterizations for NNs at initialization. The orders of the output of NN are distinct in different cases \citep{chizat2019lazy,daniely2020learning,yang2021tensor}. This affects the size of stable and non-trivial gradient steps. The distance of trainable parameters from initialization determines whether the NN learns any features from the training data \cite[Figure 2]{ba2022high}. The performance of networks with different initializations indicates whether the NN belongs to the kernel regime or not \cite{yang2021tensor}. Unlike the NTK parameterization, the \textit{mean-field} parameterization \citep{mei2018mean,chizat2018global} and \textit{maximal update} parameterization \cite{yang2021tensor} tend to be feature learning. 


For all NNs in our experiments, we apply a normalized and centered nonlinear activation function such that Assumption~\ref{ass:activation} holds ($\E[\sigma(z)]=0$ for $z\sim\cN(0,1)$) because we can exclude a large but trivial spike in the initial spectra of kernel matrices. In all architectures of NNs we considered, we remove the bias term of each layer and apply the NTK parameterization \eqref{eq:NN} with standard Gaussian initialization. Specifically, all entries of $\bW$ and $\bv$ in \eqref{eq:2NN} at initialization are i.i.d. standard Gaussian random variables. For all experiments on synthetic datasets, we use standard Gaussian random matrices to generate the training data $\bX$. In addition, we consider the training label noise defined in Assumption~\ref{ass:teacher} as $\boldsymbol{\varepsilon}\sim\cN (0,\sigma_\varepsilon^2\bI_n)$.  

\subsection{Further Discussions on Real-world Data Experiments}\label{sec:real}
For the first experiment in Section~\ref{sec:real_world}, we fix the small-CNN architecture, which is similar to the VGG model, and CIFAR-2 dataset, and vary the methods of optimization of training. Corresponding to Figure~\ref{fig:vgg}, the training and test accuracy histories for three cases are shown in Figure~\ref{fig:vgg_acc}. Here, in Figure~\ref{fig:vgg_acc}(a), we used GD with learning rate $8\times 10^{-3}$; we used SGD with learning rate $10^{-3}$, batch size 32 and momentum $0.2$ in Figure~\ref{fig:vgg_acc}(b); Figure~\ref{fig:vgg_acc}(c) employs the same learning rate and batch size as \ref{fig:vgg_acc}(b) but employs Adam optimization. 
\begin{figure}[ht]  
\centering
\begin{minipage}[t]{0.328\linewidth}
\centering
{\includegraphics[width=0.99\textwidth]{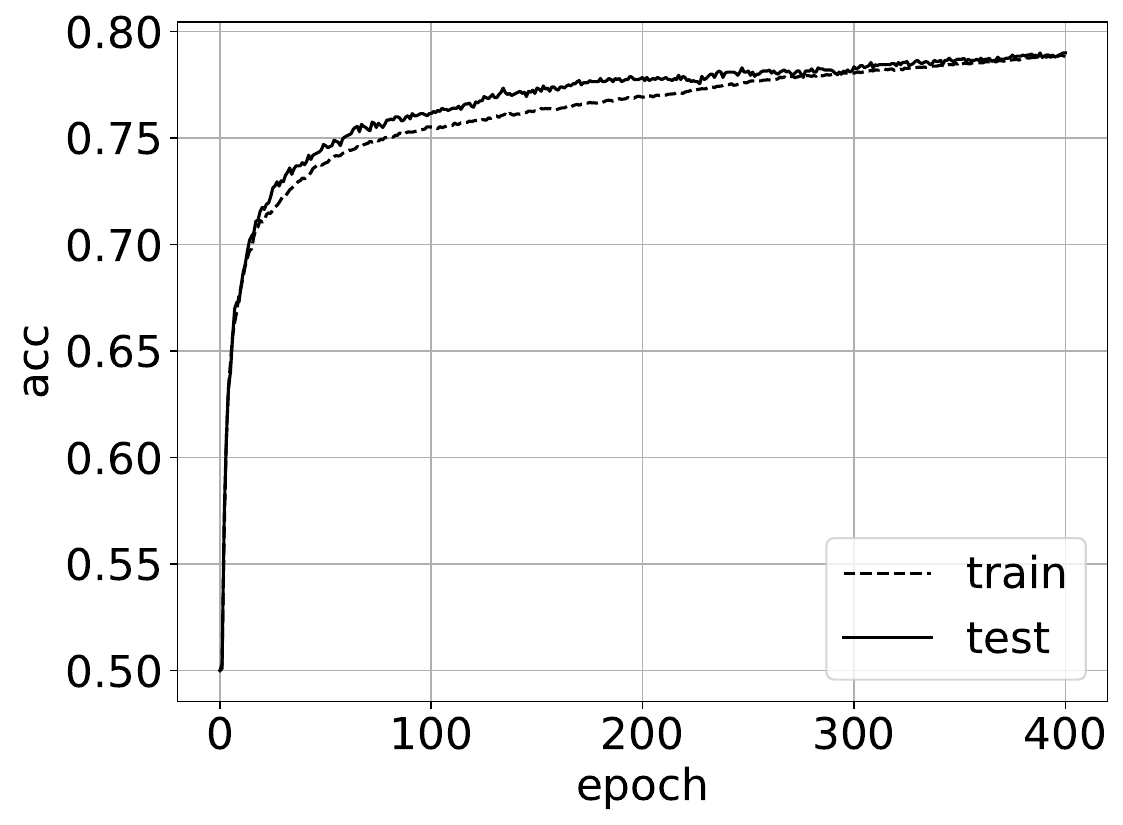}}\\
{\small (a) GD}
\end{minipage}
\begin{minipage}[t]{0.329\linewidth}
\centering 
{\includegraphics[width=0.98\textwidth]{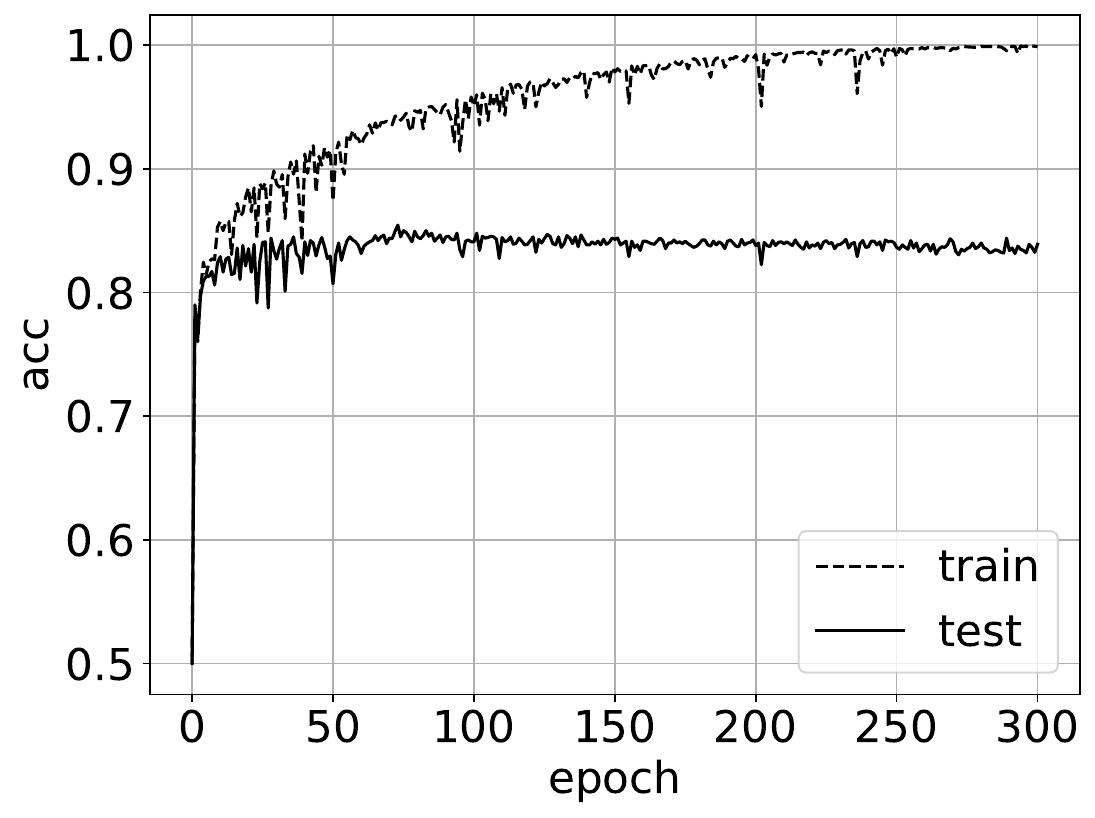}} \\
{\small (b) SGD}
\end{minipage}  
\begin{minipage}[t]{0.328\linewidth}
\centering
{\includegraphics[width=0.98\textwidth]{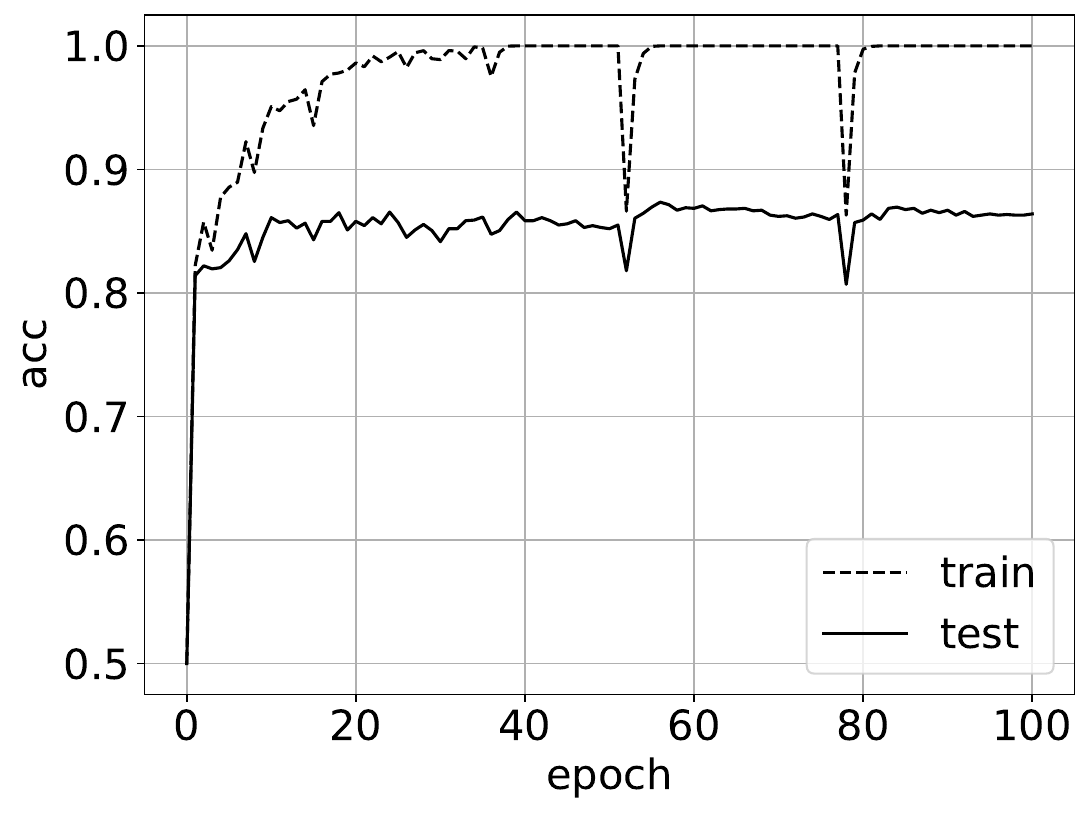}}   \\
{\small (c) Adam}
\end{minipage}
\caption{\small
Training/test accuracy v.s. epochs for small-CNN model on CIFAR-2 with different optimizers.  }   
\label{fig:vgg_acc}  
\end{figure}  

In the experiment of the transformer language model in Section~\ref{sec:real_world}, we fine-tuned the BERT model with SGD for a binary classification on the Sentiment140 dataset. We apply this transformer and fine-tuning to sentiment analysis for social media data. For fine-tuning, the learning rate is 0.003, the batch size is 64 and the momentum is 0.8. The purpose of this experiment is to extract the spectral properties of pre-trained models and the evolution of the CK spectra over fine-tuning. Combining Figure~\ref{fig:bert}, the following Figure~\ref{fig:bert_spectra} exhibits the evolution of the CK spectrum during fine-tuning. Similarly with Case 4 in Table~\ref{table:examples}, the CK spectrum of this pre-trained model (red histogram in Figure~\ref{fig:bert_spectra}) possesses a heavy-tailed distribution, which suggests this transformer has received adequate training. From Figure~\ref{fig:bert_spectra}(a) to \ref{fig:bert_spectra}(c), we observe the bulk distribution first shrinks then extends during fine-tuning. This is similar to the evolution of the first eigenvalue of CK in Figure~\ref{fig:bert}(a). Accompanied by this spectra evolution, there is a rapid transformation of the features through fine-tuning, linking the features in the pre-trained model with features in the new dataset. We expect that further spectral analysis will elucidate the feature learning in this kind of transformer \cite{wei2022more}.
\begin{figure}[ht]
\centering
\begin{minipage}[ht]{0.328\linewidth}
\centering
{\includegraphics[width=0.98\textwidth]{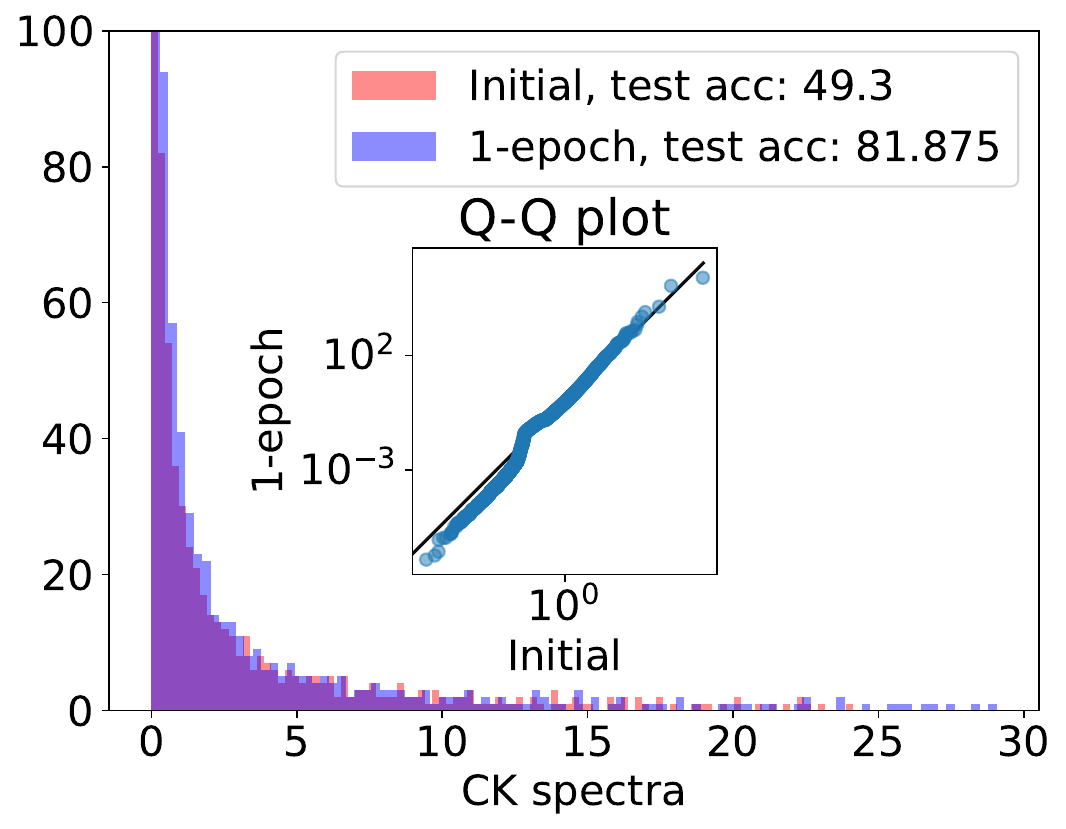}} 
\end{minipage}
\begin{minipage}[ht]{0.33\linewidth}
\centering 
{\includegraphics[width=0.978\textwidth]{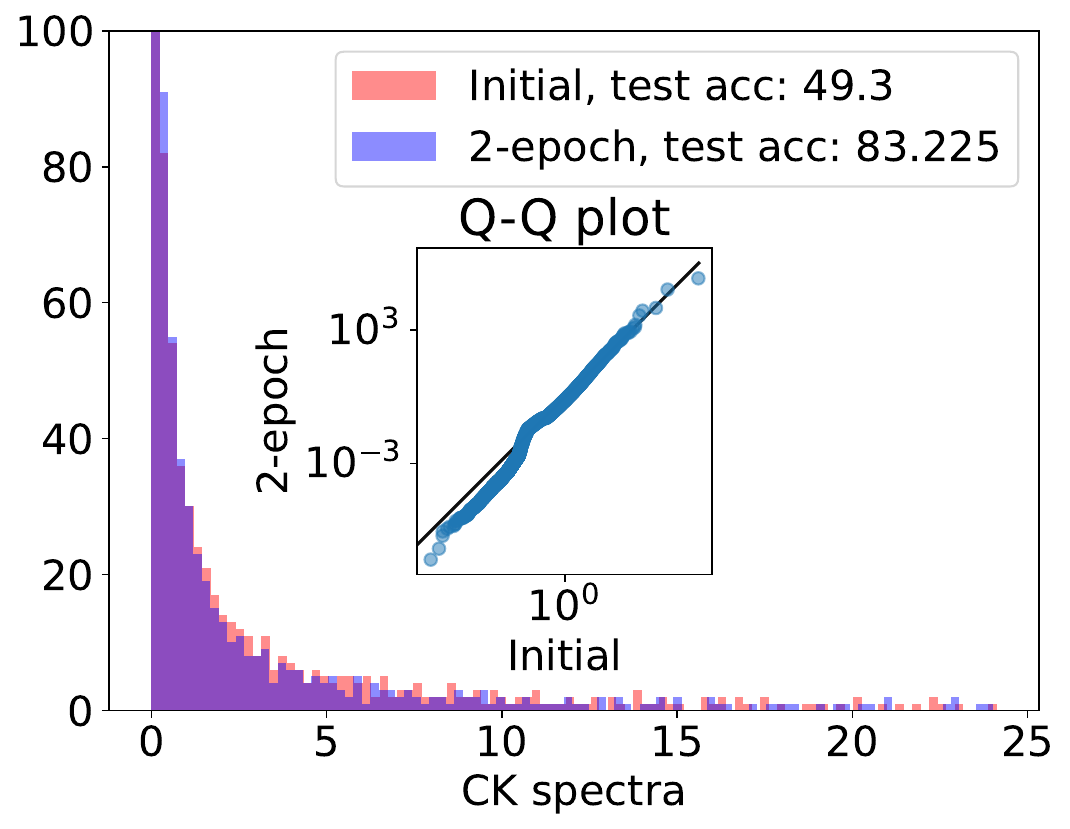}} 
\end{minipage} 
\begin{minipage}[ht]{0.328\linewidth}
\centering
{\includegraphics[width=0.97\textwidth]{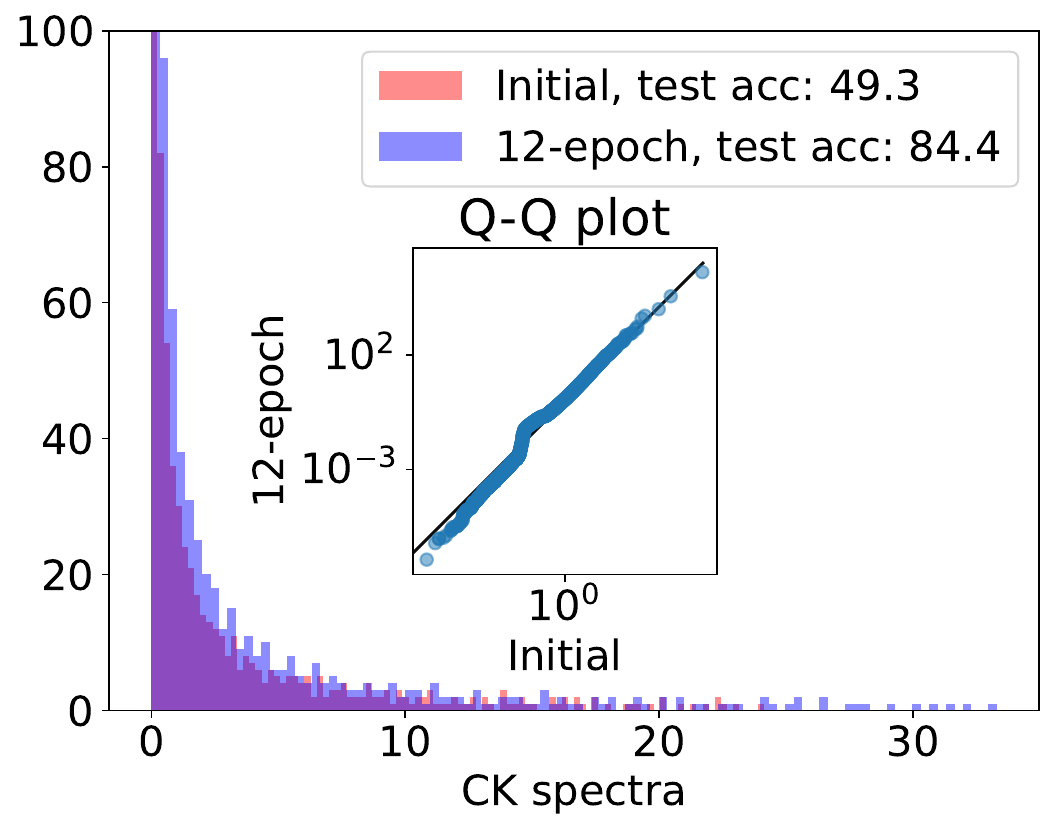}}   
\end{minipage}
\centering
\caption{\small The spectra of CK of the BERT model on Sentiment140 dataset at epoch 1, 2 and 12. 
}\label{fig:bert_spectra}
\end{figure}

\subsection{Additional Results for Cases in Table~\ref{table:examples}}\label{sec:synthetic}
To complement the findings in Figure~\ref{fig:spectra} and Section~\ref{sec:cases}, we now present additional results on synthetic data and two-layer NNs. In this section, we will always use the same architecture and dataset as the typical examples in Table~\ref{table:examples}.

\paragraph{Norms of the Change.} Based on Figure~\ref{fig:spectra}, the trajectories of the weight and kernel matrices are quite different among all cases in Table~\ref{table:examples}. Hence, for all cases in Table~\ref{table:examples}, we record the changes in the weight and NTK matrices in both Frobenius norm and operator norm in Figure~\ref{fig:norm_history}:
\[\frac{1}{\sqrt{d}}\norm{\bW_0-\bW_t}_F,~\frac{1}{\sqrt{d}}\norm{\bW_0-\bW_t},~ \norm{\bK^\NTK_0-\bK^\NTK_t}_{F},~ \text{and } \norm{\bK^\NTK_0-\bK^\NTK_t},\]
at every epoch $t$ through training. The changes in Figures~\ref{fig:norm_history}(a) and (b) are much smaller than in the last case. Figure~\ref{fig:norm_history}(c) has significant changes in both norms after training, which is consistent with the heavy-tailed phenomenon in Figure~\ref{fig:spectra}(c). The global optima of the last case is far from the initialization.

Following the settings of Theorem~\ref{thm:convergence}, in Figure~\ref{fig:norm_change}, we compute the differences between initial $\bW_{0}$ and final $\bW_s$ in Frobenius norm, operator norm and $2,\infty$-norm. Empirically, Figure~\ref{fig:norm_change} shows
\begin{equation}\label{eq:norm_weight}
    \frac{1}{\sqrt{d}}\norm{\bW_0-\bW_s}_F,~\frac{1}{\sqrt{d}}\norm{\bW_0-\bW_s},~\frac{1}{\sqrt{d}}\norm{\bW_0-\bW_s}_{2,\infty}=\Theta (1)
\end{equation}
as $n\to\infty$ with $n/d\to\gamma_1$ and $N/d\to\gamma_2$, where $s$ is the final time for GD. Here, the entry-wise $2$-$\infty$ matrix norm is defined as 
$$\|\bM\|_{2,\infty}:=\max_{1\le i\le N}\|\textbf{m}_i\|,$$ 
for any matrix $\bM\in\R^{N\times d}$ with the $i$-th row $\textbf{m}_i\in\R^d$ and $1\le i\le N$. Notice that 
\begin{equation}\label{eq:norms}
    \|\bM\|_{2,\infty}\le \|\bM\|\le \|\bM\|_F.
\end{equation}
Similar observations for CK and NTK in both Frobenius norm and operator norm are also apparent in Figure~\ref{fig:norm_change}, which empirically verifies the invariance of the spectra after training. Here, we fix the aspect ratios and let $n$ grow to keep the NNs residing in LWR. For different $n$'s, we repeat the experiments 10 times for average. In each experiment, we train the NN until it converges. As shown in Figure~\ref{fig:norm_change}(a), the test losses are almost the same for different $n$'s. Figures~\ref{fig:norm_change}(b)-(d) empirically validate Corollary~\ref{coro:change}. Moreover, the observation that $\frac{1}{\sqrt{d}}\norm{\bW_0-\bW_s}_F$ and $\frac{1}{\sqrt{d}}\norm{\bW_0-\bW_s}$ are $\Theta(1)$ may suggest that $\frac{1}{\sqrt{d}}\big(\bW_0-\bW_s\big)$ is a \textit{low-rank} perturbation. That is, training in LWR may be transferring some low-rank structures to the weight spectrum. This low-rank perturbation can help us better understand the spectral evolution during training. Notice that these norms of the change are different from ultra-wide NN \citep{du2019gradient,du2018gradient}. Similar phenomena can be also observed in Figures~\ref{fig:large}(a) and (b). Analogous result with different $\sigma$ and $\sigma^*$ is exhibited in Figure~\ref{fig:GD1st_norm} in Appendix~\ref{sec:proofs}. In addition, Figure~\ref{fig:large}(c) further investigates the cases when NNs can outperform lazy training as defined by \eqref{eq:lazy}. In these experiments, we compare the performances of GD, and SGD with small or large learning rates, and lazy training as $n\to\infty$. Each time, we take 10 trials to average. We observe that SGD with a large learning rate (green line) can asymptotically outperform lazy training.

\begin{figure}[ht]
\centering
\begin{minipage}[ht]{0.328\linewidth}
\centering
{\includegraphics[width=0.98\textwidth]{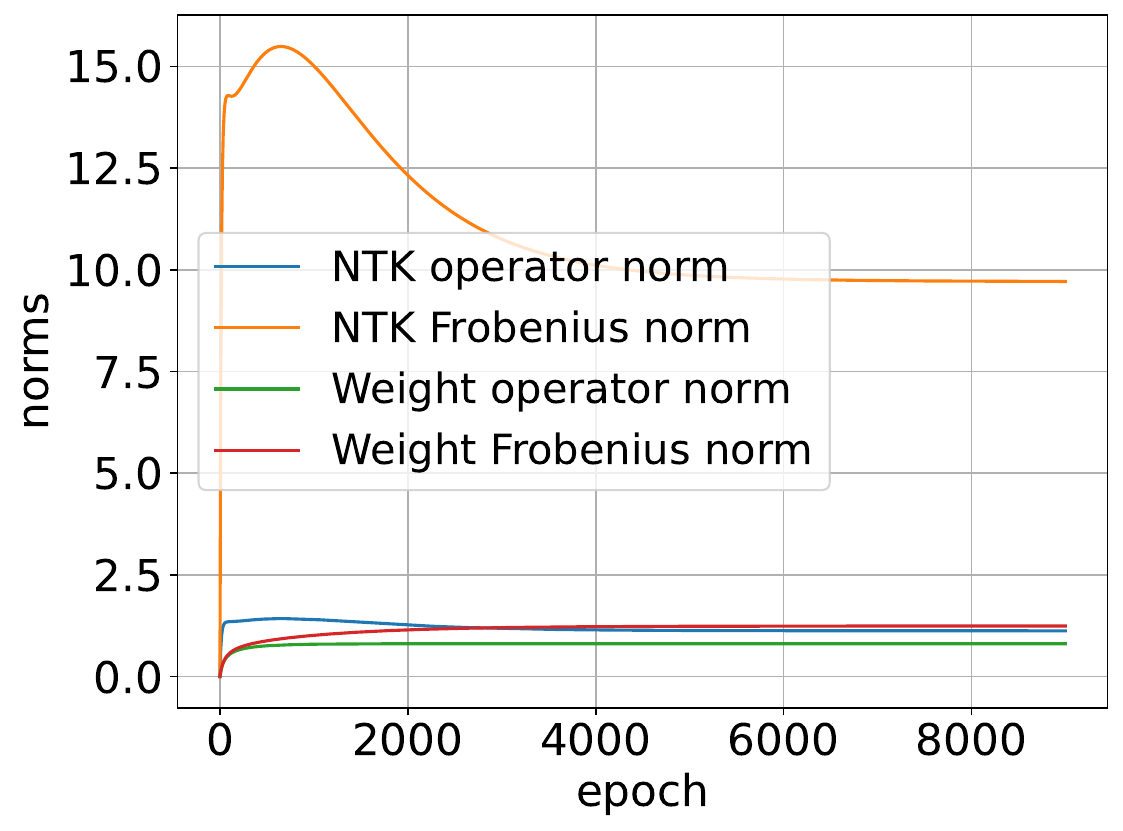}} \\
{\small (a) Case 1}
\end{minipage}
\begin{minipage}[ht]{0.328\linewidth}
\centering 
{\includegraphics[width=0.978\textwidth]{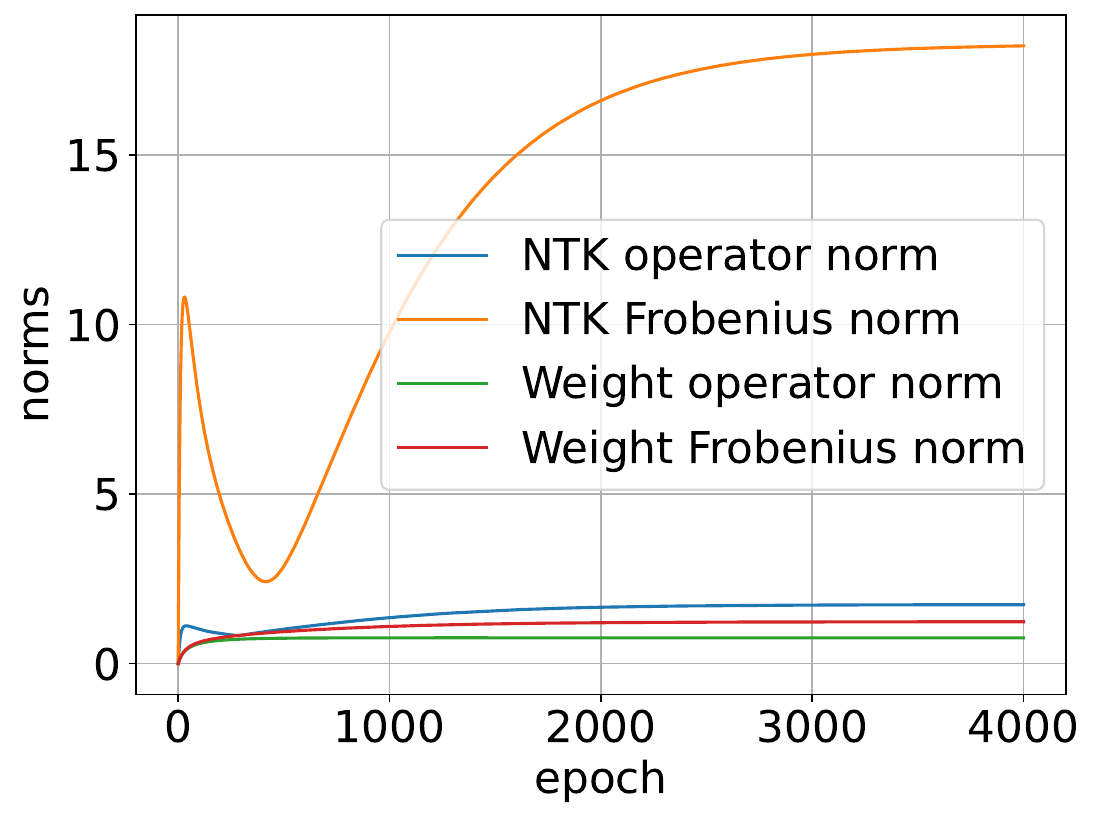}} \\
{\small (b) Case 2}
\end{minipage} 
\begin{minipage}[ht]{0.328\linewidth}
\centering
{\includegraphics[width=0.98\textwidth]{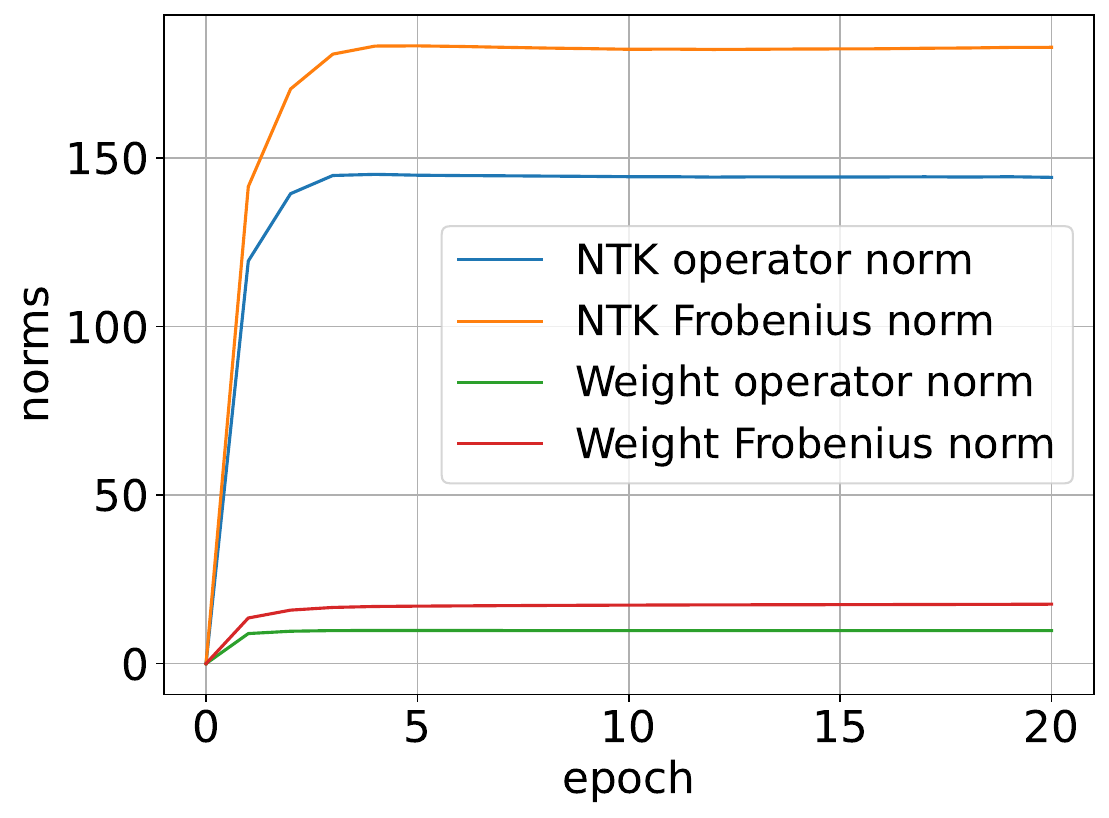}} \\
{\small (c) Case 4}
\end{minipage}
\centering
\caption{\small The evolution of the changes in operator/Frobenius norms of the weight/CK/NTK matrices through different training processes. Each case corresponds to the case in Table~\ref{table:examples}. Case 3 is exhibited in Figure~\ref{fig:largeSGD}(c) below.
}\label{fig:norm_history}
\end{figure}

\begin{figure}[ht]  
\centering
\centering
{\includegraphics[width=0.99\textwidth]{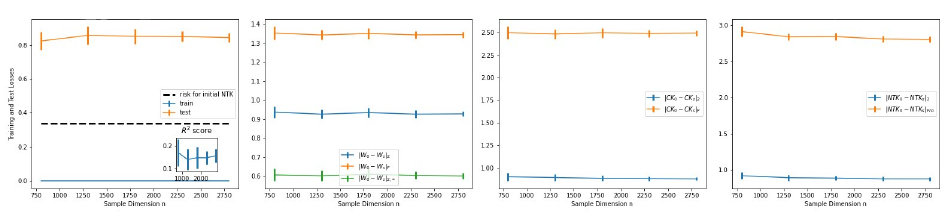}} \\
{\small ~~~~(a) Train/test losses~~~~~~~~~~~~~~~~(b) Change in weight~~~~~~~~~~~~~~~~~(c) Change in CK~~~~~~~~~~~~~~~~~~~(d) Change in NTK}
\caption{\small Performances of NNs and changes in different norms for weight and kernels, when $d/n=0.6$ and $N/n=1.2$ are fixed as $n$ is growing. The activation is normalized $\tanh$ and the teacher model is $f^*(\bx)=\sigma^*(\bbeta^\top\bx)$ where $\sigma^*$ is a normalized \textit{softplus}. We average over 10 trials in each case. All these curves are almost flat, which indicates these values are not growing with $\gamma_1$ and $\gamma_2$. Here, in the second figure from the left, we normalized all weights $\bW$ with $\frac{1}{\sqrt{d}}$ to observe \eqref{eq:norm_weight}.}
\label{fig:norm_change}  
\end{figure} 

\begin{figure}[ht]  
\centering
\begin{minipage}[t]{0.328\linewidth}
\centering
{\includegraphics[width=0.9\textwidth]{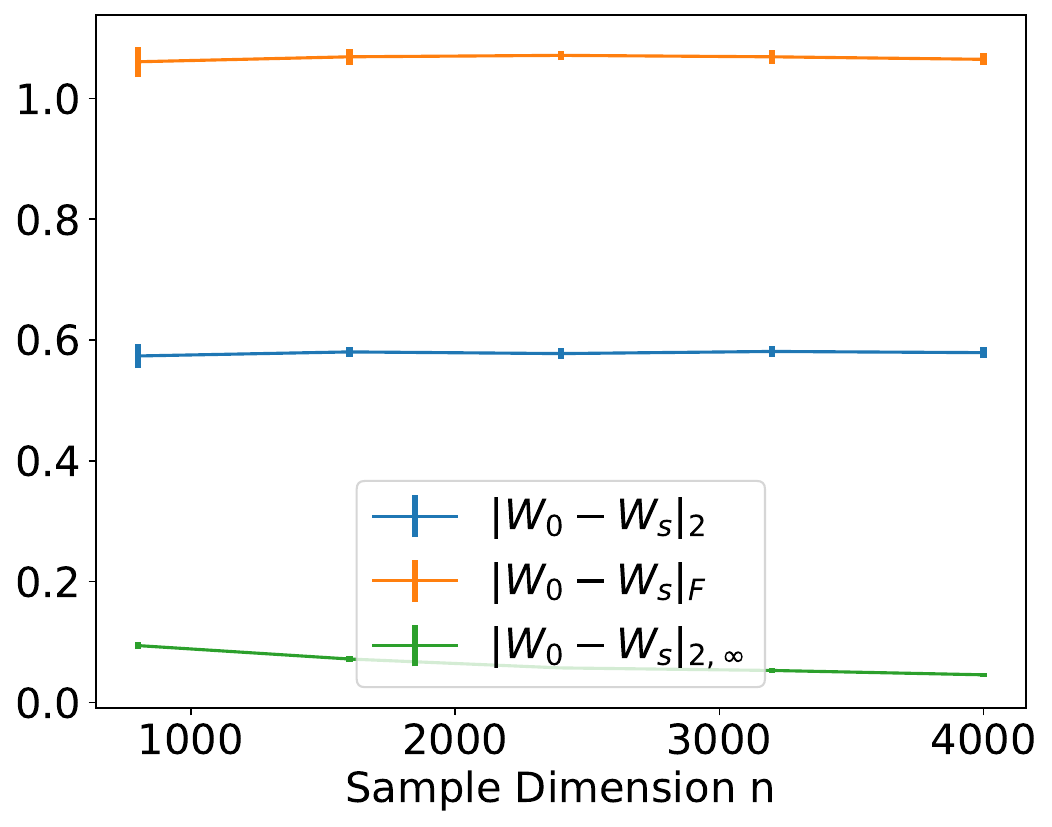}} \\
\small (a) Change in weight. 
\end{minipage}
\begin{minipage}[t]{0.33\linewidth}
\centering 
{\includegraphics[width=0.9\textwidth]{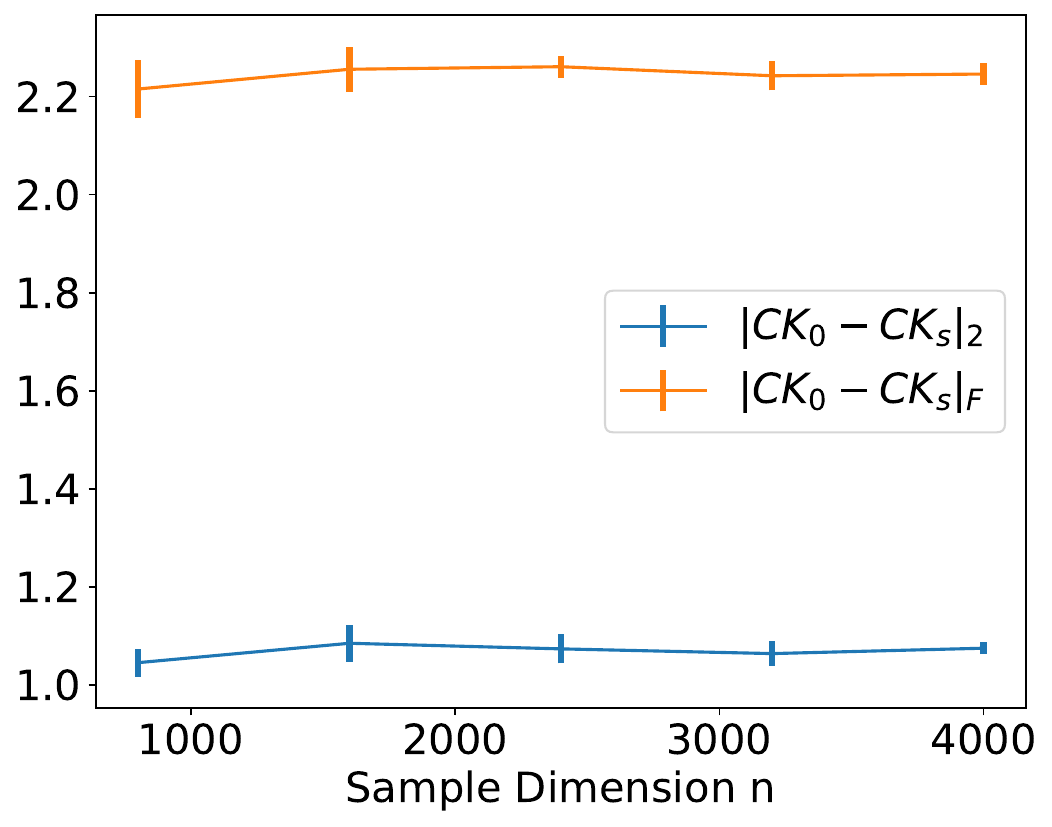}} \\ 
\small (b) Change in CK. 
\end{minipage} 
\begin{minipage}[t]{0.328\linewidth}
\centering
{\includegraphics[width=0.97\textwidth]{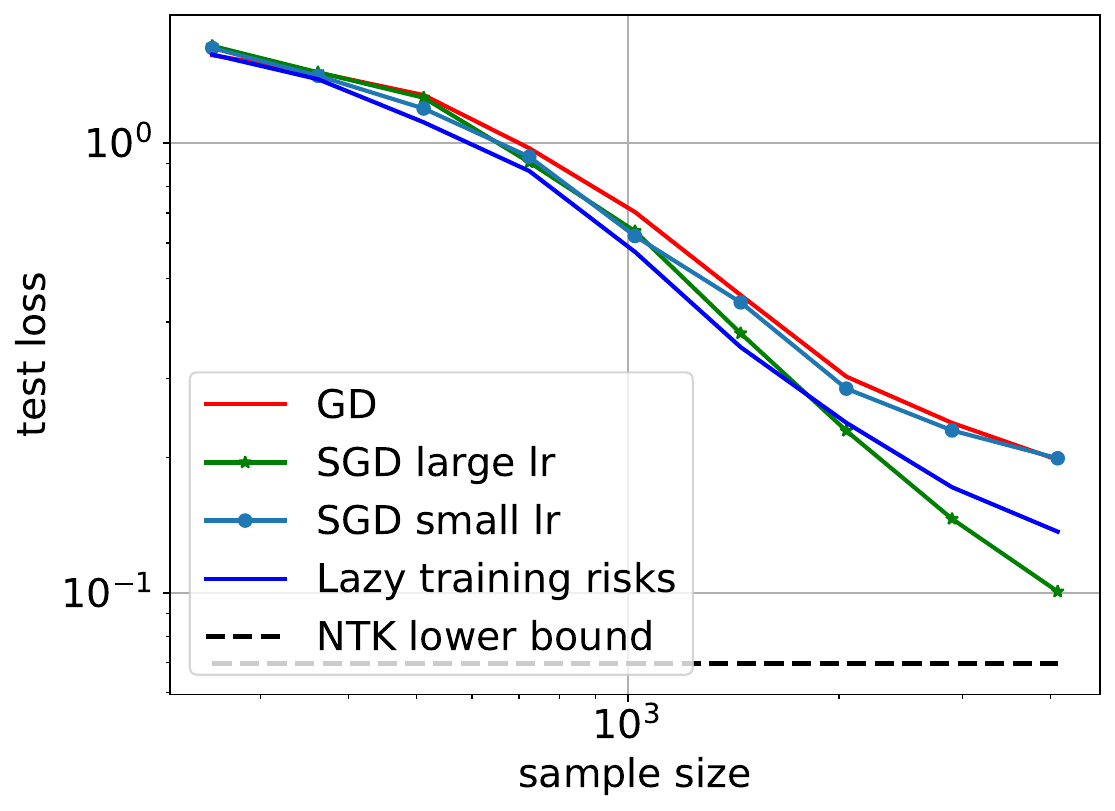}} \\ 
\small (c) Prediction risks vs. $\gamma_1$.  
\end{minipage}  
 \caption{\small 
 (a) Change between initial $\bW_0$ and final step $\bW_s$ in operator norm, Frobenius norm and $(2,\infty)$-norm when $d/n=0.5,N/n=0.8$ are fixed as $n\to\infty$. We train NNs by SGD with $\eta=2.5$ for 15 trials to average.
 (b) Change of $\bK^\CK$ in operator norm and Frobenius norm. 
 (c) Prediction risks for lazy training defined by \eqref{eq:lazy}, GD with $\eta=\Theta(1)$ (red), SGD with $\eta=\Theta(1)$ (blue dot) and $\eta\propto \gamma_1$ (green), as $\gamma_1\to\infty $ and $\gamma_2=2.5$. The black dashed line stands for the kernel lower bound given by the nonlinear part of the teacher model.}   
\label{fig:large}  
\end{figure}

\paragraph{Case 1.} Comparing with Figure~\ref{fig:spikes}, Figure~\ref{fig:GD_alignment} shows no alignments with training data in GD training. This corresponds to the performances in Table~\ref{table:examples}. The performance of Case 1 is not as good as the prediction risks in Figure~\ref{fig:spikes}, since Figure~\ref{fig:GD_alignment} suggests that no feature learning appears after GD training. Gradient descent requires the weights to converge to some global minima close to initialization, thereby offering no guarantees for lower generalization errors. Next, Figure~\ref{fig:GD} further presents more results on GD training and indicates more evidence of kernel regime in Case 1. This shows that, from a spectral point of view, the NTK is invariant/static through training. Based on Figures~\ref{fig:spectra}(a) and \ref{fig:GD}, we can empirically verify Corollary~\ref{coro:invariant} stated in Appendix~\ref{sec:proofs}. Globally, the spectra of $\bW$, $\bK^\CK$ and $\bK^\NTK$ are not changing over training as $n/d\to\gamma_1$ and $N/d\to\gamma_2$. The initial spectrum of weight $\bW_0$ converges to Mar\v{c}enko–Pastur law; the initial spectrum of NTK under proportional limit has been studied by \cite{adlam2020neural,fan2020spectra}. Figure~\ref{fig:GD}(c) demonstrates the global convergence for GD under the proportional regime, as proved in Theorem~\ref{thm:convergence}. We can observe this global convergence even for SGD, Case 2 in Table~\ref{table:examples}, although we do not have proof for it.
\begin{figure}[ht]  
\centering
\begin{minipage}[t]{0.325\linewidth}
\centering
{\includegraphics[width=0.96\textwidth]{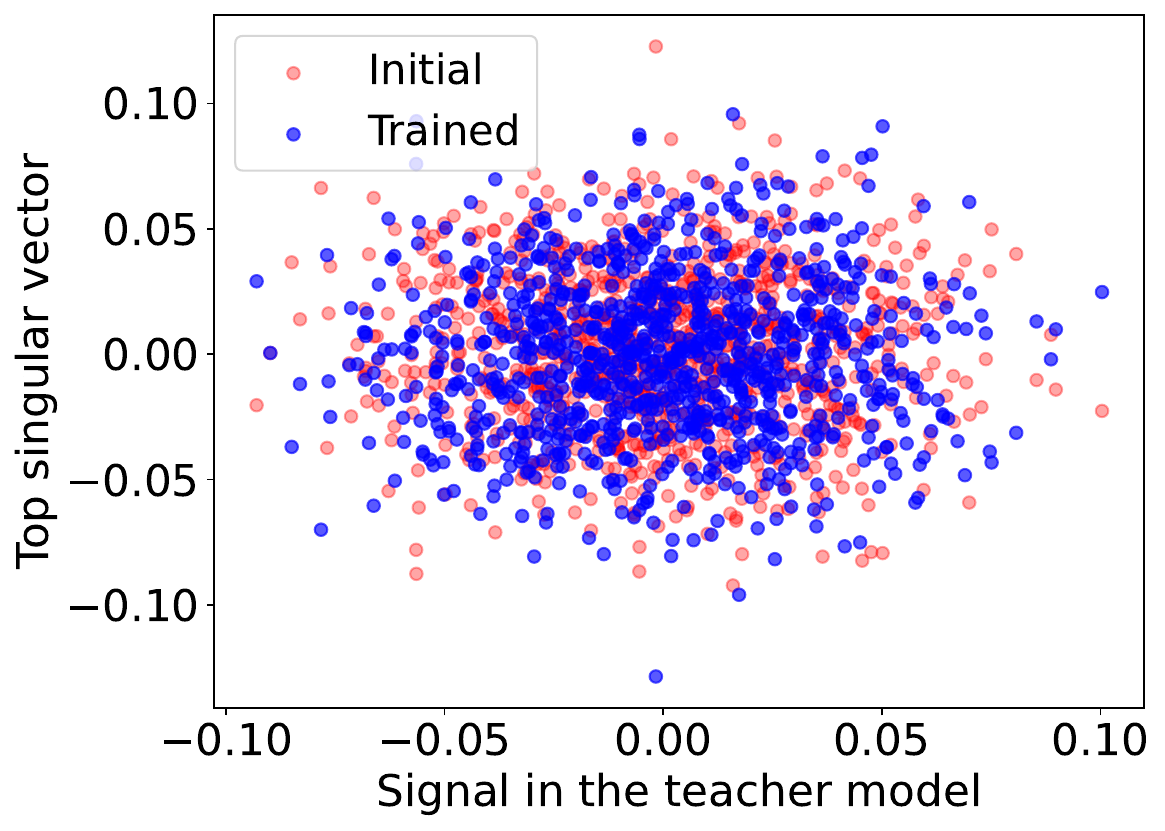}}  
\end{minipage}
\begin{minipage}[t]{0.325\linewidth}
\centering 
{\includegraphics[width=0.96\textwidth]{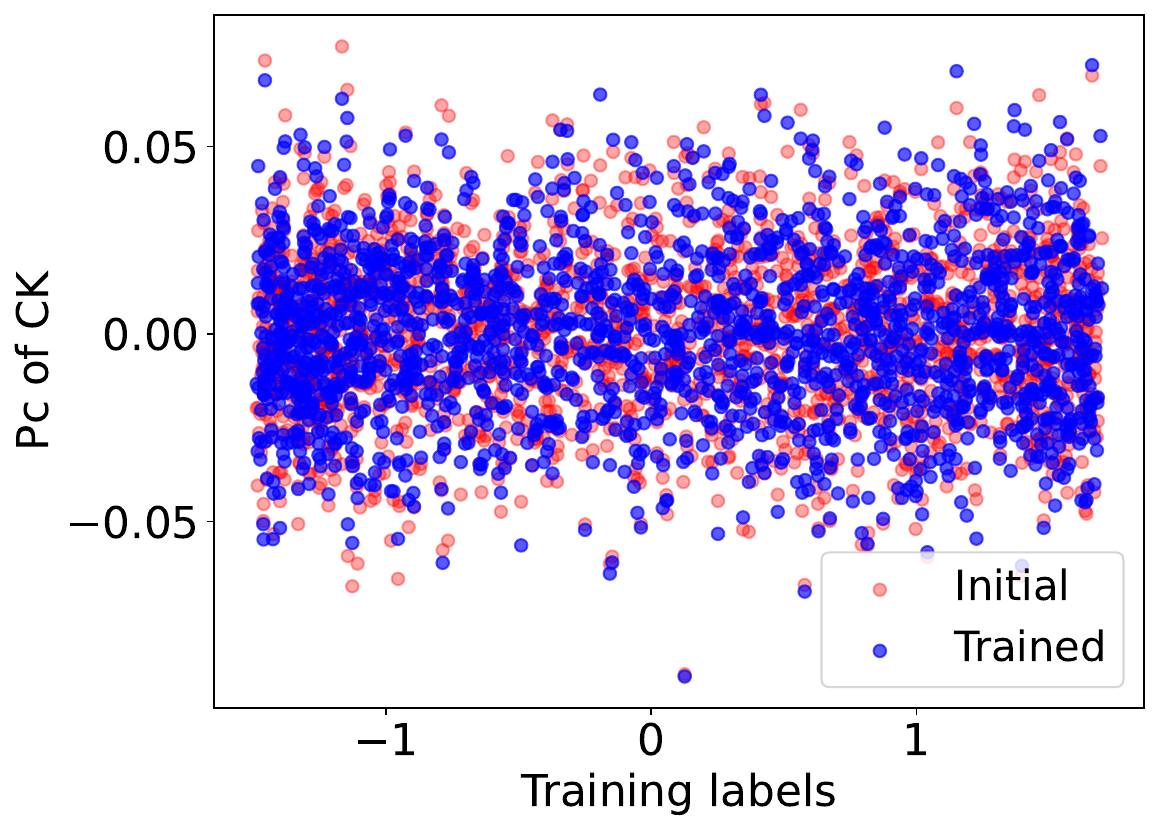}} 
\end{minipage} 
\begin{minipage}[t]{0.33\linewidth}
\centering
{\includegraphics[width=0.99\textwidth]{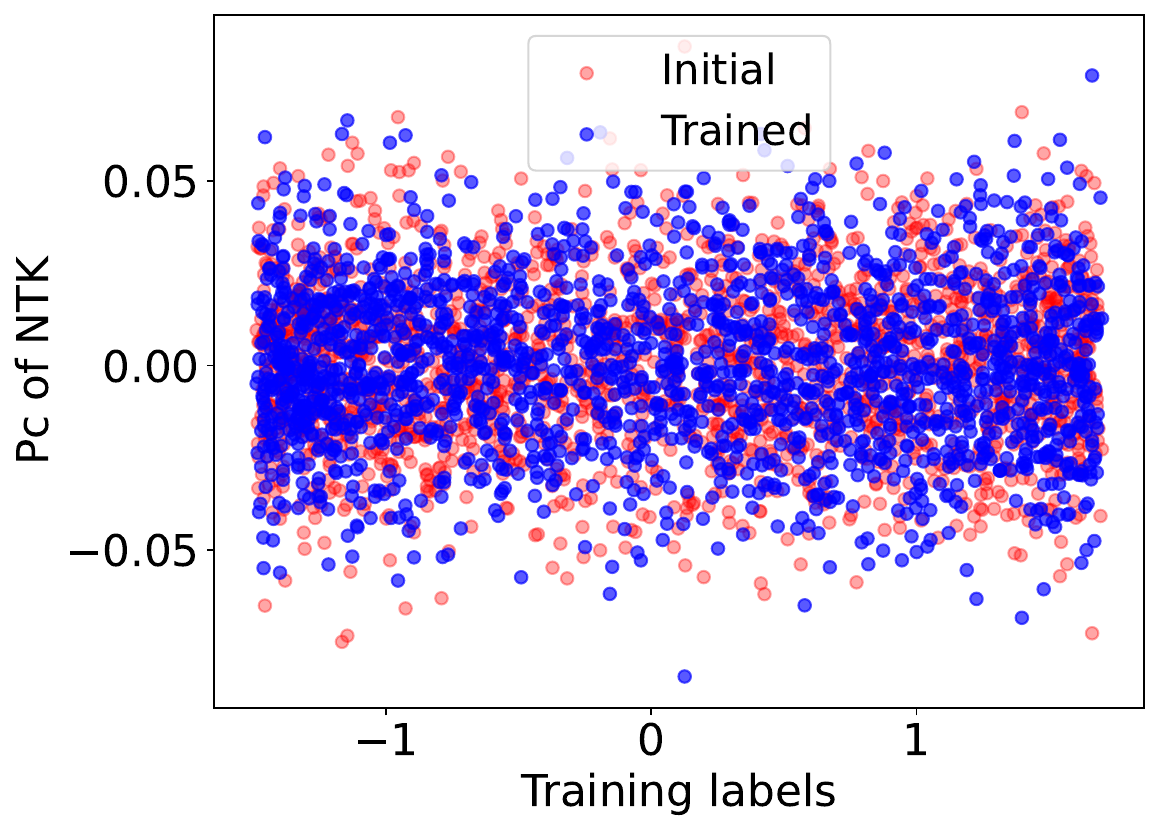}} 
\end{minipage}
\caption{\small Alignment with leading PCs of trained weight, CK and NTK matrices in Case 1 of Table~\ref{table:examples}.}\label{fig:GD_alignment} 
\end{figure} 
\begin{figure}[ht] 
\centering
\begin{minipage}[t]{0.3\linewidth}
\centering
{\includegraphics[width=0.99\textwidth]{GDspectra_weight.pdf}} 
\end{minipage}
\begin{minipage}[t]{0.31\linewidth}
\centering 
{\includegraphics[width=0.99\textwidth]{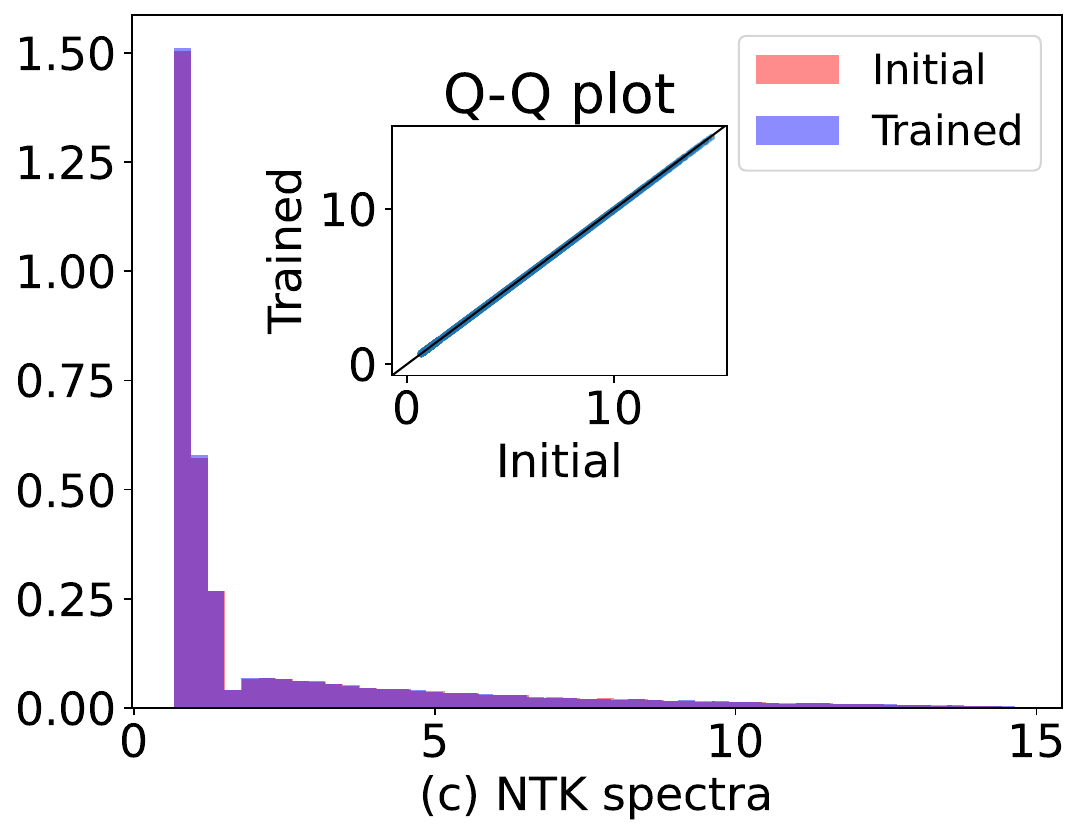}} 
\end{minipage} 
\begin{minipage}[t]{0.37\linewidth}
\centering
{\includegraphics[width=0.99\textwidth]{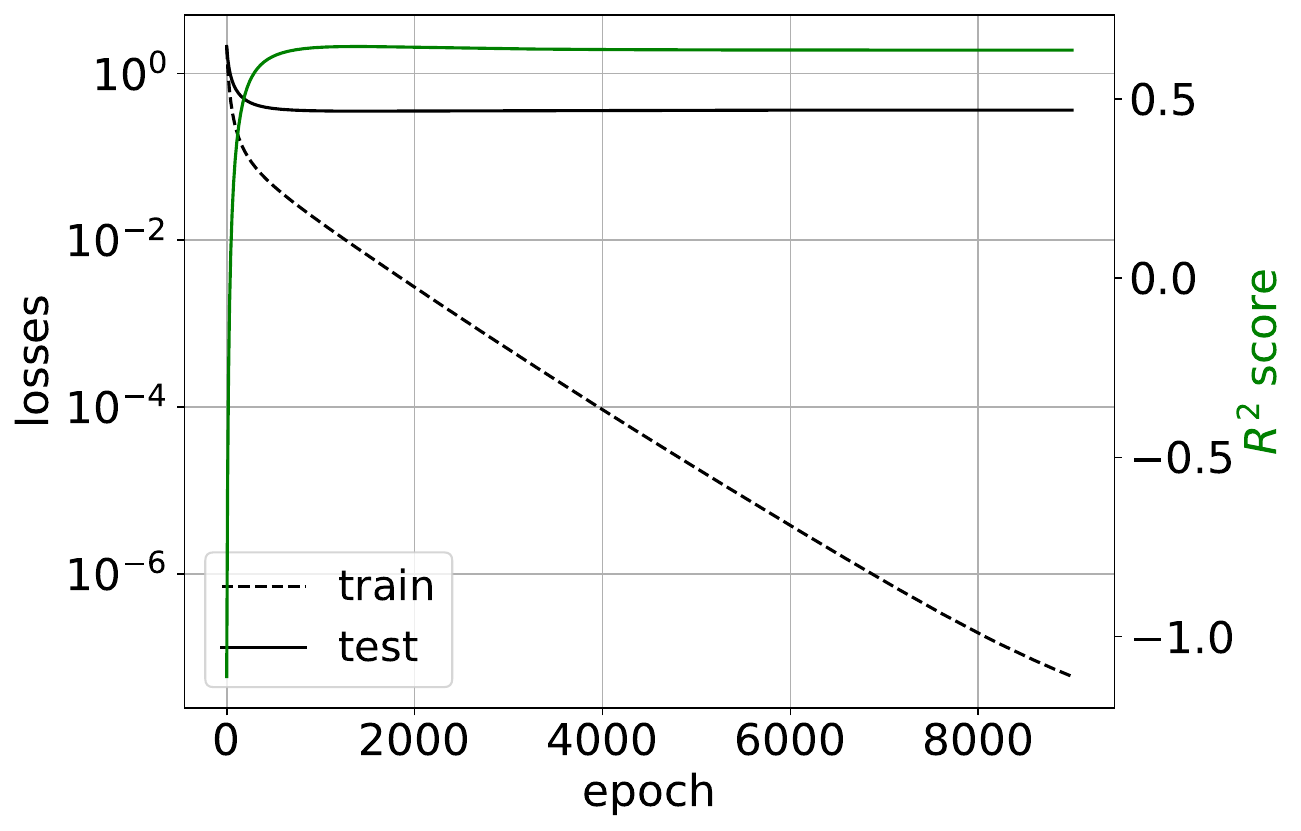}}  
\end{minipage}  
 \caption{\small Performances of Case 1 in Table~\ref{table:examples}:
 (a) The initial and trained spectra of the first-hidden layer $\bW$. 
 (c) The initial and trained spectra of empirical NTK matrix defined by \eqref{eq:NTK}. Q-Q subplot shows these two spectra are almost the same.
 Training and test losses vs. epochs for GD (right).}  
\label{fig:GD}  
\end{figure} 

\paragraph{Case 2.} As a complement, Figure~\ref{fig:SGD} exhibits the spectra of $\bW$, $\bK^\CK$ and $\bK^\NTK$ for Case 2 in Table~\ref{table:examples}. The phenomena are similar to Case 1. This observation provides evidence that all results and conjectures in Section~\ref{sec:kernel_GD} can be extended to SGD training with sufficiently small learning rates, which is subject to future work. Analogously to Theorem~\ref{thm:convergence}, we conjecture that the global convergence when training both layers of NN with SGD still holds in this proportional limit. The proof strategy for global convergence, in this case, can again follow \cite{oymak2019generalization,oymak2019overparameterized}. Once we have the invariant global spectra in Corollary~\ref{coro:change}, we can apply the nonlinear RMT \citep{pennington2017nonlinear,louart2018random,benigni2019eigenvalue,fan2020spectra} to characterize the limiting spectra under LWR. 
\begin{figure}[ht]  
\centering
\begin{minipage}[t]{0.324\linewidth}
\centering
{\includegraphics[width=0.99\textwidth]{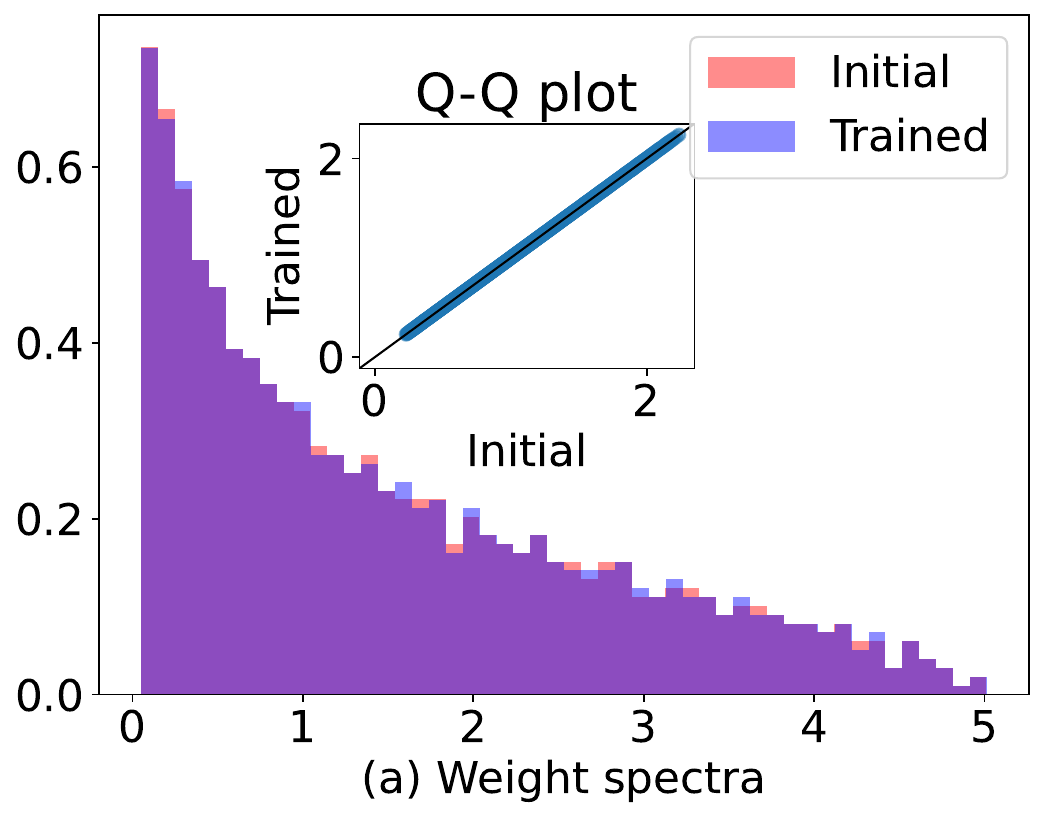}}  
\end{minipage}
\begin{minipage}[t]{0.332\linewidth}
\centering
{\includegraphics[width=0.99\textwidth]{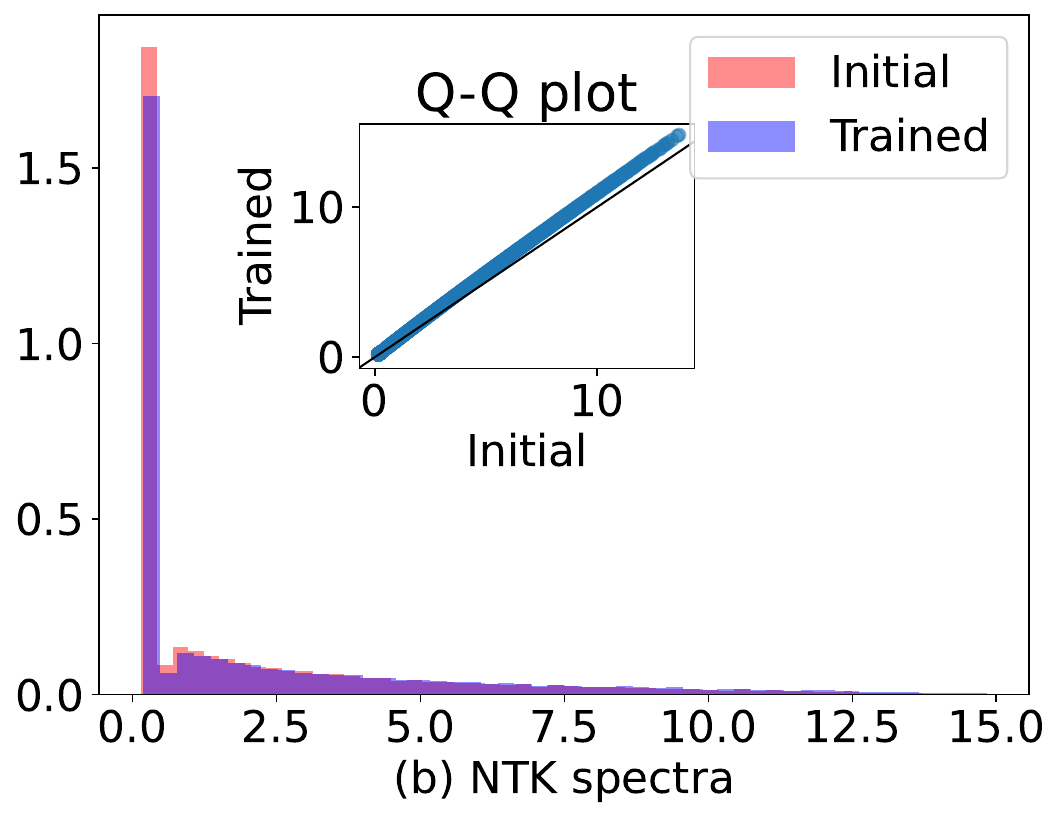}}  
\end{minipage} 
\begin{minipage}[t]{0.326\linewidth}
\centering 
{\includegraphics[width=0.99\textwidth]{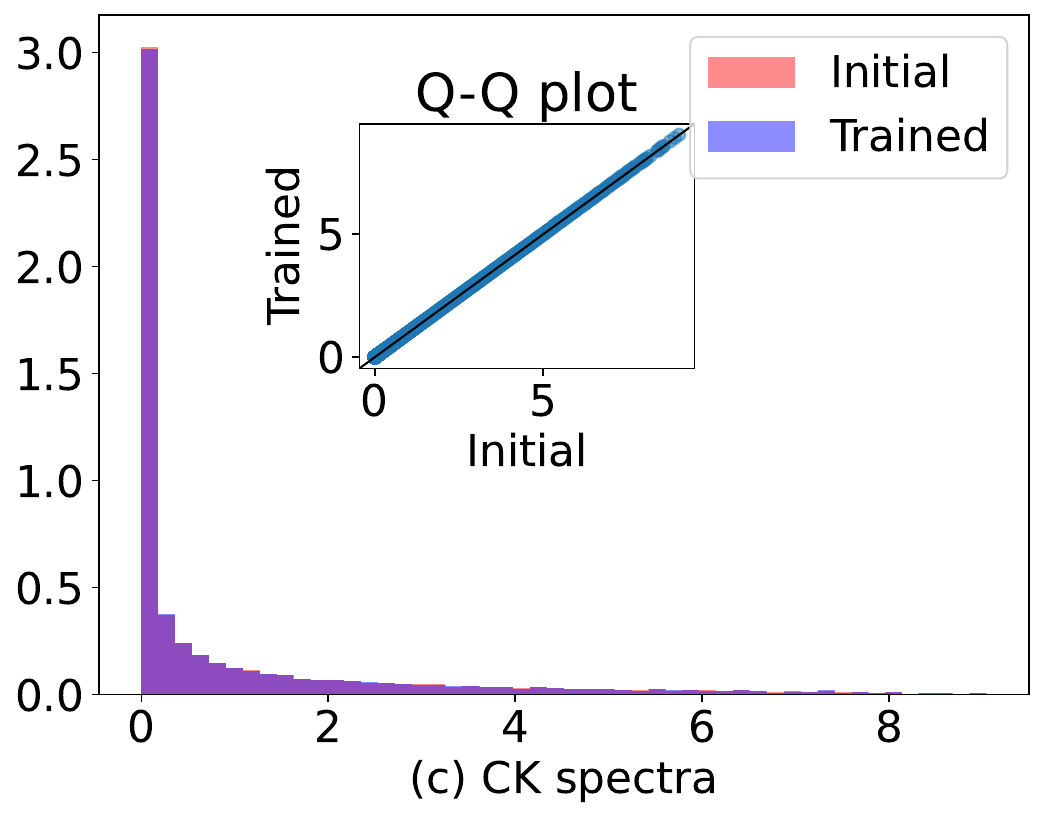}}  
\end{minipage} 
 \caption{\small Spectral properties for Case 2 in Table~\ref{table:examples}:
 (a) The initial and trained spectra of the first-hidden layer $\bW$. (b) The initial and trained spectra of empirical NTK are defined by \eqref{eq:NTK}. (c) The initial and trained spectra of empirical CK defined by \eqref{eq:CK}.}   
\label{fig:SGD}  
\end{figure}

\paragraph{Case 3.} Next, in Figures~\ref{fig:largeSGD} and \ref{fig:spikes}, we present spectral properties for Case 3 in Table~\ref{table:examples}, where a spike detaches from the bulk after large-step-size training. Notice that Figures~\ref{fig:spectra}(b) and \ref{fig:largeSGD}(a) imply that the bulk spectra for weight and CK remain unchanged over training despite the emergence of spikes. This is not true for NTK by observing Figures~\ref{fig:largeSGD}(b) and (c). The Frobenius norm of NTK changes significantly during training and is not $O(1)$ anymore; the spectra of the first component of the NTK shrinks after training (Figure~\ref{fig:largeSGD}(b)), which indicates NNs converge to flatter minima. This resembles the catapult phase in \cite{lewkowycz2020large} for extremely large learning rates. Figure~\ref{fig:spikes}(a) shows the convergence rate for SGD in Case 3. Empirically, we observe that the training loss will not monotonically decrease when using a larger learning rate than Case 3,  which may be analogous to catapult phases from \cite{lewkowycz2020large}.

\begin{figure}[ht]  
\centering
\begin{minipage}[t]{0.325\linewidth}
\centering
{\includegraphics[width=0.97\textwidth]{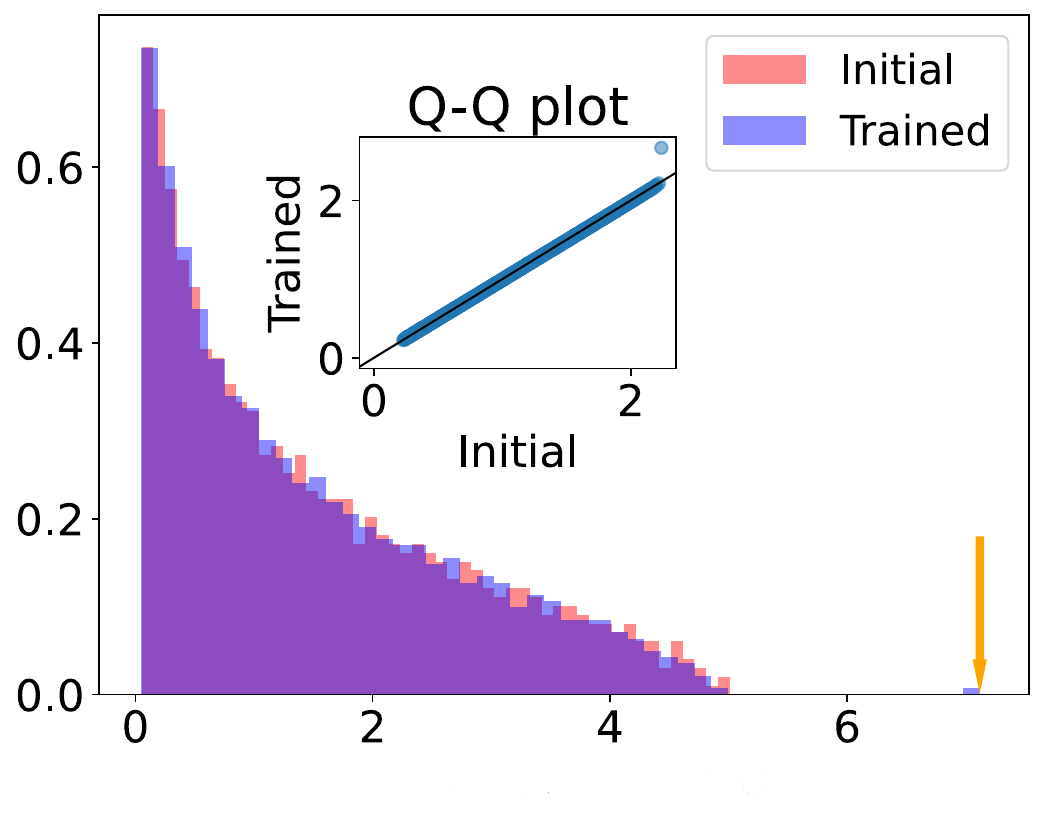}} \\
\small (a) Spectra of weights. 
\end{minipage}
\begin{minipage}[t]{0.328\linewidth}
\centering 
{\includegraphics[width=0.98\textwidth]{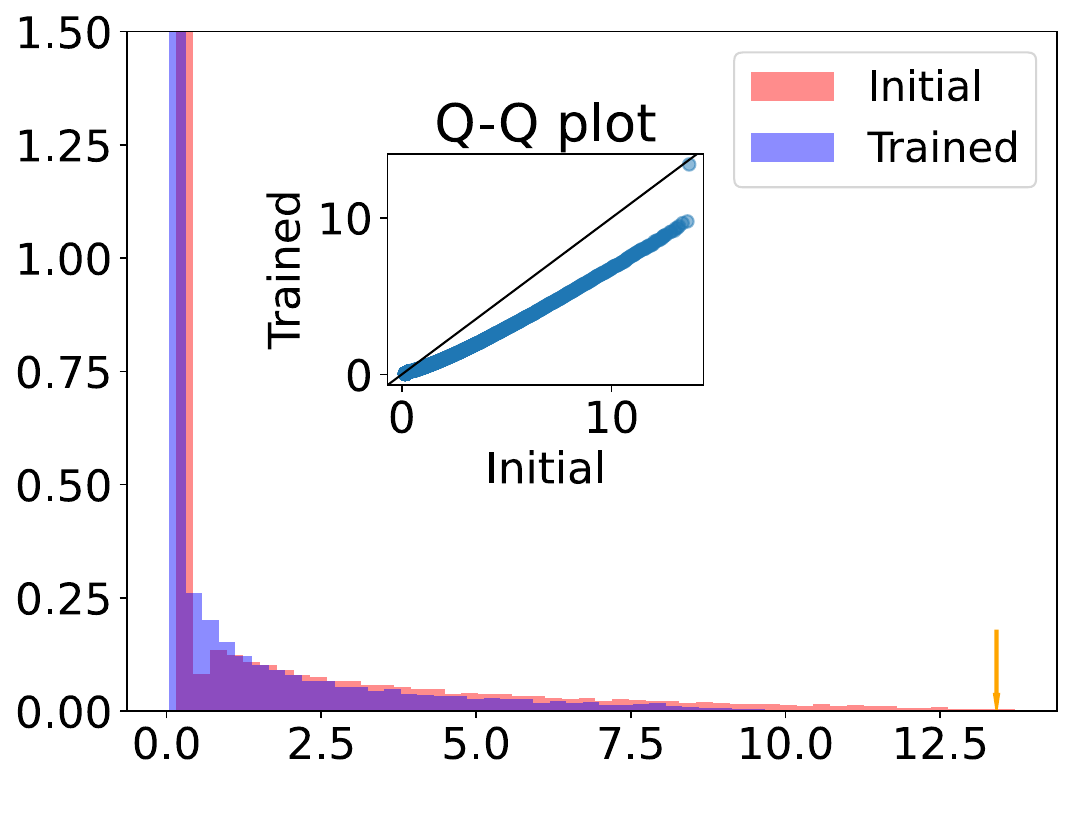}} \\ 
\small (b) Spectra of the first part in NTKs. 
\end{minipage} 
\begin{minipage}[t]{0.33\linewidth}
\centering
{\includegraphics[width=0.98\textwidth]{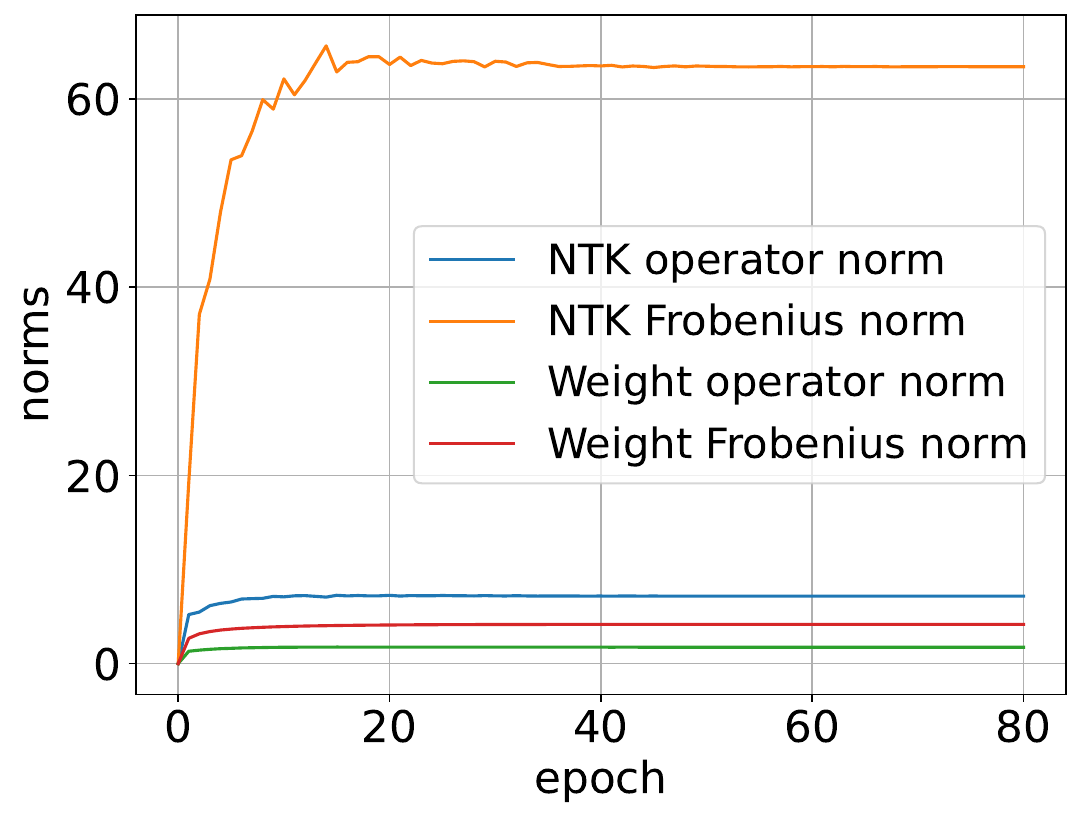}} \\ 
\small (c) Norms of change vs. epochs.  
\end{minipage}  
 \caption{\small Additional performance for Case 3 in Table~\ref{table:examples}:
 (a) The initial and trained weight spectra. Notice that there is one outlier after training, while the bulk remains invariant. This is analogous to the behavior of CK spectra in Figure~\ref{fig:spectra}(b).
 (b) The spectra of the first part in \eqref{eq:NTK} at initialization and after training. The orange arrow points out the outlier of the spectrum.
 (c) The changes $\norm{\bW_t-\bW_0}$, $\norm{\bW_t-\bW_0}_F$, $\norm{\bK^\NTK_t-\bK^\NTK_0}$ and $\norm{\bK^\NTK_t-\bK^\NTK_0}_F$ at each epoch $t$ throughout the training process.}   
\label{fig:largeSGD}  
\end{figure}  
\begin{figure}[t]  
\centering
\begin{minipage}[t]{0.33\linewidth}
\centering
{\includegraphics[width=0.99\textwidth]{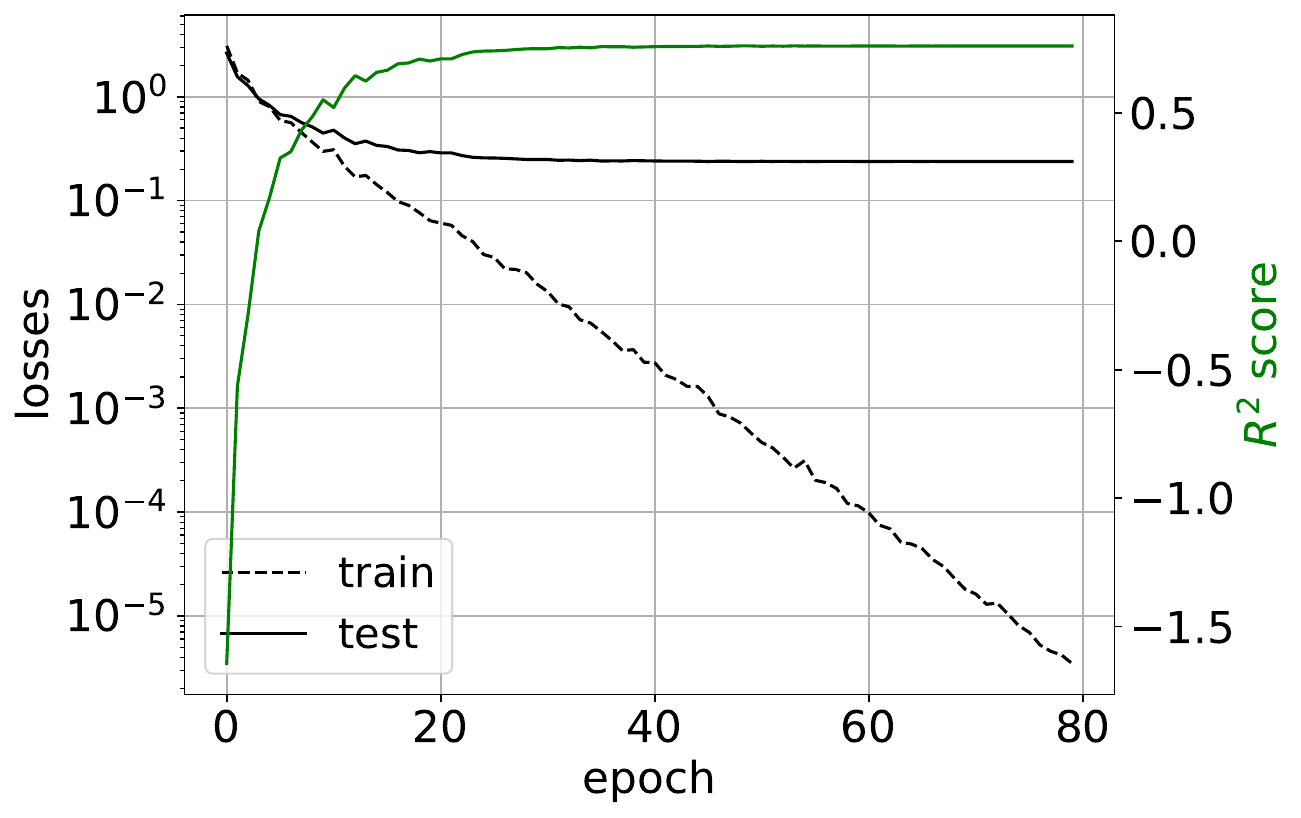}} \\
\small (a) Losses vs. epochs. 
\end{minipage}
\begin{minipage}[t]{0.3\linewidth}
\centering 
{\includegraphics[width=0.99\textwidth]{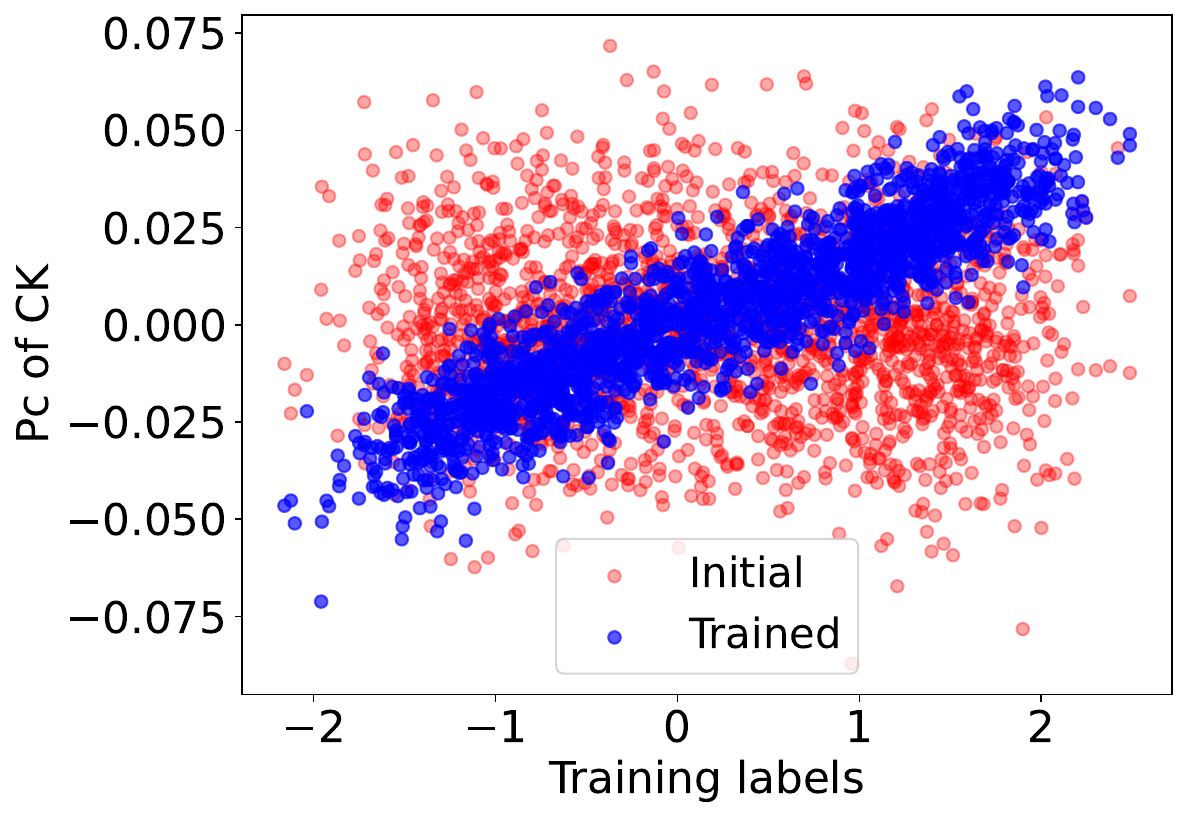}} \\ 
\small (b) $\by$ vs. $\bv_1$ of CK. 
\end{minipage} 
\begin{minipage}[t]{0.3\linewidth}
\centering 
{\includegraphics[width=0.99\textwidth]{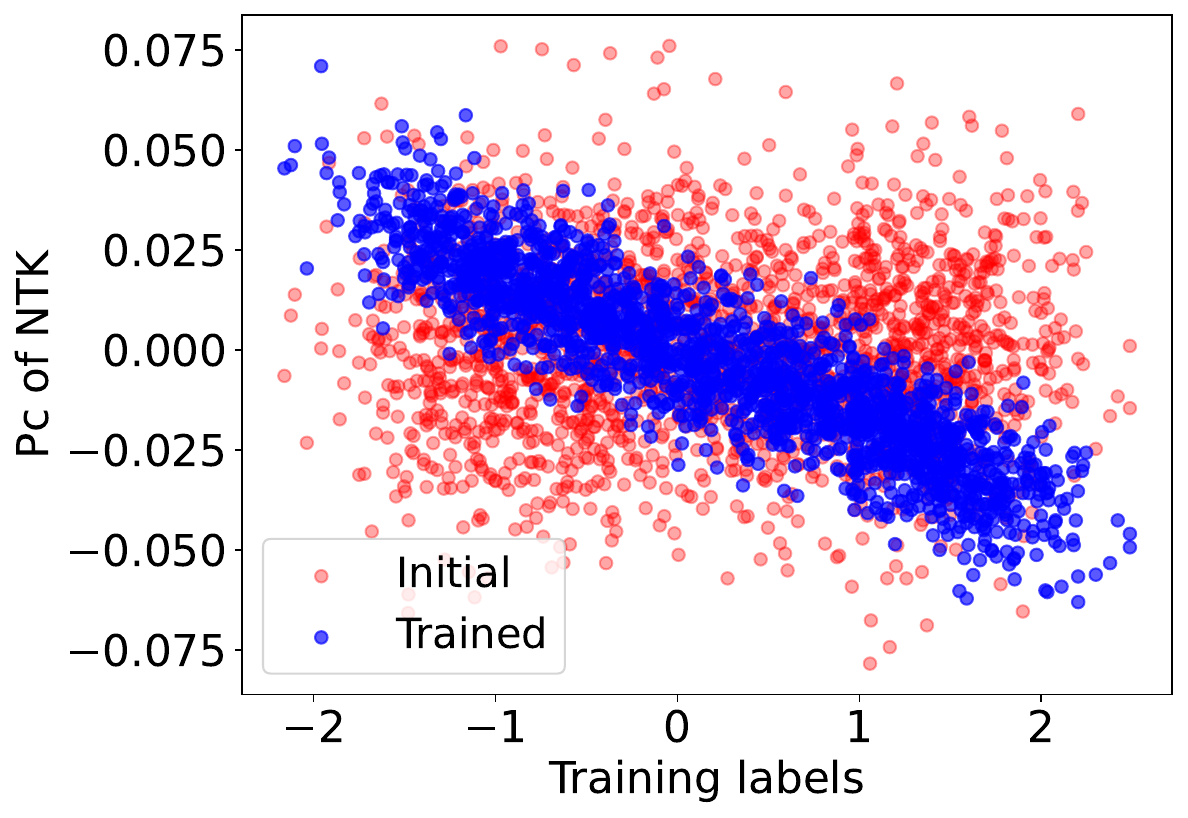}} \\ 
\small (b) $\by$ vs. $\bv_1$ of NTK. 
\end{minipage} 
 \caption{\small (a) Training/test losses and $R^2$ scores at each epoch of training in Case 3. (b) Alignment between training labels $\by$ and first PC of trained/initial CK. (c) Alignment between training labels $\by$ and the first PC of trained/initial NTK. We use the same setting in Case 3 of Table~\ref{table:examples}. There are strong alignments between kernels and training labels as stated in Section~\ref{sec:spikes}.}   
\label{fig:spikes}  
\end{figure}

\paragraph{Case 4.} Additional results for Case 4 in Table~\ref{table:examples} are shown in Figure~\ref{fig:adam_results}. Unlike the strong alignments in Case 3 (Figure~\ref{fig:spikes}), Figures~\ref{fig:adam_results} (e)-(f) do not exhibit strong alignments for the leading singular vectors or eigenvectors. This may be due to the heavy tails present after training, with the other large spikes detaching from the bulks are also important for generalization. A similar phenomenon can be also seen in Figures~\ref{fig:Adagd_alig} and \ref{fig:Adam+sgd_alig} in Appendix~\ref{sec:adagrad}, where we have comparable performances to Case 4 in Table~\ref{table:examples}.

Heavy tails are essentially power laws. To measure how ``heavy'' the spectrum is,~\cite{martin2019traditional,martin2018implicit,martin2021predicting} provide estimates on the power law of $\bW$. Consider the empirical spectrum of $\bW$ as $\rho (x)\sim x^{-\alpha}$ for large $x$ and some positive constant $\alpha$. The spectrum with a heavier tail has a smaller value of $\alpha$. Figure~\ref{fig:adam_results1}(a) shows how $\alpha$ evolves through training in Case 4 of Table~\ref{table:examples}. As $\alpha$ decreases, a heavy tail in the spectrum of the weight matrix emerges in Figure~\ref{fig:spectra}(c). In Figure~\ref{fig:adam_results1}(a) we introduce two more metrics to show this evolution: Weighted Alpha $\hat\alpha:=\alpha\lambda_{1}$ and Log $\alpha$-norm $\log \left(\sum_{i=1}^N\lambda_i^\alpha\right)$ where $\lambda_N\le \ldots\le \lambda_1$ are the eigenvalues of $\bW\bW^\top$. Remarkably, Figure~\ref{fig:adam_results1}(a) indicates the spectra change dramatically at the early stage of training, which matches the observation from fine-tuning via BERT on real-world data in Figure~\ref{fig:bert} in Section~\ref{sec:real_world}. These metrics are applied to measure the tails in pre-trained models~\cite{martin2021predicting}.

\begin{figure}[ht]  
\centering
\begin{minipage}[t]{0.31\linewidth}
\centering
{\includegraphics[width=\textwidth,height=\textheight,keepaspectratio]{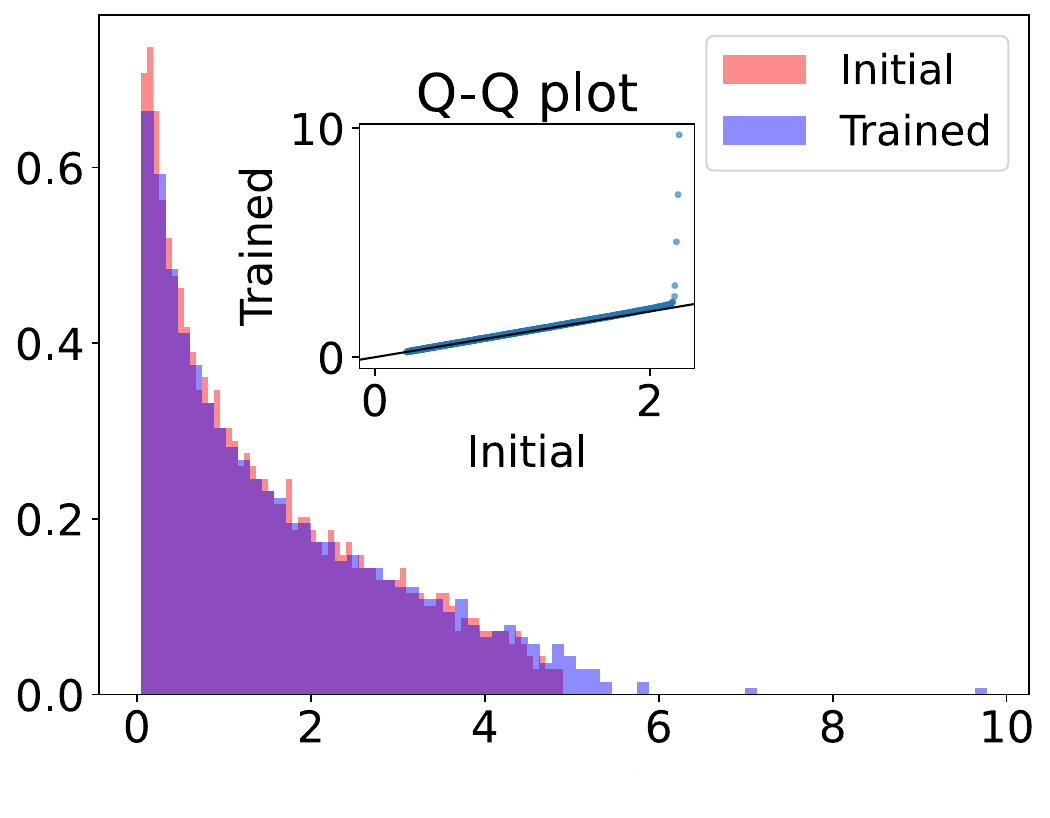}} \\
\small (a) Spectra of weights. 
\end{minipage}
\begin{minipage}[t]{0.31\linewidth}
\centering 
{\includegraphics[width=\textwidth,height=\textheight,keepaspectratio]{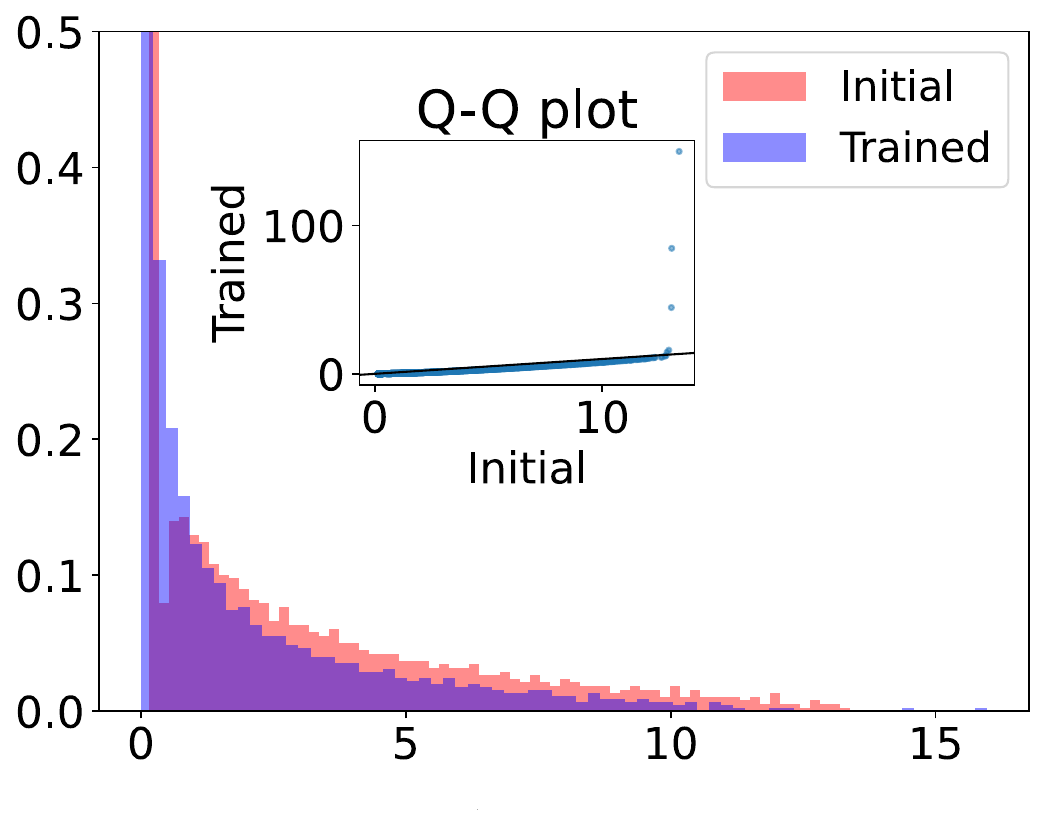}} \\ 
\small (b) Spectra of NTKs. 
\end{minipage} 
\begin{minipage}[t]{0.35\linewidth}
\centering
{\includegraphics[width=\textwidth,height=\textheight,keepaspectratio]{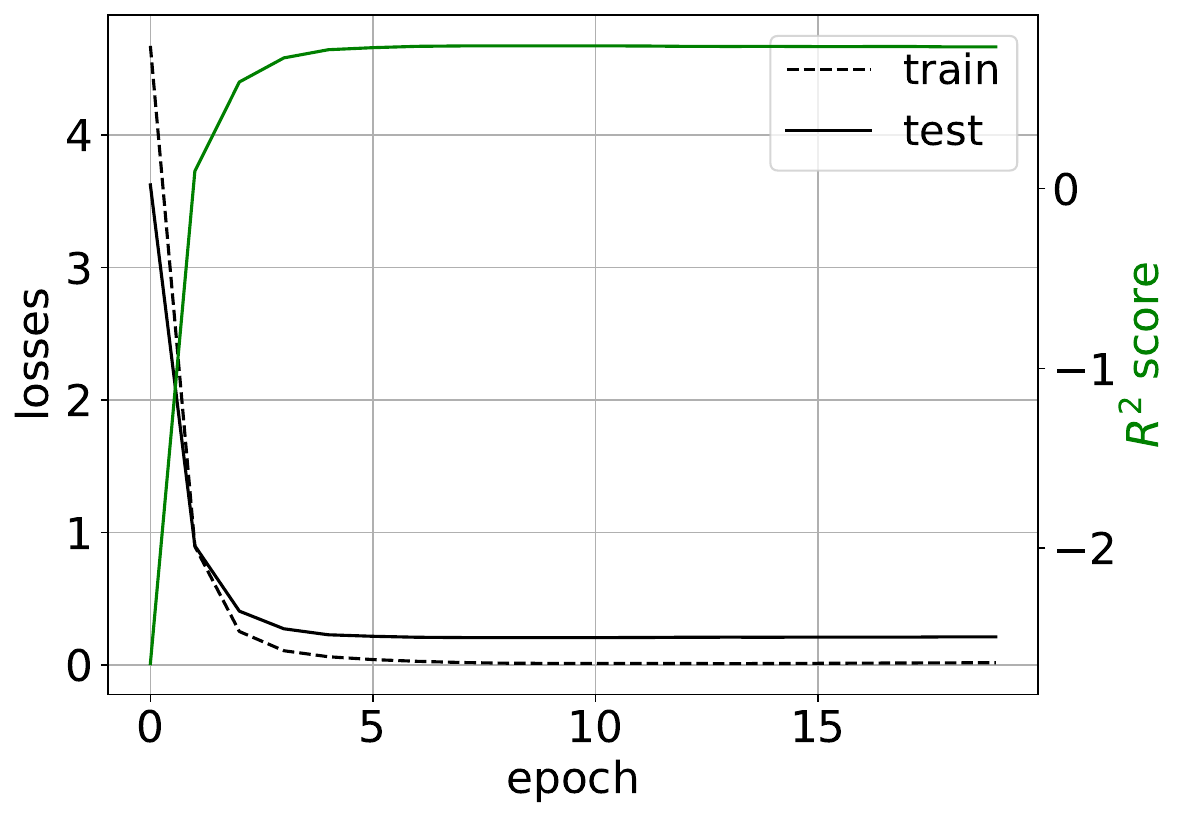}} \\ 
\small (c) Losses vs. epochs.  
\end{minipage} 
\begin{minipage}[t]{0.328\linewidth}
\centering
{\includegraphics[width=0.99\textwidth]{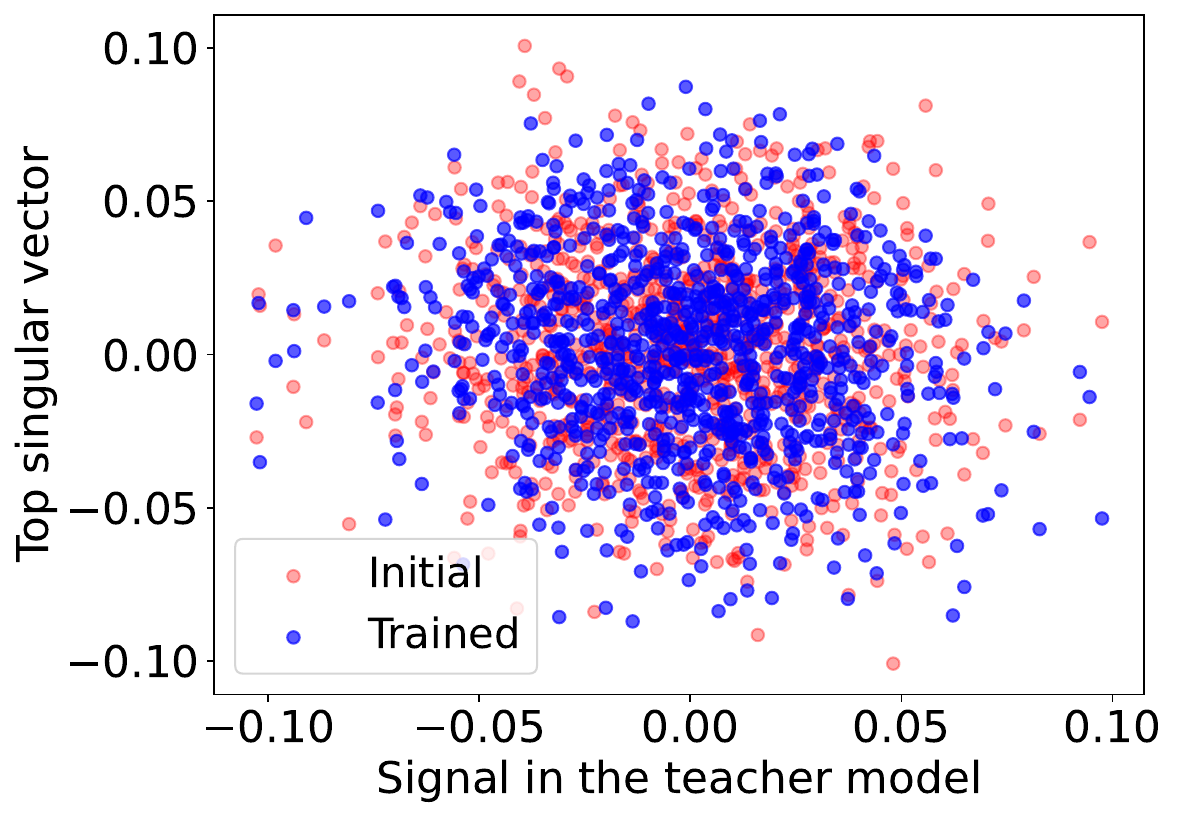}} \\ 
\small (d) First PC in trained \& initial weights.  
\end{minipage}  
\begin{minipage}[t]{0.328\linewidth}
\centering
{\includegraphics[width=0.98\textwidth]{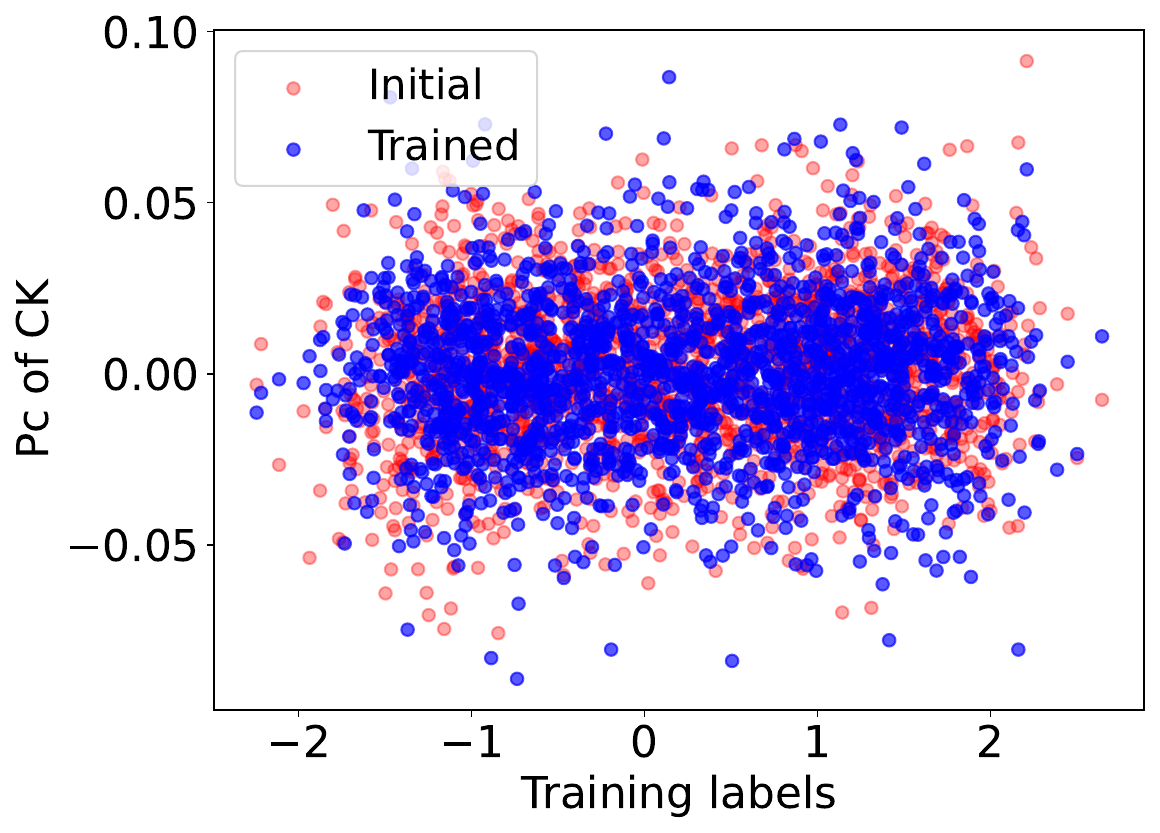}} \\
\small (e) First PC in trained \& initial CKs. 
\end{minipage}
\begin{minipage}[t]{0.328\linewidth}
\centering 
{\includegraphics[width=0.978\textwidth]{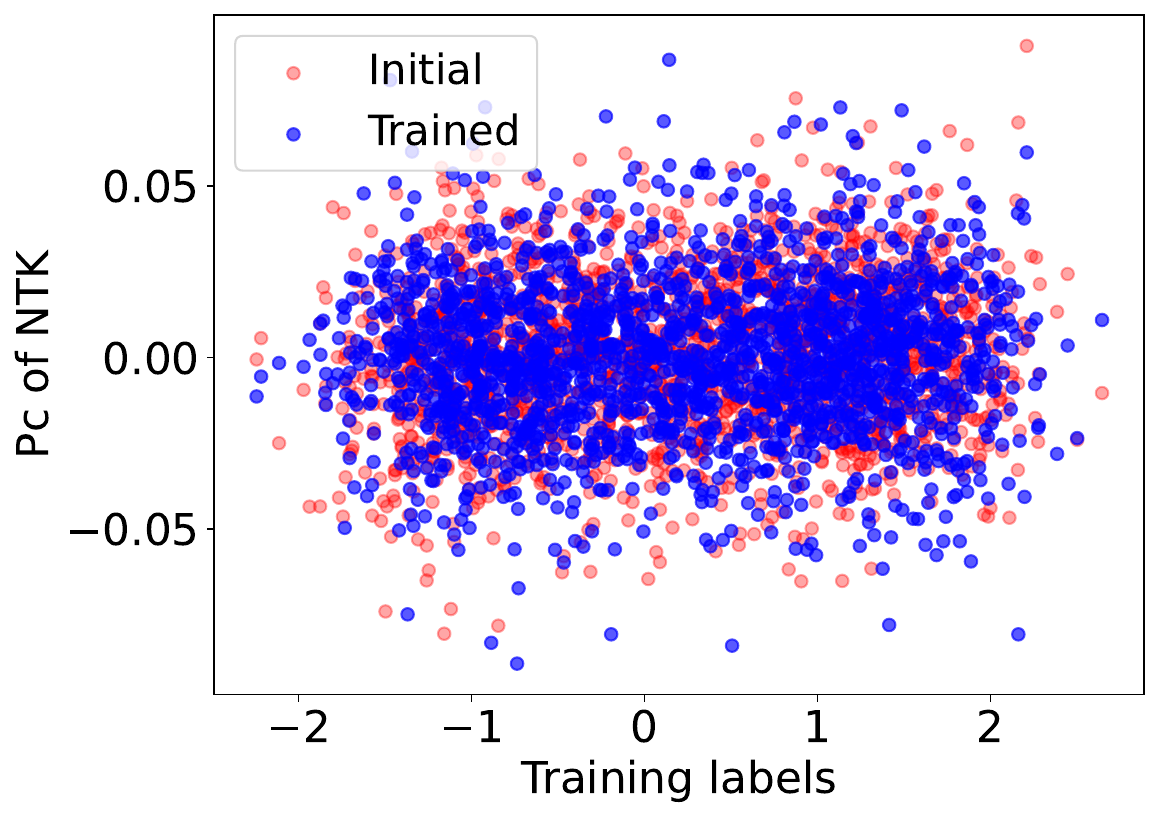}} \\ 
\small (f) First PC in trained \& initial NTKs. 
\end{minipage} 
 \caption{\small Additional performance for Case 4 in Table~\ref{table:examples}:
 (a) The initial and trained weight spectra. Notice that there are several outliers after training, while the bulk has a heavier tail. 
 (b) The spectra of the NTK \eqref{eq:NTK} at initialization and after training.
 (c) The test/training losses and $R^2$ score (green line) at each epoch $t$ throughout training process. (d) Alignment between the leading PC of the weight matrix and the signal $\bbeta$ in the teacher model before (red) and after (blue) training. (e) Alignment between the leading PC of the CK matrix and the training labels $\by$ before/after training. (f) Alignment between the leading PC of the NTK matrix and $\by$ before/after training. }   
\label{fig:adam_results}  
\end{figure} 

\begin{figure}[ht]  
\centering
\begin{minipage}[t]{0.328\linewidth}
\centering
{\includegraphics[width=\textwidth,height=\textheight,keepaspectratio]{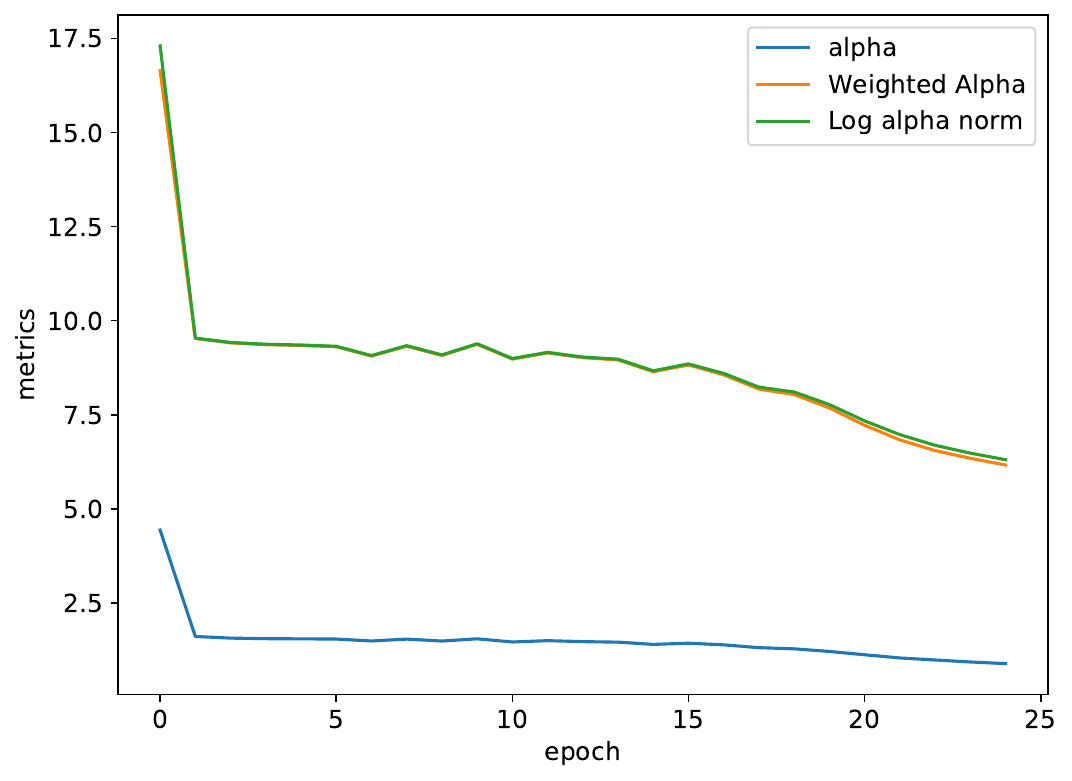}} \\
\small (a) Evolution of metrics. 
\end{minipage}
\begin{minipage}[t]{0.315\linewidth}
\centering 
{\includegraphics[width=\textwidth,height=\textheight,keepaspectratio]{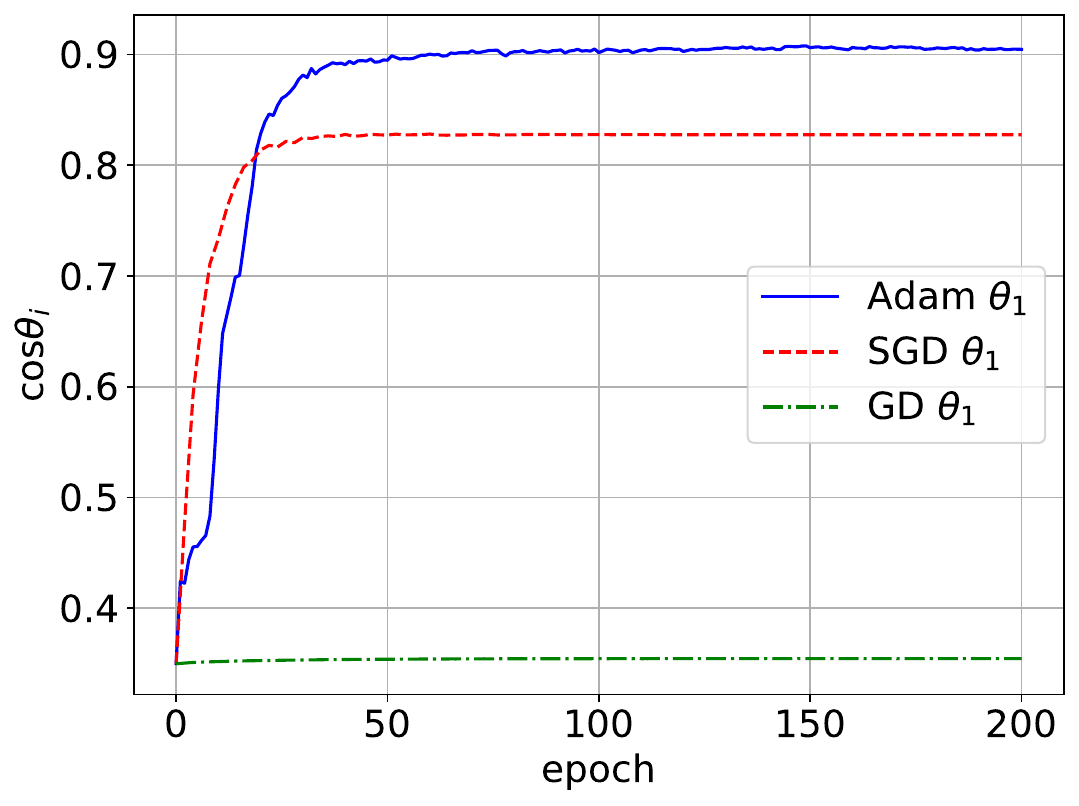}} \\ 
\small (b) Evolution of PC angles. 
\end{minipage} 
\begin{minipage}[t]{0.325\linewidth}
\centering
{\includegraphics[width=\textwidth,height=\textheight,keepaspectratio]{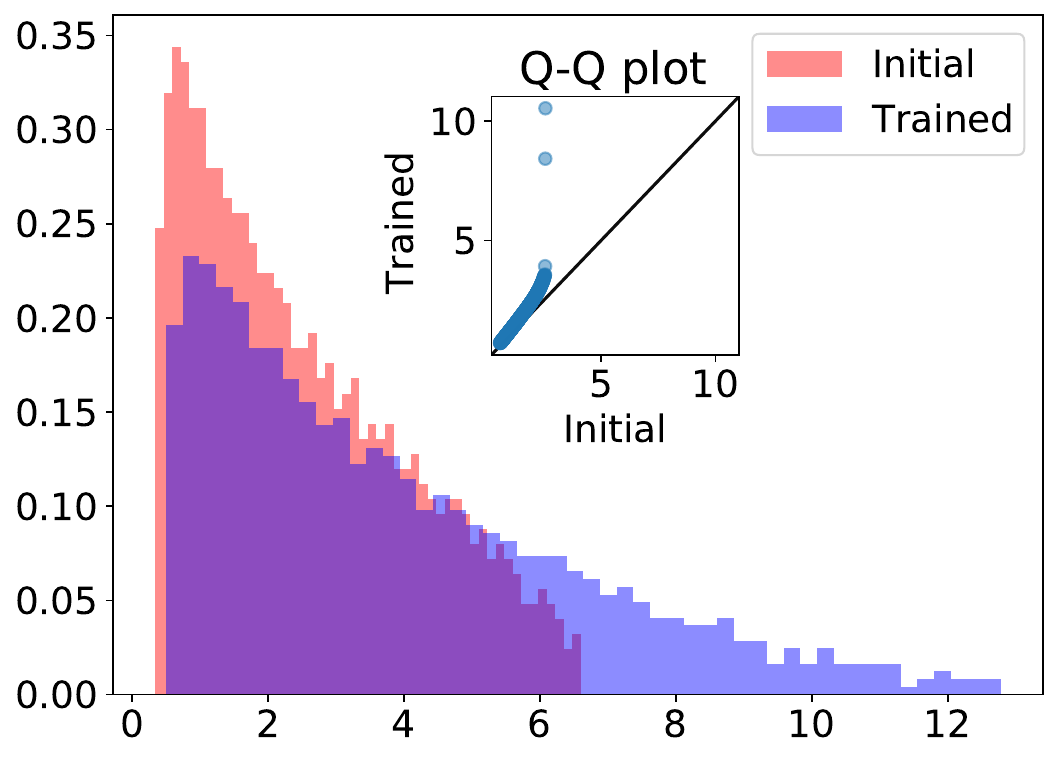}} \\ 
\small (c) Weight spectra.  
\end{minipage} 
 \caption{\small (a) The evolution of power $\alpha$, weighted Alpha and Log $\alpha$-norm (several metrics of power law tails; see \cite{martin2021predicting}) during the training process in Case 4 of Table~\ref{table:examples}. (b) Evolutions of the first PC angle $\theta_1$ between \emph{feature subspace} $U = \text{span}\{\bbeta_i\}_{i=1}^k$ of the multiple-index model \eqref{eq:multiple} and the \emph{eigenspace} spanned by top 100  of eigenvectors of $\bW_t^\top\bW_t$ during training with Adam (blue solid line), SGD ( red dashed line) and GD (green dash-dot). The final test error is 0.33865 and the $R^2$ score is -0.71065 for GD. The test error is 0.10814 and the $R^2$ score is 0.45373 for SGD, where one spike emerges in the weight spectrum after training.  The test error is 0.08672 and the $R^2$ score is 0.56195 for Adam. (c) Initial and trained spectra for weight matrices when training with Adam (blue solid line in (b)). Heavy tail emerges in this case. }   
\label{fig:adam_results1}  
\end{figure}

\subsection{Additional Results for the Emergence of A Spike}
As a complement to section~\ref{sec:spikes}, in Figure~\ref{fig:evolution_spike}, we show the training dynamics for SGD training with a larger learning rate in the example of Figure~\ref{fig:transition}(b-d). Here we consider $\eta = 24$ which belongs to the orange region in Figure~\ref{fig:transition}(b-d), where the spike and eigenvector alignment emerge. Figure~\ref{fig:evolution_spike} presents the details of the training dynamics of the NN in this case: the largest eigenvalues of CK and NTK both increase and the losses first increase and then drop. In Figure~\ref{fig:spike_GD}, we empirically justify that the phase transitions we presented in section~\ref{sec:spikes} for SGD can be also extended to full-batch GD cases. We can also observe phase transitions for test losses and $R^2$ scores when we are gradually increasing the learning rates. Parallel to these, a spike also appears outside the bulk distribution, which corresponds to feature alignments in Figure~\ref{fig:spike_GD}(c)\&(f).

\begin{figure}
\centering
\begin{minipage}[t]{0.32\linewidth}
\centering
{\includegraphics[width=0.95\textwidth]{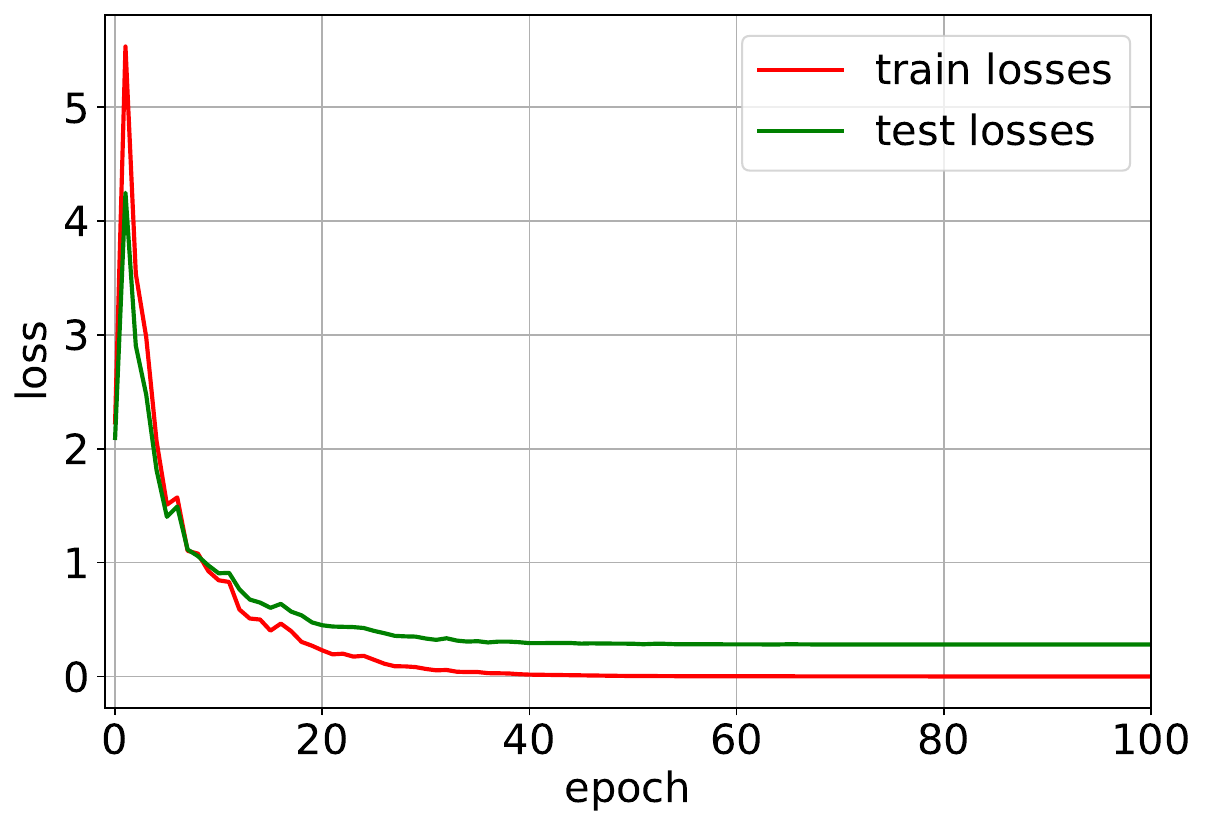}}\\ 
\small{(a) Losses.}
\end{minipage}
\begin{minipage}[t]{0.32\linewidth}
\centering 
{\includegraphics[width=0.94\textwidth]{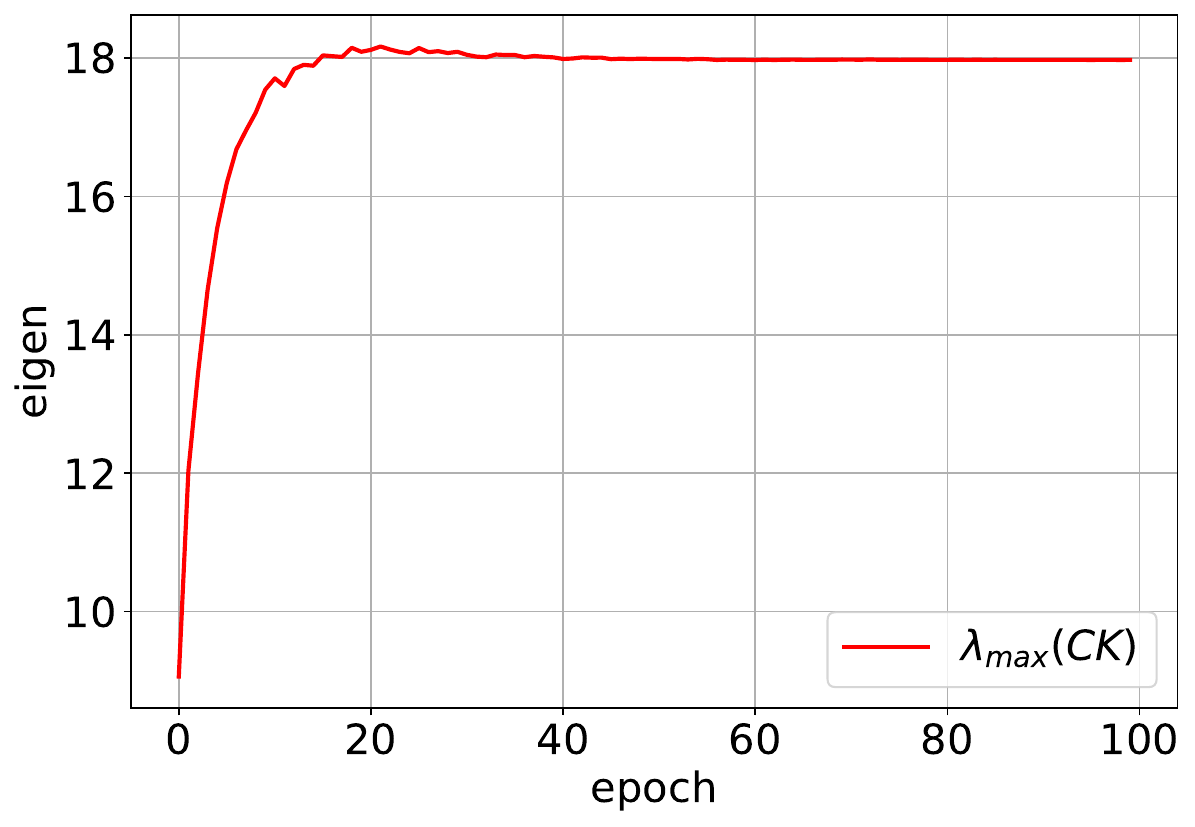}}\\
\small{(b) $\lambda_{\max}(\CK)$.}
\end{minipage} 
\begin{minipage}[t]{0.32\linewidth}
\centering
{\includegraphics[width=0.94\textwidth]{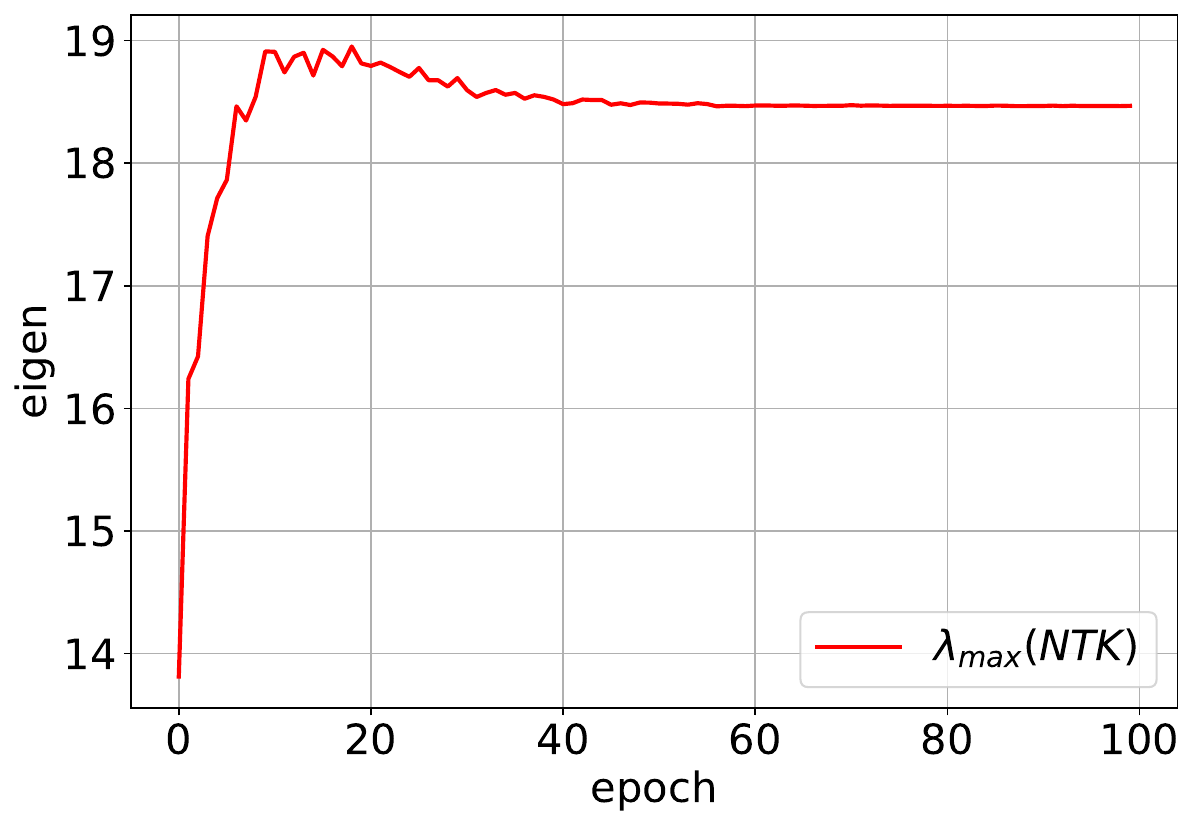}} \\
\small{(c) $\lambda_{\max}(\NTK)$}
\end{minipage}
\vspace{-2mm}
\caption{\small The training dynamic when training neural networks with SGD and learning rate 24 in the example of Figure~\ref{fig:transition}(b-c). The learning rate we chose here is above the threshold we showed in Figure~\ref{fig:transition}(b-c). We use the same architecture, dataset, and teacher model as in Section~\ref{sec:cases} of our paper. The batch size is 32. (a) The evolution of the training and test errors during training. (b) The evolution of the largest eigenvalue of the CK matrix. (c) The evolution of the largest eigenvalue of the NTK matrix. This regime corresponds to the catapult phenomenon \cite{lewkowycz2020large}.
}\label{fig:evolution_spike}
\end{figure}
 \vspace{-2mm}
\begin{figure}
\centering
\begin{minipage}[t]{0.32\linewidth}
\centering
{\includegraphics[width=0.99\textwidth]{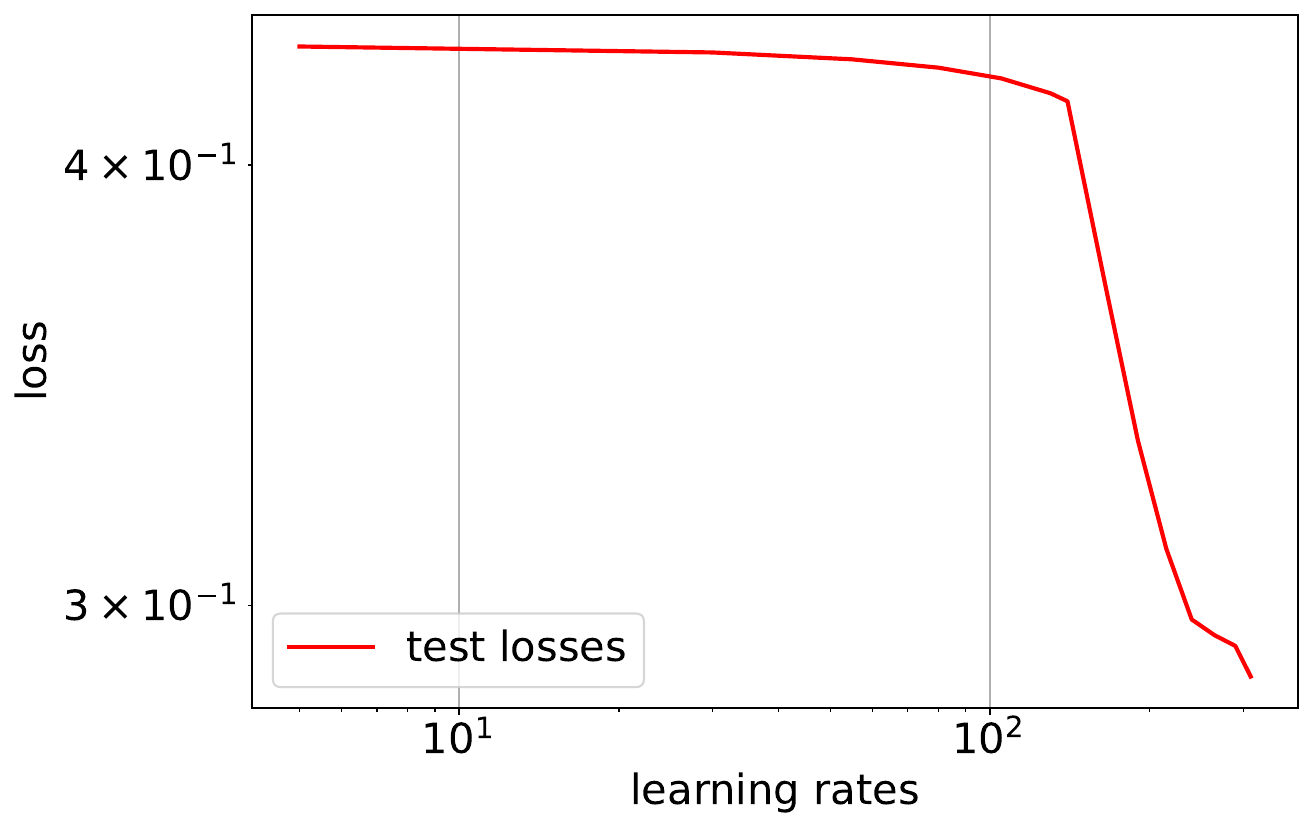}} \\
\small{(a) Losses vs learning rates.}
\end{minipage}
\begin{minipage}[t]{0.32\linewidth}
\centering
{\includegraphics[width=0.9\textwidth]{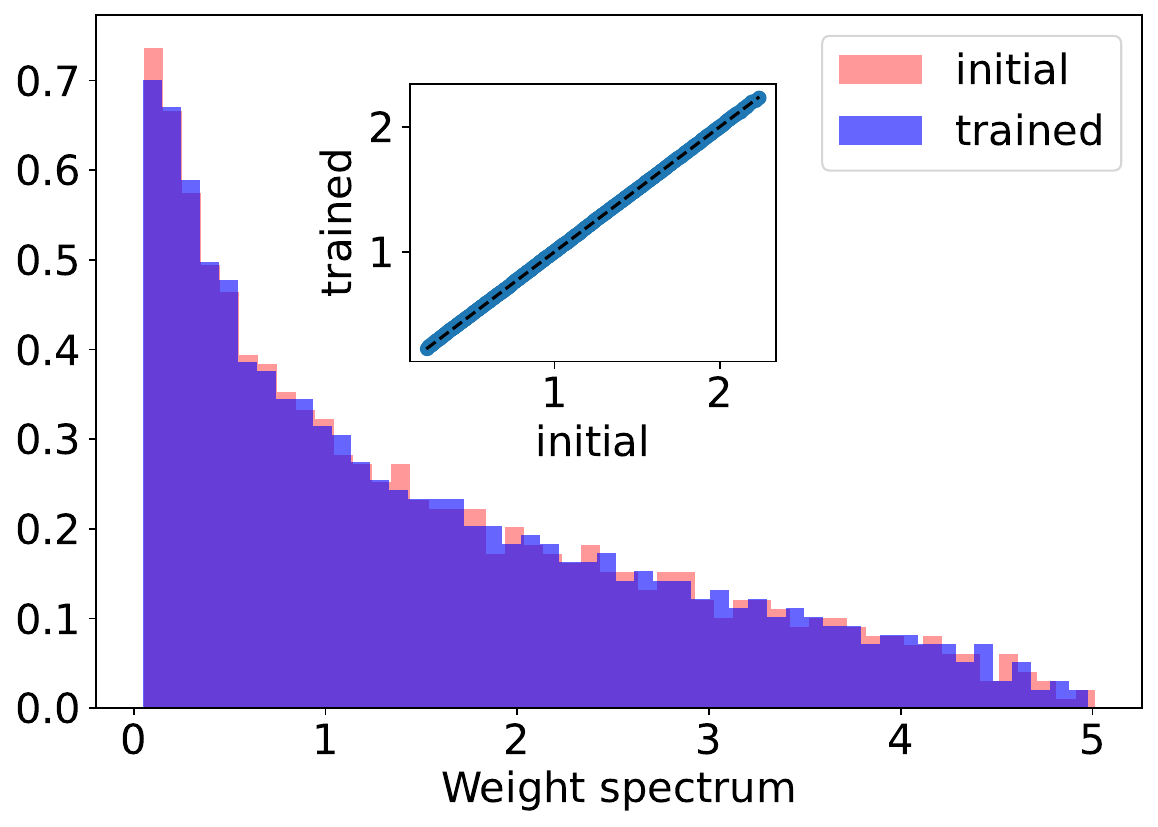}}   \\
\small{(b) Weight spectra with a small learning rate.}
\end{minipage} 
\begin{minipage}[t]{0.32\linewidth}
\centering
{\includegraphics[width=0.97\textwidth]{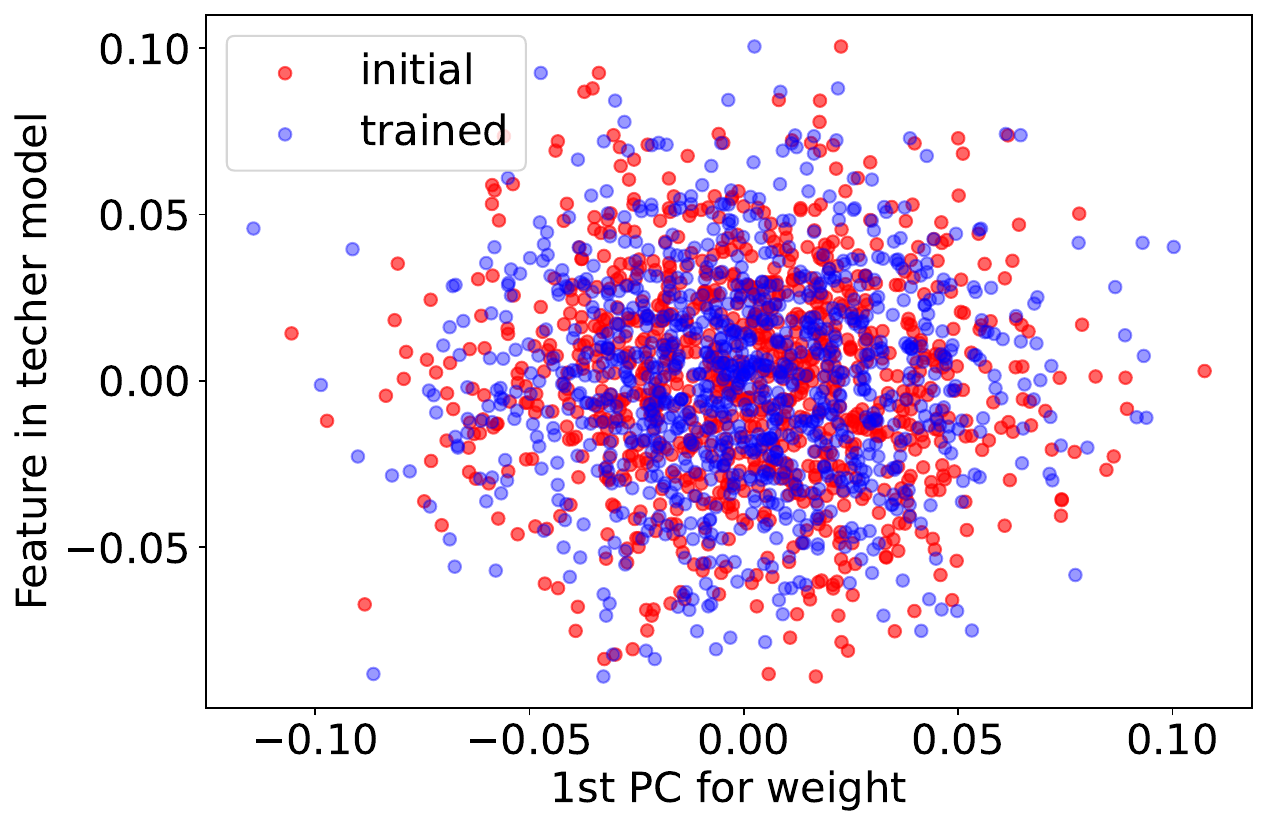}}   \\
\small{(c) Weight alignments with a small learning rate.}
\end{minipage}
\centering
\begin{minipage}[t]{0.32\linewidth}
\centering 
{\includegraphics[width=0.95\textwidth]{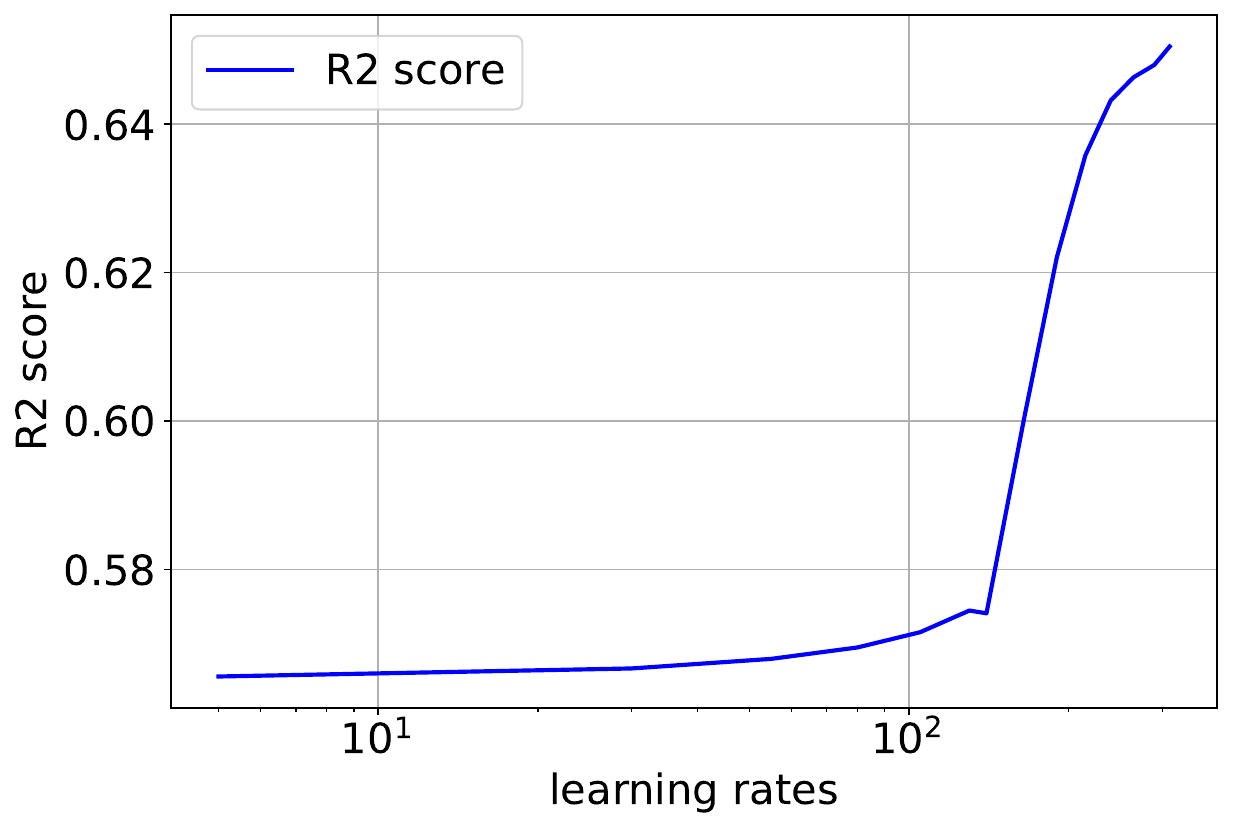}} \\
\small{(d) $R^2$ vs learning rates.}
\end{minipage} 
\begin{minipage}[t]{0.32\linewidth}
\centering
{\includegraphics[width=0.9\textwidth]{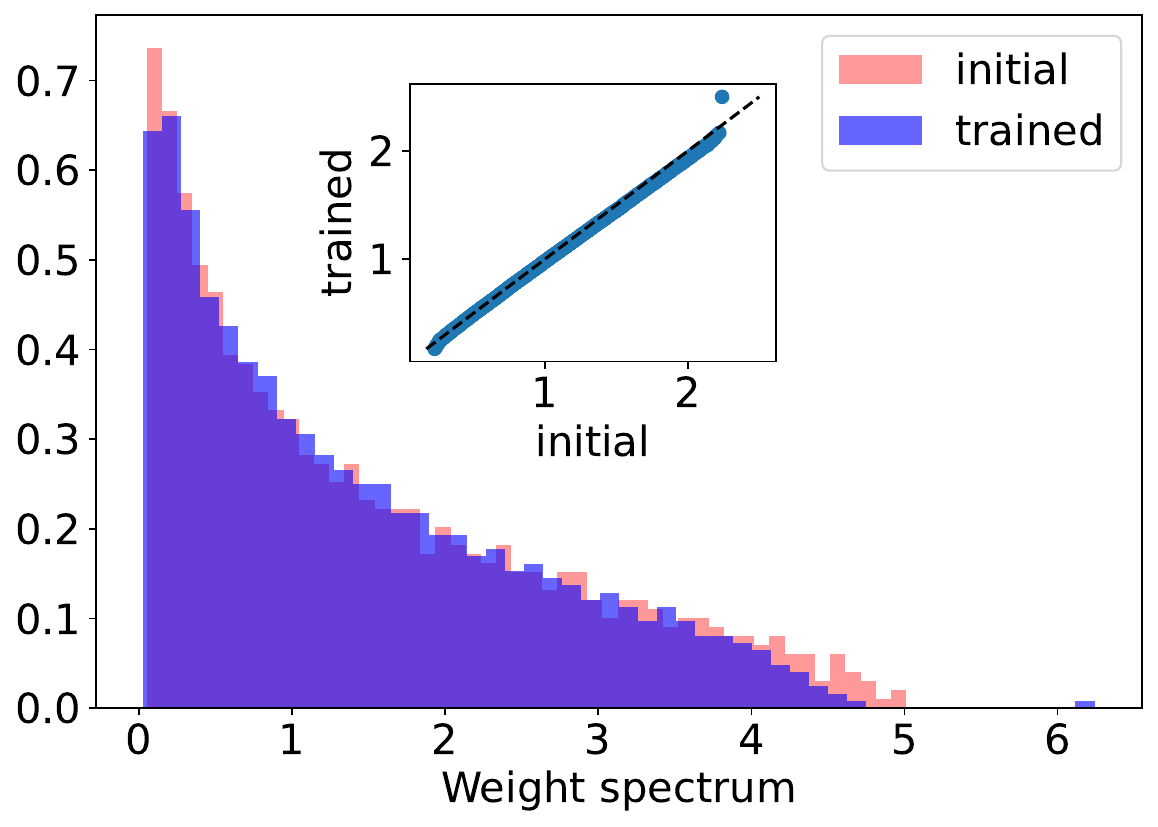}}   \\
\small{(e) Weight spectra with a large learning rate.}
\end{minipage} 
\begin{minipage}[t]{0.32\linewidth}
\centering
{\includegraphics[width=0.97\textwidth]{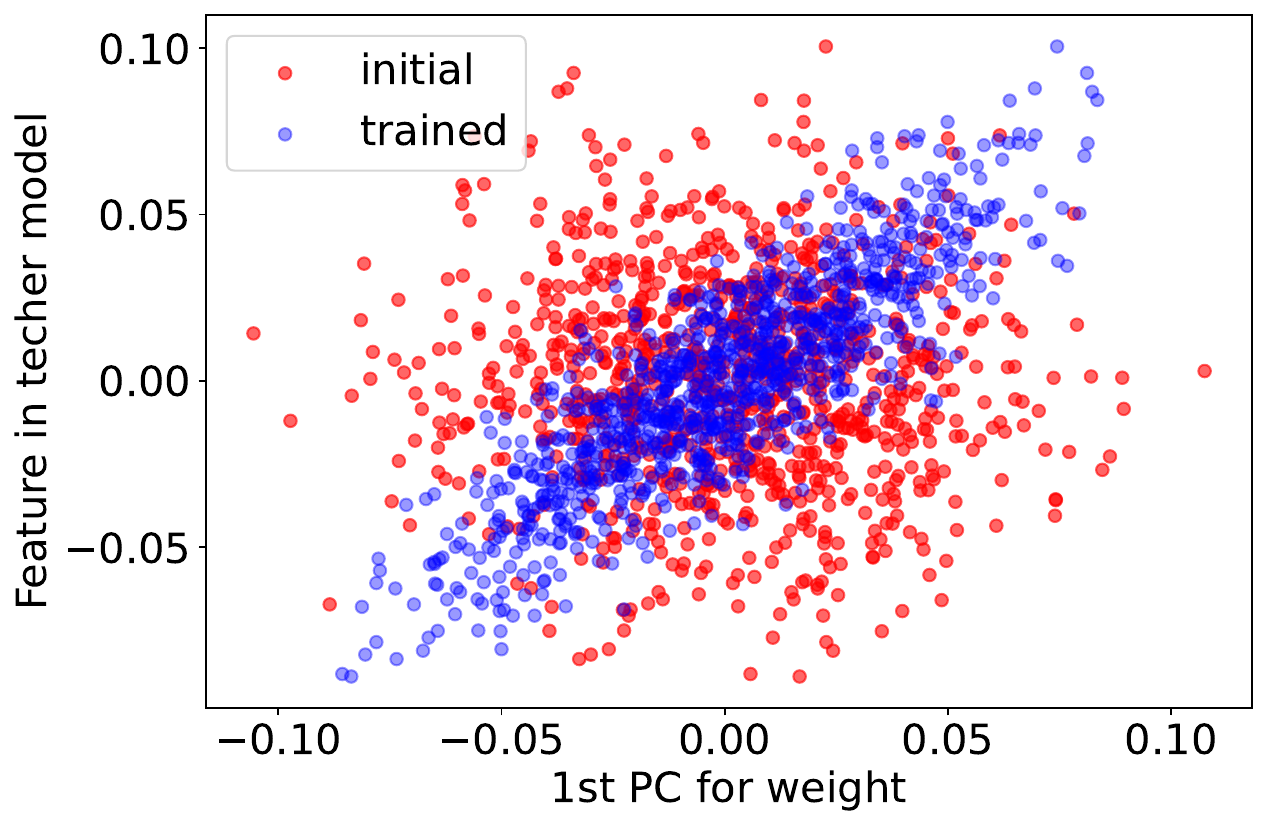}}   \\
\small{(f) Weight alignments with a large learning rate.}
\end{minipage}
\caption{\small Grid search for different learning rates when training NNs with full-batch GD in the same setting as Case 1 in Table~\ref{table:examples}. (a) Final test losses when varying learning rates. (d) $R^2$ scores when varying learning rates. For all these learning rates, we did not observe heavy-tailed spectra. (b-c) present the spectral behaviors for the smallest learning rate we used in (a)\&(d). (e-f) present the spectral behaviors for the largest learning rate we can use which still ensures the convergence of GD. In this case, analogously to the SGD case in Section~\ref{sec:spikes}, we observe an outlier in the trained weight matrix and strong alignment with the spike. 
}\label{fig:spike_GD}
\end{figure}

\subsection{Multiple-index Examples for Heavy-Tailed Spectra in Section~\ref{sec:heavy}}\label{sec:multiple}
Figures~\ref{fig:adam_results1}(b) and (c) are additional results for Figure~\ref{fig:heavy}(b) in Section~\ref{sec:heavy}. In this experiment, we consider $\sigma=\text{ReLU}$, $n = 5000$, $h = 2500 $ and $d = 1000 $ for NN \eqref{eq:NN}. Comparing with the teacher model \eqref{eq:teacher} used in Table~\ref{table:examples}, we employ the multiple-index teacher model \eqref{eq:multiple} with $k=5$ and $\sigma^*=\sigma$. We trained this student-teacher model using GD ($\eta =15$), SGD ($\eta = 7.25$ and batch size 8), and Adam ($\eta = 0.007$ and batch size 16) for training this NN, respectively. Similarly with Figure~\ref{fig:spectra}, correspondingly, we observe invariant spectrum, bulk with one spike, and heavy tails after training respectively. Heuristically, to learn this $f^*$, the weight $\bW$ of NN should gradually align with the feature space $U$ spanned by $\bbeta_i$'s. Hence, to study feature learning, we can apply principle angles to measure the alignment between $\bW$ and $U$. Consider the eigen-decomposition of $\bW_t^\top\bW_t=\sum_{i=1}^d \lambda_i\bv_i\bv_i^\top$ with $\lambda_1\ge \lambda_2\ge \ldots\ge \lambda_d$. Figure~\ref{fig:heavy}(b) shows the heavy-tailed part (the eigenspace $E:=\text{span}\{\bv_i\}_{i=1}^{100}$) is aligned with $U$ after training, which shows how features are learned in the heavy-tailed spectra. Remarkably, the test errors for training processes with SGD and Adam are even smaller than $\|\text{P}_{>1}f^*\|^2$ and $\|\text{P}_{>2}f^*\|^2$, where $\text{P}_{>1}$ denotes the orthogonal projection onto the nonlinear part of the function w.r.t. Gaussian measure. Thus, we experimentally showed that NNs with heavy-tailed spectra can obtain feature learning and generalize better than the other two cases.
Another example is exhibited in Figure~\ref{fig:heavy_explain}. In this case, $k=5$ and there are five leading outlier eigenvalues in the spectrum of the trained weight matrix, along with a heavy-tailed bulk. Interestingly, Figure~\ref{fig:heavy_explain} justifies that the eigenspace of these five leading outliers is strongly aligned with features $\bbeta_i$ for $1\le i\le 5$. This indicates that heavy-tailed spectra with large spikes may have a correlation with feature learning and good generalizations.
\begin{figure}[ht]  
\centering
\begin{minipage}[t]{0.45\linewidth}
\centering
{\includegraphics[width=\textwidth,height=\textheight,keepaspectratio]{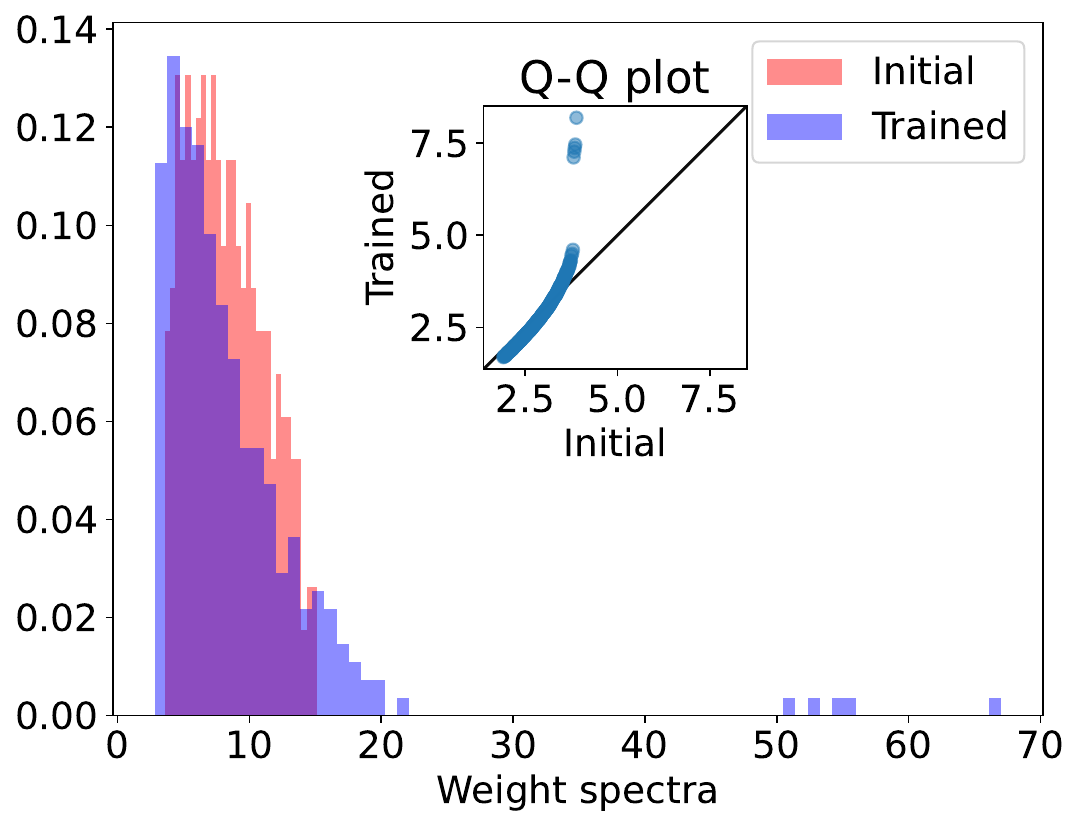}} \\
\end{minipage}
\begin{minipage}[t]{0.45\linewidth}
\centering 
{\includegraphics[width=\textwidth,height=\textheight,keepaspectratio]{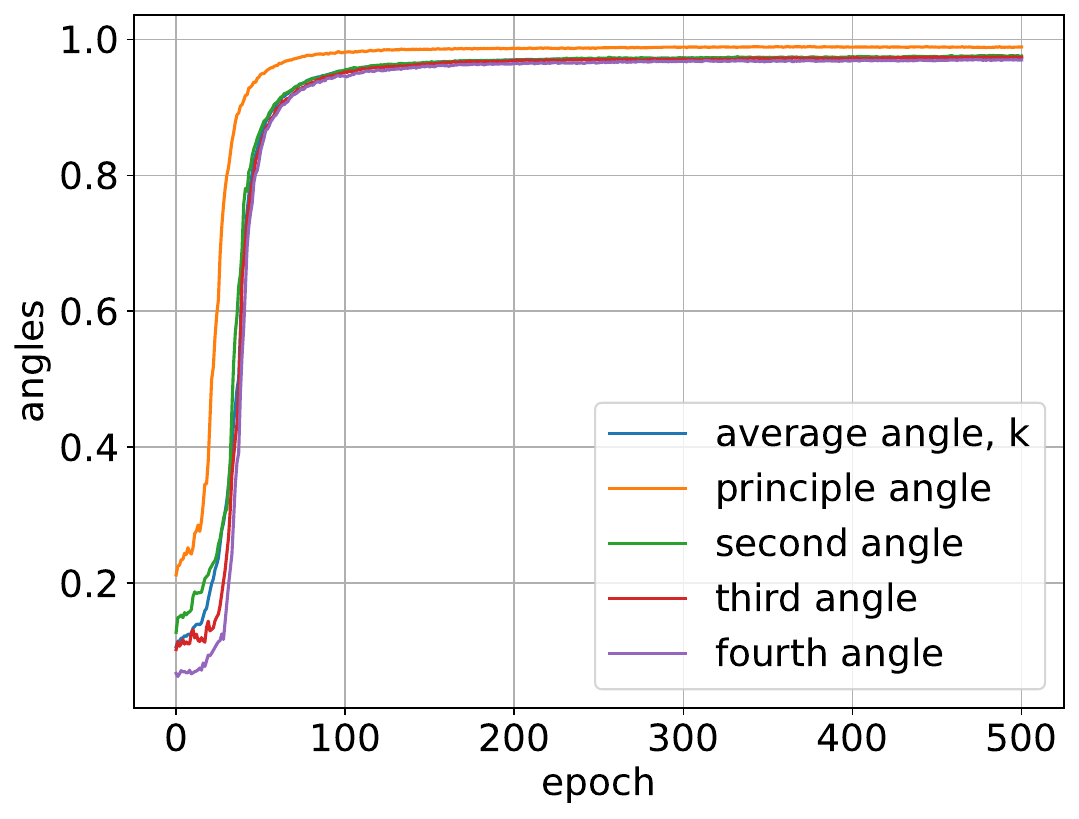}} \\ 
\end{minipage} 
 \caption{\small (Left) Initial and trained spectra for weight matrices when training with Adam. Five leading spikes emerge in this case. (Right) Evolutions of the angles between the first four PCs of $\bW_t^\top\bW_t$   and \emph{feature subspace} $U = \text{span}\{\bbeta_i\}_{i=1}^k$ of the multiple-index model \eqref{eq:multiple} during training with Adam. Here $k=5$. The final test error is 0.33865 and $R^2$ score is -0.71065 for GD. The test error is 0.01681 and $R^2$ score is 0.9154 for Adam.  }   
\label{fig:heavy_explain}  
\end{figure}

\subsection{Training Only the First Hidden Layer} 
We now present additional results when only training the first layer of NN with Adam. This result resembles Figure~\ref{fig:GD1st} in the next section. In section~\ref{sec:kernel_GD}, Corollary~\ref{coro:change} shows that under LWR with sufficiently large width $h$, the limiting spectra of $\frac{1}{h}\bW_t^\top\bW_t$, $\bK_t^{\CK}$ and $\bK_t^{\NTK}$ are essentially the same as those of the corresponding initial matrices if we train only the first layer with GD (see \eqref{eq:GD_W}). Figure~\ref{fig:Adamsmall} further investigates these phenomena when training only the first layer with Adam. In particular, the Q-Q plots show the invariant spectra of weight, CK, and NTK matrices even when the training loss is approaching zero. Here, in Figure~\ref{fig:Adamsmall}(c), we only consider the first component of NTK since the gradient is only taken with respect to $\bW_t$. Another observation is that the smallest eigenvalues of the initial and trained NTK are both bounded away from zero. This is crucial for the proof of the global convergence as shown in Appendix~\ref{sec:proofs}. 

\begin{figure}[ht]  
\centering
\begin{minipage}[t]{0.328\linewidth}
\centering
{\includegraphics[width=0.99\textwidth]{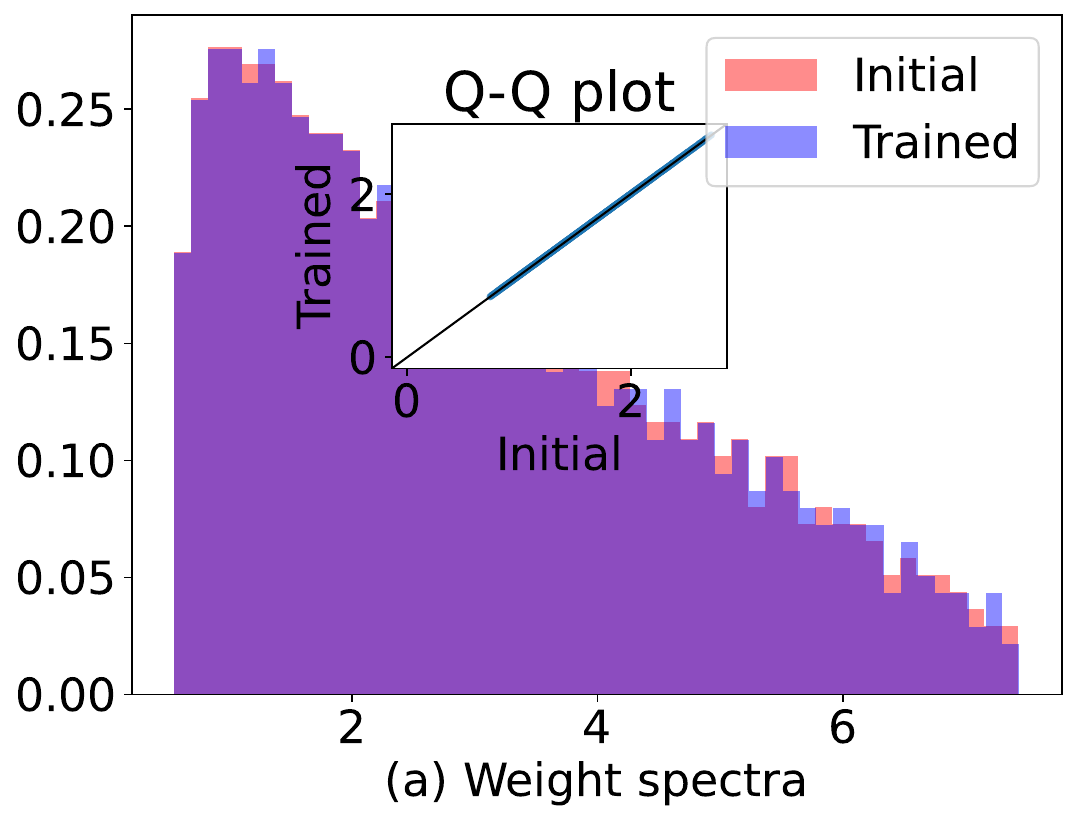}} 
\end{minipage}
\begin{minipage}[t]{0.305\linewidth}
\centering 
{\includegraphics[width=0.99\textwidth]{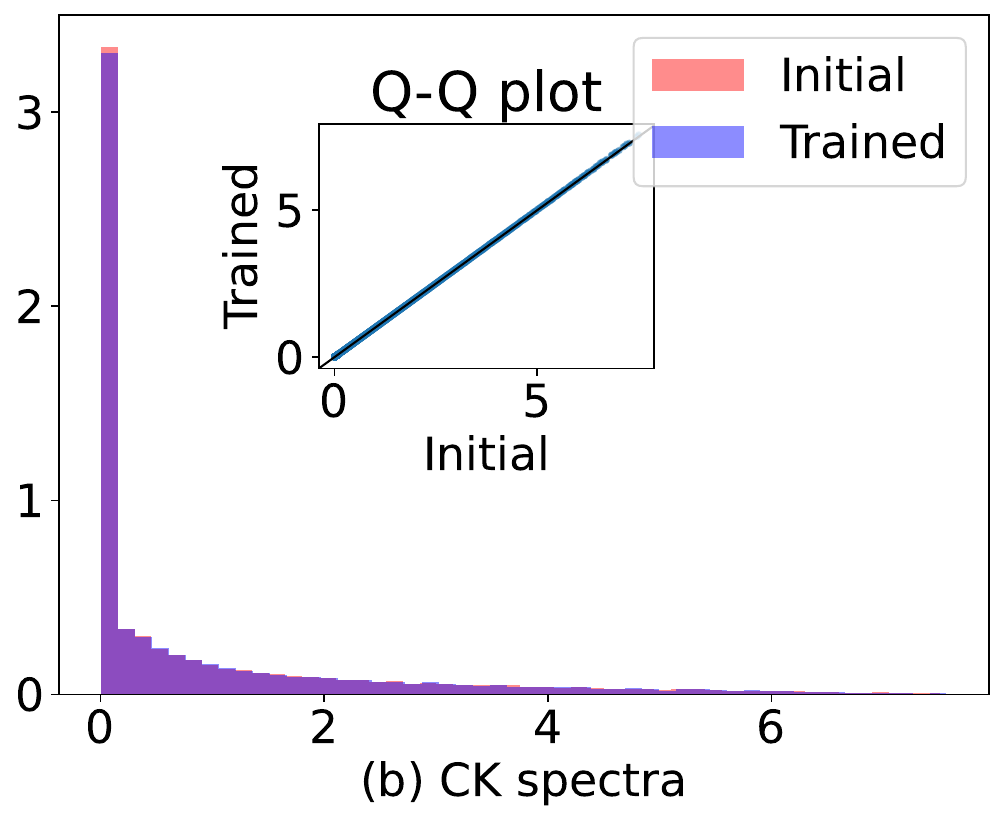}}
\end{minipage} 
\begin{minipage}[t]{0.328\linewidth}
\centering
{\includegraphics[width=0.99\textwidth]{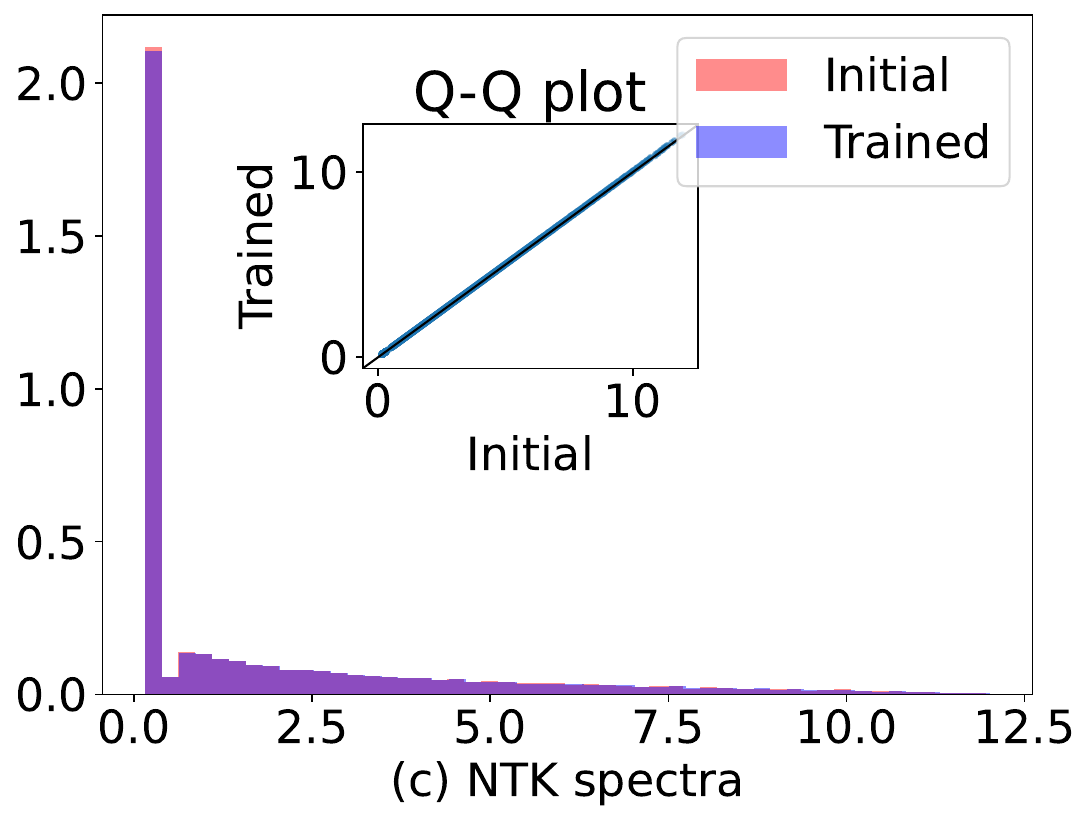}} 
\end{minipage}
 \caption{\small Additional spectral performance when training the NN by Adam with small learning rate $\eta = 0.001$, where $n = 2000,h = 3000,d = 1000$, $\sigma_\varepsilon = 0.3$ and batch size is 100. In this simulation, we only train the first hidden layer $\bW_t$. The activation function $\sigma$ is a normalized $\text{softplus}$ and the target function is a normalized $\tanh$. The final test loss is 0.36219 and $R^2$ score is around $63.70\%$. }   
\label{fig:Adamsmall}  
\end{figure}  

\subsection{ Adaptive Gradients}\label{sec:adagrad}
Inspired by Case 4 in Table~\ref{table:examples}, we show the spectral performances of adaptive gradient (AdaGrad) in Figures~\ref{fig:Adagd} and \ref{fig:Adagd_alig}. The performance of this method matches Case 4 in Table~\ref{table:examples}, where we can also easily observe heavy tails and detaching spikes after training, especially in Q-Q subplots. This suggests that adaptive optimization is more likely to yield heavy-tailed distributions in trained NNs. Besides, analogously to Figure~\ref{fig:adam_results}, there is no strong alignment in the single leading PC of the weight or kernel matrices after training in Figure~\ref{fig:Adagd_alig}. 

\begin{figure}[ht]  
\centering
\begin{minipage}[t]{0.325\linewidth}
\centering
{\includegraphics[width=0.98\textwidth]{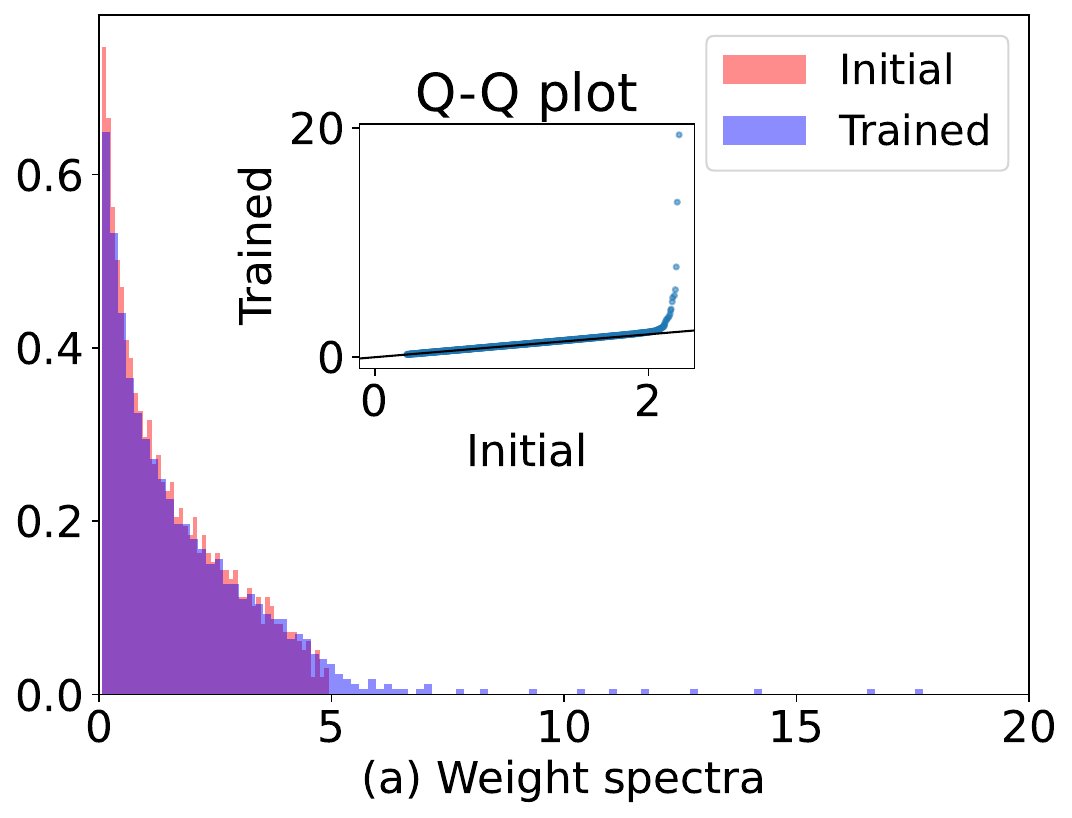}} 
\end{minipage}
\begin{minipage}[t]{0.33\linewidth}
\centering 
{\includegraphics[width=0.99\textwidth]{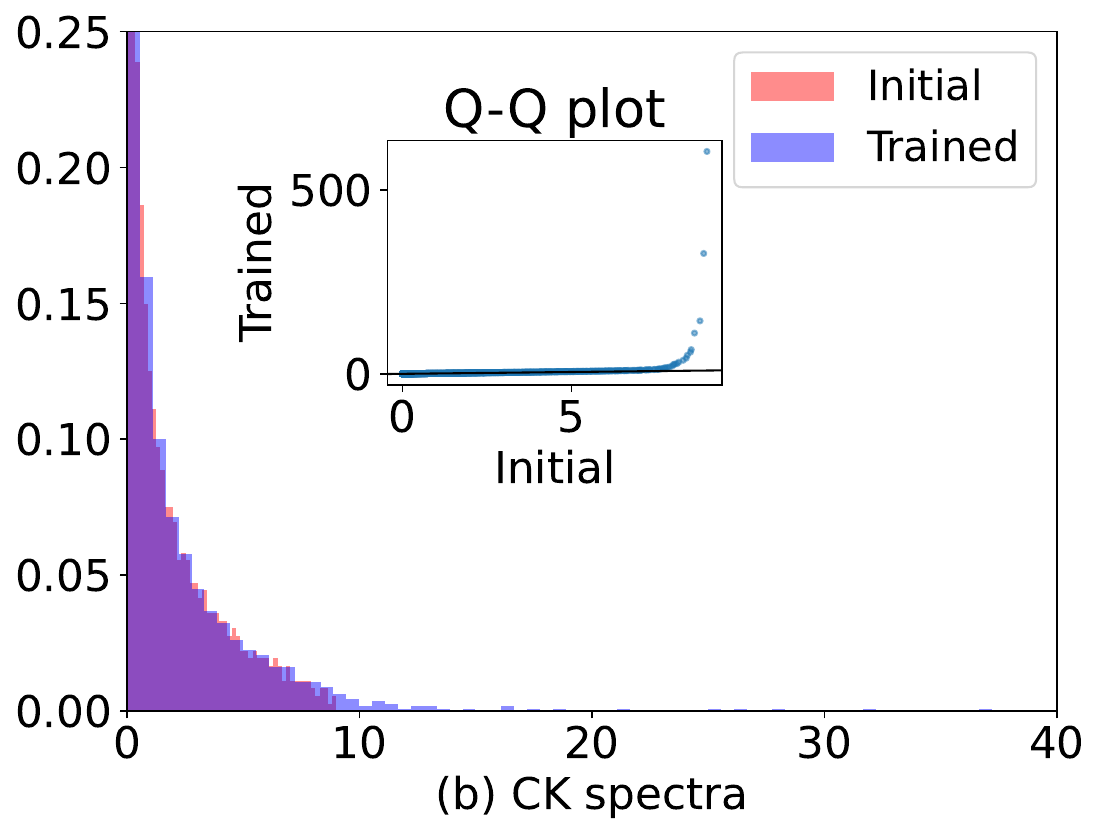}}
\end{minipage}
\begin{minipage}[t]{0.325\linewidth}
\centering
{\includegraphics[width=0.98\textwidth]{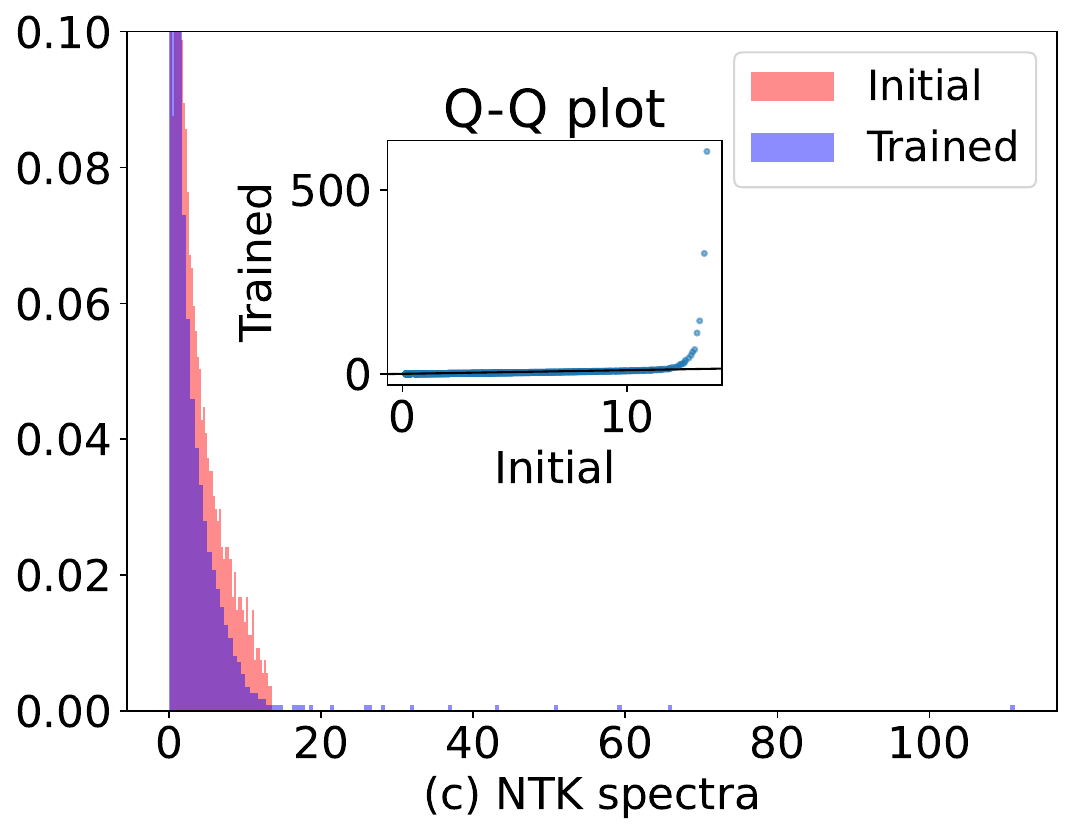}} 
\end{minipage} 
 \caption{\small Additional performance for AdaGrad with learning rate $\eta = 0.5$, where $n = 2000,h = 1500,d = 1000$, $\sigma_\varepsilon = 0.3$ and small batch size is 8. Activation $\sigma$ is normalized $\text{softplus}$ and target is normalized $\tanh$. The final test loss is 0.23555 and $R^2$ score is around $0.76249$. The black lines in the Q-Q subplots are the line of $y=x$.}   
\label{fig:Adagd}  
 
\centering
\begin{minipage}[t]{0.329\linewidth}
\centering
{\includegraphics[width=0.99\textwidth]{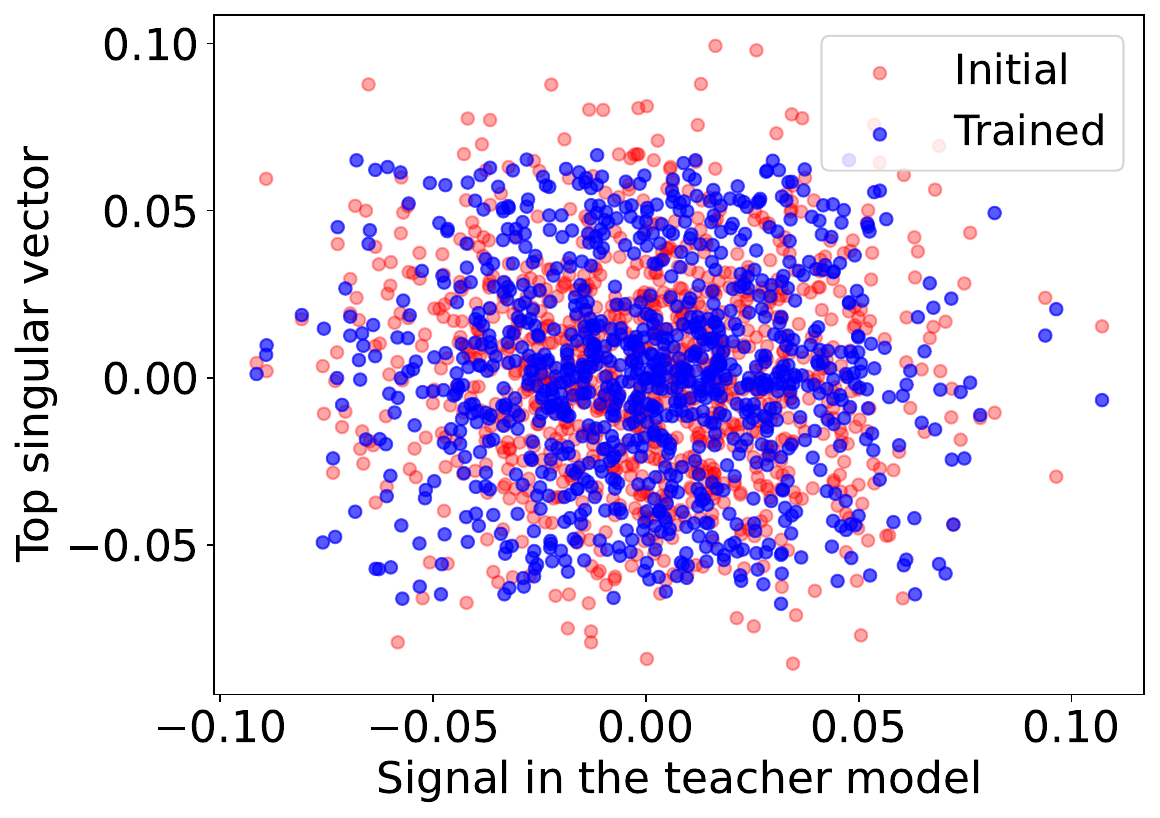}\\ 
\small (a) First PC in trained \& initial weights.} 
\end{minipage}  
\begin{minipage}[t]{0.329\linewidth}
\centering
{\includegraphics[width=0.97\textwidth]{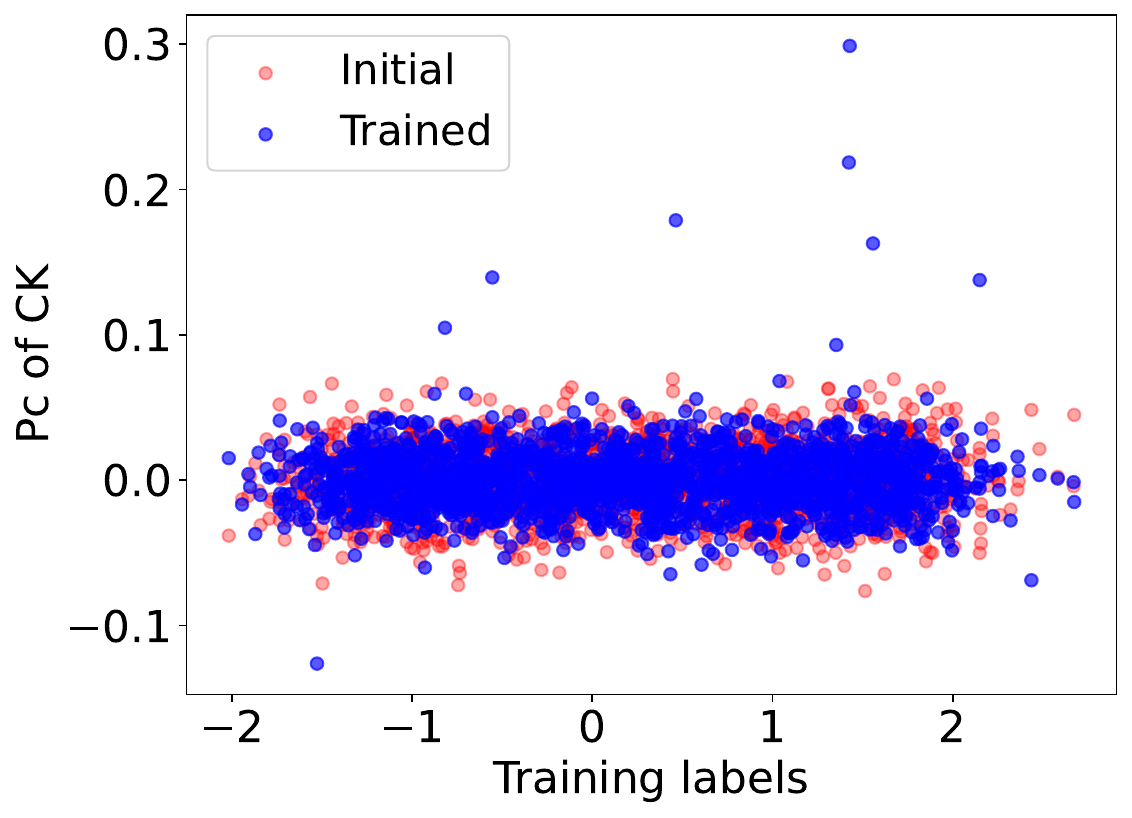}\\ 
\small (b) First PC in trained \& initial CKs.} 
\end{minipage}
\begin{minipage}[t]{0.329\linewidth}
\centering
{\includegraphics[width=0.97\textwidth]{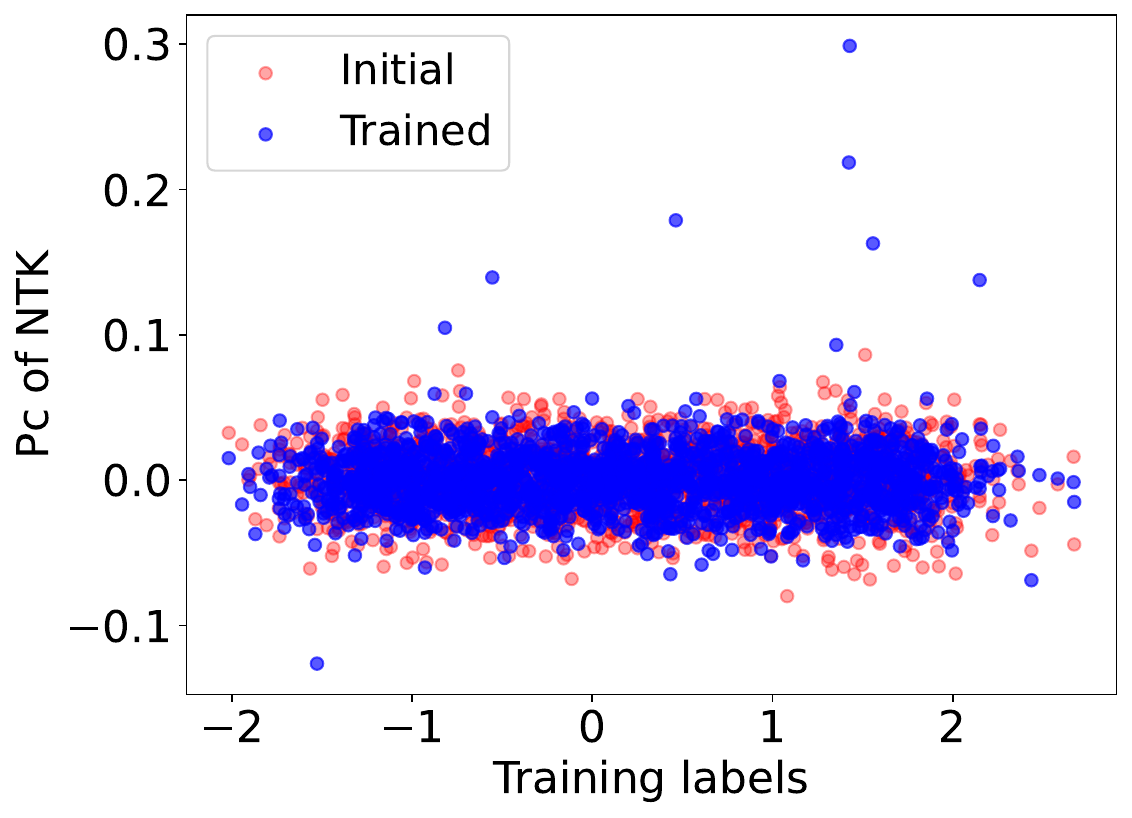}\\ 
\small (C) First PC in trained \& initial NTKs.} 
\end{minipage}
\caption{\small Alignment between the leading PC of the weight/kernel matrices and the signal $\bbeta$ or the training labels $\by$ before/after training for experiment in Figure~\ref{fig:Adagd}. Analogously to Case 4 in Appendix~\ref{sec:synthetic}, there is no strong alignment in the leading component of weight/kernel matrices.}\label{fig:Adagd_alig}
\end{figure}

\subsection{Different Global Minima and Alignments}\label{sec:diff_global_minima}
In this section, to distinguish the different alignments in Case 3$\&$4 of Table~\ref{table:examples}, we introduce the following two simulations with slightly different optimizers to get quite different spectra and alignments among leading PCs after training. In the first experiment, Figures~\ref{fig:Adam+sgd} and \ref{fig:Adam+sgd_alig}, we first take Adam with large stepsizes for a few steps and then use small-stepsize SGD for convergence. In this scenario, we can get heavy-tailed distributions after training, and the phenomena are essentially the same as Case 4 of Table~\ref{table:examples}. There is no strong alignment for the first leading eigenvector, while useful features may be learned by a few top eigenvectors in the heavy-tailed spectra after training.  In the second experiment, we directly apply Adam with a small initial learning rate for training. In contrast, the results of this case, presented in Figures~\ref{fig:Adam0} and \ref{fig:Adam0_alig}, are similar to Case 3 in Table~\ref{table:examples}. The test loss and $R^2$ score are close to previous examples, though slightly worse.  As explained in the previous section, because there is only one spike appearing outside the bulk after training in the second case, the leading PC is highly aligned with the training dataset structure after training and the feature learning mainly stems from the outliers in this situation. This interprets the strongly anisotropic structures in trained spectra of NNs in the second case \cite{ortiz2020neural,ortiz2021can,shan2021theory}. These two different spectral properties reveal significant differences between the global minima of these two training processes and different evolutions of the spectra in NNs. Remarkably, based on the different spectral behaviors in trained weight and kernel matrices, these two experiments exhibit disparate features learned by distinct training procedures. Hence, analyzing the spectral properties in trained kernel matrices is beneficial for clarifying what features our NNs have learned during the training processes.
\begin{figure}[ht]  
\centering
\begin{minipage}[t]{0.325\linewidth}
\centering
{\includegraphics[width=0.98\textwidth]{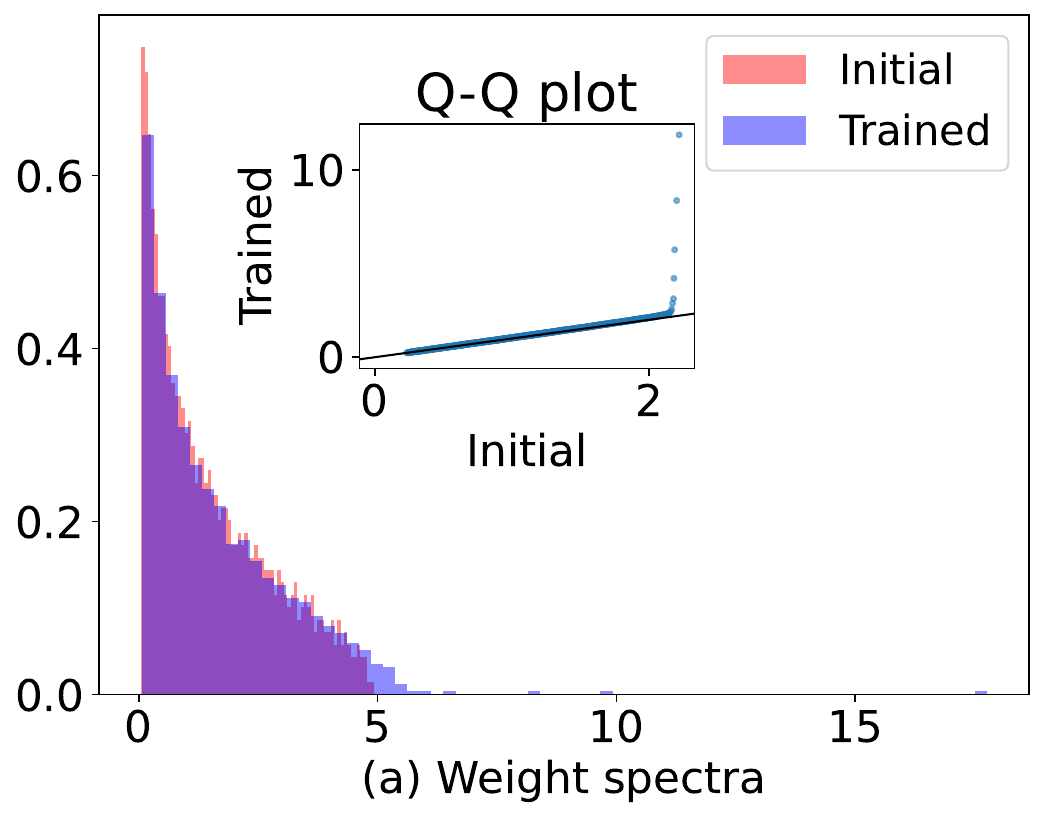}} 
\end{minipage}
\begin{minipage}[t]{0.325\linewidth}
\centering
{\includegraphics[width=0.98\textwidth]{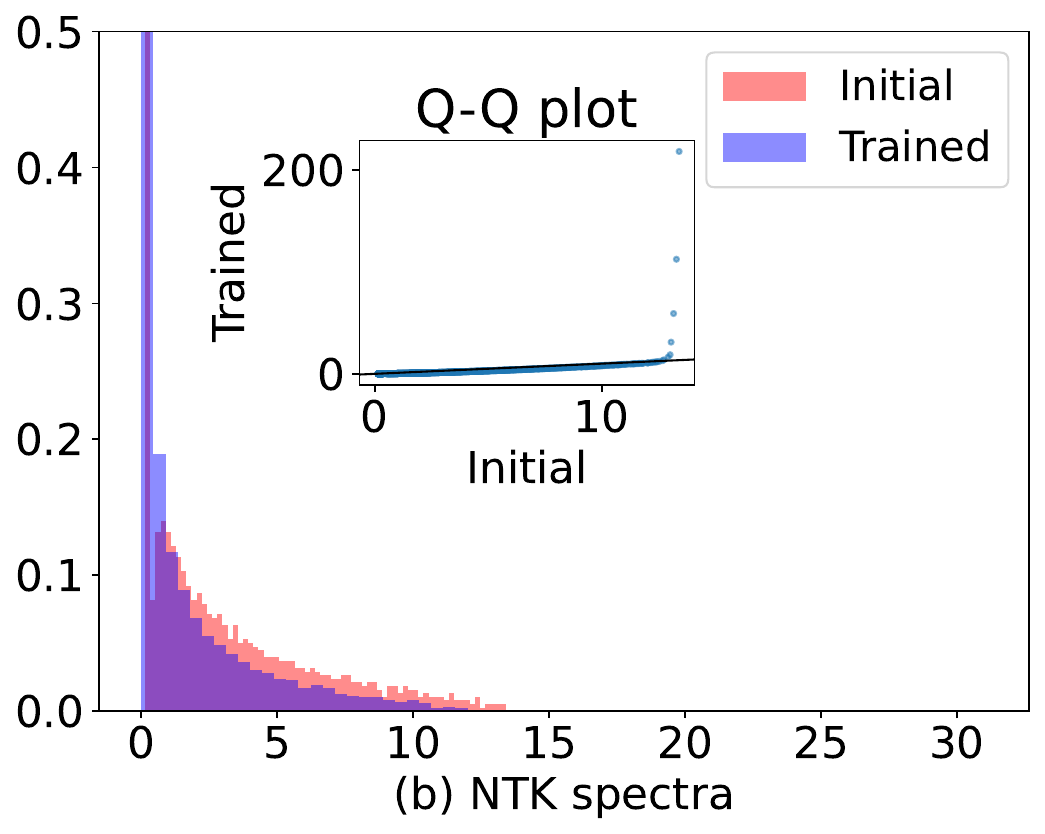}} 
\end{minipage}
\begin{minipage}[t]{0.33\linewidth}
\centering 
{\includegraphics[width=0.99\textwidth]{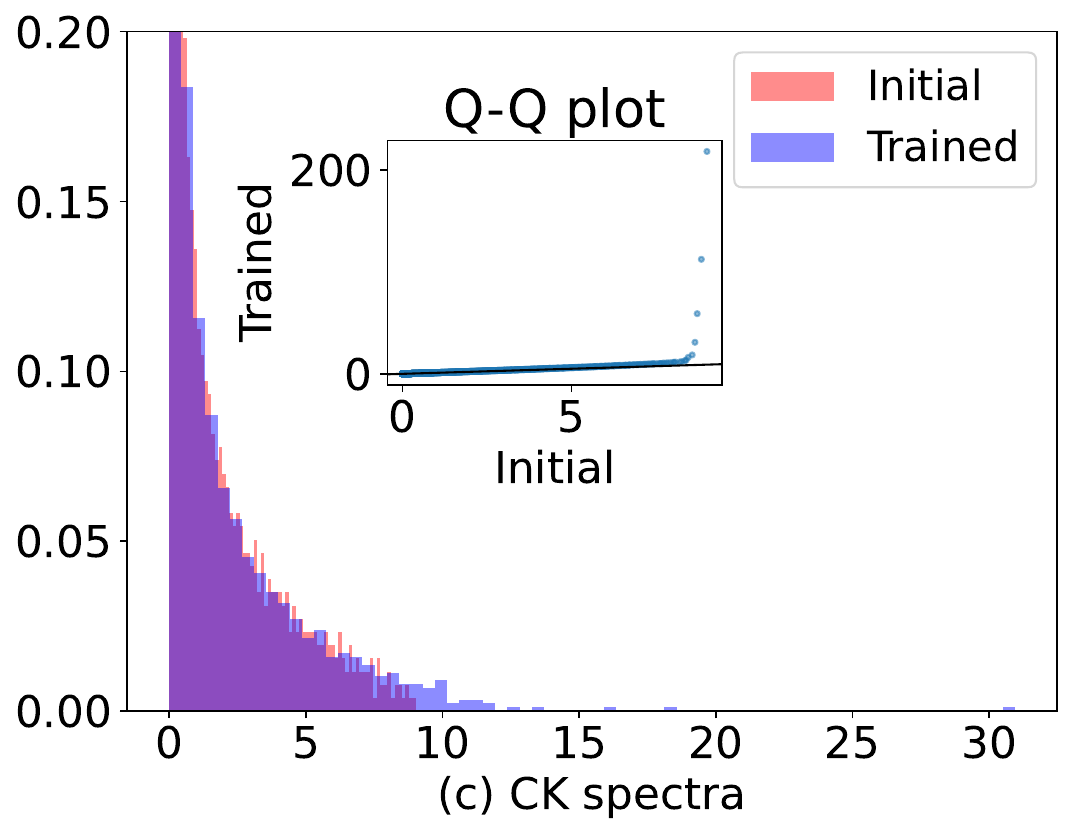}}
\end{minipage} 
 \caption{\small Additional performance for Adam with learning rate $\eta = 0.09$ and $4$ epochs, then SGD with learning rate $\eta=5\times 10^{-4}$ and $100$ epochs, where $n = 2000,h = 1500,d = 1000$, $\sigma_\varepsilon = 0.3$ and the batch size is 32. We train the NN until the training loss is less than $10^{-10}$. The activation $\sigma$ is normalized $\text{softplus}$ and the target is normalized $\tanh$. The final test loss is 0.22511 and $R^2$ score is around $0.77462$. }   
\label{fig:Adam+sgd}  
\end{figure}  

\begin{figure}[ht]  
\centering
\begin{minipage}[t]{0.329\linewidth}
\centering
{\includegraphics[width=0.97\textwidth]{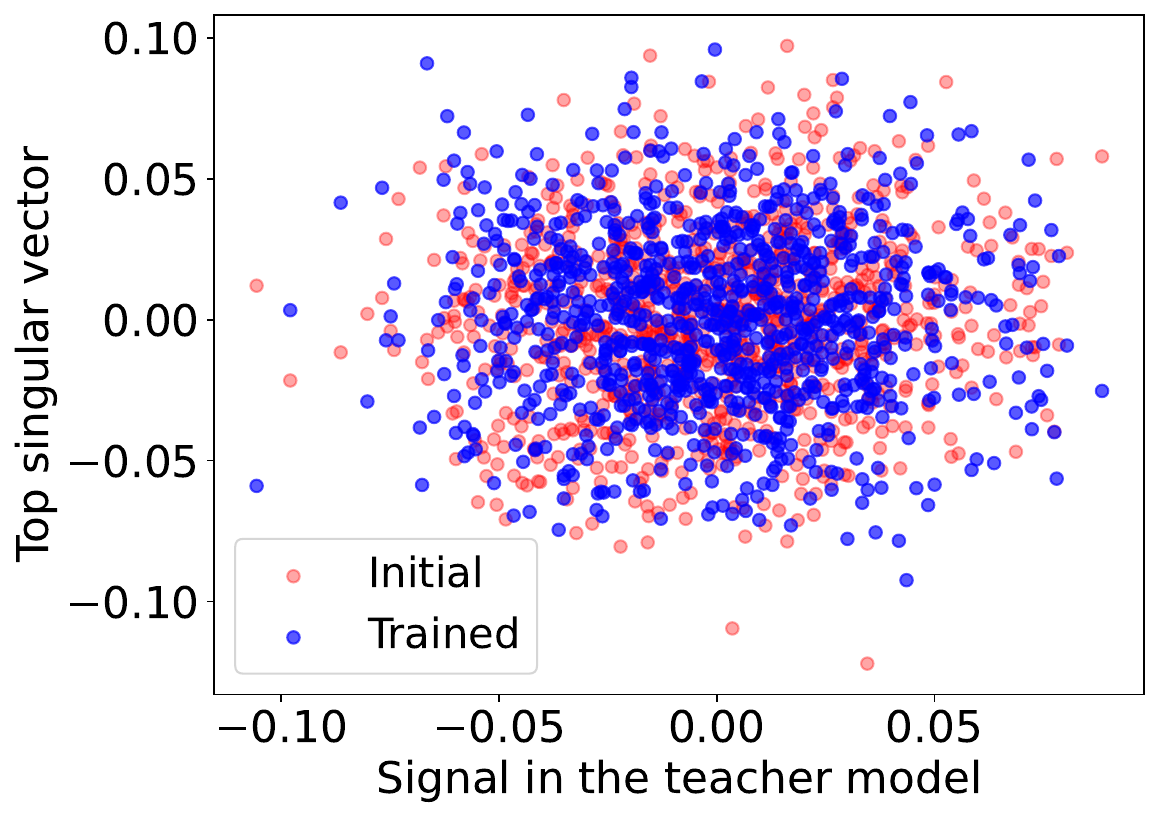}\\ 
\small (a) First PC in trained weight.} 
\end{minipage}  
\begin{minipage}[t]{0.329\linewidth}
\centering
{\includegraphics[width=0.97\textwidth]{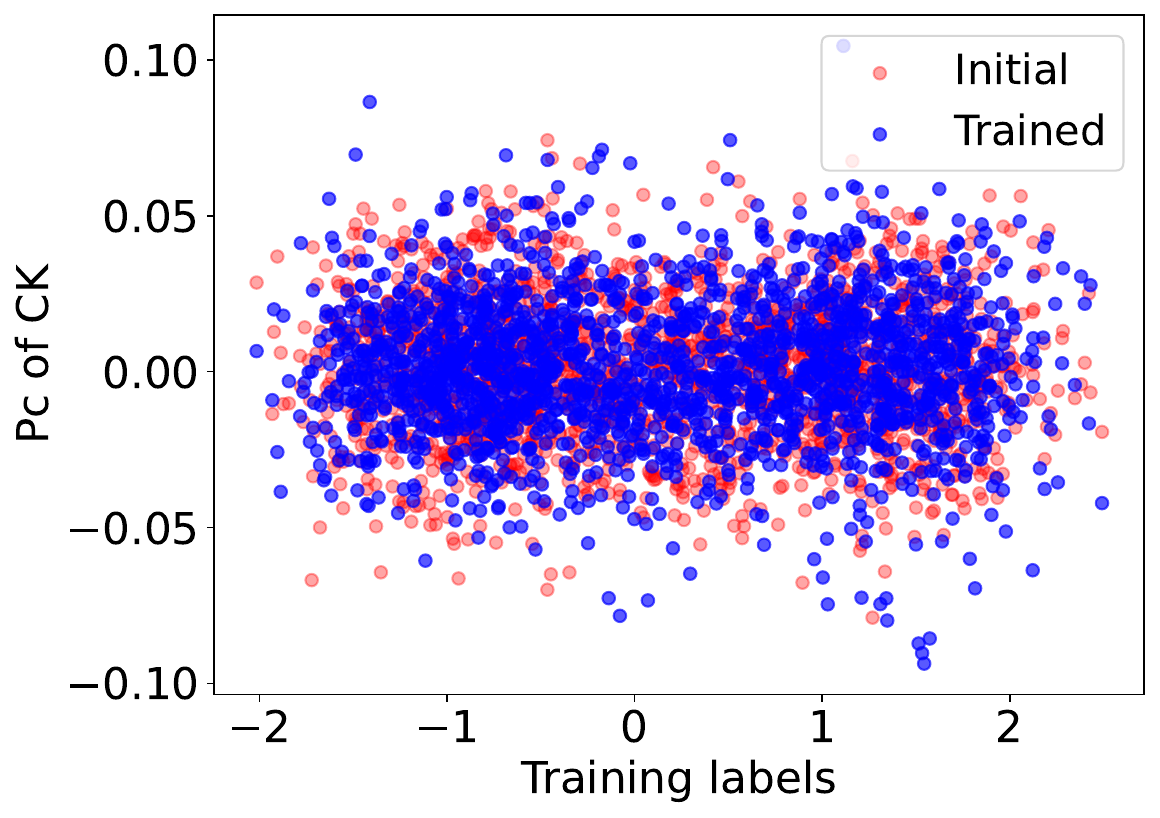}\\ 
\small (b) First PC in trained CK.} 
\end{minipage}
\begin{minipage}[t]{0.329\linewidth}
\centering
{\includegraphics[width=0.97\textwidth]{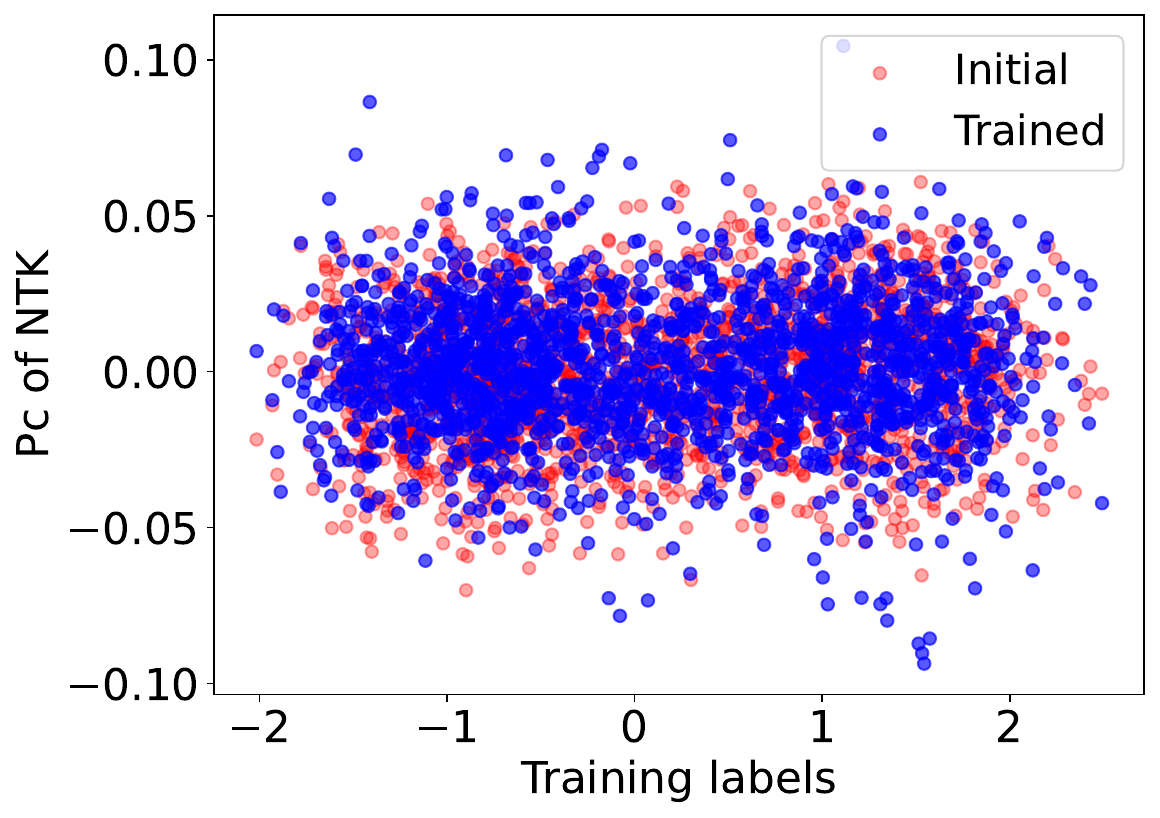}\\ 
\small (c) First PC in trained NTK.} 
\end{minipage}
\caption{\small Alignment between the leading PC of the weight/kernel matrices and the signal $\bbeta$ or the training labels $\by$ before/after training for the experiment in Figure~\ref{fig:Adam+sgd}. This is analogous to Case 4 in Appendix~\ref{sec:synthetic}.
}\label{fig:Adam+sgd_alig} 
\end{figure} 

\begin{figure}[ht]  
\centering
\begin{minipage}[t]{0.325\linewidth}
\centering
{\includegraphics[width=0.98\textwidth]{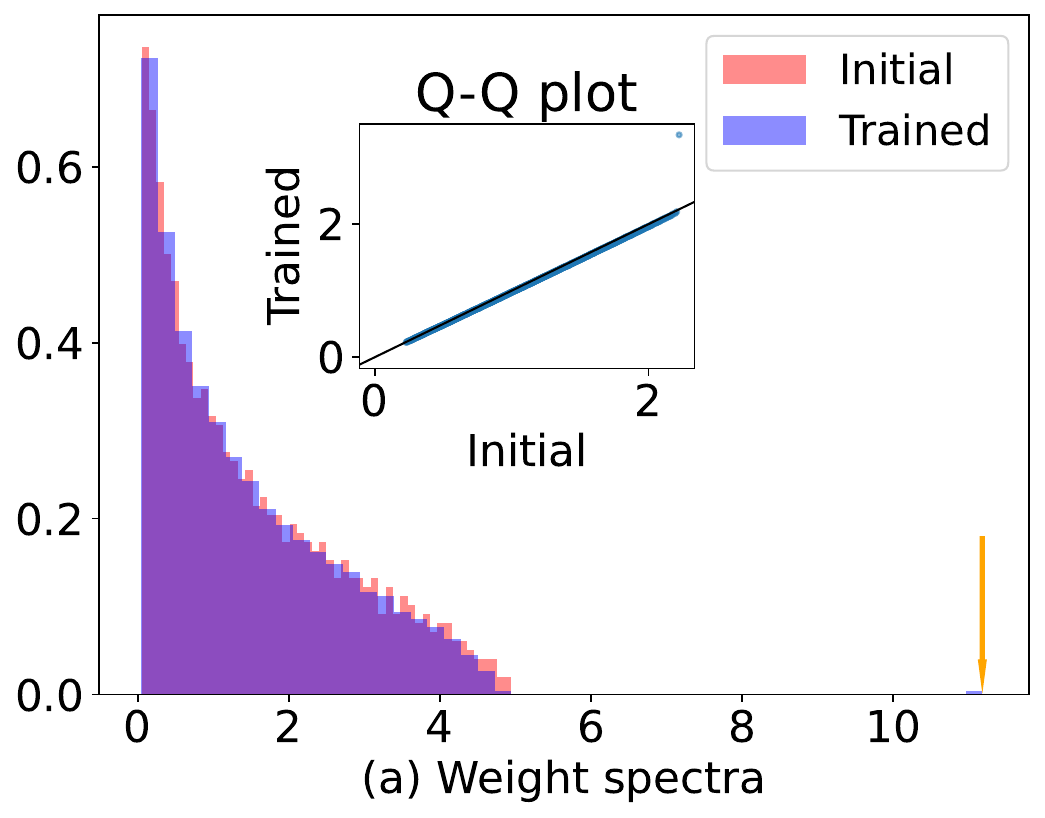}} 
\end{minipage}
\begin{minipage}[t]{0.325\linewidth}
\centering
{\includegraphics[width=0.98\textwidth]{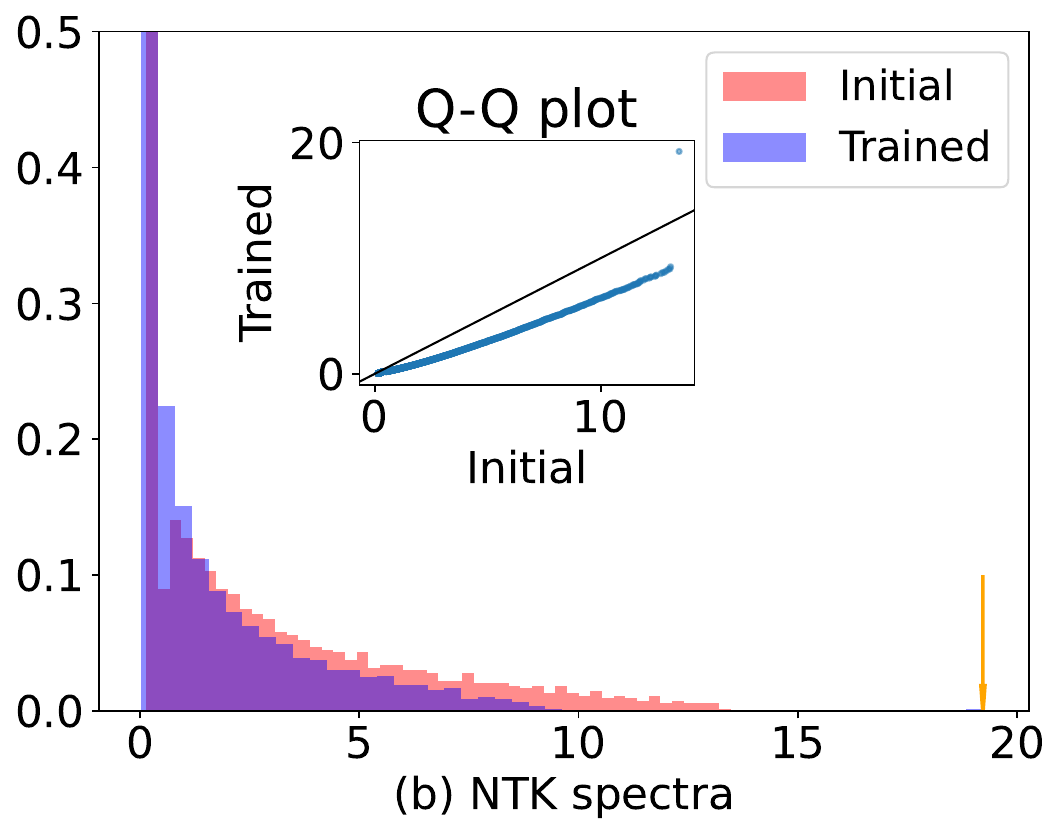}} 
\end{minipage}
\begin{minipage}[t]{0.33\linewidth}
\centering 
{\includegraphics[width=0.99\textwidth]{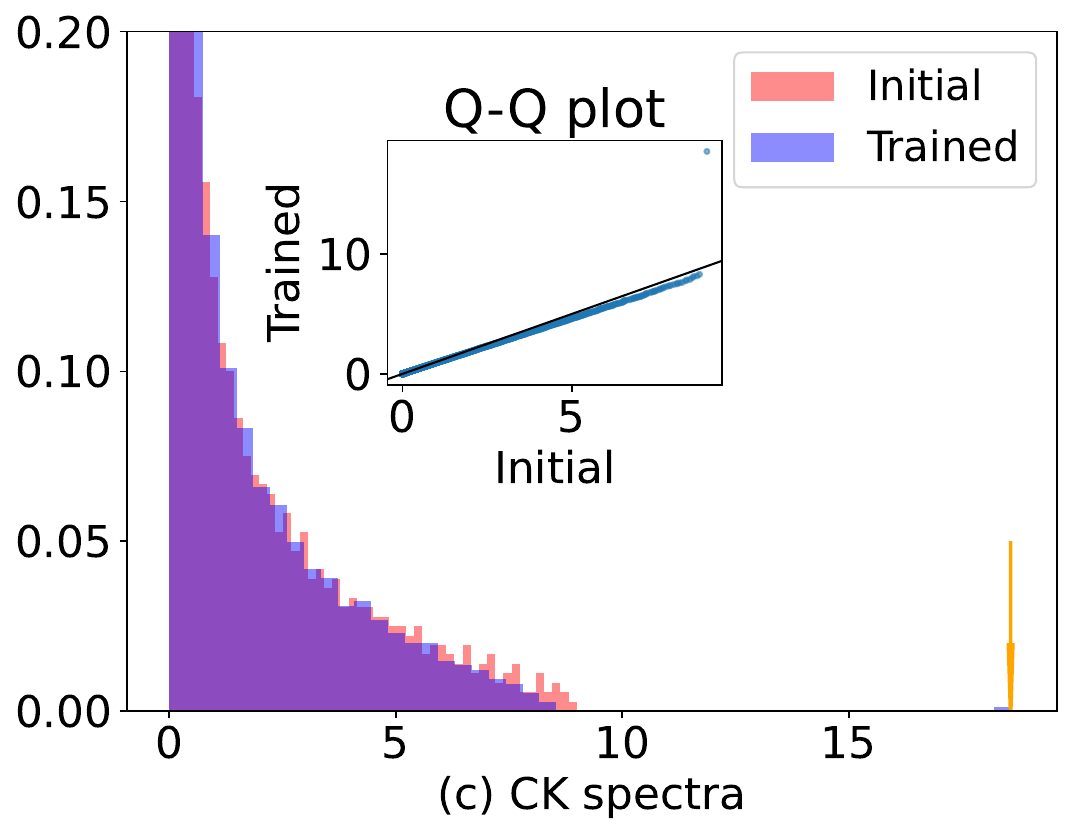}}
\end{minipage} 
 \caption{\small Additional performance for Adam with learning rate $\eta = 0.002$ and $700$ epochs, where $n = 2000,h = 1500,d = 1000$, $\sigma_\varepsilon = 0.3$ and batch size is 64. The activation $\sigma$ is normalized $\text{softplus}$ and the target is normalized $\tanh$. The final test loss is 0.23954 and $R^2$ score is around $0.76027$. The orange arrows show the positions of the outliers. Spectra behaviors in this case differ from Figure~\ref{fig:Adam+sgd}.}   
\label{fig:Adam0}  
\end{figure}  
\begin{figure}[ht]  
\centering
\begin{minipage}[t]{0.329\linewidth}
\centering
{\includegraphics[width=0.97\textwidth]{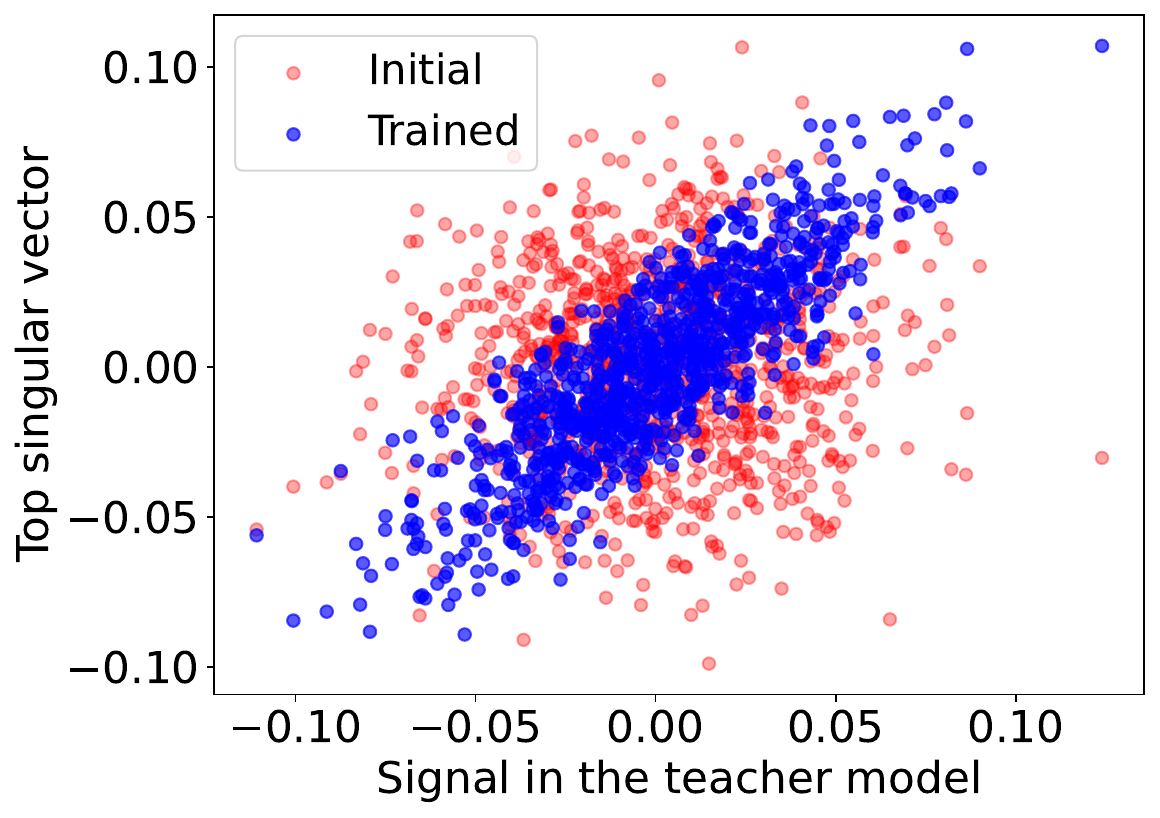}\\ 
\small (a) First PC in trained \& initial weights.} 
\end{minipage}  
\begin{minipage}[t]{0.329\linewidth}
\centering
{\includegraphics[width=0.97\textwidth]{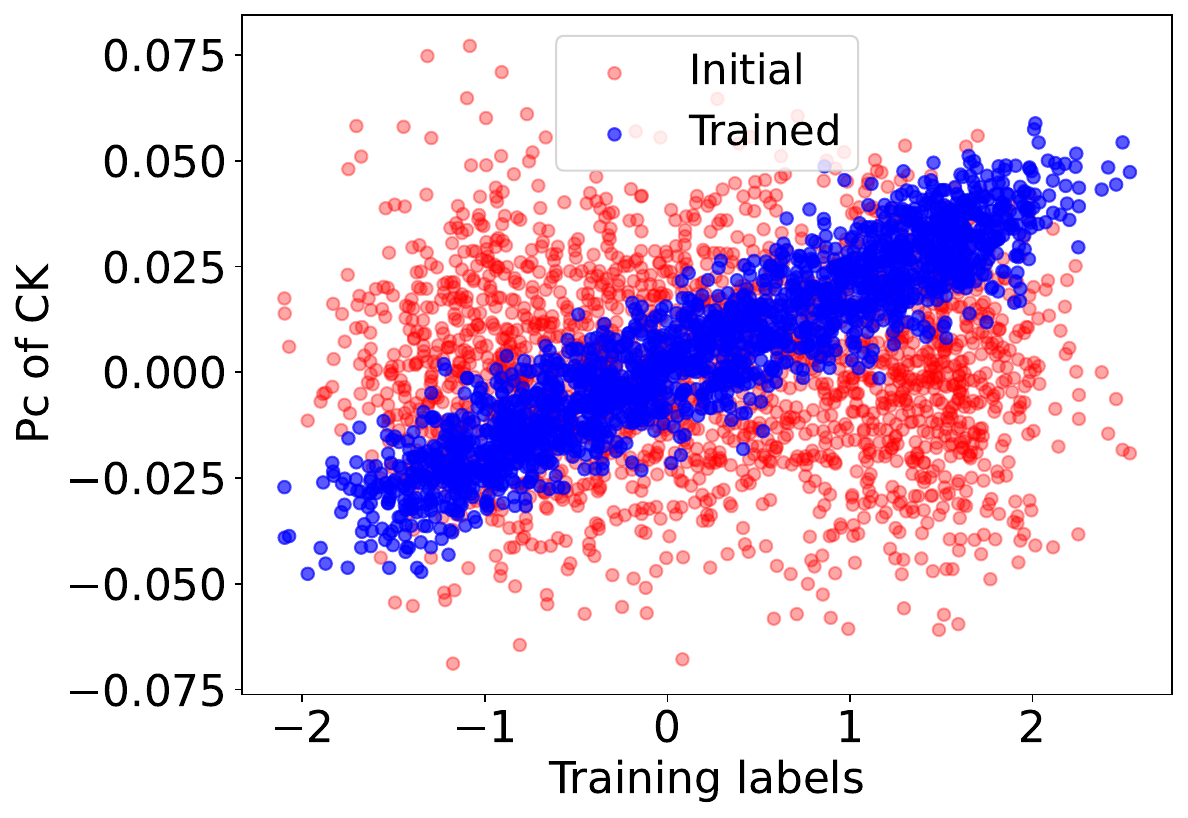}\\ 
\small (b) First PC in trained \& initial CKs.} 
\end{minipage}
\begin{minipage}[t]{0.329\linewidth}
\centering
{\includegraphics[width=0.97\textwidth]{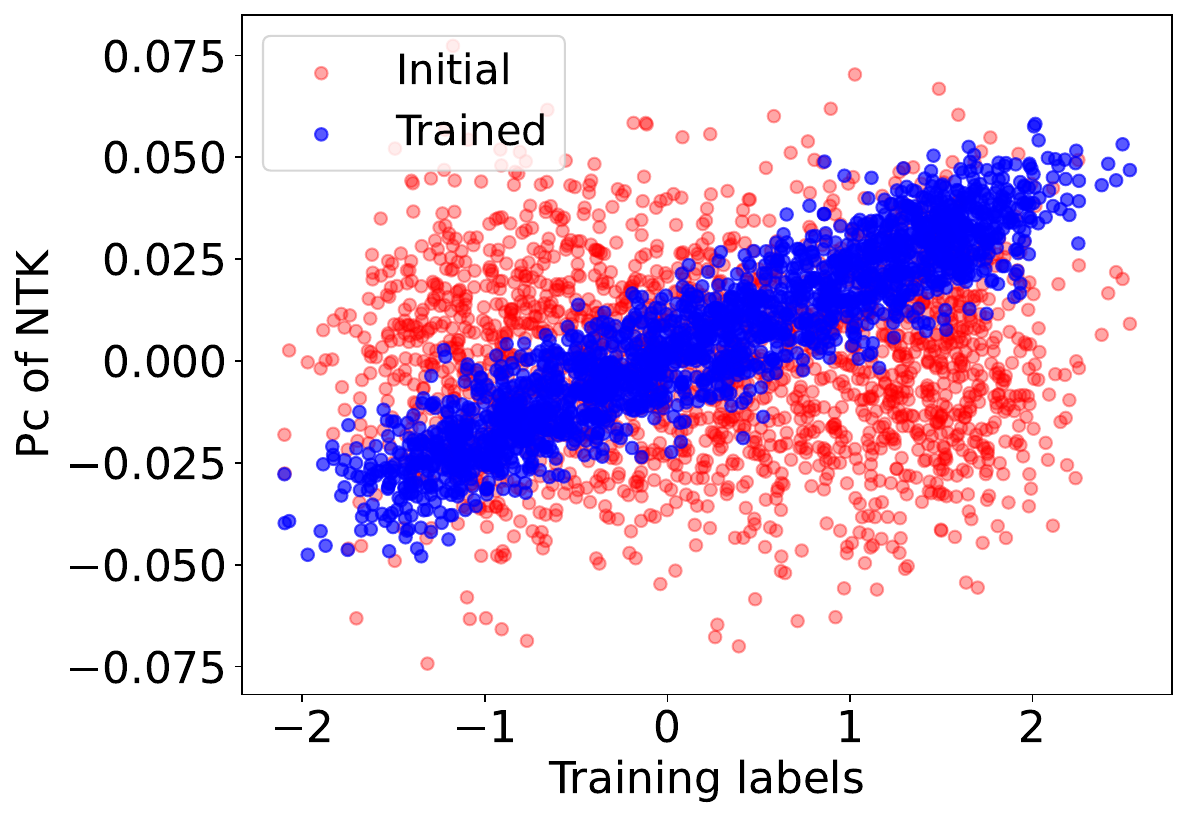}\\ 
\small (c) First PC in trained \& initial NTKs.} 
\end{minipage}
\caption{\small
Alignment between the leading PC of the weight/kernel matrices and the signal $\bbeta$ or the training labels $\by$ before/after training for the experiment in Figure~\ref{fig:Adam0}. We can observe strong alignments, in this case, comparing with Figure~\ref{fig:Adam+sgd_alig} because of outliers in above Figure~\ref{fig:Adam0}. These kernel alignments induce anisotropic structures in the kernel matrices during training \cite{shan2021theory}.
}\label{fig:Adam0_alig}
\end{figure}

\section{Proofs of Results in Section~\ref{sec:kernel_GD}}\label{sec:proofs}
\subsection{GD Analysis at Early Phase}
From \eqref{eq:GD_W}, the GD process with learning rate $\eta>0$ can be written by 
\begin{align}
\bW_{t+1} =~ &\bW_t + \eta \cdot\bG_t, \text{ where }\label{eq:gradient-step-MSE}\\
\bG_t
= ~&
    \frac{1}{n\sqrt{dh}} \left[\left(\bv\left(\by- \frac{1}{\sqrt{h}}\bv^\top\sigma(\bW_t\bX/\sqrt{d})\right)\right) \odot \sigma'(\bW_t\bX/\sqrt{d})\right]\bX^\top, \label{eq:G_t}
\end{align}  
for $t\in\N$, where $\by\in\R^{1\times n}$. Following \cite[Appendix B]{ba2022high}, in this section we prove the control for gradient step $\bG_t$. For simplicity, denote $f_t(\bX):=f_{\btheta_t}(\bX)=\frac{1}{\sqrt{h}}\bv^\top\sigma(\bW_t\bX/\sqrt{d})$ for $t\in\N$.
\begin{lemma}\label{lemm:y+f_0}
Under the same assumptions as in Lemma~\ref{lemm:CK}, we have 
\begin{align*}
\Parg{\norm{\sigma(\bW_0\bX/\sqrt{d})}\ge C\sqrt{n}}\le~ & 2e^{-cn},\\
    \Parg{\norm{\by}\ge C\sqrt{n}}\le~ & 2e^{-cn},
\end{align*}
for some constants $C,c>0$ only depending on $\sigma_\varepsilon$, $\lambda_\sigma$, $\gamma_1$, and $\gamma_2$.
\end{lemma}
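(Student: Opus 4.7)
The plan is to bound the two quantities separately using a combination of Gaussian concentration and the nonlinear random matrix theory results cited in the main text.

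\textbf{Bound on $\|\by\|$.} I would write $\by=f^*(\bX)+\bepsilon$ and apply the triangle inequality $\|\by\|\le\|f^*(\bX)\|+\|\bepsilon\|$. For the noise term, $\|\bepsilon\|^2=\sum_{i=1}^n\varepsilon_i^2$ is a sum of independent sub-exponential variables with mean $\sigma_\varepsilon^2$, and Bernstein's inequality gives $\|\bepsilon\|^2\le 2n\sigma_\varepsilon^2$ with probability at least $1-2e^{-cn}$. For the signal term, since $f^*$ is $\lambda_\sigma$-Lipschitz, Gaussian Lipschitz concentration (Borell-TIS) implies that each $f^*(\bx_i)$ is sub-Gaussian with proxy $\lambda_\sigma$; combined with the bound on the mean square $\E[f^*(\bx)^2]=O(1)$ (which holds for the teacher models in \eqref{eq:teacher}), a second application of Bernstein to the sub-exponential sum $\sum_i f^*(\bx_i)^2$ yields $\|f^*(\bX)\|^2\le Cn$ with probability at least $1-2e^{-cn}$. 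A union bound finishes this half.

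\textbf{Bound on $\|\sigma(\bW_0\bX/\sqrt{d})\|$.} I would proceed in two steps. First, by standard Gaussian matrix concentration, $\|\bX\|\le(1+\sqrt{\gamma_1})\sqrt{d}+s$ with probability at least $1-2e^{-cs^2}$, so $\|\bX\|\le C_X\sqrt{d}$ holds with probability at least $1-2e^{-cn}$ for a suitable constant $C_X=C_X(\gamma_1)$. Second, conditioning on such an $\bX$, view $F(\bW):=\|\sigma(\bW\bX/\sqrt{d})\|$ as a function of the Gaussian matrix $\bW\in\R^{h\times d}$. Using $\|A\|\le\|A\|_F$ together with the $\lambda_\sigma$-Lipschitzness of $\sigma$, the map $F$ is $(\lambda_\sigma\|\bX\|/\sqrt{d})$-Lipschitz in the Frobenius metric, hence $(\lambda_\sigma C_X)$-Lipschitz on the conditioning event. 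Gaussian concentration then gives $F(\bW)\le\E_{\bW}[F(\bW)]+t$ with probability at least $1-2\exp(-ct^2/(\lambda_\sigma C_X)^2)$.

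To control $\E_\bW[F(\bW)]$, I would invoke the largest-eigenvalue result for the empirical conjugate kernel from \citep{benigni2022largest} (cited in the main text): the top eigenvalue of $\bK_0^\CK=\frac{1}{h}\sigma(\bW\bX/\sqrt{d})^\top\sigma(\bW\bX/\sqrt{d})$ is bounded by a constant $C=C(\lambda_\sigma,\gamma_1,\gamma_2)$ with probability at least $1-2e^{-cn}$. Because $\|\sigma(\bW\bX/\sqrt{d})\|=\sqrt{h\,\lambda_{\max}(\bK_0^\CK)}$, this gives the desired $C\sqrt{n}$ bound on $F$ itself (and hence on its conditional mean up to a constant factor by integration of the tail). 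Taking $t$ proportional to $\sqrt{n}$ in the Lipschitz step and union-bounding over the high-probability event for $\bX$ yields the claim.

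\textbf{Main obstacle.} The delicate part is upgrading the $\lambda_{\max}(\bK_0^\CK)=O(1)$ statement into a genuinely exponential tail bound of the form $1-2e^{-cn}$, since the entries of $\sigma(\bW\bX/\sqrt{d})$ are neither independent nor Gaussian. The cleanest route is the Lipschitz-concentration wrapper described above, which converts a mean estimate into an exponential tail; however it also forces us to be careful that $C_X$ on the conditioning event is an absolute constant, so that the exponent $t^2/(\lambda_\sigma C_X)^2$ scales as $n$ rather than shrinking with $d$. A secondary wrinkle is the implicit use of $\E[f^*(\bx)^2]=O(1)$ in the bound on $\|\by\|$: this is true for the teacher models of interest but ought to be flagged as a standing assumption rather than derived purely from the Lipschitzness of $f^*$.
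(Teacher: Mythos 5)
Your bound on $\|\by\|$ is essentially the paper's argument: the paper also decomposes $\by=f^*(\bX)+\boldsymbol{\varepsilon}$, handles $\boldsymbol{\varepsilon}$ with a standard sub-Gaussian norm concentration (Vershynin, Thm.~3.1.1), and handles $f^*(\bX)$ by observing its entries are sub-Gaussian via Lipschitz Gaussian concentration (Vershynin, Thm.~5.2.2). Your remark that this implicitly uses $\E[f^*(\bx)^2]=O(1)$ (equivalently, that $|\E f^*(\bx)|$ is bounded) is well taken; the paper also leaves this implicit, and it is a genuine standing assumption one should state when $f^*$ is only assumed $\lambda_\sigma$-Lipschitz.

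For $\|\sigma(\bW_0\bX/\sqrt{d})\|$ you diverge from the paper. The paper simply cites \cite[Lemma D.4]{fan2020spectra}, which already provides exactly the exponential-tail bound
\[
\Parg{\norm{\sigma(\bW_0\bX/\sqrt{d})}\ge C'(\sqrt{n}+\sqrt{h})\sqrt{h/d}} \le 2e^{-cn},
\]
and then uses the LWR aspect-ratio assumption to reduce $(\sqrt{n}+\sqrt{h})\sqrt{h/d}$ to $C\sqrt{n}$. Your alternative is a Lipschitz-concentration wrapper: condition on $\|\bX\|\lesssim\sqrt{d}$, observe $\bW\mapsto\|\sigma(\bW\bX/\sqrt{d})\|$ is $\lambda_\sigma\|\bX\|/\sqrt{d}$-Lipschitz (correct, via $\|\cdot\|\le\|\cdot\|_F$ and mean value theorem), and use Gaussian concentration to get an exponential tail around the median/mean, which you then bound via a known $O(1)$ result on $\lambda_{\max}(\bK_0^{\CK})$. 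This route is sound in outline and is arguably more self-contained than a black-box citation, but it has two loose ends you should tighten. First, to convert a high-probability statement $\lambda_{\max}(\bK_0^\CK)=O(1)$ into a bound on the centering quantity, argue via the median rather than "integration of the tail": once $\Prob{\lambda_{\max}(\bK_0^\CK)\le C}>1/2$, the median of $\|\sigma(\bW\bX/\sqrt{d})\|$ is at most $\sqrt{Ch}=O(\sqrt{n})$, and Lipschitz concentration around the median then gives the exponential tail directly without needing any a priori moment control. Second, the specific invocation of the largest-eigenvalue result of \cite{benigni2022largest} requires checking its hypotheses match the present setting (Gaussian $\bX$, the centering condition $\E[\sigma(z)]=0$, the LWR aspect ratios); the paper sidesteps this entirely by using the Fan--Wang lemma, which is stated precisely for this Gaussian--Gaussian, operator-norm setting. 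If you prefer your route, it would be cleaner to replace the Benigni--P\'ech\'e citation with the same Fan--Wang lemma, or with a direct $\varepsilon$-net argument, to avoid introducing a dependency whose hypotheses you have not verified.
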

\begin{proof}
Due to \cite[Lemma D.4.]{fan2020spectra}, we can directly obtain that
\[\Parg{\norm{\sigma(\bW_0\bX/\sqrt{d})}\ge C'(\sqrt{n}+\sqrt{h})\sqrt{\frac{h}{d}}} \le 2e^{-cn}.\]
Here we use the fact that both $\bW_0$ and $\bX$ are i.i.d. Gaussian random matrices. Then by Assumption~\ref{ass:width}, we conclude that we control $\sigma(\bW_0\bX/\sqrt{d})$. Recall that Assumption~\ref{ass:teacher} implies that $\by=f^*(\bX)+\boldsymbol{\varepsilon}$. Hence, by Lipschitz Gaussian concentration inequality \cite[Theorem 5.2.2]{vershynin2018high}, each entry of $f^*(\bX)$ has independent sub-Gaussian coordinates, whence we can get $\|f^*(\bX)\|\le C\sqrt{n}$ with probability at least $1-2ne^{-cn}$ for some constants $c,C>0$. On the other hand, $[\boldsymbol{\varepsilon}]_i=\varepsilon_i$ are i.i.d. centered sub-Gaussian noises with variance $\sigma_\eps^2$. By \cite[Theorem 3.1.1]{vershynin2018high}, we have
\begin{equation*}
    \Parg{\|\boldsymbol{\varepsilon}\|\le 2\sigma_\varepsilon\sqrt{n}}\ge 1-2\Exp{-\frac{cn}{K^{4}}},
\end{equation*}
where the constant $K$ is the sub-Gaussian norm defined by $K=\max_{i}\|\varepsilon_i\|_{\psi_2}$. Hence, combining all things together, we obtain the second inequality of this lemma.

\end{proof}

\begin{lemma}
\label{lemm:gradient-norm-multi}
Under the assumptions of Lemma~\ref{lemm:CK}, given any fixed $t\in\N$ and learning rate $\eta=\Theta(1)$, the weight matrix after $t$ gradient steps $\bW_t$ defined in \eqref{eq:gradient-step-MSE} satisfies 
\begin{align}
    \Parg{\norm{\bW_{t}-\bW_0}_F\ge \frac{C}{\sqrt{n}}} &\le \Exp{-cn},  \label{eq:gradient-norm-multi}
\end{align}  
for some positive constants $c,C>0$ only depending on $ t,\eta ,$ $\sigma_\varepsilon$, $\lambda_\sigma$, $\gamma_1$ and $\gamma_2$.
\end{lemma}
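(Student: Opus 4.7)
My plan is to proceed by induction on $t$, with the base case $t=0$ trivial. The core of the argument is a one-step Frobenius bound $\|\bG_s\|_F = O(1/\sqrt{n})$ that persists for every $s \le t$, from which the conclusion follows by summing $t$ finitely many increments. Before the induction, I collect the ambient high-probability estimates used throughout: $\|\by\|$ and $\|\sigma(\bW_0\bX/\sqrt{d})\|$ are $O(\sqrt{n})$ by Lemma~\ref{lemm:y+f_0}; $\|\bX\| = O(\sqrt{n})$ by standard Gaussian matrix concentration (since $\bX$ is $d\times n$ with i.i.d.\ standard Gaussian entries); and $\|\bv\| \le \sqrt{h}$ since $\|\bv\|_\infty \le 1$. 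Each event fails with probability at most $e^{-cn}$, and a union bound over the $O(t)$ events preserves the $\exp(-cn)$ failure probability.

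For the one-step estimate, starting from \eqref{eq:G_t} I write $\bM_s := [\bv(\by - f_s(\bX))]\odot \sigma'(\bW_s\bX/\sqrt{d})$ so that $\bG_s = (n\sqrt{dh})^{-1}\bM_s\bX^\top$. Pulling $|\sigma'|\le\lambda_\sigma$ out of the Hadamard product and using the rank-one identity $\|\bv(\by-f_s(\bX))\|_F = \|\bv\|\cdot\|\by-f_s(\bX)\|$ gives $\|\bM_s\|_F \le \lambda_\sigma\|\bv\|\cdot\|\by-f_s(\bX)\|$. The submultiplicative bound $\|\bM_s\bX^\top\|_F\le\|\bM_s\|_F\|\bX\|$ then yields
\[
\|\bG_s\|_F \;\le\; \frac{\lambda_\sigma}{n\sqrt{dh}}\,\|\bv\|\,\|\by - f_s(\bX)\|\,\|\bX\|.
\]
Substituting $\|\bv\|\le\sqrt{h}$ and $\|\bX\|=O(\sqrt{n})$, this simplifies to $\|\bG_s\|_F \le C_0\,\|\by-f_s(\bX)\|/\sqrt{nd}$, so the remaining task is to control the residual.

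To close the induction, assume $\|\bW_s-\bW_0\|_F\le C_s/\sqrt{n}$ for every $s\le t$. Lipschitz continuity of $\sigma$ together with $\|AB\|_F\le\|A\|_F\|B\|$ gives $\|\sigma(\bW_s\bX/\sqrt{d})-\sigma(\bW_0\bX/\sqrt{d})\|_F \le (\lambda_\sigma/\sqrt{d})\|\bW_s-\bW_0\|_F\|\bX\|=O(1/\sqrt{n})$, whence $\|f_s(\bX)-f_0(\bX)\|\le h^{-1/2}\|\bv\|\cdot O(1/\sqrt{n})=O(1/\sqrt{n})$. Combined with $\|\by\|, \|f_0(\bX)\|=O(\sqrt{n})$ this gives $\|\by-f_s(\bX)\|=O(\sqrt{n})$, so $\|\bG_s\|_F\le C/\sqrt{n}$ and $\|\bW_{t+1}-\bW_0\|_F\le\|\bW_t-\bW_0\|_F+\eta\|\bG_t\|_F\le(C_t+\eta C)/\sqrt{n}$, closing the induction with a constant that depends only on $t$, $\eta$, $\sigma_\varepsilon$, $\lambda_\sigma$, $\gamma_1$, $\gamma_2$. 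The main technical delicacy is the use of the operator norm $\|\bX\|$ rather than the Frobenius norm $\|\bX\|_F=O(\sqrt{nd})$ in the one-step estimate: the Frobenius choice is too loose by a factor of $\sqrt{n}$ and would yield only $\|\bG_s\|_F=O(1)$, which cannot propagate the target $1/\sqrt{n}$ scaling even across a single gradient step.
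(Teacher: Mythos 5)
Your proof is correct and follows the same overall structure as the paper's: induction on $t$, a one-step Frobenius bound on the gradient via the Hadamard-product and rank-one observations, and the high-probability estimates on $\|\by\|$, $\|\sigma(\bW_0\bX/\sqrt{d})\|$, and $\|\bX\|$ from Lemma~\ref{lemm:y+f_0} and \eqref{eq:gaussian_norm}. The one genuine difference is that the paper first splits $\bG_t = \bA^t + \bB^t$ by writing $\sigma' = \mu_1 + \sigma'_\perp$ (the decomposition borrowed from \cite{ba2022high}), bounding the rank-one ``linear'' piece and the residual Hadamard piece separately; you skip this and bound the full Hadamard expression directly by pulling out $|\sigma'|\le\lambda_\sigma$. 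For the purposes of this norm bound the decomposition buys nothing --- both pieces end up bounded by the same product $\frac{1}{n\sqrt{dh}}\|\bv\|\,\|\by - f_s(\bX)\|\,\|\bX\|$ up to the constant --- so your version is marginally cleaner; the split is only needed downstream (as in \cite{ba2022high}) when one wants to isolate the dominant rank-one component of the gradient rather than merely control its norm. Your closing remark about why $\|\bX\|_{\mathrm{op}}$ rather than $\|\bX\|_F$ is essential is also on target.
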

\begin{proof}
Denote $\sigma_\perp(x)= \sigma(x)-\mu_1 x$ which is the nonlinear part of $\sigma$ and $\mu_1=\E[z\sigma(z)]$. Thus, $\E[\sigma_\perp(z)z]=0$ for $z\sim\cN(0,1)$. Based on this, we can further decompose the gradient $\bG_t$ into
\begin{align}
    \bG_t  = 
    \underbrace{\frac{\mu_1}{n\sqrt{dh}}\bv\left(\by - f_t(\bX)\right)\bX^\top}_{\bA^t} + 
    \underbrace{\frac{1}{n\sqrt{dh}}\left(\bv\left(\by - f_t(\bX)\right)\odot\sigma'_\perp(\bW_t\bX/\sqrt{d})\right)\bX^\top}_{\bB^t}. \label{eq:decomposition}
\end{align}
At first, consider $t=0$ and bound the spectral norm of $\bW_1$. By assumption, we know $\|\bv\|\le \sqrt{h}$. Due to Corollary 7.3.3 in \citep{vershynin2018high}, we have
\begin{equation}
\Parg{\frac{1}{\sqrt{d}}\|\bX\|\ge 2\left(1+\sqrt{\frac{n}{d}}\right)}\le 2 \Exp{-cn}.\label{eq:gaussian_norm}
\end{equation}
Therefore, by \eqref{eq:decomposition}, we can control $\bA^0$ and $\bB^0$ separately. Notice that, as a rank-one matrix,
\begin{align*}
    \norm{\bA^0} = \norm{\bA^0}_F &\le  \frac{\mu_1}{\sqrt{n}}\frac{\norm{\bX}}{\sqrt{d}}\frac{1}{\sqrt{n}}(\norm{\by}+\norm{f_0(\bX)})\frac{\norm{\bv}}{\sqrt{h}}\\
    &\le \frac{\mu_1}{\sqrt{n}}\frac{\norm{\bX}}{\sqrt{d}}\frac{\norm{\bv}}{\sqrt{h}}\frac{1}{\sqrt{n}}\left(\norm{\by}+\frac{\norm{\bv}}{\sqrt{h}}\norm{\sigma(\bW_0\bX/\sqrt{d})}\right).
\end{align*}
Hence, by Lemma~\ref{lemm:y+f_0} and \eqref{eq:gaussian_norm}, one can easily claim that $\|\bA^0\|\le C/\sqrt{n}$ with probability at least $1-e^{-cn}$ for some constants $c,C>0$. On the other hand, since $\bv\left(\by - f_t(\bX)\right)$ is rank-one and $\sigma_\perp'=\sigma'-\mu_1$ with $|\sigma'(x)|\le \lambda_\sigma$, we can similarly obtain 
\begin{align*}
    \norm{\bB^0}_F & \le \frac{1}{n\sqrt{dh}}\norm{\bv\left(\by - f_t(\bX)\right)\odot\sigma'_\perp(\bW_t\bX/\sqrt{d})}_F\|\bX\|\\
    &\le \frac{1}{n\sqrt{hd}}\norm{\bX}\left(\norm{\by} + \norm{f_0(\bX)}\right)\norm{\bv}\max_{i,j}\abs{\sigma'_\perp(\bW_0\bX/\sqrt{d})}_{i,j}\\
    &\le\frac{\mu_1+\lambda_\sigma}{\sqrt{n}}\frac{\norm{\bX}}{\sqrt{d}}\frac{\norm{\bv}}{\sqrt{h}}\frac{1}{\sqrt{n}}\left(\norm{\by}+\frac{\norm{\bv}}{\sqrt{h}}\norm{\sigma(\bW_0\bX/\sqrt{d})}\right).
\end{align*} 
As $\bA^0$, we can apply Lemma~\ref{lemm:y+f_0} and \eqref{eq:gaussian_norm}  again to conclude \eqref{eq:gradient-norm-multi} for $t=1$.

For general $t$, we apply induction. We assume that after the $t$-th gradient step with $\eta=\Theta(1)$, Eq. \eqref{eq:gradient-norm-multi} holds for some constants $C,c>0$. Following \cite[Lemma 16]{ba2022high}, we now show that the similar high-probability statement also holds for $\bW_{t+1}$  (for some different constants $c',C'$). Firstly, following the same argument as \cite[Setion 6.6.1]{oymak2020toward}, we know that
\begin{align}
    \norm{f_t(\bX)} \le& \norm{f_0(\bX)} + \norm{f_t(\bX) - f_0(\bX)} \nonumber\\
    \le& \norm{f_0(\bX)} + \frac{\lambda_\sigma}{\sqrt{h}}\norm{\bv}\frac{\norm{\bX}}{\sqrt{d}}\norm{\bW_t - \bW_0}_F.\label{eq:ft_diff}
\end{align}
Note that $\norm{\bW_t - \bW_0}_F = O(1/\sqrt{n})$ with high probability by the induction hypothesis. Hence, by Lemma~\ref{lemm:y+f_0} and \eqref{eq:gaussian_norm}, we have $\norm{f_t(\bX)}\le C\sqrt{n} $ with high probability. Indeed, the difference between $f_t(\bX)$ and $f_0(\bX)$ is significantly negligible comparing with the initial value $f_0(\bX)$.
Similarly with $\bA_0$, $\bA^t$ satisfies
\begin{align*}
    \norm{\bA^t} = \norm{\bA^t}_F &\le  \frac{\mu_1}{\sqrt{n}}\frac{\norm{\bX}}{\sqrt{d}}\frac{1}{\sqrt{n}}(\norm{\by}+\norm{f_t(\bX)})\frac{\norm{\bv}}{\sqrt{h}}.
\end{align*}
Analogously for $\bB^t$, we have
\begin{align*}
    \norm{\bB^t}_F &\le \frac{\mu_1+\lambda_\sigma}{\sqrt{n}}\frac{\norm{\bX}}{\sqrt{d}}\frac{\norm{\bv}}{\sqrt{h}}\frac{1}{\sqrt{n}}(\norm{\by} + \norm{f_t(\bX)}).
\end{align*} 
Thus, Lemma~\ref{lemm:y+f_0}, \eqref{eq:gaussian_norm}, and \eqref{eq:ft_diff} ensure that
\begin{align*}
    \Parg{\norm{\bA^t}_F \ge \frac{C'}{\sqrt{n}}}\le \Exp{-c'n},~\Parg{\norm{\bB^t}_F \ge \frac{C'}{\sqrt{n}}} \le \Exp{-c'n},
\end{align*} 
for constants $c',C'>0$. Since
$\norm{\bW_{t+1}-\bW_0}_F\le \norm{\bW_t-\bW_0}_F+\eta\norm{\bA^t}_F+\eta\norm{\bB^t}_F,$
by induction hypothesis, we can conclude that \eqref{eq:gradient-norm-multi} holds for the $(t+1)$-th step with some constants $C,c>0$, which are different from the constants at the $t$-th step. 
\end{proof} 
As a corollary, by \eqref{eq:norms}, we can also deduce the following norm bounds:
\[\Parg{\norm{\bW_{t}-\bW_0}\ge \frac{C}{\sqrt{n}}}\le \Exp{-cn},~~~\Parg{\norm{\bW_{t}-\bW_0}_{2,\infty}\ge \frac{C}{\sqrt{n}}} \le \Exp{-cn}. \]
Lemma~\ref{lemm:gradient-norm-multi} and the above bounds are empirically verified by Figure~\ref{fig:GD_proof}(a) for $t=3$. Not only upper bounds, this simulation also shows that at early phase $\norm{\bW_{t}-\bW_0}$, $\norm{\bW_{t}-\bW_0}_F$, and $\norm{\bW_{t}-\bW_0}_{2,\infty}$ are all of the same $\Theta(1/\sqrt{n})$ order.

As a remark, from the bound of the second term of \eqref{eq:ft_diff}, we can deduce that the change of the output of the NN satisfies
\begin{equation*}
    |f_t(\bx)-f_0(\bx)|\le \frac{C}{\sqrt{n}},
\end{equation*} 
for some $t$-dependent constant $C>0$, any $\bx\sim\cN (0,\bI)$ and any finite time $t$. In other words, when $\eta=\Theta(1)$, the change of the output of the NN at the early phase (i.e. $t=\Theta(1)$) is negligible and its order is $O(\frac{1}{\sqrt{n}})$.

\subsection{Proof of Lemma~\ref{lemm:CK}}
In this section, we complete the proof of Lemma~\ref{lemm:CK}. We first mention the empirical validation of Lemma~\ref{lemm:CK} in Figure~\ref{fig:GD_proof}. Notice that the changes in Frobenius norm for $\bW$ and $\bK^\CK$ are exactly $\Theta(1/\sqrt{n})$ and $\Theta(1/n)$, respectively. The operator norm of $\bK^\NTK$ matches with Lemma~\ref{lemm:CK}, while the Frobenius norm of the change decays slower than the rate $\Theta(1/n)$. Additionally, in the simulation, we use $\bv\sim\cN(0,\bI)$, which indicates that our assumption for $\bv$ in Lemma~\ref{lemm:CK} can be weakened.
\begin{figure}[ht]  
\centering
\begin{minipage}[t]{0.32\linewidth}
\centering
{\includegraphics[width=0.98\textwidth]{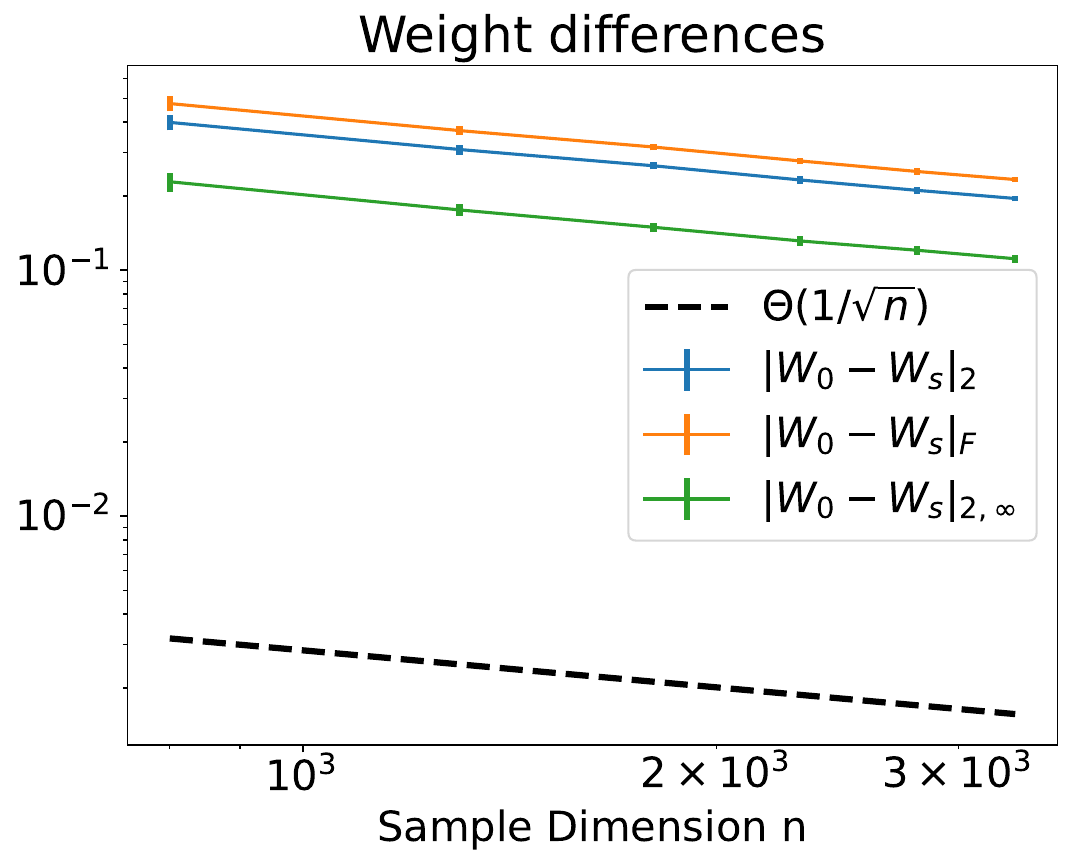}} \\
\small (a) Early phase for $\bW_t$. 
\end{minipage}
\begin{minipage}[t]{0.32\linewidth}
\centering 
{\includegraphics[width=0.978\textwidth]{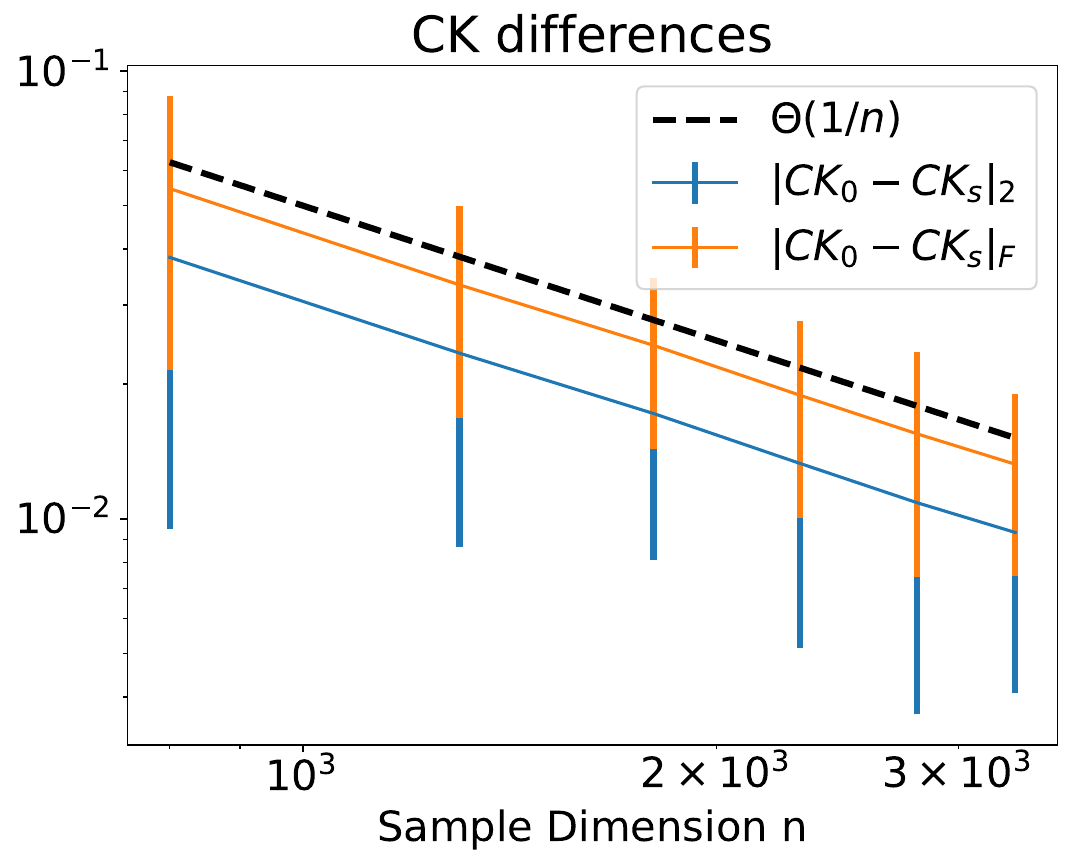}} \\ 
\small (b) Early phase for $\bK^{\CK}$. 
\end{minipage} 
\begin{minipage}[t]{0.32\linewidth}
\centering
{\includegraphics[width=0.97\textwidth]{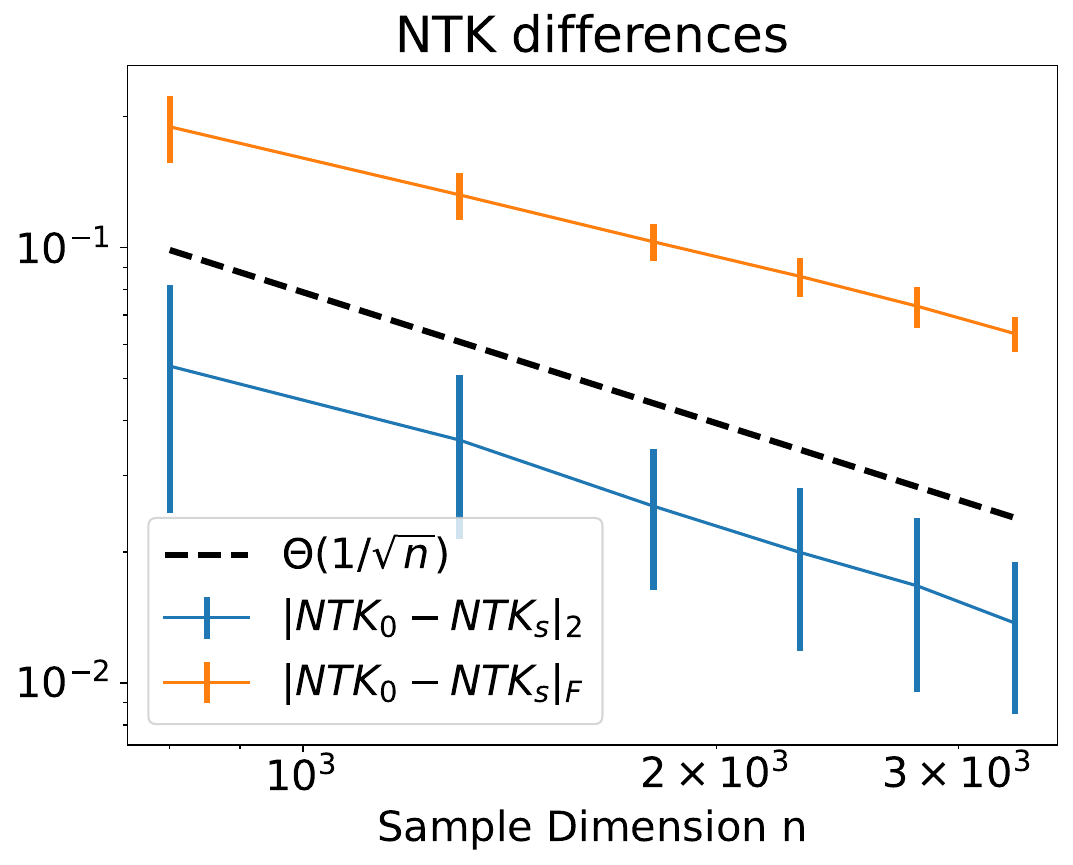}} \\ 
\small (c) Early phase for $\bK^{\NTK}$.  
\end{minipage}  
\caption{\small Empirical validations for Lemma~\ref{lemm:CK} and Lemma~\ref{lemm:gradient-norm-multi} at $t=3$. Here $\sigma_\varepsilon = 0.2$, activation $\sigma$ is a normalized ReLU and the target function $\sigma^*$ is normalized $\tanh$. Fix $d/n=0.6$ and $N/n =1.2$ as $n $ is increasing. At each dimension, we take 25 trials to average.
 (a) Norms of the changes for $\bW_3-\bW_0$.
 (b) Norms of the changes for $\bK^{\CK}_3-\bK^{\CK}_0$.
 (c) Norms of the changes for $\bK^{\NTK}_3-\bK^{\NTK}_0$.}   
\label{fig:GD_proof}  
\end{figure} 
\begin{proof}[Proof of Lemma~\ref{lemm:CK}]
Lemma~\ref{lemm:gradient-norm-multi} directly validates the control of $\frac{1}{\sqrt{d}}\norm{\bW_t-\bW_0}_F$. By virtue of this result, we now present estimates for CK and NTK. Based on \cite[Section 6.6.1]{oymak2020toward}, we have
\begin{align}
    \left\|\sigma(\bW_0\bX/\sqrt{d})-\sigma(\bW_t\bX/\sqrt{d})\right\|\le \frac{\lambda_\sigma}{\sqrt{d}}\|\bX\|\|\bW_0-\bW_t\|_F.\label{eq:CK_op}
\end{align}
We apply the mean value theorem to obtain this inequality. Recall the operator norm bound for Gaussian random matrix $\bX$ in \eqref{eq:gaussian_norm}. We know $\|\bX/\sqrt{d}\|\lesssim 1+\sqrt{\gamma_1}$ with high probability as $n/d\to\gamma_1$. Hence, with the help of Lemma~\ref{lemm:gradient-norm-multi}, we can claim 
\[ \left\|\sigma(\bW_0\bX/\sqrt{d})-\sigma(\bW_t\bX/\sqrt{d})\right\|\le C\lambda_\sigma(1+\sqrt{\gamma_1})/\sqrt{n},\]
with probability at least $1-\Exp{-cn}$, for any fixed finite $t\in [n]$. Similarly, we can control the change in the Frobenius norm as follows:
\begin{align}
    \norm{\sigma(\bW_0\bX/\sqrt{d})-\sigma(\bW_t\bX/\sqrt{d})}_F^2
    \le ~ \frac{\lambda_\sigma^2}{d}\norm{\bX}^2\norm{\bW_0-\bW_t}_F^2\le C\lambda_\sigma^2(1+\sqrt{\gamma_1})^2/n,\label{eq:CK_F}
\end{align}
with probability at least $1-\Exp{-cn}$. Therefore, we can control the change in the CK matrix in the Frobenius norm by the following inequalities:
\begin{align*}
    &\norm{\bK^{\CK}_t-\bK^{CK}_0}_F\\
    \le~&\frac{1}{h}\norm{\sigma(\bW_t\bX/\sqrt{d})^\top \left(\sigma(\bW_t\bX/\sqrt{d})-\sigma(\bW_0\bX/\sqrt{d})\right)}_F + \frac{1}{h}\norm{\sigma(\bW_0\bX/\sqrt{d})^\top \left(\sigma(\bW_t\bX/\sqrt{d})-\sigma(\bW_0\bX/\sqrt{d})\right)}_F\\
    \le ~& \frac{1}{h}\left( \norm{ \sigma(\bW_t\bX/\sqrt{d})-\sigma(\bW_0\bX/\sqrt{d}) }+\norm{\sigma(\bW_0\bX/\sqrt{d}}\right)\cdot \norm{ \sigma(\bW_t\bX/\sqrt{d})-\sigma(\bW_0\bX/\sqrt{d}) }_F\\
    &+ \frac{1}{h}\norm{\sigma(\bW_0\bX/\sqrt{d})}\cdot \norm{ \sigma(\bW_t\bX/\sqrt{d})-\sigma(\bW_0\bX/\sqrt{d}) }_F.
\end{align*}
Therefore, by \eqref{eq:CK_op}, \eqref{eq:CK_F} and Lemma~\ref{lemm:y+f_0}, we can claim that there exist constants $c,C>0$ such that with probability at least $1-\Exp{-cn}$, $\norm{\bK^{\CK}_t-\bK^{CK}_0}_F$ is upper bounded by $C/n$ in the LWR.

Now we consider the change in the NTK matrix during training. Since the empirical NTK can be decomposed into two parts, one of which is exactly the CK, it suffices to consider the change of the first part of the empirical NTK. Recall that 
\begin{equation*}
    \bK_t:=\frac{1}{d}\bX^\top \bX\odot \frac{1}{h}\sigma'\left( \frac{1}{\sqrt{d}}\bW_t\bX\right)^\top \diag(\bv_t)^2\sigma'\left(\frac{1}{\sqrt{d}}\bW_t \bX\right).
\end{equation*}
Following the notation in \cite{oymak2020toward}, we denote $\cJ(\bW_t):=[\cJ (\bw_1^t),\ldots,\cJ(\bw_N^t)]\in\R^{n\times hd}$ with $\cJ(\bw_i):=\frac{v_i}{\sqrt{h}}\diag(\sigma'(\bX^\top \bw_i/\sqrt{d}))\bX^\top/\sqrt{d}\in\R^{n\times d}$. Hence, $\bK_t= \cJ(\bW_t)\cJ(\bW_t)^\top$ and
\begin{align}    
   \norm{\bK_t-\bK_0}=~&\norm{\cJ(\bW_t)\cJ(\bW_t)^\top-\cJ(\bW_0)\cJ(\bW_0)^\top}\nonumber\\
   \le ~& 2\norm{\cJ(\bW_0)}\norm{\cJ(\bW_t)-\cJ(\bW_0)}+\norm{\cJ(\bW_t)-\cJ(\bW_0)}^2.\label{eq:NTK_diff}
\end{align}
By \cite[Lemma 6.6]{oymak2020toward}, we know $\norm{\cJ(\bW_0)}^2=\norm{\bK^{\NTK}_0}$ is upper bounded by some constant $C>0$ with high probability. Then, we apply the inequalities from Lemma 6.5 of \cite{oymak2020toward} to obtain
\begin{align}
    &\norm{\cJ(\bW_t)-\cJ(\bW_0)}^2\nonumber\\
    =~& \norm{\left(\left(\sigma'(\bW_t\bX/\sqrt{d})-\sigma'(\bW_0\bX/\sqrt{d})\right)^\top\frac{\diag (\bv)^2}{h}\left(\sigma'(\bW_t\bX/\sqrt{d})-\sigma'(\bW_0\bX/\sqrt{d})\right)\right)\odot \left(\frac{\bX^\top\bX}{d}\right)}\nonumber\\
    \le ~& \norm{\left(\sigma'(\bW_t\bX/\sqrt{d})-\sigma'(\bW_0\bX/\sqrt{d})\right)^\top\frac{\diag (\bv)}{\sqrt{h}}}^2 \left(\max_{i\in[n]}\norm{\bx_i/\sqrt{d}}^2\right)\nonumber\\
    \le ~& \frac{1}{h} \|\bv\|^2_{\infty}\norm{\sigma'(\bW_t\bX/\sqrt{d})-\sigma'(\bW_0\bX/\sqrt{d})}^2 \left(\max_{i\in[n]}\norm{\bx_i/\sqrt{d}}^2\right)\nonumber\\
    \le~& \frac{\lambda_\sigma^2}{h} \frac{\|\bX\|^2}{d}\norm{\bW_t-\bW_0}^2_F \left(\max_{i\in[n]}\norm{\bx_i/\sqrt{d}}^2\right),\label{eq:J_W-Lip}
\end{align} 
where the last inequality is due to the mean value theorem, the uniform bound on $\sigma''$, and the assumption on the second layer $\bv$. Notice that Gaussian random vectors satisfy 
\begin{align}
    \Parg{\max_{i\in[n]}\frac{1}{d}\|\bx_i\|^2 \ge 2}\le & 2ne^{-cn},\label{eq:x_i_norm}
\end{align}
as $n/d\to\gamma_1$ and $h/d\to\gamma_2$. Thus, with \eqref{eq:gaussian_norm} and Lemma~\ref{lemm:gradient-norm-multi}, we obtain
\[\Parg{\norm{\cJ(\bW_t)-\cJ(\bW_0)}\ge \frac{C\lambda_\sigma(1+\gamma_1)}{n} }\le 4ne^{-cn},\]
where constant $C$ relies on the number of steps $t$. Hence, by \eqref{eq:NTK_diff}, we can finally bound in norm the difference between the initial and the trained NTK matrices at the early phase ($t$ is finite).

\end{proof}

\begin{corollary}\label{coro:invariant}
For any fixed $t\in\N$, $i\in [d]$ and $k\in [n]$, denote $\lambda^t_i$, $\nu_{k}^t$ and $\mu_k^t$ the $i$-th, and $k$-th eigenvalues of $\frac{1}{h}\bW_t^\top\bW_t$, $\bK_t^{\CK}$ and $\bK_t^{\NTK}$, respectively. Then, under the assumptions of Lemma~\ref{lemm:CK}, we have
\[|\lambda^t_i-\lambda^0_i|,~|\nu_{k}^t-\nu_{k}^0|,~|\mu_k^t-\mu_k^0|\to 0,\]
almost surely in LWR. Consequently, the eigenvalues of $\frac{1}{h}\bW_t^\top\bW_t$, $\bK_t^{\CK}$ and $\bK_t^{\NTK}$ are the same as corresponding the eigenvalues of initial $\frac{1}{h}\bW_0^\top\bW_0$, $\bK_0^{\CK}$ and $\bK_0^{\NTK}$, respectively. 
\end{corollary}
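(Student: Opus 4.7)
\textbf{Proof proposal for Corollary~\ref{coro:invariant}.}
The plan is to reduce every eigenvalue comparison to an operator-norm perturbation bound via Weyl's inequality and then plug in the estimates already established in Lemma~\ref{lemm:CK}. Recall that for Hermitian matrices $A, B \in \R^{m\times m}$, Weyl's inequality gives $|\lambda_i(A) - \lambda_i(B)| \le \|A - B\|$ for every $i$. Applying this to the pairs $(\frac{1}{h}\bW_t^\top \bW_t, \frac{1}{h}\bW_0^\top\bW_0)$, $(\bK_t^{\CK}, \bK_0^{\CK})$, and $(\bK_t^{\NTK}, \bK_0^{\NTK})$, it suffices to show that each of the three operator-norm differences converges to $0$ almost surely under LWR.

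For the two kernel matrices the argument is immediate: Lemma~\ref{lemm:CK} already gives $\|\bK_t^{\CK} - \bK_0^{\CK}\|_F \le C/n$ and $\|\bK_t^{\NTK} - \bK_0^{\NTK}\| \le C/n$ with probability $1 - 4ne^{-cn}$, and of course $\|\cdot\| \le \|\cdot\|_F$. For the weight Gram matrix, I would use the factorization
\begin{equation*}
\frac{1}{h}\bW_t^\top\bW_t - \frac{1}{h}\bW_0^\top\bW_0 = \frac{1}{h}(\bW_t - \bW_0)^\top \bW_t + \frac{1}{h}\bW_0^\top(\bW_t - \bW_0),
\end{equation*}
so
\begin{equation*}
\Big\|\tfrac{1}{h}\bW_t^\top\bW_t - \tfrac{1}{h}\bW_0^\top\bW_0\Big\| \le \tfrac{1}{h}\,\|\bW_t - \bW_0\|\,(\|\bW_0\| + \|\bW_t\|).
\end{equation*}
Standard Gaussian matrix concentration (as in \eqref{eq:gaussian_norm}) gives $\|\bW_0\| \le C\sqrt{h}$ with exponentially high probability, and Lemma~\ref{lemm:gradient-norm-multi} gives $\|\bW_t - \bW_0\| \le \|\bW_t - \bW_0\|_F \le C/\sqrt{n}$, so the triangle inequality yields $\|\bW_t\| \le C\sqrt{h}$ as well. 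Combining, the operator-norm difference is at most $C/\sqrt{nh}$, which tends to $0$ under $n/d \to \gamma_1$ and $h/d \to \gamma_2$.

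Finally, I upgrade the high-probability control to almost-sure convergence by a standard Borel–Cantelli argument: the failure probabilities in Lemma~\ref{lemm:CK} and Lemma~\ref{lemm:gradient-norm-multi} are summable over $n$ (being exponentially small in $n$), so the event $\{|\lambda^t_i - \lambda^0_i| > \varepsilon\}$ (and similarly for $\nu_k$ and $\mu_k$) occurs for only finitely many $n$ with probability one. Taking a countable intersection over rational $\varepsilon > 0$ and over the fixed finite index set yields the claimed almost-sure convergence of all eigenvalues.

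There is no real obstacle here; the only point to watch is that the Gaussian concentration $\|\bW_0\| = O(\sqrt{h})$ and the step-wise estimate on $\|\bW_t - \bW_0\|$ must hold on the same high-probability event as the kernel estimates of Lemma~\ref{lemm:CK}, but since all three failure events are separately exponentially small, a union bound keeps the total failure probability summable and the Borel–Cantelli step goes through unchanged.
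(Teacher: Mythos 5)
Your proposal is correct and is essentially the same argument the paper uses: the paper's proof is a one-line invocation of Weyl's inequality (citing Bai and Silverstein), relying on the norm bounds from Lemma~\ref{lemm:CK}, and you have simply spelled out the details — in particular the factorization needed to pass from $\norm{\bW_t-\bW_0}$ to $\norm{\frac{1}{h}\bW_t^\top\bW_t-\frac{1}{h}\bW_0^\top\bW_0}$, and the Borel--Cantelli step to upgrade the exponentially-high-probability bounds to almost-sure convergence. One minor imprecision: you speak of intersecting over ``the fixed finite index set,'' but $[d]$ and $[n]$ grow with the dimension; this causes no trouble because Weyl's inequality bounds all eigenvalue gaps simultaneously by the single operator-norm quantity, so Borel--Cantelli need only be applied to that scalar.
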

This corollary is a direct outcome of Weyl's inequality from Theorem~A.46 in \citep{bai2010spectral}. 
Consequently, this corollary concludes that for any fixed $t\ge 0$, almost surely, the limiting spectra of $\frac{1}{h}\bW_t^\top\bW_t$, $\bK_t^{\CK}$ and $\bK_t^{\NTK}$ are the same as those of $\frac{1}{h}\bW_0^\top\bW_0$, $\bK_0^{\CK}$ and $\bK_0^{\NTK}$ in LWR. This corollary claims that not only does the bulk of distributions stay identical to the initialization, but also that any eigenvalues stay the same as at the initialization. This shows that the smallest eigenvalue of $\bK_t^{\NTK}$ has the same lower bound as $\bK_0^{\NTK}$ in the early phase of training.
\subsection{Global Convergence for GD Under LWR}
In this section, we study the final stage of \eqref{eq:GD_W} as training loss is approaching zero and prove Theorem~\ref{thm:convergence}. Figure~\ref{fig:GD1st} shows that the spectra are unchanged globally, even after training in this case. In Corollary~\ref{coro:change}, we confirm this observation for the weight, CK, and NTK matrices via Frobenius norm control. In the simulation, the second layer is initialized as $\bv\sim\cN(0,\bI)$, which is more general than our assumption on $\bv$ in Theorem~\ref{thm:convergence}.

\begin{figure}[ht]  
\centering
\begin{minipage}[t]{0.32\linewidth}
\centering
{\includegraphics[width=0.99\textwidth]{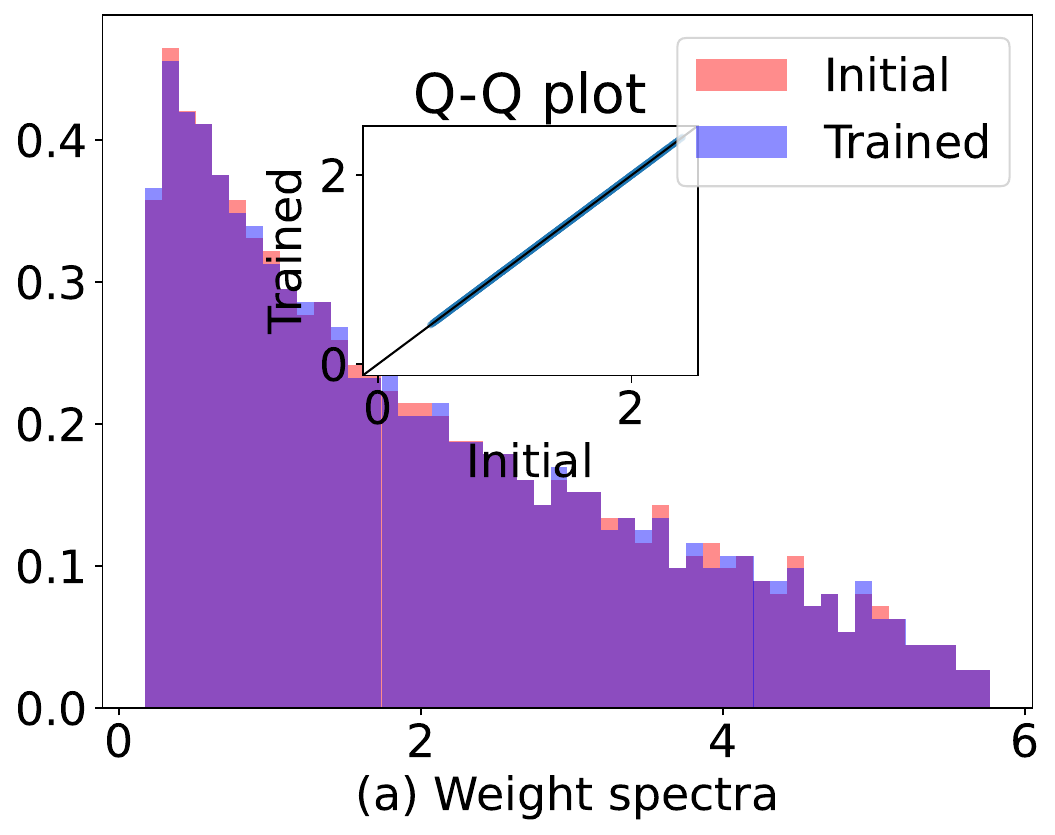}} 
\end{minipage}
\begin{minipage}[t]{0.32\linewidth}
\centering 
{\includegraphics[width=0.99\textwidth]{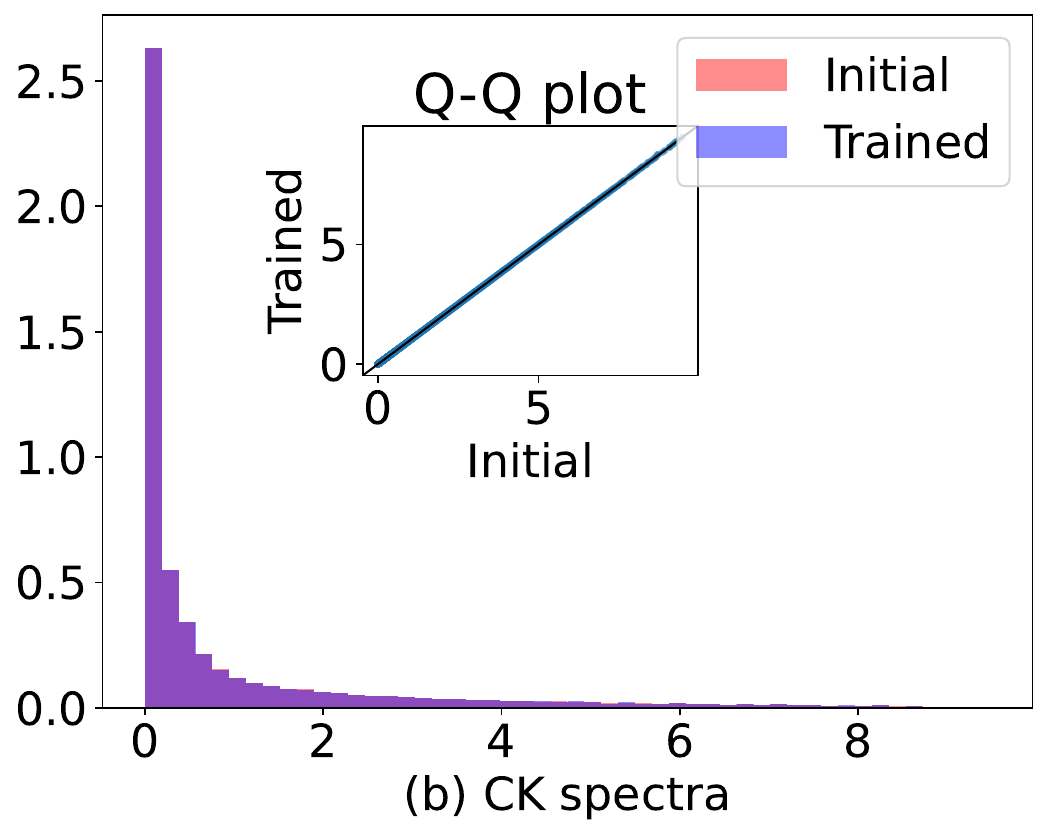}} 
\end{minipage} 
\begin{minipage}[t]{0.337\linewidth}
\centering
{\includegraphics[width=0.99\textwidth]{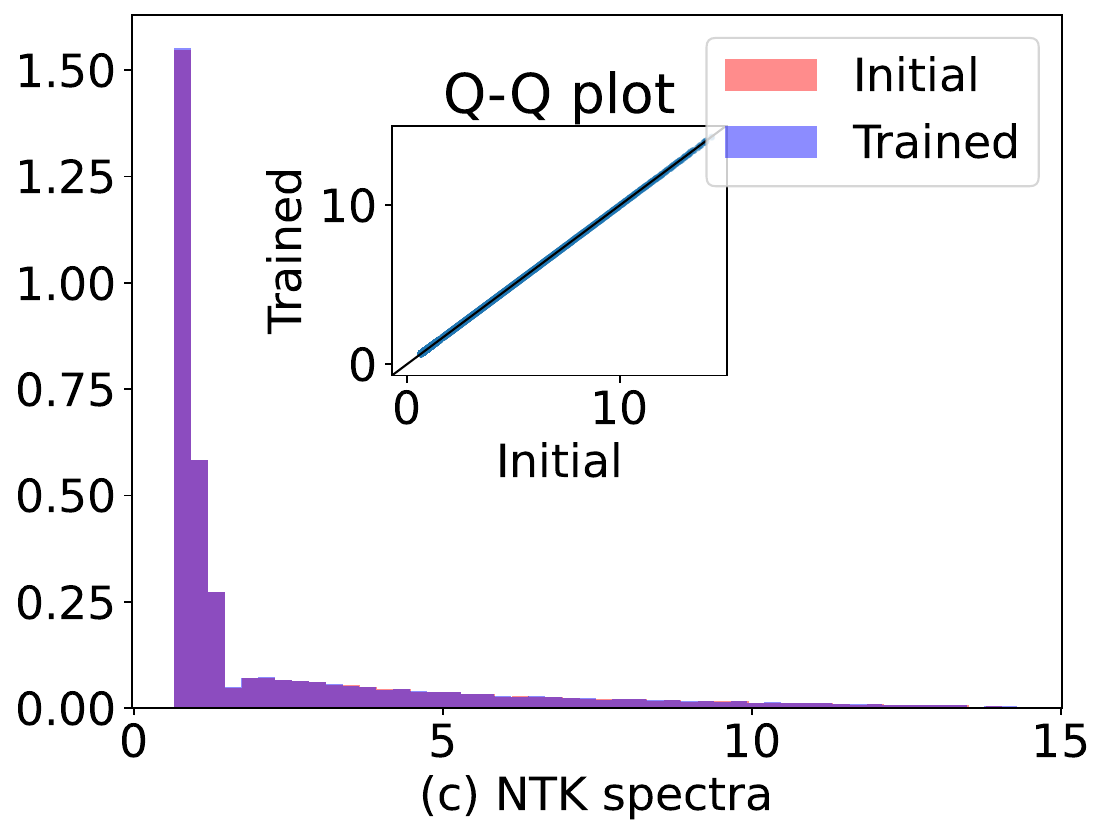}}
\end{minipage}  
 \caption{\small The initial and trained spectra (until training loss is less than $10^{-5}$) when using GD only for the first layer ($h=3000, n=2000,d=1000$):
 (a) Weight spectra. 
 (b) $\bK^\CK$ spectra.
 (c) $\bK^\NTK$ spectra. The final $R^2$ score is 0.55964 and the test loss is 0.44724. The  activation is a normalized ReLU, and the target is Sigmoid.}   
\label{fig:GD1st}  
\end{figure} 

\begin{proof}[Proof of Theorem \ref{thm:convergence}]
 Recall the Jacobian matrix $\cJ(\bW)$ defined in the proof of Lemma \ref{lemm:CK}, and the definition of $\alpha$ based on \eqref{lambdamin} in Section~\ref{sec:cases}. Denote the event 
\[\cA:=\left\{\|\bX\|\le 2\left(1+\sqrt{\gamma_1}\right)\sqrt{d},~\max_{i\in[n]} \|\bx_i\|^2 \le 2d, \sigma_{\min}(\cJ(\bW_0))\ge 2\alpha \right\}.\]
By \eqref{eq:gaussian_norm}, \eqref{eq:x_i_norm} and Theorem 2.9 of \cite{wang2021deformed}, we have $\Parg{\cA}\ge 1-2e^{-cn}-2ne^{-cn}-n^{-7/3}$ for some constant $c>0$ and all large $n$ in LWR.
In the following, conditionally on event $\cA$, we will apply Theorem 6.10 of \cite{oymak2020toward} to obtain the global convergence. Conditionally on $\cA$, Lemma 6.6 of \cite{oymak2020toward} implies
\begin{align}
    \norm{\cJ(\bW)}\le \lambda_\sigma\norm{\bv}_\infty \norm{\bX/\sqrt{d}}\le 2\lambda_\sigma(1+\sqrt{\gamma_1}),\label{eq:largest}
\end{align}for any $\bW$. Define $\beta=2\lambda_\sigma(1+\sqrt{\gamma_1})$. Moreover, in terms of \eqref{eq:J_W-Lip}, we can verify the Lipschitz property for the Jacobian matrix as follows: conditionally on $\cA$,
\begin{align}
   \norm{\cJ(\tilde{\bW})-\cJ(\bW)}\le\frac{2\beta}{\sqrt{h}} \norm{\tilde{\bW}-\bW}_F,\label{eq:J-Lip2}
\end{align} for any $\tilde{\bW},\bW\in\R^{h\times d}$.
Therefore, conditionally on $\cA$, $\cJ(\bW)$ is a $L$-Lipschitz function with respect to $\bW$ where $L:= \frac{2\beta}{\sqrt{h}}$. To complete the proof, it suffices to investigate the smallest singular value of $\cJ(\bW)$ when $\bW$ is in the vicinity of $\bW_0$. Recall $\ell(\bW)=\left\|\by-f_{\bW}(\bX)\right\|$.
Notice that for any unit vector $\bu\in\R^n$, we have $\bu^\top f_{\bW_0}(\bX)=\frac{1}{\sqrt{h}}\sum_{i=1}^h v_i\sigma(\bw_i^\top\bX/\sqrt{d})\bu$, where $\bw_i^\top$ is the $i$-th row of $\bW_0$ for $i\in[N]$. Consider event $\cB:=\left\{\norm{\sigma(\bW_0\bX/\sqrt{d})}\le C\sqrt{n}\right\}$ for some universal constant $C>0$. Lemma~\ref{lemm:y+f_0} proves $\Parg{\cB}\ge 1-2e^{-cn}$. By the assumption of $\bv$, we know each entry $v_i$ is a sub-Gaussian random variable with a sub-Gaussian norm at most 1. Then, according to Hoeffding's inequality, conditionally on the event $\cB$, we have 
\begin{equation*}
    \Parg{\left|\frac{1}{\sqrt{h}}\sum_{i=1}^h v_i\sigma(\bw_i^\top\bX/\sqrt{d})\bu\right|\ge t}\le 2 \Exp{-ct^2},
\end{equation*}
for every $t\ge 0$ and some constant $c>0$. Let $t=2\sqrt{n}$. Considering an $\frac{1}{4}$-net $\cN$ of the unit sphere $\mathbb{S}^{n-1}$, we can get
\begin{equation}\label{eq:f_0_bound}
    \Parg{\norm{f_{\bW_0}(\bX)}\ge \sqrt{n}}\le \Parg{2\max_{\bu\in\cN}\left|\bu^\top f_{\bW_0}(\bX)\right|\ge \sqrt{n}}\le 9^n2\Exp{-cn}\le 2e^{-c'n},
\end{equation} for some constant $c'>0$. Hence, based on Lemma~\ref{lemm:y+f_0} and \eqref{eq:f_0_bound}, we can obtain $\ell(\bW_0)\le C_0\sqrt{n}$ with high probability for some universal constant $C_0>0$. Let us denote this event as $\cC:\left\{\ell(\bW_0)\le C_0\sqrt{n}\right\}$. Define $R:=4\ell (\bW_0)/\alpha$. For any $\bW$ in a ball of radius $R$ centered at $\bW_0$, we have $\norm{\bW_0-\bW}_F\le R$ and $\norm{\cJ(\bW)-\cJ(\bW_0)}
\le LR,$ conditionally on event $\cA$.
Thus, by \eqref{eq:J-Lip2}, on event $\cA\cap\cC$, the smallest singular value $\sigma_{\min}(\cJ(\bW))$ of the Jacobian matrix $\cJ(\bW)$ can be bounded by
\begin{align*}
    \sigma_{\min}(\cJ(\bW))\ge~&  \sigma_{\min}(\cJ(\bW_0))-\norm{\cJ(\bW)-\cJ(\bW_0)}\\
    \ge ~& 2\alpha-LR
    \ge  2\alpha-\frac{8\beta}{\alpha}\frac{\ell(\bW_0)}{\sqrt{h}}
    \ge  2\alpha-\frac{8C\beta}{\alpha}\sqrt{\frac{\gamma_1}{\gamma_2}},
\end{align*}
for some universal constant $C>0$ and sufficiently large $n,d,h$. Notice that here constants $C,\beta$, and $\alpha$ do not rely on $\gamma_2$. Therefore, there exists a sufficiently large $\gamma^*>0$ such that for all $\gamma_2\ge\gamma^*$, we have $2\alpha-\frac{8C\beta}{\alpha}\sqrt{\frac{\gamma_1}{\gamma_2}}\ge \alpha$. In other words, when $h$ is sufficiently large but still in the same order as $n$ and $d$, for all $\norm{\bW-\bW_0}_F\le R$, we have $\sigma_{\min}(\cJ(\bW))\ge\alpha$ conditionally on $\cC\cap\cA$. Combining with \eqref{eq:largest}  and \eqref{eq:J-Lip2}, conditionally on $\cC\cap\cA$, all the assumptions of Theorem 6.10 by \cite{oymak2020toward} are satisfied when $\norm{\bW-\bW_0}_F\le R$. Therefore, when the learning rate $\frac{\eta}{n}\le \frac{1}{\beta^2}\min\left\{1,\frac{4\alpha}{LR}\right\}$, we can get \eqref{eq:convergence}-\eqref{eq:path1}
for all $t\in\N$, conditionally on $\cC\cap\cA$. Both events $\cA$ and $\cC$ occur with high probability and only depend on initialization $\bW_0$, $\bX$ and $\by$. Hence we complete the proof of this theorem. Notice that since $\gamma_2\ge \gamma^*$ is sufficiently large, $\frac{4\alpha}{LR}\ge \frac{\alpha^2}{2C\beta}\sqrt{\frac{\gamma_2}{\gamma_1}}>1$. Therefore, it suffices to require $\eta\le n/\beta^2$ to conclude that \eqref{eq:convergence}, \eqref{eq:path0} and \eqref{eq:path1} hold with high probability. This completes the proof. Moreover, \eqref{eq:path1} further shows that for all $t\in\N$,
\begin{equation}\label{eq:W0-Wt}
    \norm{\bW_0-\bW_t}_F\le R\le C\sqrt{n}+o_{d,\P}(1),
\end{equation}
where we again apply Lemma~\ref{lemm:y+f_0} in the following way:
\begin{align*}
     \ell(\bW_0)\le  C\sqrt{n}+o_{d,\P}(1),
\end{align*}for some constant $C>0$ only depending on $\gamma_1,\gamma_2,\sigma_\varepsilon, \sigma$ and $\sigma^*$. 
\end{proof}

\begin{figure}[ht]  
\centering
\begin{minipage}[t]{0.328\linewidth}
\centering
{\includegraphics[width=0.99\textwidth]{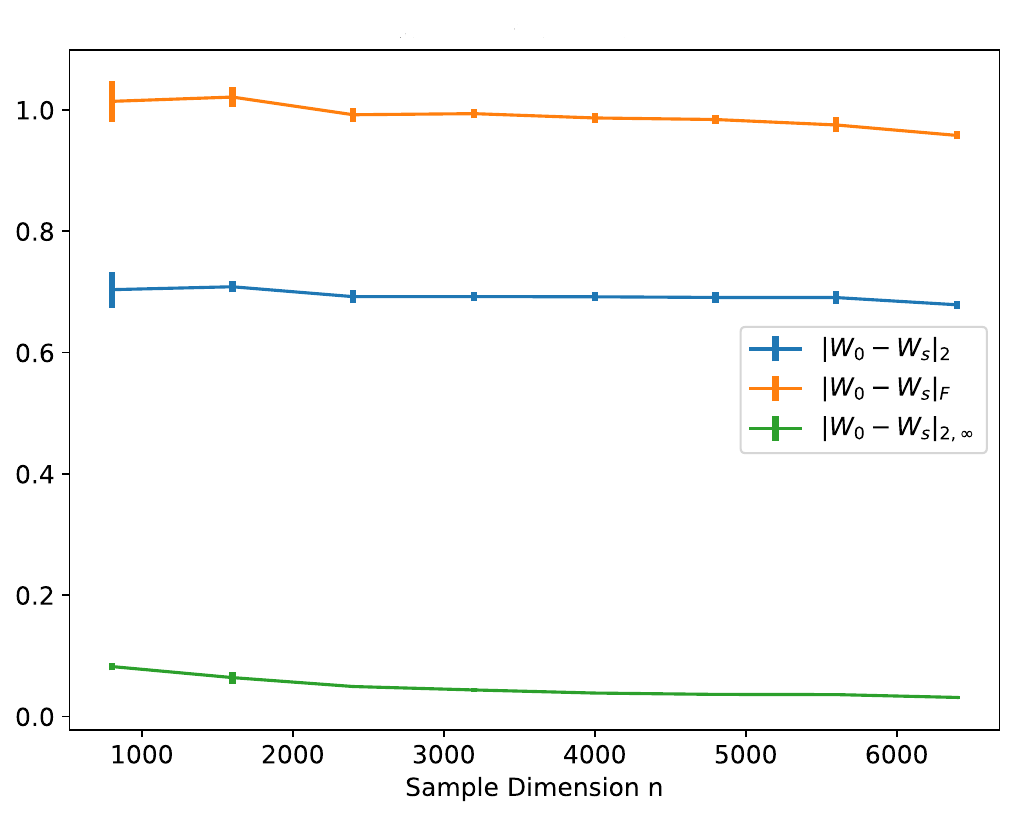}} \\
\small (a) Changes for $\frac{1}{\sqrt{d}}\bW$. 
\end{minipage}
\begin{minipage}[t]{0.328\linewidth}
\centering 
{\includegraphics[width=0.99\textwidth]{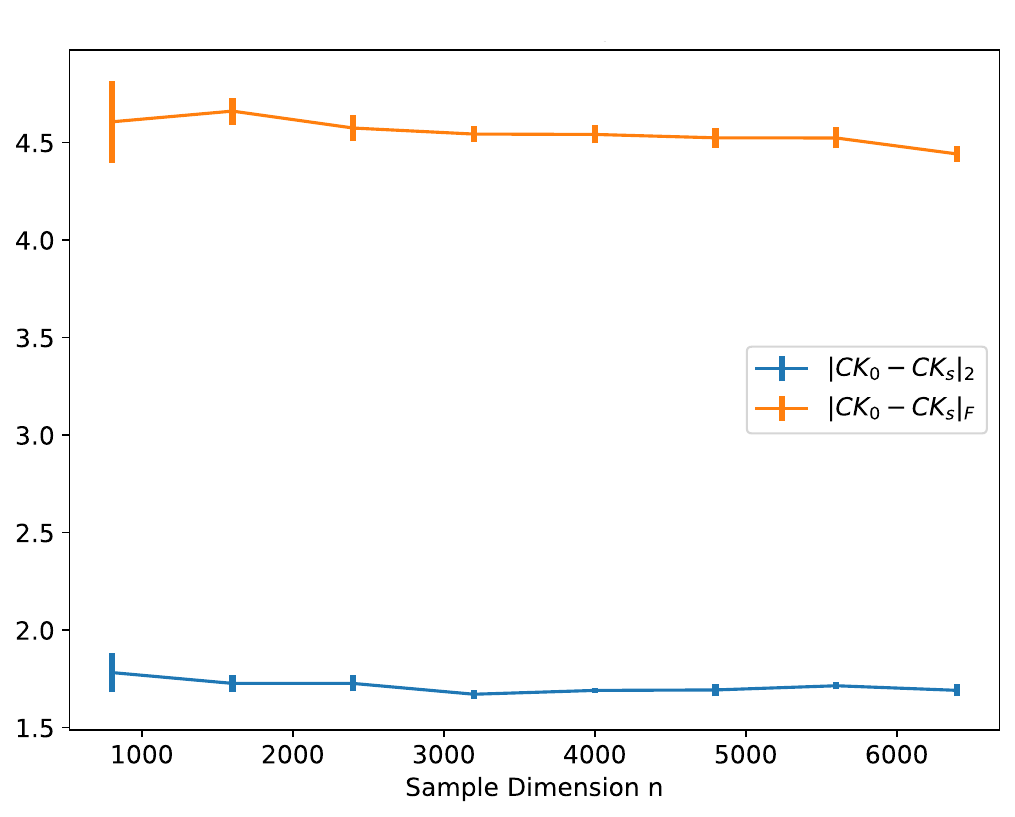}} \\ 
\small (b) Changes for CK. 
\end{minipage} 
\begin{minipage}[t]{0.324\linewidth}
\centering
{\includegraphics[width=0.98\textwidth]{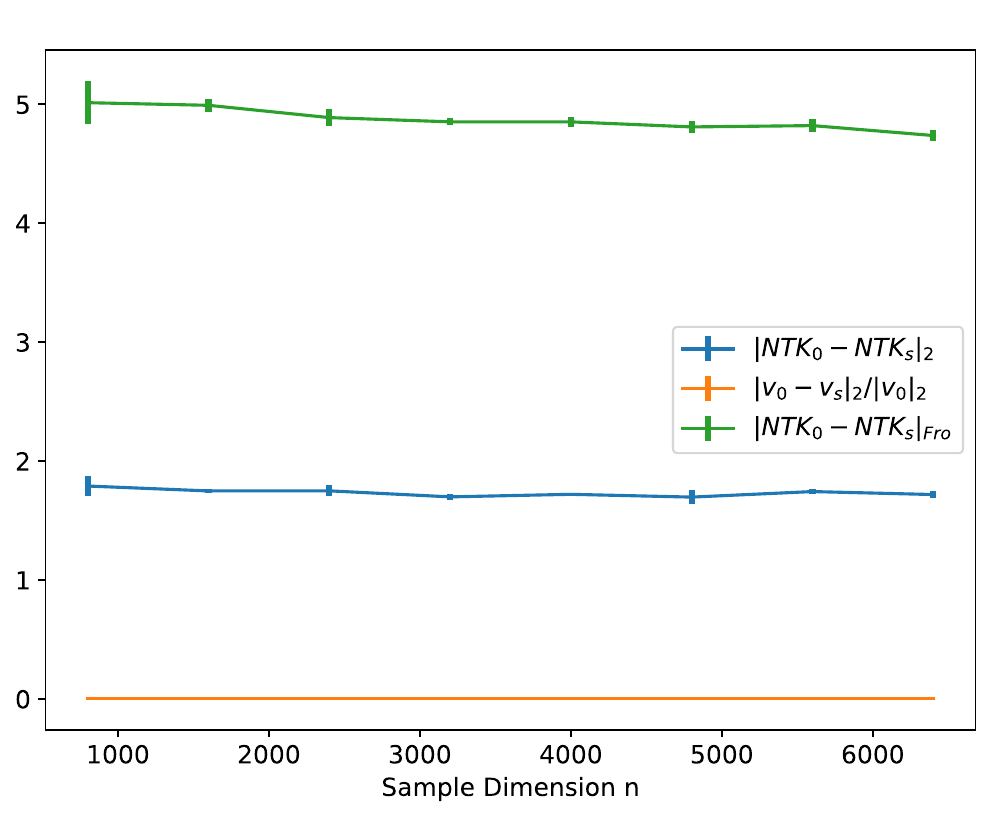}} \\ 
\small (c) Changes for NTK.  
\end{minipage}  
 \caption{\small Measuring the change for the weight, CK, and NTK matrices when training NN with \eqref{eq:GD_W}. We fix $d/n=1.2$ and $h/n=0.6$ when $n$ is increasing. Here, $\sigma$ is normalized ReLU and the target is normalized $\tanh$. The largest $n=6400$ and the learning rate $\eta=5.0$ for all training processes. We train each neural network until the training losses approach zero. Each experiment repeats 4 times. In (a), we consider the changes $\frac{1}{\sqrt{d}}\norm{\bW_t-\bW_0}$,$\frac{1}{\sqrt{d}}\norm{\bW_t-\bW_0}_F$, and $\frac{1}{\sqrt{d}}\norm{\bW_t-\bW_0}_{2,\infty}$.}   
\label{fig:GD1st_norm}  
\end{figure} 
As a corollary, \eqref{eq:path1} controls the deviation of the final step weight from the initial weight. This is empirically shown in Figure~\ref{fig:GD1st_norm}(a), which shows that $\frac{1}{\sqrt{d}}\norm{\bW_t-\bW_0}$,$\frac{1}{\sqrt{d}}\norm{\bW_t-\bW_0}_F$, and $\frac{1}{\sqrt{d}}\norm{\bW_t-\bW_0}_{2,\infty}$ are $\Theta(1)$ when trainable parameters are convergent. This implies that the final $\bW_t$ is still close to the initial weight $\bW_0$, even after training. Consequently, with this observation, we can prove Corollary~\ref{coro:change} in the following.

\begin{proof}[Proof of Corollary~\ref{coro:change}]
Based on \eqref{eq:W0-Wt}, we know $\frac{1}{\sqrt{d}} \norm{\bW_o-\bW_t}_F\le C_0$ holds with high probability for some universal constant $C_0>0$. Conditionally on this event, we can then estimate changes in CK and NTK after training. The method is analogous to Lemma~\ref{lemm:CK}. For CK, we employ Lemma~\ref{lemm:y+f_0} and \eqref{eq:CK_F} to get 
\begin{align*}
      &\norm{\bK^{\CK}_t-\bK^{CK}_0}_F\\
    \le ~& \frac{2}{h}\left( \norm{ \sigma(\bW_t\bX/\sqrt{d})-\sigma(\bW_0\bX/\sqrt{d}) }+\norm{\sigma(\bW_0\bX/\sqrt{d}}\right)\cdot \norm{ \sigma(\bW_t\bX/\sqrt{d})-\sigma(\bW_0\bX/\sqrt{d}) }_F\\
    \lesssim~ & \frac{2\lambda_\sigma^2(1+\sqrt{\gamma_1})^2}{h}\norm{\bW_0-\bW_t}_F^2+\frac{2C\sqrt{n}\lambda_\sigma(1+\sqrt{\gamma_1})}{h}\norm{\bW_0-\bW_t}_F\\
    \lesssim ~& \frac{2\lambda_\sigma(1+\sqrt{\gamma_1})C_0}{\gamma_2^2}\left(\lambda_\sigma(1+\sqrt{\gamma_1})C_0+C\sqrt{\gamma_1}\right) = O_{d,\P}(1).
\end{align*}
Hence, this shows control of the change for the CK matrix after training, compared with the initial CK.

Let us denote $\bw_i^t\in\R^{1\times d}$ as the $i$-th row of $\bW_t$, and $\bx_j$ as the $j$-th column of $\bX$. Additionally, by Assumption~\ref{ass:activation}, we know that
\begin{equation}\label{eq:lip_sigma'}
    |\sigma'(x)-\sigma'(y)|\le \lambda_\sigma|x-y|,
\end{equation} 
for any $x,y\in\R$. For NTK, by modifying \eqref{eq:J_W-Lip}, one can deduce that
\begin{align*}
    &\norm{\cJ(\bW_t)-\cJ(\bW_0)}_F^2= \sum_{i=1}^h \norm{\cJ(\bw_i^t)-\cJ(\bw_i^0)}^2_F\\
    \overset{(i)}{\le } &\frac{1}{h}\sum_{i=1}^h  \norm{\diag\left(\sigma'(\bw_i^t\bX/\sqrt{d}-\sigma'(\bw_i^0\bX/\sqrt{d})\right)}^2_F\norm{\frac{\bX}{\sqrt{d}}}^2\\
    \overset{(ii)}{\le } & \frac{(1+\sqrt{\gamma_1})^2}{h}\sum_{i=1}^h  \norm{\diag\left(\sigma'(\bw_i^t\bX/\sqrt{d}-\sigma'(\bw_i^0\bX/\sqrt{d})\right)}^2_F+o_{d,\P}(1)\\
    \overset{(iii)}{\le } & \frac{\lambda^2_\sigma(1+\sqrt{\gamma_1})^2}{h}\sum_{i=1}^h \sum_{j=1}^n \left(\frac{1}{\sqrt{d}}(\bw_i^t-\bw_i^0)\bx_j\right)+o_{d,\P}(1)\\
    \overset{(iv)}{\le } & \frac{\lambda^2_\sigma(1+\sqrt{\gamma_1})^4}{h}\norm{\bW_t-\bW_0}_F^2+o_{d,\P}(1)\le \frac{\lambda^2_\sigma(1+\sqrt{\gamma_1})^4C_0^2}{\gamma_2}+o_{d,\P}(1),
\end{align*}
where $(i)$ is because of \cite[Exercise 6.3.3]{vershynin2018high} and the assumption on $\bv$, $(ii)$ is due to \eqref{eq:gaussian_norm}, $(iii)$ is due to the definition of Frobenius norm and \eqref{eq:lip_sigma'}, and $(iv)$ is due to \cite[Exercise 6.3.3]{vershynin2018high} and \eqref{eq:gaussian_norm}. As a result, from \eqref{eq:NTK_diff}, we can finally conclude that $\norm{\bK^{\CK}_t-\bK^{CK}_0}_F= O_{d,\P}(1)$ as $n/d\to\gamma_1$ and $h/d\to\gamma_2$. 

As for the limiting spectra of weight and kernel matrices, since we know that
\[\frac{1}{\sqrt{d}}\norm{\bW_{t}-\bW_0}_F,~ \norm{\bK_t^{\CK}-\bK_0^{\CK}}_F,~\norm{\bK_t^{\NTK}-\bK_0^{\NTK}}_F=O_{d,\P}(1),\]
we can automatically apply Corollary A.41 of \cite{bai2010spectral}. This directly implies that the limiting empirical spectra of $\frac{1}{h}\bW_t^\top\bW_t$, $\bK_t^{\CK}$ and $\bK_t^{\NTK}$ are the same as the limiting spectra of $\frac{1}{h}\bW_0^\top\bW_0$, $\bK_0^{\CK}$ and $\bK_0^{\NTK}$, respectively, as $n/d\to\gamma_1$ and $h/d\to\gamma_2$ (see Figure~\ref{fig:GD1st}). 
\end{proof}
 
\paragraph{Further Simulations for Changes in Norms.} 
The simulation of Figure~\ref{fig:GD1st_norm} empirically coincides with the norm bounds in Theorem~\ref{thm:convergence} for different norms. Because of \eqref{eq:norms}, it suffices to only consider the Frobenius norm of the change for each matrix. As a remark, Theorem~\ref{thm:convergence} requires $\gamma_2$ to be larger than some threshold $\gamma^*$ to ensure norms of the change throughout training. However, Figure~\ref{fig:GD1st_norm} indicates Theorem~\ref{thm:convergence} still holds even when $\gamma^*$ is small i.e. the width $h$ is not very large. Here in Figure~\ref{fig:GD1st_norm}, $\gamma_2=\frac{1}{2}\gamma_1=0.6$. Figure~\ref{fig:SGD1st} suggests that similar results to Theorem~\ref{thm:convergence} and Corollary~\ref{coro:change} hold for SGD when training the first layer. This is also akin to Figure~\ref{fig:Adamsmall}. Moreover, we further conjecture that similar results to Theorem~\ref{thm:convergence} and Corollary~\ref{coro:change} will hold even when training both layers \eqref{eq:GD}. Denote the second layer at step $t$ by $\bv_t$. Then, indicated by Figure~\ref{fig:GD_two_layer}, $\frac{1}{\sqrt{d}}\norm{\bW_t-\bW_0}$,$\frac{1}{\sqrt{d}}\norm{\bW_t-\bW_0}_F$, $\frac{1}{\sqrt{d}}\norm{\bW_t-\bW_0}_{2,\infty}$, $\norm{\bK^\CK_t-\bK^\CK_0}$, $\norm{\bK^\CK_t-\bK^\CK_0}_F$, $\norm{\bK^\NTK_t-\bK^\NTK_0}$, and $\norm{\bv_t-\bv_0}/\norm{\bv_0}$ are all  $\Theta(1)$ in LWR. Since $\norm{\bv_0}=O(\sqrt{h})$, we observe that $\norm{\bv_t-\bv_0}=\Theta(\sqrt{h})$ in this case. Meanwhile, unlike Corollary~\ref{coro:change}, Figure~\ref{fig:GD_two_layer}(c) cannot verify that $\norm{\bK^\NTK_t-\bK^\NTK_0}_F$ is still upper bounded by a constant. One possible explanation is that in this case the change in the second layer $\bv_t$ is much more significant than in the first-layer weight $\bW_t$, hence the NTK matrix may change a lot in this general case.

Similarly, Figure~\ref{fig:SGD_two_layer} shows norms of the change when training the NN with SGD for both layers. Here we use the same batch size 128 and learning rate $\eta=1.0$ for all experiments when varying the dimension $n$ but fixing the aspect ratios $\gamma_1$ and $\gamma_2$. All the observations are similar to the results of Corollary~\ref{coro:change} except the Frobenius norm of the change for NTK. Figure~\ref{fig:SGD_two_layer}(c) indicates that $\norm{\bK^\NTK_t-\bK^\NTK_0}_F$ will not be $\Theta(1)$ anymore, which fluctuates more in this case.
\begin{figure}[ht]  
\centering
\begin{minipage}[t]{0.328\linewidth}
\centering
{\includegraphics[width=0.99\textwidth]{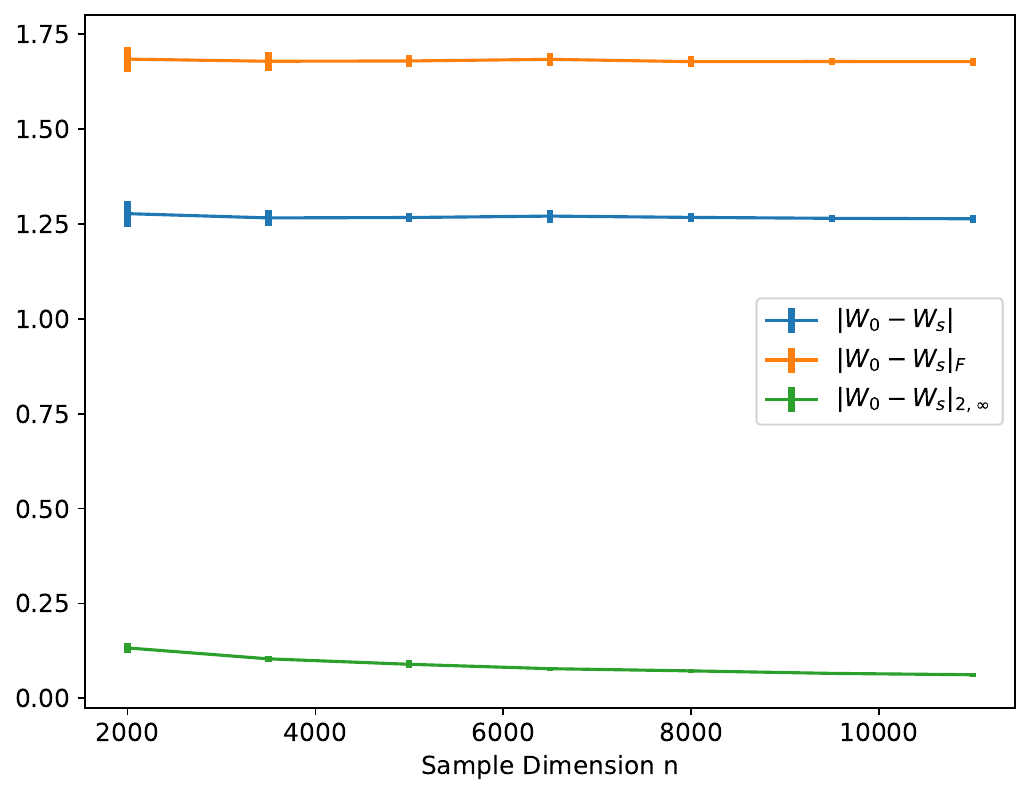}} \\
\small (a) Changes for $\frac{1}{\sqrt{d}}\bW$. 
\end{minipage}
\begin{minipage}[t]{0.324\linewidth}
\centering 
{\includegraphics[width=0.98\textwidth]{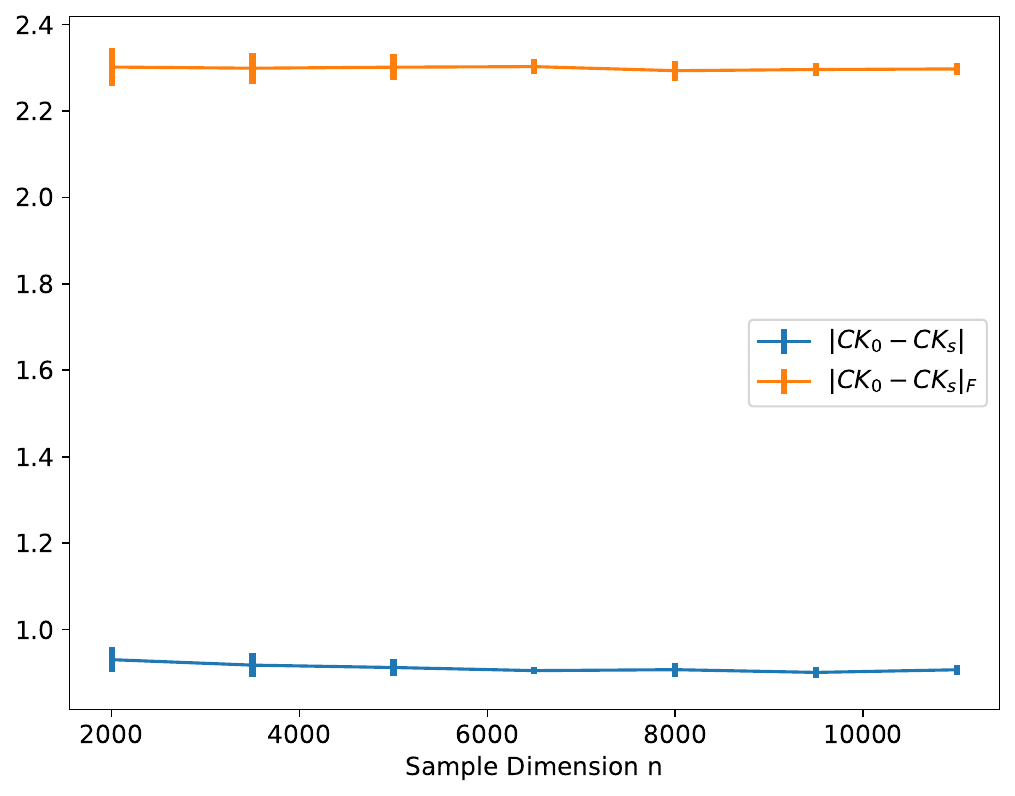}} \\ 
\small (b) Changes for CK. 
\end{minipage} 
\begin{minipage}[t]{0.328\linewidth}
\centering
{\includegraphics[width=0.99\textwidth]{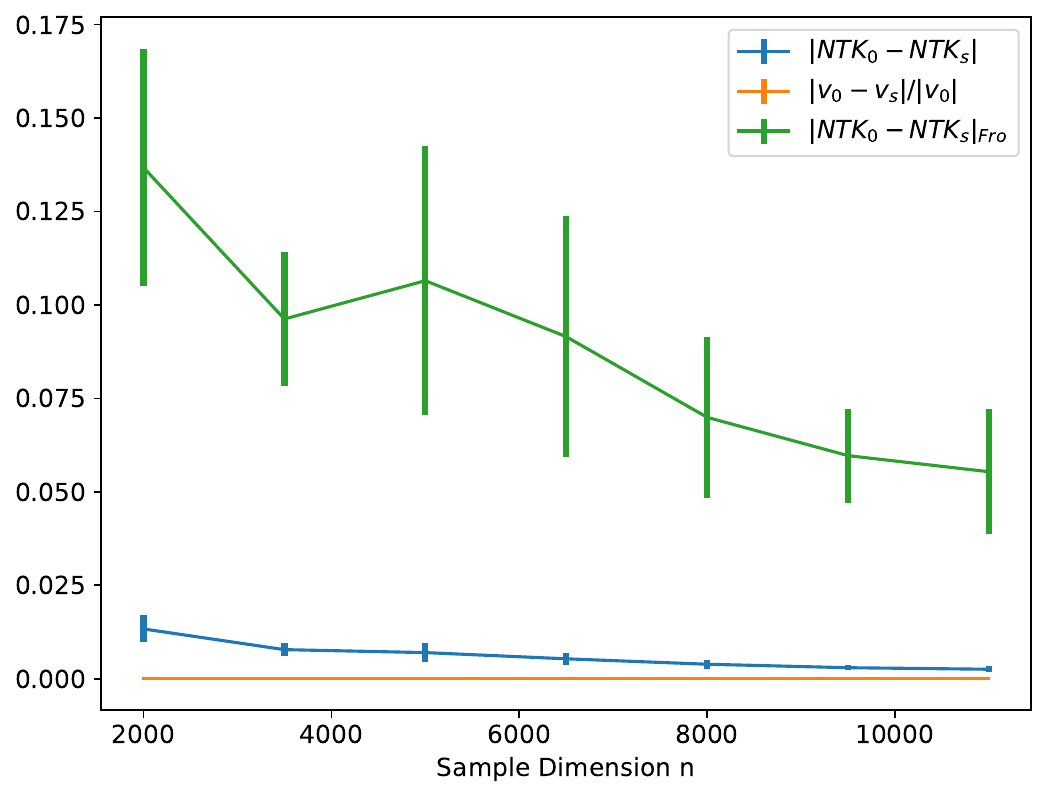}} \\ 
\small (c) Changes for NTK.  
\end{minipage}  
 \caption{\small Measuring the change for weight $\bW_t$, $\bK^\CK$, and $\bK^\NTK$ matrices when training NN with \eqref{eq:GD_W} for the first layer $\bW_t$ using SGD with batch size 200. We fix $d/n=0.6$ and $h/n=1.2$ when $n$ is increasing. Here $\sigma$ is normalized $\text{softplus}$ and the target is normalized $\tanh$. The largest $n$ is $11000$, $\sigma_\varepsilon=0.3$ and the learning rate is $\eta=3.6$ for all training processes. We train each neural network until the training losses approach zero. Each experiment repeats 15 times. } 
\label{fig:SGD1st}  \vspace{-3mm}
\end{figure}
\vspace{-3mm}
\begin{figure}[ht]  
\vspace{-3mm}
\centering
\begin{minipage}[t]{0.328\linewidth}
\centering
{\includegraphics[width=0.99\textwidth]{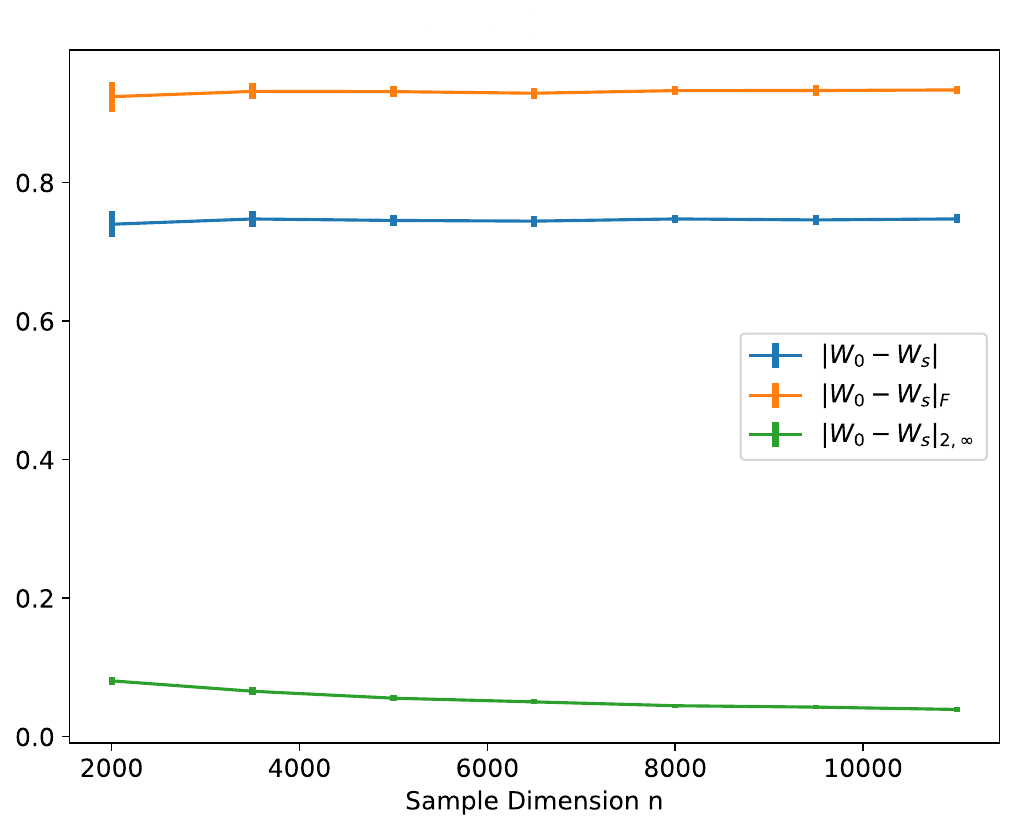}} \\
\small (a) Changes for $\frac{1}{\sqrt{d}}\bW$. 
\end{minipage}
\begin{minipage}[t]{0.328\linewidth}
\centering 
{\includegraphics[width=0.99\textwidth]{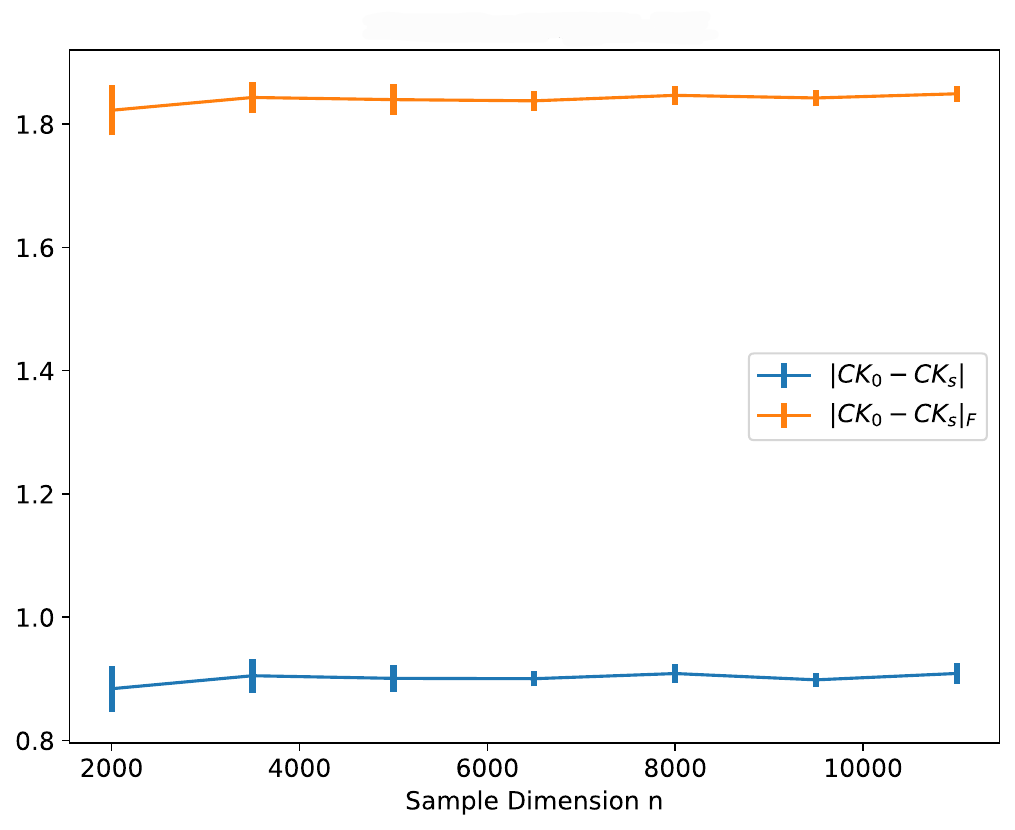}} \\ 
\small (b) Changes for CK. 
\end{minipage} 
\begin{minipage}[t]{0.324\linewidth}
\centering
{\includegraphics[width=0.98\textwidth]{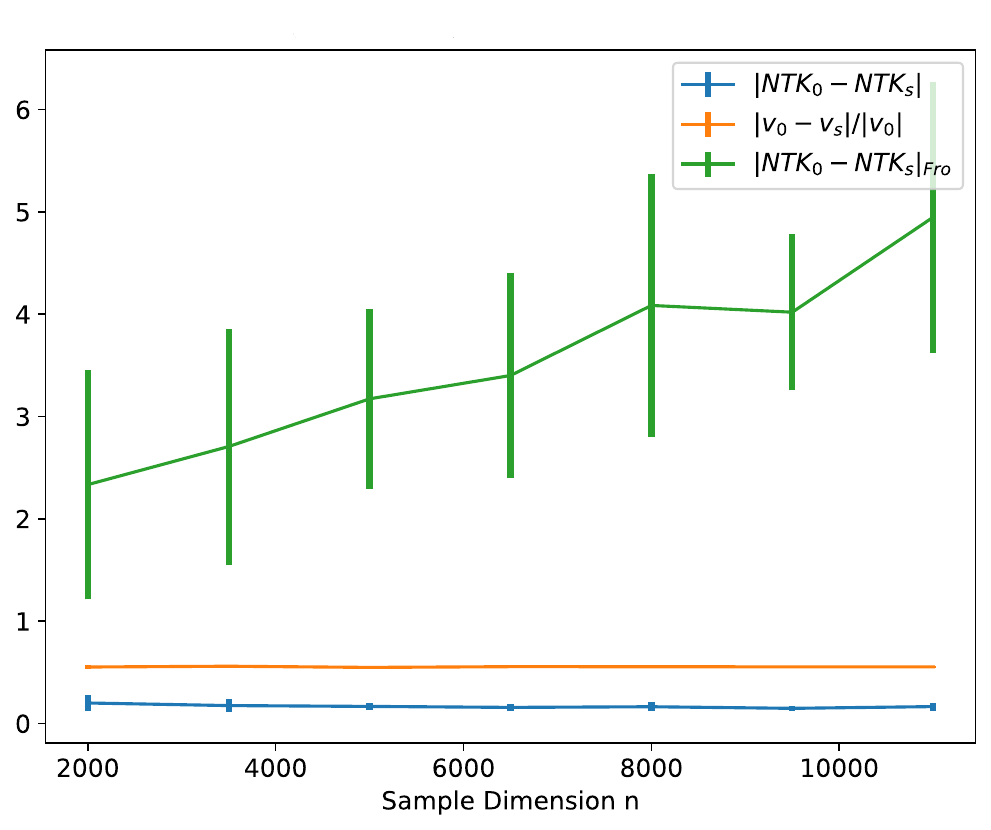}} \\ 
\small (c) Changes for NTK.  
\end{minipage}  
 \caption{\small Measuring the change for $\bW_t$, $\bK^\CK$, the second layer $\bv_t$ and $\bK^\NTK$ when training NN with \eqref{eq:GD} for both layer $\bW_t$ and $\bv_t$ using GD. We fix $d/n=0.5$ and $h/n=0.8$ when $n$ is increasing. Here, $\sigma$ is normalized $\text{softplus}$ and the target is normalized $\tanh$. The largest $n$ is $11000$, $\sigma_\varepsilon=0.3$ and the learning rate is $\eta=3.6$ for all training processes. We train each neural network until the training losses approach zero. Each experiment repeats 15 times. 
 }   \vspace{-3mm}
\label{fig:GD_two_layer}  
\end{figure} 
\vspace{-3mm}
\begin{figure}[ht] 
\vspace{-3mm}
\centering
\begin{minipage}[t]{0.328\linewidth}
\centering
{\includegraphics[width=0.99\textwidth]{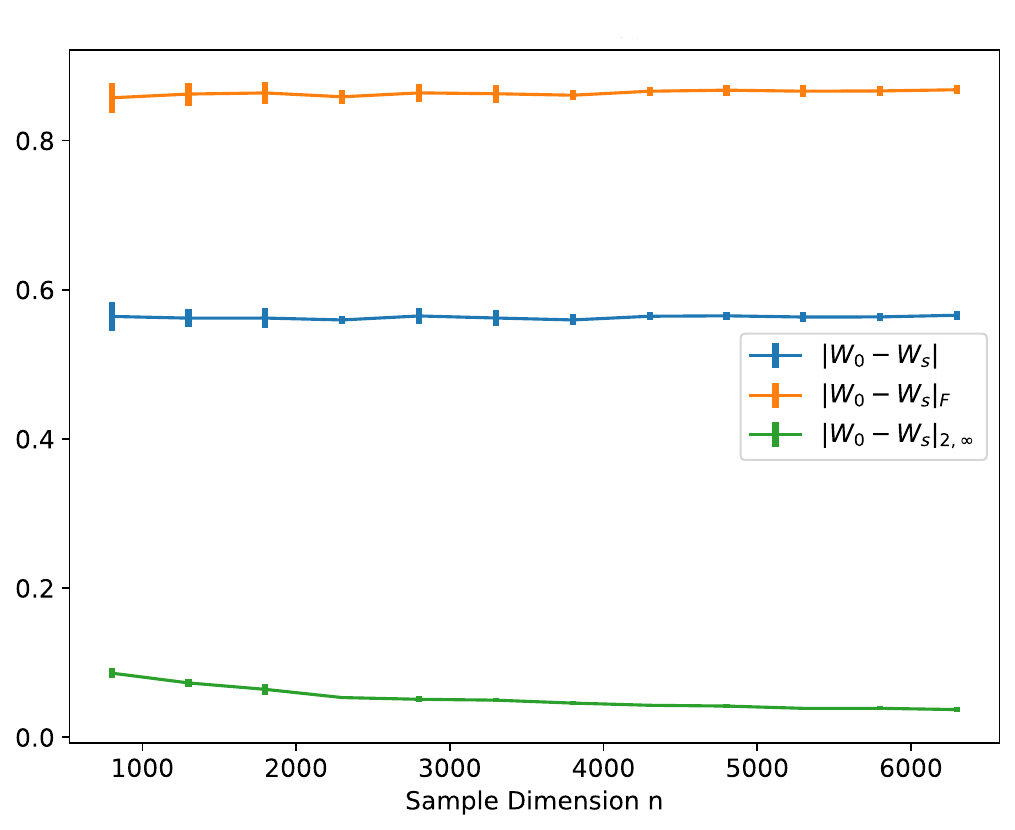}} \\
\small (a) Changes for $\frac{1}{\sqrt{d}}\bW$. 
\end{minipage}
\begin{minipage}[t]{0.328\linewidth}
\centering 
{\includegraphics[width=0.99\textwidth]{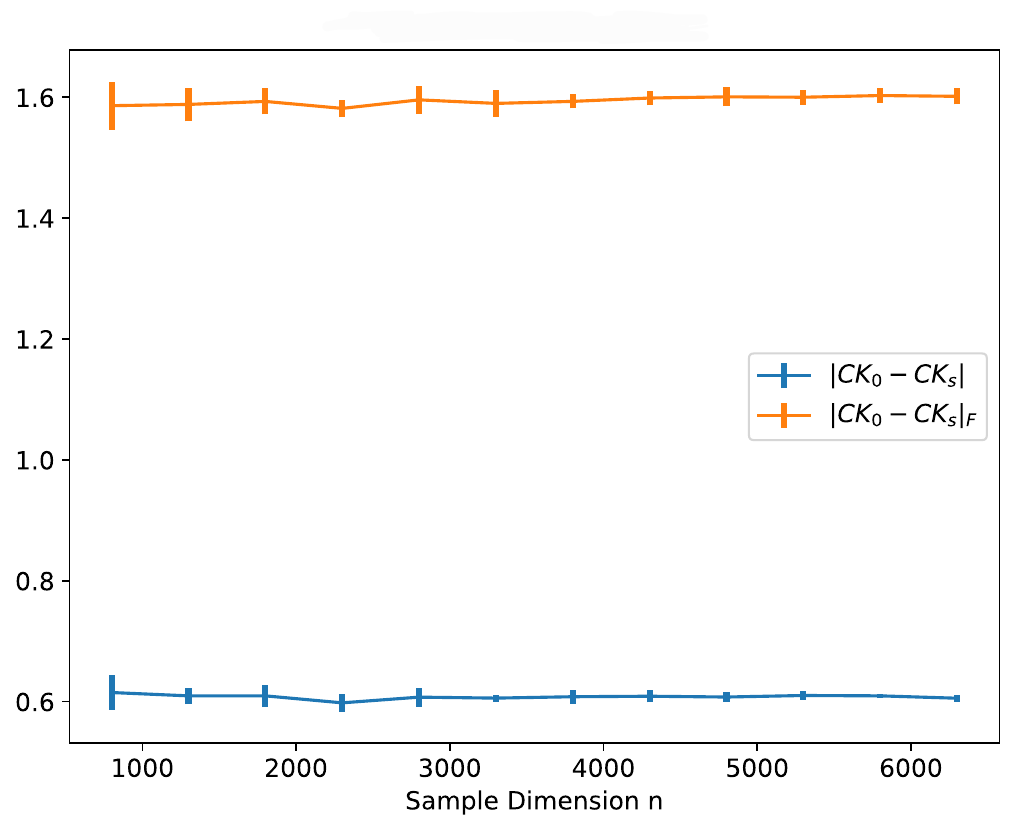}} \\ 
\small (b) Changes for CK. 
\end{minipage} 
\begin{minipage}[t]{0.324\linewidth}
\centering
{\includegraphics[width=0.98\textwidth]{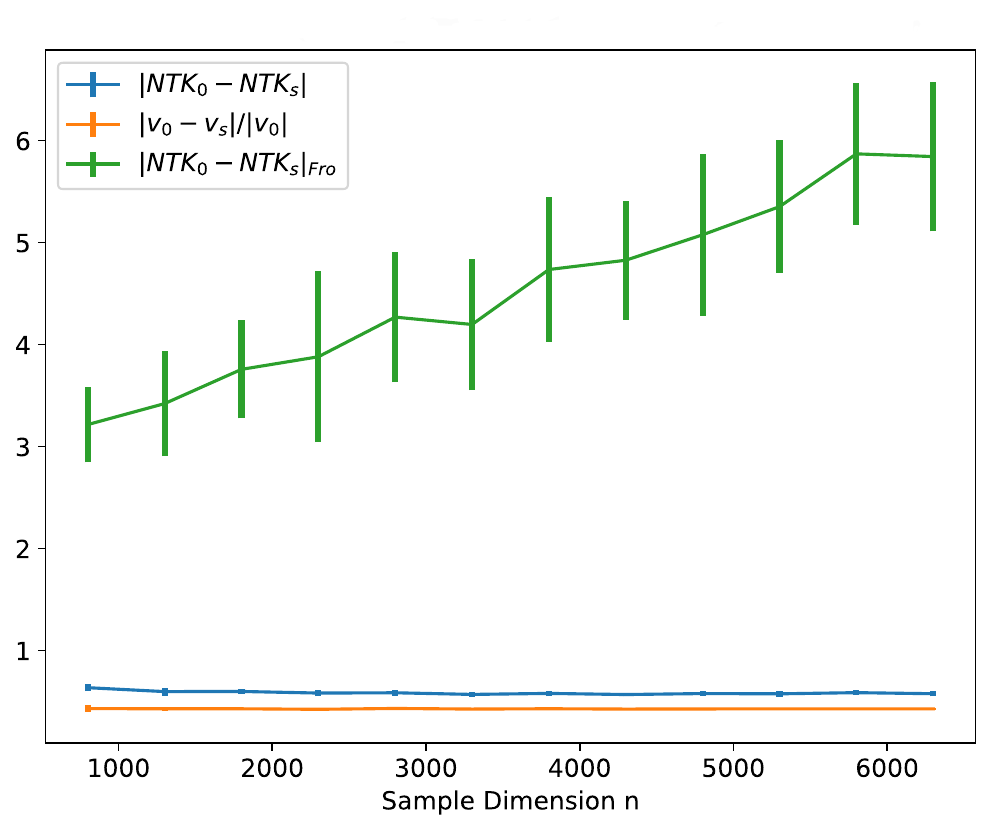}} \\ 
\small (c) Changes for NTK.  
\end{minipage}  
 \caption{\small Measuring the change for $\bW_t$, $\bK^\CK$, the second layer $\bv_t$ and $\bK^\NTK$ when training NN with SGD for both layers $\bW_t$ and $\bv_t$. We fix $d/n=0.6$ and $h/n=1.2$ when $n$ is increasing. Here, $\sigma$ is normalized $\text{ReLU}$ and the target is normalized $\tanh$. $\sigma_\varepsilon=0.1$ and the learning rate is $\eta=1.0$ for all training processes. We train each neural network until the training losses approach zero. Each experiment repeats 12 times. 
 }   \vspace{-3mm}
\label{fig:SGD_two_layer}  
\end{figure}



\end{document}